\renewcommand{\@noticestring}{}
\def\Ieee{IEEEeqnarray*}
\def\Ieeen{\IEEEyesnumber}
\newcommand{\tmix}{\tau_{\mathsf{mix}}}
\def\A{A^*}
\def\a{\mbox{${a^*}$}}
\def\atr{\mbox{${a^{*,\top}}$}}
\newcommand{\sgdber}{\mathsf{SGD-RER}}
\newcommand{\sgd}{\mathsf{SGD}}
\newcommand{\sgder}{\mathsf{SGD-ER}}
\newcommand{\relu}{\mathsf{ReLU}}
\newcommand{\loss}{\mathcal{L}}
\newcommand{\losssq}{\mathcal{L}_{\mathsf{sq}}}
\newcommand{\lossprox}{\mathcal{L}_{\mathsf{prox}}}
\newcommand{\kl}{\mathsf{KL}}
\newcommand{\tv}{\mathsf{TV}}
\newtheorem{claim}{Claim}
\newtheorem{define}{Definiton}
\newtheorem{proposition}{Proposition}
\newtheorem{lemma}{Lemma}
\newtheorem{assumption}{Assumption}
\newtheorem{theorem}{Theorem}
\newtheorem{remark}{Remark}
\newcommand{\distas}[1]{\mathbin{\overset{#1}{\kern\z@\sim}}}%
\newsavebox{\mybox}\newsavebox{\mysim}
\newcommand{\distras}[1]{%
  \savebox{\mybox}{\hbox{\kern3pt$\scriptstyle#1$\kern3pt}}%
  \savebox{\mysim}{\hbox{$\sim$}}%
  \mathbin{\overset{#1}{\kern\z@\resizebox{\wd\mybox}{\ht\mysim}{$\sim$}}}%
}
\newcommand{\dprime}{\prime \prime}
\newcommand{\glvar}{\mathsf{NLDS}}
\newcommand{\norm}[1]{\left\| #1 \right\|}
\newcommand{\abs}[1]{\left| #1 \right|}
\newcommand{\lmin}[1]{\lambda_{\min}\left(#1\right)}
\newcommand{\lmax}[1]{\lambda_{\max}\left(#1\right)}
\newcommand{\Xt}[2]{\mbox{${X^{#1}_{#2}}$}}
\newcommand{\Xtt}[2]{\mbox{${\tilde{X}}^{#1}_{#2}$}}
\newcommand{\Xtttr}[2]{\mbox{${\tilde{X}}^{#1,\top}_{#2}$}}
\newcommand{\Nt}[2]{\mbox{${\varepsilon^{#1}_{#2}}$}}
\newcommand{\Htt}[3]{\mbox{${\tilde{H}^{#1}_{#2,#3}}$}}
\newcommand{\Htttr}[3]{\mbox{${\tilde{H}^{#1,\top}_{#2,#3}}$}}
\newcommand{\prodHtt}[2]{\left(\prod_{s=#2}^{1}\Htt{#1-s}{0}{B-1}\right)}
\newcommand{\prodHtttr}[2]{\left(\prod_{s=1}^{#2}\Htttr{#1-s}{0}{B-1}\right)}
\newcommand{\Ptt}[2]{\left(I-2\gamma \phi'\left(\tilde \xi^{#1}_{#2}\right) \Xtt{#1}{#2}\Xtttr{#1}{#2}\right)}
\newcommand{\Ppt}[2]{\mbox{${\tilde{P}^{#1}_{#2}}$}}
\def\cc{\mathcal{C}}
\def\cd{\mathcal{D}}
\def\ce{\mathcal{E}}
\def\cR{\mathcal{R}}
\def\cA{\mathcal{A}}
\def\cV{\mathcal{V}}
\def\cVt{\tilde \cV}
\newcommand{\ind}[2]{1\left[\cd^{#1,#2}\right]}
\newcommand{\indc}[2]{1\left[\cd^{#1,#2,C}\right]}
\def\cdt{\tilde\cd}
\def\cct{\tilde \cc}
\newcommand{\indt}[2]{1\left[\tilde \cd^{#1,#2}\right]}
\newcommand{\indtc}[2]{1\left[\tilde \cd^{#1,#2,C}\right]}
\def\cdh{\hat\cd}
\newcommand{\indh}[2]{1\left[\cdh^{#1,#2}\right]}
\def\gammah{\hat{\gamma}}
\DeclareMathOperator{\tr}{Tr}
\DeclareMathOperator{\cro}{Cr}
\DeclareMathOperator{\poly}{Poly}
\DeclareMathOperator{\dg}{Dg}
\def\prbnd{\frac{1}{T^{\alpha}}}
\def\prbndH{\frac{1}{2T^{\alpha}}}
\def\prbndsq{\frac{1}{T^{\alpha/2}}}
\newcommand{\Pb}[1]{\mathbb{P}\left[#1 \right]}
\newcommand{\Ex}[1]{\mathbb{E}\left[#1 \right]}
\def\ahto{\hat a_{t_0,N} }
\def\ahtto{\hat{\tilde{a}}_{t_0,N} }
\def\ahttv{\hat{\tilde{a}}^v_{t_0,N} }
 \def\ahttb{\hat{\tilde{a}}^b_{t_0,N} }
 \renewcommand{\intercal}{\top}
\title{Near-optimal Offline and Streaming Algorithms for Learning Non-Linear Dynamical Systems}
\author{%
  Prateek Jain \\%\thanks{suhask@mit.edu} \\
  Google AI Research Lab,\\
  Bengaluru, India 560016 \\
  \texttt{prajain@google.com} \\
    \And
  Suhas S Kowshik \\%\thanks{suhask@mit.edu} \\
  Department of EECS\\
  MIT,\\
  Cambridge, MA 02139 \\
  \texttt{suhask@mit.edu} \\
  \AND
  Dheeraj Nagaraj \\%\thanks{suhask@mit.edu} \\
  Department of EECS\\
  MIT,\\
  Cambridge, MA 02139 \\
  \texttt{dheeraj@mit.edu} \\
  \And
  Praneeth Netrapalli \\%\thanks{suhask@mit.edu} \\
  Google AI Research Lab,\\
  Bengaluru, India 560016 \\
  \texttt{pnetrapalli@google.com} \\
  }
\begin{document}

\maketitle

\begin{abstract}

	We consider the setting of vector valued non-linear dynamical systems $X_{t+1} = \phi(\A X_t) + \eta_t$, where $\eta_t$ is unbiased noise and $\phi : \mathbb{R} \to \mathbb{R}$ is a known link function that satisfies certain {\em expansivity property}.
	The goal is to learn $\A$ from a single trajectory $X_1,\cdots,X_T$ of {\em dependent or correlated} samples.
	While the problem is well-studied in the linear case, where $\phi$ is identity, with optimal error rates even for non-mixing systems, existing results in the non-linear case hold only for mixing systems. In this work, we improve existing results for learning nonlinear systems in a number of ways: a) we provide the first offline algorithm that can learn non-linear dynamical systems without the mixing assumption, b) we significantly improve upon the sample complexity of existing results for mixing systems, 
	c) in the much harder one-pass, streaming setting we study a SGD with Reverse Experience Replay ($\sgdber$) method, and demonstrate that for mixing systems, it achieves the same sample complexity as our offline algorithm, d) we justify the expansivity assumption by showing that for the popular ReLU  link function --- a non-expansive but easy to learn link function with i.i.d. samples --- any method would require exponentially many samples (with respect to dimension of $X_t$) from the dynamical system. We validate our results via simulations and  demonstrate that a naive application of SGD can be highly sub-optimal. Indeed, our work demonstrates that for correlated data, specialized  methods designed for the dependency structure in data can  significantly outperform  standard SGD based methods. 	

\end{abstract}

\section{Introduction}

Non-linear dynamical systems (NLDS) are commonly used to model the data in a variety of domains like control theory, time-series analysis, and reinforement learning (RL) \cite{vidyasagar2002nonlinear,chen1989representations,elman1990finding,jordan1997serial}. Standard NLDS models the data points $(X_0,X_1,\dots,X_T)$ as: 
\begin{equation}\label{eq:master_equation}
	X_{t+1} = \phi(\A X_t) + \eta_t,
\end{equation}
where $X_{t} \in \mathbb{R}^d$ are the states, $\eta_t \in \mathbb{R}^d$ are i.i.d. noise vectors, $\A \in \mathbb{R}^{d\times d}$ and $\phi :\mathbb{R} \to \mathbb{R}$ is an increasing function called the `link function'. Here, $\phi$ is supposed to act component wise over $\mathbb{R}^d$.

{\em System identification} problem is a foundational problem for NLDS, i.e., given $(X_0,X_1,\dots,X_T)$ generated from \eqref{eq:master_equation}, the goal is to estimate $\A$ accurately from a single trajectory $(X_0,X_1,\dots,X_T)$. The system identification problem is heavily studied in control theory \cite{ljung1999system,aastrom1971system,campi2002finite,vidyasagar2006learning} as well as time-series analysis \cite{hall2016inference}. For instance, the non-linear dynamical system considered here has an application in modeling non-linear distortions in power amplifiers \cite{wood2014behavioral}.
The problem is challenging as data points $X_0,X_1,\dots,X_T$ are not i.i.d. as usually encountered in machine learning, but form a Markov process. If the mixing time $\tmix$ of the process is finite ($\tmix<\infty$), then we can make the data approximately i.i.d. by considering only the points separated by $\tilde{O}(\tmix)$ time. While this allows using standard techniques for i.i.d. data,  it reduces the effective number of samples to $O(\frac{T}{\tmix})$, which typically gives an error of the order $O(\frac{\tmix}{T})$. In fact, even the state-of-the-art results   have error bounds which are sub-optimal by a factor of $\tmix$. 

%Suppose the mixing time of this process is $\tmix<\infty$, then one can consider points which are separated by $\tilde{O}(\tmix)$ time and drop the rest in order to make the data look approximately i.i.d. The problem with such an approach is that the effective number of samples is $O(\frac{T}{\tmix})$, which typically gives an error of the order $O(\frac{\tmix}{T})$. Indeed, the guarantees obtained for this problem in \cite{foster2020learning,sattar2020non} have error bounds which are sub-optimal by a factor of $\tmix$. 
Interestingly, for the special case of linear systems, i.e., when $\phi(x)=x$, the results are significantly stronger. For example, \cite{simchowitz2018learning,sarkar2019near} showed that the matrix $\A$ can be estimated with an error $O(1/T)$ even when the mixing time $\tmix > T$. But these results rely on the fact that for linear  systems, the estimation problem reduces to an ordinary least squares (OLS) problem for which a closed form expression is available and can be analyzed effectively. 

On the other hand, NLDS do not admit such closed form expressions. In fact the existing techniques mostly rely on mixing time arguments to induce i.i.d. like behavior in a subset of the points which leads to sub-optimal rates by $\tmix$ factor. Similarly, a direct application of uniform convergence results \cite{shalev2009stochastic} to show that the minimizer of the empirical risk is close to the population minimizer still gives sub-optimal rates as off-the-shelf concentration inequalities (cf. \cite{paulin2015concentration}) incur an additional factor of mixing time. Finally, existing results are mostly focused on offline setting, and  do not apply to the case where the data points are streaming which is critical in several practical problems like reinforcement learning (RL) and control theory. 

In this work, we provide algorithms and their corresponding error rates for the NLDS system identification problem in both offline and online setting, assuming the link function to be expansive (Assumption~\ref{assump:3}). The main highlight of our results is that the error rates are {\em independent} of the mixing time $\tmix$, which to the best of our knowledge is first such result for any non-linear system identification in any setting. In fact, for offline setting, our analysis holds even for systems which do not mix within time $T$ and even for marginally stable systems which do not mix at all. Furthermore, we analyze SGD-Reverse Experience Replay (SGD-RER) method, we provide the first streaming method for NLDS identification with  error rate that is independent of $\tmix$ (in the leading order term) while still ensuring small space and time complexity. This algorithm was first discovered in the experimental RL setting in \cite{rotinov2019reverse} based on Hippocampal reverse replay observed in biological networks \cite{ambrose2016reverse,haga2018recurrent,whelan2021robotic}. It was introduced independently in \cite{jain2021streaming} for the case of linear systems and efficiently unravels the complex dependency structure present in the problem. Finally, through a lower bound for ReLU---a non-expansive function---we provide strong justification for why expansivity might be necessary for a non-trivial result. 

%<<<<<<< HEAD
Instead of mixing time arguments, our proofs for learning NLDS without mixing use a natural exponential martingale of the kind considered in the analysis of self normalized process (\cite{abbasi2011online,pena2008self}). For streaming setting, while we do use mixing time arguments (proof of Theorems~\ref{thm:newton_ub_heavy_tail} and~\ref{thm:sgd_rer_ub}), we combine them with a delicate stability analysis of the specific algorithm and the machinery developed in \cite{jain2021streaming} to obtain strong error bounds. See  Section~\ref{sec:ideas_behind_proofs} for a description of these techniques.

{\bf Our Contributions.} Key contributions of the paper are summarized below:\vspace*{-5pt} %Our contributions are focused on providing and analyzing efficient algorithms for learning non-linear dynamical systems given in Equation~\eqref{eq:master_equation}. 
\begin{enumerate}[leftmargin=*]
\item Assuming  expansive and monotonic link function $\phi$ and sub-Gaussian noise, we show that the offline Quasi Newton Method (Algorithm~\ref{alg:modified_newton_method}) estimates the parameter $\A$ with near optimal errors of the order $O(1/T)$, even when the dynamics does not mix within time $T$.
\item Assuming mixing NLDS, finite fourth moment on the noise, and expansive monotonic link function, we show that offline Quasi Newton Method again estimates the parameter $\A$ with near-optimal error of $O(1/T)$, independent of mixing time $\tmix$. 
\item  We give a one-pass, streaming algorithm inspired by $\sgdber$ method by \cite{jain2021streaming}, and show that it achieves near-optimal error rates under the assumption of sub-Gaussian noise, NLDS stability (see section \ref{subsec:assumptions} for the definition), uniform expansivity and second differentiability of the link function.  
\item We then show that learning with $\relu$ link function, which is non-expansive but is known to be easy to learn with if data points are all i.i.d. \cite{diakonikolas2020approximation}, requires exponential (in $d$) many samples. %shown to be easy to learn then turn our attention to non-expansive link functions like the $\relu$. We show that parameter recovery is hard in this case by showing a minimax lower bound for the number samples which is exponential in the dimension $d$.
\end{enumerate} 

We believe that the techniques developed in this work can be extended to provide efficient algorithms for learning with dependent data in more general settings. 
%The analysis of $\sgdber$ involves a complex interplay of ideas which maybe of independent interest in designing streaming algorithms for dependent data. We give an overview of the various ideas involved in Section~\ref{sec:ideas_behind_proofs}.

\paragraph{Related Works.}
NLDS has been studied in a variety of domains like time-series and recurrent neural networks (RNN). \cite{hall2016inference} studies specific NLDS models from a time series perspective and establishes non-asymptotic convergence bounds for natural estimators; their error rates suffer from mixing time factor $\tmix$.   \cite{chen1990non} considers asymptotic learning of NLDS via neural networks trained using SGD, whereas \cite{allen2018convergence} shows that overparametrized LSTMs trained with SGD learn to memorize the given data.  \cite{bahmani2019convex,oymak2019stochastic,miller2018stable} consider learning dynamical systems of the form $h_{t+1} = \phi(\A h_t + B^{*}u_t)$ for states $h_t$ and inputs $u_t$; this setting is different from standard NLDS model we study. \cite{mania2020active} considers the non linear dynamical systems of the form $x_{t+1}=A\phi(x_t,u_t)+\eta_t$ which $\phi$ is a known non-linearity and matrix $A$ is to be estimated. \cite{sarker2020parameter,mao2020finite} consider essentially linear dynamics but allow for certain non-linearities that can be modeled as process noise. All these again differ from the model we consider. 

Standard NLDS identification \eqref{eq:master_equation} has received a lot of attention recently, with results by \cite{sattar2020non,foster2020learning} being the most relevant. \cite{sattar2020non} uses uniform convergence results via. mixing time arguments to  obtain  parameter estimation error for offline SGD. \cite{foster2020learning} obtains similar bounds via. the analysis of the GLMtron algorithm \cite{kakade2011efficient}. However, both these works suffer from sub-optimal dependence on the mixing time. We refer to Table~\ref{tb:summary-table} for a comparison of the results. \cite{gao2021improved}, which appeared after the initial manuscript of this work, considers the question from a perspective of time series forecasting. This work considers sparsity in $\A$ and an unknown link function which is estimated with isotonic regression.  Their recovery guarantees eschew the mixing time dependence. However, the setting, assumptions and the error rates are incomparable to our setting. 

\cite{foster2020learning} also obtains within sample prediction error in the case when $\phi$ is not uniformly expansive along with parameter recovery bounds when $\phi$ is the $\relu$ function and the driving noise is Gaussian. However, the parameter estimation bounds for $\relu$ suffer from an exponential dependence on the dimension $d$ and mixing time $\tmix$. In Theorem~\ref{thm:relu_exp_lb} we establish that indeed we cannot improve the exponential dependence in the dimension $d$ for the case of parameter estimation. We note that the exponential dependence arises due to the dynamics present in the system since $\relu$ regression with isotropic i.i.d. data in well specified case has only a polynomial dependence in $d$ \cite{diakonikolas2020approximation}. 
%minimax lower bounds which show that even when $\phi = \relu$ and the noise is Gaussian, we must suffer an 

%NLDS are encountered in the context of LSTMs and RNNs~\cite{elman1990finding,jordan1997serial,graves2013speech,mikolov2010recurrent,sak2014long}.  \cite{hall2016inference} studies specific NLDS models from a time series perspective and establishes non-asymptotic convergence bounds for natural estimators.  The works most relevant to ours are: \cite{sattar2020non,foster2020learning}. 

% In this work, we only consider parameter estimation in the case of uniformly expansive $\phi$. 

Linear system identification (LSI) literature has been well studied with strong minimax optimal bounds \cite{simchowitz2018learning,simchowitz2019learning,sarkar2019finite}. These results primarily consider the (convex) empirical square loss which has a closed form solution. However, the square loss in the non-linear case is non-convex. Under the assumption that the link function is increasing, we  consider a convex proxy loss which is widely used in generalized linear regression literature \cite{kalai2009isotron,kakade2011efficient,diakonikolas2020approximation}. Similarly, GLMtron algorithm for learning NLDS in\cite{foster2020learning} (see Equation~\eqref{eq:proxy_loss_def}) also considers a similar proxy loss. In \cite{boffi2020reflectron}, the authors consider a family of GLMtron-like algorithms call Reflectron under the i.i.d. data setting. But they compare the performance of these algorithms experimentally on an NLDS similar to one considered in this work under low rank assumption on the system matrix. 

Finally, streaming setting for LSI has been recently studied in different model settings \cite{jain2021streaming,nagaraj2020least}. These methods observe that by exploiting techniques like experience replay (\cite{lin1992self}) along with squared loss error, one can obtain strong error rates. $\sgdber$ method studied in this work is inspired by a similar method by \cite{jain2021streaming} which was primarily studied for the linear case. %Algorithm~\ref{alg:sgd_rer} ($\sgdber$) is a one-pass, streaming SGD which was first introduced in~\cite{jain2021streaming} for the case of linear system identification. It is based on the popular heuristic called experience replay (\cite{lin1992self}) which is used extensively in the reinforcement learning literature where a large enough buffer stores successive samples and runs stochastic gradient descent on data sampled uniformly at random from this buffer. The buffer discards older samples when new samples arrive. This helps break the temporal correlations present in the data in order to obtain i.i.d like data and also has high sample efficiency in that most of the samples obtained are used to learn. We also refer to~\cite{nagaraj2020least} where an algorithm based on experience replay was used to obtain improved sample efficiency for linear regression with Markovian data. In this work, we run $\sgdber$ with the proxy loss. 

%Quasi Newton Method is a standard technique in optimization where the Hessian in the Newton Method is replaced with an approximation to the Hessian. We refer to \cite{hazan2007logarithmic,schraudolph2007stochastic,amari2000adaptive} and references therein.
 
\begin{table*}[t]
\vskip 0.15in
\centering
%		\begin{small}
			\begin{sc}
				\resizebox{\columnwidth}{!}{\begin{tabular}{|c|c|c|c|c|c|}
					\hline	
					{\bf Paper} & {\bf Guarantee} & {\bf Link Function} & {\bf System} & {\bf Noise} & {\bf Algorithm} \\
					\hline					
					\makecell{\cite{sattar2020non} \\ Theorem 6.2}& ${\color{red} \frac{d^2\tmix}{T}}$&
					\makecell{\scriptsize Increasing,Lipschitz \\ \scriptsize Expansive} & \scriptsize{\color{red}Mixing}& \scriptsize{\color{red}Sub-Gaussian} &\scriptsize{\color{red} Offline}\\
					\hline
					\makecell{\cite{foster2020learning} \\ Theorem 2 }& ${\color{red} \frac{d^2\tmix}{T}}$ & \shortstack{\begin{tabular}{@{}c@{}}\scriptsize Increasing,Lipschitz \\
					\scriptsize Expansive \end{tabular}} & \scriptsize{\color{red}Mixing}&\scriptsize{\color{red}Sub-Gaussian}&\scriptsize{\color{red} Offline} \\
					\hline 
					\makecell{\textbf{This paper} \\ Theorem~\ref{thm:newton_ub_without_mixing}} & {\color{Green}$\frac{d^2\sigma^2}{T\lmin{\hat{G}}}$} &\shortstack{\begin{tabular}{@{}c@{}}\scriptsize Increasing,Lipschitz \\
					\scriptsize Expansive  \end{tabular}}&\scriptsize{\color{Green}Non-Mixing} &  \scriptsize{\color{red}Sub-Gaussian}  &\scriptsize{\color{red} Offline}\\
\hline
					\makecell{\textbf{This paper} \\ Theorem~\ref{thm:newton_ub_heavy_tail}} & {\color{Green}$\frac{d^2\sigma^2}{T\lmin{G}}$} &\shortstack{\begin{tabular}{@{}c@{}}\scriptsize Increasing,Lipschitz \\
					\scriptsize Expansive \end{tabular}} & \scriptsize{\color{red}Mixing} &  \scriptsize{\color{Green}$4$-th Moment}&\scriptsize{\color{red} Offline}\\
					
\hline
					\makecell{\textbf{This paper} \\ Theorem~\ref{thm:sgd_rer_ub}} & {\color{Green}$\frac{d^2\sigma^2}{T\lmin{G}}$} &\shortstack{\begin{tabular}{@{}c@{}}\scriptsize Increasing,Lipschitz \\
					\scriptsize Expansive \\
					\scriptsize {\color{red}Bounded Second Derivative} \end{tabular}} &\scriptsize{\color{red}Mixing} &  \scriptsize{\color{red}Sub-Gaussian} &\scriptsize{\color{Green} Streaming}  \\
					\hline
				\end{tabular}}
			\end{sc}
%		\end{small}
%	\end{center}

	\caption{Comparison of our results with existing results in terms of mixing time $\tmix$, stablility and number of samples $T$. Here, we take $\tmix = \tilde{\Omega}(\frac{1}{1-\|\A\|_{op}})$ as a proxy for the mixing time. Note that $\lmin{G} \geq \sigma^2$ in the worst case, and hence our bounds are better by a factor of $\tmix$.}
	\label{tb:summary-table} 
	\vskip -0.1in
\end{table*}

\section{Problem Statement}
\label{sec:problem_statement}
Let $\phi :\mathbb{R} \to \mathbb{R}$ be an increasing, $1$-Lipschitz function such that $\phi(0) = 0$. Suppose $X_0 \in \mathbb{R}^d$ is a random variable and $\A \in \mathbb{R}^{d\times d}$. We consider the following non-linear dynamical system (NLDS): 
\begin{equation}\label{eq:evolution_equation}
X_{t+1} = \phi(\A X_t) + \eta_t, 
\end{equation}
where the noise sequence $\eta_0,\dots,\eta_T$ is i.i.d random vectors independent of $X_0$. The noise $\eta_t$ is such that  $\mathbb{E}\eta_t = 0$, $\mathbb{E}\eta_t\eta_t^{\intercal} = \sigma^2 I$ for some $\sigma > 0$. We will also assume that $M_4:= \mathbb{E}\|\eta_t\|^4 < \infty$.  Let $\mu$ be the law of noise $\eta$. We denote the model above as $\glvar(\A,\mu,\phi)$. Whenever a stationary distribution exists for the process, we will denote it by $\pi(\A,\mu,\phi)$ or just $\pi$ when the process is clear from context. We will call the trajectory $X_0,X_1,\dots,X_T$ `stationary' if $X_0$ is distributed according to the measure $\pi(\A,\mu,\phi)$. Unless specified otherwise, we take $X_0 = 0$ almost surely.

The goal is to estimate $\A$ given a single trajectory $X_0,X_1,\dots,X_T$. A natural approach would be to minimize the empirical square loss, i.e,
$\losssq(A;X) := \frac{1}{T}\sum_{t=0}^{T-1}\|\phi(AX_t) - X_{t+1}\|^2$. However, when the link function $\phi$ is not linear, then this would be non-convex and hard to optimize. Instead, we use a convex proxy loss given by: %. Let $\bar{\phi}$ be the indefinite integral of the link function $\phi$. 
%\begin{define}\label{def:row_decomp}
%Given a matrix $A=[a_1,a_2,\cdots, a_d]^{\top}$, let $\mathcal{R}(A)=\{a_1,\cdots,a_d\}$ denote the set of vectors that are (transposes of) rows of the matrix $A$. We use $a^\top$ to represent a generic row of $A$. 
%\end{define}
%Instead, we consider minimizing the proxy loss:
\begin{equation}\label{eq:proxy_loss_def}
\lossprox(A;X) = \frac{1}{T}\sum_{t=0}^{T-1}\sum_{i=1}^{d}\bar{\phi}(\langle a_i,X_{t}\rangle ) - \langle e_i,X_{t+1}\rangle \langle a_i,X_t\rangle \,,
\end{equation}
where $\bar{\phi}$ is the indefinite integral of the link function $\phi$ and $a_i$ is the $i$-th row of $A$. Note that the gradient of $\lossprox(A;X)$ with respect to $A$ is given by:
\begin{equation}\label{eq:proxy_loss_grad}
\nabla\lossprox(A;X) = \frac{1}{T}\sum_{t=0}^{T-1}\left(\phi(AX_{t}) - X_{t+1}\right) X_t^{\intercal} \,.
\end{equation}
When the model is clear from context and the stationary distribution exists, we will denote the second moment matrix under the stationary distribution by $G := \mathbb{E} [X_t X_t^{\top}]$.  Note that $G \succeq \mathbb{E}[\eta_t\eta_t^{\intercal}] = \sigma^2 I$. Also, the empirical second moment matrix is denoted by $\hat{G}:= \frac{1}{T} \sum_{t=0}^{T-1}X_tX_t^{\intercal}$.

\subsection{Assumptions}
\label{subsec:assumptions}
We now state the assumptions below and use only a subset of the assumptions for each result. 
\begin{assumption}[Lipschitzness and Uniform Expansivity]
\label{assump:3}
$\phi$ is 1-Lipschitz and $|\phi(x)-\phi(y)| \geq \zeta|x-y|$, for some $\zeta > 0$.
\end{assumption}
%\begin{remark}
Note that when $\phi$ is only weakly differentiable but satisfy Assumption~\ref{assump:3}, with a slight abuse of notation, we will write down $\phi(x) - \phi(y) = \phi^{\prime}(\beta)(x-y)$ for some $\phi^{\prime}(\beta) \in [\zeta,1]$.
%\end{remark}
\begin{assumption}[Bounded 2nd Derivative]\label{assump:8}
$\phi$ is twice continuously differentiable and $|\phi^{\dprime}|$ is bounded. 
\end{assumption}

\begin{assumption}[Noise Sub-Gaussianity]\label{assump:7}
 For any unit norm vector $x \in \mathbb{R}^d$, we have $\langle\eta_t,x\rangle$ to be sub-Gaussian with variance proxy $C_{\eta}\sigma^2$.
\end{assumption}

Next, we extend the definition of exponential stability in \cite{sattar2020non} to `exponential regularity' to allow unstable systems.
\begin{assumption}[Exponential Regularity]
\label{assump:4}
Let $X_{T} = h_{T-1}(X_0,\eta_0,\dots,\eta_T)$ be the function representation of $X_T$. We say that $\glvar(\A,\mu,\phi)$ is $(C_{\rho},\rho)$ exponentially regular if for any choice of $T\in \mathbb{N}$ and $X_0,X_0^{\prime},\eta_0,\dots,\eta_T \in \mathbb{R}^d$:
$$\|h_T(X_0,\eta_0,\dots,\eta_T)-h_T(X^{\prime}_{0},\eta_0,\dots,\eta_T)\|_2 \leq C_\rho \rho^{T-1}\|X_0 - X_0^{\prime}\|_2\,.$$

When $\rho < 1$, we will call the system stable. When $\rho = 1$ we will call it `possibly marginally stable' and when $\rho > 1$, we will call it `possibly unstable'.
\end{assumption}
Note that when Assumption~\ref{assump:4} holds with $\rho < 1$, the system necessarily mixes and converges to a stationary distribution as $T \to \infty$. Such systems forget their initial conditions in time scales of the order $\tau_{\mathsf{mix}} =O(\frac{1+\log C_{\rho}}{\log \tfrac{1}{\rho}}) = O\left(\frac{1+\log C_{\rho}}{1-\rho}\right)$, and hence we use this as a proxy for the mixing time. In what follows, when we say `the system does not mix' we either mean that it does not mix within time $T$ or it does not converge to a stationary distribution (ex: $\rho \geq 1$).
%The assumption below ensures that the dynamical system/ $\glvar$ process is necessarily stable in the sense of Assumption~\ref{assump:4} with $C_{\rho} =1$ and $\rho = \|\A\| < 1$.
\begin{assumption}[Norm Boundedness]
\label{assump:6}
$\|\A\|_{\mathsf{op}} = \rho < 1$
\end{assumption}
That is, if $\A$ satisfies Assumption~\ref{assump:6}, we have for arbitrary $X,X^{\prime} \in \mathbb{R}^d$:  $\norm{\phi(\A X)-\phi(\A X')}\leq \rho \norm{X-X'}$ and $\norm{(\phi\circ \A)^k (X) }\leq \rho^k \norm{X}$. Hence, for such $\A$, $\glvar$ is {\em necessarily stable}.

\section{Offline Learning with Quasi Newton Method}
\label{sec:offline_learning}
\begin{algorithm}[t!]
	\DontPrintSemicolon
	\SetKwInOut{Input}{Input}
	\SetKwInOut{Output}{Output}
	\SetKwFunction{RN}{ReadNext}\SetKwFunction{LN}{LeaveNext}
	\Input{Offline data $\{X_0,\dots,X_T\}$, horizon $T$, no. of iterations $m$, link function $\phi$, step size $\gamma$}
	\Output{Estimate $ A_{m}$}
	\Begin{
		$A^0_0=0$ \textsf{/*Initialization*/}\;
		$\hat G \leftarrow \frac{1}{T}\sum_{t=0}^{T-1}X_tX_t^{\intercal}$; If $\hat G$ is not invertible, then \textbf{return} $A_m = 0$\;

			\For{$i\leftarrow 0$ \KwTo $m-1$}{
				\nl $A_{i+1}\leftarrow A_i - 2 \gamma \left(\nabla\lossprox(A_i;X)\right)\hat{G}^{-1} $
			}
		}
	\caption{Quasi Newton Method}
	\label{alg:modified_newton_method}
\end{algorithm}
In this section we consider estimating $\A$ using a single trajectory $(X_1,\dots,X_T)$ from $\glvar(\A,\mu,\phi)$. To this end, we study an offline Quasi Newton Method (Algorithm~\ref{alg:modified_newton_method}) where the iterates descend in the directions of the gradient of $\lossprox$ normalized by the inverse of the empirical second moment matrix $\hat{G} := \frac{1}{T}\sum_{t=0}^{T-1}X_tX_t^{\intercal}$. That is, the iterates follow an approximation of the standard Newton update. 

We now present analysis of Algorithm~\ref{alg:modified_newton_method} in two settings: a) Theorem~\ref{thm:newton_ub_without_mixing} provides estimation error for possibly unstable systems with sub-Gaussian noise that is close to the minimax optimal error incurred in the linear system identification case, b) Theorem~\ref{thm:newton_ub_heavy_tail} provides similarly tight estimation error for mixing systems but with heavy-tailed noise.

\begin{theorem}[Learning Without Mixing]\label{thm:newton_ub_without_mixing}
 Suppose Assumptions~\ref{assump:3},~\ref{assump:7} and~\ref{assump:4} hold with expansivity factor $\zeta$ and regularity parameters $(C_\rho,\rho)$. Let $\bar{C},\bar{C}_3$  be constants depending only on $C_{\eta}$, and let $\delta \in (0,\tfrac{1}{2})$. 
 Let $R^{*}:=C_{\rho}^2 C_{\eta}d\sigma^2\left(\sum_{t=1}^{T-1}\rho^t\right)^2 \log(\tfrac{4Td}{\delta}) $, and assume 
\begin{enumerate}
 \item The number samples $T \geq \bar{C}_3 \left(d\log\left(\frac{R^{*}}{\sigma^2}\right) + \log\tfrac{1}{\delta}\right)$
 \item Step size $\gamma=\tfrac{1}{4}$
 \item $m\geq\frac{10}{\zeta}\cdot \log \left(\frac{\|A_0 -\A\|^2_{\mathsf{F}}\cdot TR^{*}}{\sigma^2d^2}\right)$
\end{enumerate}  
 Then, the output $A_m$ of Algorithm~\ref{alg:modified_newton_method} after $m$ iterations and $\lmin{\hat{G}}$ satisfy with probability at-least $\geq 1-\delta$:

$$\|A_{m}-\A\|^2_{\mathsf{F}} \leq  \tfrac{\bar{C}\sigma^2}{T\zeta^2\lmin{\hat{G}}}\left[d^2 \log\left(1+\tfrac{R^{*}}{\sigma^2}\right) + d\log\left(\tfrac{2d}{\delta}\right)\right] \,,$$
$$ \lmin{\hat{G}} \geq \frac{\sigma^2}{2}.$$

\end{theorem}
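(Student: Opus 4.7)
The plan is to decompose $\|A_m - \A\|_{\mathsf{F}} \leq \|A_m - \hat A\|_{\mathsf{F}} + \|\hat A - \A\|_{\mathsf{F}}$, where $\hat A$ is the minimizer of the empirical convex proxy loss $\lossprox(\cdot;X)$. The first (optimization) term will decay geometrically thanks to the expansivity of $\phi$, the second (statistical) term will be controlled by a self-normalized martingale inequality that is entirely insensitive to the mixing time, and a separate matrix-concentration step will supply the auxiliary lower bound $\lmin{\hat G} \geq \sigma^2/2$ (which is also what makes the Quasi-Newton step well-defined).

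\textbf{Geometric convergence to $\hat A$.} Writing the Quasi-Newton update row-wise and applying the mean-value theorem to $\phi$, the recursion becomes $a_i^{k+1} - \hat a_i = (I - 2\gamma\,\hat G^{-1}\hat G_i^{\phi,k})(a_i^k - \hat a_i)$ with $\hat G_i^{\phi,k} := \tfrac{1}{T}\sum_{t=0}^{T-1}\phi'(\xi_{t,i}^k)\,X_tX_t^{\top}$. Assumption~\ref{assump:3} gives $\phi'(\xi_{t,i}^k) \in [\zeta,1]$, so the symmetric conjugate $\hat G^{-1/2}\hat G_i^{\phi,k}\hat G^{-1/2}$ has spectrum in $[\zeta,1]$. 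Measuring the error in the $\hat G$-norm and using $\gamma=1/4$ yields a per-iteration contraction by at most $(1-\zeta/2)$, hence $\|A_m-\hat A\|_{\mathsf{F}}^2 \leq (1-\zeta/2)^{2m}\,\tfrac{\lmax{\hat G}}{\lmin{\hat G}}\,\|A_0-\hat A\|_{\mathsf{F}}^2$; the theorem's choice $m \gtrsim \tfrac{1}{\zeta}\log(\cdot)$ makes this term dominated by the statistical error.

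\textbf{Statistical error via self-normalization.} Combining $\nabla_i \lossprox(\hat A;X)=0$ with the identity $\nabla_i \lossprox(\A;X) = -\tfrac{1}{T}\sum_t(\eta_t)_iX_t$ and another mean-value expansion yields $\hat G_i^{\phi}(\hat a_i - a^{*}_i) = \tfrac{1}{T}\sum_t(\eta_t)_iX_t$ for a matrix $\hat G_i^{\phi} \succeq \zeta\hat G$, so $\|\hat a_i - a^{*}_i\|^2 \leq \tfrac{1}{\zeta^2\lmin{\hat G}}\,\|\hat G^{-1/2}\tfrac{1}{T}\sum_t(\eta_t)_iX_t\|^2$. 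Since $X_t$ is $\mathcal{F}_{t-1}$-measurable while $\eta_t$ is independent of $\mathcal{F}_{t-1}$ and sub-Gaussian by Assumption~\ref{assump:7}, $\{(\eta_t)_iX_t\}_t$ is exactly the vector-valued martingale difference for which the self-normalized tail bound of \cite{abbasi2011online} applies. Combined with a mild regularizer and the pathwise envelope $\|X_t\|^2 \leq R^{*}$ --- obtained by unrolling \eqref{eq:evolution_equation} under Assumption~\ref{assump:4} together with the sub-Gaussian tail of $\eta_t$ --- this yields a per-row bound of order $\tfrac{\sigma^2}{T}\bigl[d\log(1+R^{*}/\sigma^2)+\log(2d/\delta)\bigr]$. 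A union bound across $i=1,\dots,d$ produces the claimed $d^2$ factor.

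\textbf{Lower bound on $\lmin{\hat G}$ --- the main obstacle.} The genuinely delicate step, and the one that enables learning \emph{without mixing}, is establishing $\lmin{\hat G} \geq \sigma^2/2$ without any stationary-distribution argument. The key observation is that the noise injection is per-step and independent of the dynamics: $\mathbb{E}[X_tX_t^{\top}\mid\mathcal{F}_{t-1}] = \phi(\A X_{t-1})\phi(\A X_{t-1})^{\top} + \sigma^2 I \succeq \sigma^2 I$, giving a uniform $\sigma^2 I$ floor on the conditional covariance regardless of how the dynamics behave. For a fixed unit $v$, a sub-Gaussian Bernstein/Hanson-Wright bound controls $\tfrac{1}{T}\sum_t\langle v,\eta_{t-1}\rangle^2 - \sigma^2$, and Freedman's inequality controls the cross term $\tfrac{1}{T}\sum_t\langle v,\phi(\A X_{t-1})\rangle\langle v,\eta_{t-1}\rangle$. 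A $\tfrac{1}{4}$-net on the unit sphere (cardinality $9^d$) and a union bound lift this to $\inf_{\|v\|=1}v^{\top}\hat Gv \geq \sigma^2/2$ once $T \gtrsim d+\log(1/\delta)$, matching the sample-size hypothesis. Everything else (the mean-value expansion, the $\hat G$-norm contraction, and the self-normalized martingale bound) is fairly mechanical; it is this uniform, noise-driven lower bound that is the decisive novelty making a mixing-free analysis possible.
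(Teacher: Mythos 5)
Your proposal is correct and is essentially the same proof as the paper's. The only framing difference is that you route through the empirical proxy-loss minimizer $\hat A$ via a triangle inequality $\|A_m-\A\|_{\mathsf{F}}\leq\|A_m-\hat A\|_{\mathsf{F}}+\|\hat A-\A\|_{\mathsf{F}}$, whereas the paper unrolls the $\hat G$-weighted recursion directly around $\A$ (Equation~\eqref{eq:main_bound_newton}); the two routes yield the same geometric-plus-statistical decomposition, and your chain $\|\hat a_i - a_i^*\|^2\leq\tfrac{1}{\zeta^2\lmin{\hat G}}\|\hat G^{-1/2}\hat N_i\|^2$ matches the paper's handling of $\hat N_i = \frac{1}{T}\sum_t \langle e_i,\eta_t\rangle X_t$. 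The remaining ingredients are identical to the paper's: the self-normalized martingale bound of \cite{abbasi2011online} with the pathwise envelope $\sup_t\|X_t\|^2\leq R^*$ (Lemma~\ref{lem:unstable_concentration}) for the variance term (Lemma~\ref{lem:normalized_noise}); and for $\lmin{\hat G}\geq\sigma^2/2$, the same split of $\hat G$ into a noise Gram matrix, a signal Gram matrix, and a martingale cross term, with the $\sigma^2 I$ floor supplied by the i.i.d.\ injected noise, a supermartingale/Freedman-style bound on the cross term conditioned on the bounded-trajectory event, sub-Gaussian matrix concentration for the noise Gram matrix, and an $\epsilon$-net to uniformize (Lemma~\ref{lem:dominant_event}). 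You are right that this last step is the one that eliminates mixing-time dependence; the rest is mechanical once the $\hat G$-norm contraction is set up.
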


Note that as $\lambda_{\min}(\hat{G})\gtrsim \sigma^2$, the error rate scales as $\approx d^2/T$, independent of $\tmix\approx 1/(1-\rho)$. The theorem also holds for non-mixing or possibly unstable systems as long as $\rho < 1 + \tfrac{C}{T} $. Furthermore, the error bound above is similar to the {\em minimax optimal bound} by \cite{simchowitz2018learning} for the {\em linear} setting, i.e., when $\phi(x)=x$. As the link function $\phi$ tends to decrease the information in $x$, intuitively lower bound for linear setting should apply for NLDS as well, which would imply our error rate to be optimal; we leave further investigation into lower bound of NLDS identification for future work. Interestingly, in the linear case whenever the smallest singular value $\sigma_{\min}(\A) > 1+ \epsilon$, it can be show than $\lambda_{\min}(\hat{G})$ grows exponentially with $T$, leading to an exponentially small error. It is not clear how to arrive at such a growth lower bound in the non-linear case. 

The computational complexity of the algorithm scales as $m\cdot T$ which depends only logarithmically on $\tmix$. Interestingly, the algorithm is almost hyperparameter free, and does not require knoweldge of parameters $\sigma, \tmix, \zeta$. 

 Also note that the stationary points of Algorithm~\ref{alg:modified_newton_method} and GLMtron (~\cite{foster2020learning}) are the same. So, the stronger error rate in the result above compared to the result by \cite{foster2020learning} is due to a sharper analysis. However, in dynamical systems of the form~\ref{eq:master_equation}, the squared norm of the iterates grow as $\frac{d}{1-\rho}$ even in the stable case. Hence, the GLMtron algorithm requires step sizes to be $\approx \frac{1-\rho}{d}$ which implies significantly slower convergence rate for large $\tmix=1/(1-\rho)$. In contrast, convergence rate for Algorithm~\ref{alg:modified_newton_method} depends at most logarithmically on $\tmix$.

\begin{theorem}[Learning with Heavy Tail Noise]\label{thm:newton_ub_heavy_tail}
Suppose Assumptions~\ref{assump:3} and~\ref{assump:4} hold. In Assumption~\ref{assump:4}, let $\rho < 1$. Let $X_0,\dots,X_T$ be a stationary trajectory drawn from $\glvar(\A,\mu,\phi)$ and $A_m$ be the $m$-th iterate of Algorithm~\ref{alg:modified_newton_method}. For some universal constants $C,C_1,C_0 > 0$, whenever $\delta \in (0,\tfrac{1}{2})$, $R^{*} :=  \frac{4TdC_{\rho}^2\sigma^2}{(1-\rho)^2 \delta}$ and

\begin{enumerate}
\item  $T \geq Cd\log(\tfrac{1}{\delta})\log(\tfrac{R^{*}}{\sigma^2}) \max\left(\frac{4C_{\rho}^6M_4}{(1-\rho)^4(1-\rho^2)\sigma^4}, \tfrac{\log\left( \tfrac{R^{*} C_1 C_{\rho}}{\sigma^2}\right)}{\log\left(\tfrac{1}{\rho}\right)}\right)$
\item Step size $\gamma = \frac{1}{4}$
\item $m\geq\frac{10}{\zeta}\cdot \log \left(\frac{\|A_0 -\A\|^2_{\mathsf{F}}\cdot TR^{*}}{\sigma^2d^2}\right)$
\end{enumerate}
There exists an event $\mathcal{W} \in \sigma(X_0,\eta_0,\dots,\eta_{T-1})$ with $\mathbb{P}(\mathcal{W}) \geq 1-\delta$ and :
$$\mathbb{E}\left[\|A_m - \A\|^2_{\mathsf{F}}\mathbbm{1}(\mathcal{W})\right] \leq   \frac{C_0d^2\sigma^2}{\zeta^2 T \lmin{G}}.$$
\end{theorem}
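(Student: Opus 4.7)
The plan is to combine a deterministic contraction analysis of the iterates in the $\hat G$-geometry (which is essentially the same as for Theorem~\ref{thm:newton_ub_without_mixing}) with a probabilistic control that, under heavy tails, uses mixing to relate $\hat G$ to $G$ and to compute the expectation of the noise cross-term in closed form. By Assumption~\ref{assump:3}, for each row index $j$ and each $t$ there exists a scalar $d^{i}_{t,j} \in [\zeta,1]$ such that $\phi(\langle a_{i,j},X_t\rangle) - \phi(\langle a^*_j, X_t\rangle) = d^{i}_{t,j}\langle a_{i,j}-a^*_j, X_t\rangle$. Substituting $X_{t+1} = \phi(\A X_t) + \eta_t$ into the update of Algorithm~\ref{alg:modified_newton_method} and writing it row-wise yields
\begin{equation*}
  a_{i+1,j} - a^*_j = \left(I - 2\gamma \hat G^{-1} \hat G^{i}_j\right) (a_{i,j}-a^*_j) + 2\gamma \hat G^{-1} N_j,
\end{equation*}
with $\hat G^{i}_j := \tfrac{1}{T}\sum_t d^{i}_{t,j} X_t X_t^\top$ satisfying $\zeta \hat G \preceq \hat G^{i}_j \preceq \hat G$, and $N_j := \tfrac{1}{T}\sum_t \eta_{t,j} X_t$. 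Setting $v_{i,j} := \hat G^{1/2}(a_{i,j}-a^*_j)$ and $M^{i}_j := \hat G^{-1/2}\hat G^{i}_j \hat G^{-1/2} \in [\zeta I, I]$, this becomes $v_{i+1,j} = (I-2\gamma M^{i}_j)v_{i,j} + 2\gamma \hat G^{-1/2} N_j$, so with $\gamma=1/4$ one has $\|I-2\gamma M^{i}_j\|_{\mathsf{op}}\leq 1-\zeta/2$ and iterating gives
\begin{equation*}
  \|A_m - \A\|_{\hat G}^2 \;\leq\; 2(1-\zeta/2)^{2m}\,\|A_0 - \A\|_{\hat G}^2 + \tfrac{2}{\zeta^2}\,\tr\!\left(N\,\hat G^{-1} N^\top\right),
\end{equation*}
where $N := \tfrac{1}{T}\sum_t \eta_t X_t^\top$ has $N_j$ as the $j$-th column of $N^\top$.

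I would then define a good event $\mathcal{W} := \{\tfrac{1}{2} G \preceq \hat G \preceq \tfrac{3}{2} G\}$, possibly intersected with a trajectory envelope $\sum_t \|X_t\|^2 \leq R^*$, and show $\Pb{\mathcal{W}} \geq 1-\delta$ under the stated sample complexity. For this, stationarity together with Assumption~\ref{assump:4} bounds the stationary fourth moment $\Ex{\|X_0\|^4}$ in terms of $C_\rho^4 M_4/(1-\rho)^4$, and the exponential regularity makes the covariances $\mathrm{Cov}(\langle X_s,v\rangle^2, \langle X_t,v\rangle^2)$ decay like $\rho^{|t-s|}$; summing, the variance of the quadratic form $v^\top \hat G v = \tfrac{1}{T}\sum_t \langle X_t,v\rangle^2$ is $O\!\left(C_\rho^4 M_4/[T(1-\rho)^5]\right)$. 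A Chebyshev/moment-method argument for the deviation $|v^\top(\hat G - G)v| \geq \lmin{G}/2$, together with a union bound over a $1/4$-net of the sphere of size $e^{Cd}$, then delivers $\Pb{\mathcal W^c}\leq \delta$ exactly under the hypothesised $T \gtrsim d\log(1/\delta)\log(R^*/\sigma^2)\cdot C_\rho^6 M_4/[(1-\rho)^4(1-\rho^2)\sigma^4]$. The critical cancellation in the noise term is that $\eta_t$ is mean zero and independent of $\sigma(X_0,\eta_0,\ldots,\eta_{t-1})$, so all cross terms vanish and a direct calculation gives $\Ex{\tr(N\,G^{-1} N^\top)} = \tfrac{1}{T^2}\sum_t \Ex{\|\eta_t\|^2}\,\Ex{X_t^\top G^{-1} X_t} = \tfrac{d^2\sigma^2}{T}$ without any mixing-time penalty.

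To assemble the bound, on $\mathcal{W}$ both $\hat G^{-1} \preceq 2 G^{-1}$ and $\lmin{\hat G} \geq \lmin{G}/2$, hence
\begin{equation*}
  \|A_m - \A\|_F^2 \,\mathbbm{1}(\mathcal W) \leq \tfrac{4(1-\zeta/2)^{2m}}{\lmin{G}} \|A_0-\A\|_{\hat G}^2\,\mathbbm{1}(\mathcal W) + \tfrac{8}{\zeta^2 \lmin{G}}\,\tr\!\left(N\,G^{-1} N^\top\right).
\end{equation*}
On $\mathcal W$ the initial term is controlled by $\|A_0-\A\|_{\hat G}^2 \leq \lmax{\hat G}\|\A\|_F^2 \leq \tfrac{3}{2}\lmax{G}\|\A\|_F^2$, and the chosen lower bound on $m$ is precisely what drives $(1-\zeta/2)^{2m}$ small enough for this contribution to fall below $d^2\sigma^2/(\zeta^2 T\lmin{G})$. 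Taking expectations and using non-negativity of $\tr(N G^{-1} N^\top)$ to drop the indicator in the second term gives $\Ex{\|A_m-\A\|_F^2\,\mathbbm{1}(\mathcal W)} \leq C_0 d^2\sigma^2/(\zeta^2 T \lmin{G})$, as claimed.

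The main obstacle is the middle paragraph: under only a fourth-moment assumption on the noise one cannot use self-normalised martingales as in Theorem~\ref{thm:newton_ub_without_mixing}, so one has to lean on mixing to dampen cross-time correlations while using moment methods to tame the tails. This is precisely where $\tmix \approx 1/(1-\rho)$ enters the \emph{sample-size} requirement rather than the error bound itself, and it is the source of the quite strong $(1-\rho)^{-5}$-style powers in the threshold on $T$. A secondary subtlety worth flagging is that one must use non-negativity to replace $\hat G^{-1}$ by $G^{-1}$ \emph{before} integrating, because a direct concentration statement for $\tr(N\hat G^{-1} N^\top)$ would require much stronger tail control on $\eta_t$ than $M_4<\infty$.
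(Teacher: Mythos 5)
Your deterministic skeleton --- the row-wise contraction in the $\hat G$-weighted norm, the resulting recursion, the exact computation $\mathbb{E}\tr(N G^{-1} N^\top)=d^2\sigma^2/T$ (from the martingale structure of $\eta_t$ and $\mathbb{E} X_t^\top G^{-1} X_t = d$ at stationarity), and the move of passing from $\hat G^{-1}$ to $G^{-1}$ on the good event and dropping the indicator before integrating --- all matches the paper. The genuine gap is in how you certify the good event $\mathcal{W}$. Chebyshev applied to the single scalar $v^\top\hat G v = \tfrac{1}{T}\sum_t\langle X_t,v\rangle^2$ yields a per-direction failure probability of order $\mathrm{Var}(v^\top\hat G v)/\lmin{G}^2 = O\bigl(C_\rho^4 M_4/[T(1-\rho)^5\lmin{G}^2]\bigr)$, which decays only polynomially in $T$. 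A $1/4$-net of the sphere has cardinality $e^{\Theta(d)}$, so after the union bound you would need $T\gtrsim e^{\Theta(d)}/\delta$ to bring $\mathbb{P}(\mathcal{W}^c)$ below $\delta$ --- exponential in $d$ and in $1/\delta$, not the $T\gtrsim d\log(1/\delta)\cdot\mathrm{poly}(1/(1-\rho))$ the theorem promises. You correctly identify that heavy tails forbid self-normalised martingales, but Chebyshev does not escape the obstruction: a moment bound on one long sum of dependent, heavy-tailed quadratics cannot supply the exponential-in-$d$ tails a net argument needs.

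The paper's Lemma~\ref{lem:well_conditioned_grammian} sidesteps this by engineering exponential tails out of \emph{independence} rather than out of tail assumptions. It blocks the trajectory into $N$ buffers of size $B$ separated by gaps $u$, couples to a stationary fictitious process so that the block-wise empirical Gram matrices $\tilde G^{(t)}$ become exactly i.i.d., applies the Paley--Zygmund inequality per block (Lemma~\ref{lem:probable_contraction}, needing only $M_4<\infty$) to get a \emph{constant} probability $p_0>0$ that $x^\top\tilde G^{(t)} x\geq\tfrac12 x^\top G x$, and then uses a Chernoff-type bound over the $N$ independent Bernoulli-like events to drive $\mathbb{P}\bigl(\tfrac{1}{N}\sum_t x^\top\tilde G^{(t)} x \leq c_0 x^\top G x\bigr)$ down to $e^{-c_1 N}$. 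Only at that stage does the $e^{O(d)}$ net enter, and $N\gtrsim d+\log(1/\delta)$ then suffices. This is where the $\max\bigl(\cdot,\log(R^*C_1C_\rho/\sigma^2)/\log(1/\rho)\bigr)$ term in the sample-size requirement comes from: the gap $u$ must be $\Omega(\tmix\log T)$ for the coupling to bite. A secondary point: the paper only needs the one-sided lower isometry $\hat G\succeq c_0 G$ together with a crude envelope $\lmax{\hat G}\leq R^*$; your two-sided sandwich $\tfrac12 G\preceq\hat G\preceq\tfrac32 G$ is stronger than necessary and the upper half would itself be delicate to establish with only four moments.
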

\paragraph{Obtaining High Probability Bounds:}
The bound above shows that the expectation of the error restricted to a high probability set is small. This, along with Markov inequality, shows that we can have an error of at-most $ \frac{Cd^2\sigma^2}{\zeta^2 T\lmin{G}}$ with probability at-least $\frac{2}{3}$. This can be boosted to a high probability bound by splitting the horizon $T$ into $K$ contiguous segments with a gap of $O(\tmix \log T)$ to maintain approximate independence (see Section~\ref{subsec:coupling}). We then run the Quasi Newton method on each of these `split' data sets to obtain nearly independent estimates $\hat{A}_1,\dots,\hat{A}_K$,  which each have error at-most $ \frac{Cd^2\sigma^2K}{\zeta^2 T\lmin{G}}$ with probability at-least $\frac{2}{3}$. Using a standard high-dimensional median of means estimator (see \cite[Algorithm 3]{hsu2016loss}) for $\hat{A}_1,\dots,\hat{A}_K$, we obtain error bounds of the order $\frac{Cd^2\sigma^2K}{\zeta^2 T\lmin{G}}$ with probability at least $1-e^{-\Omega(K)}$.

We refer to Section~\ref{sec:ideas_behind_proofs} for a high-level exposition of the key ideas in the analysis and Section~\ref{sec:gen_newton_analysis} for the full proof of Theorems~\ref{thm:newton_ub_without_mixing} and~\ref{thm:newton_ub_heavy_tail}.

\section{Streaming Learning with SGD-RER}
\label{sec:sgd_rer_results}

\begin{figure}[t!]

  \centering
  \includegraphics[width = .7\linewidth]{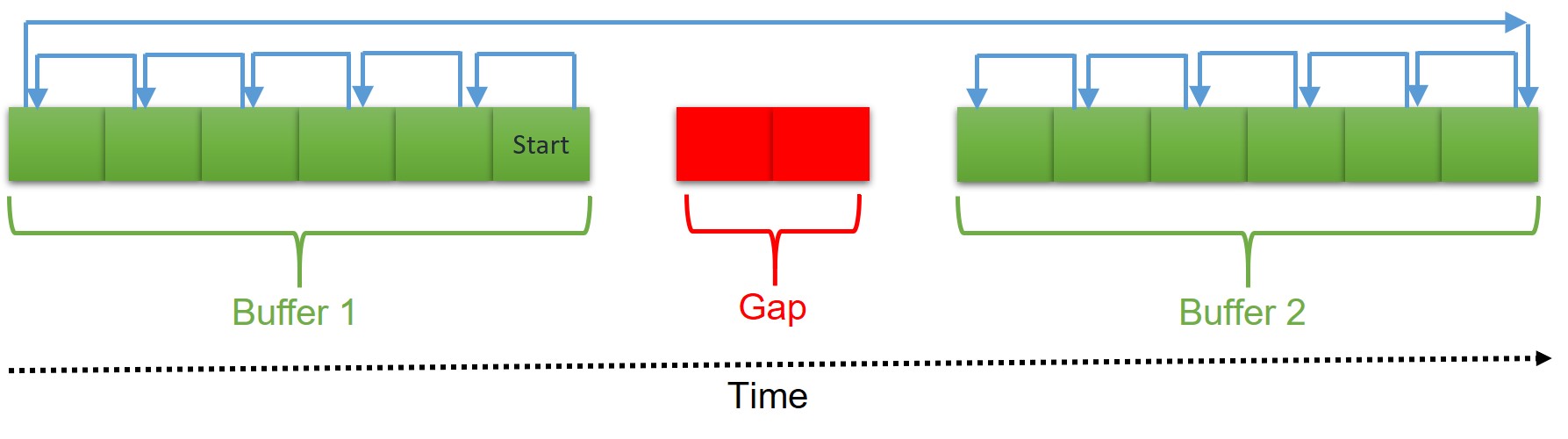}\vspace*{-10pt}
  %\fbox{\rule[-.5cm]{0cm}{4cm} \rule[-.5cm]{4cm}{0cm}}
  \caption{ Data order in $\sgdber$, where each block represents a data point. Blue arrows indicate the data processing order. The gaps ensure approximate independence between successive buffers.}
  \label{fig:data_order_sgd_rer}
\end{figure}
In this section, we consider the one-pass, streaming setting, where the data points are presented in a streaming fashion. The goal is to continuously produce better estimates of $\A$ while also ensuring that the space and the time complexity of the algorithm is small. This disallows approaches that would just store all the observed points and then apply offline Algorithm~\ref{alg:modified_newton_method} to produce strong estimation error. Such one-pass streaming algorithms  are critical in a variety of settings like large-scale and online time-series analysis \cite{hamilton1994time,kuznetsov2016time}, TD learning in RL \cite{sutton1998introduction}, econometrics. 

To address this problem, we consider $\sgdber$ (Algorithm~\ref{alg:sgd_rer}) which was introduced in \cite{jain2021streaming} in the context of {\em linear system identification} (LSI). We apply the method for NLDS identification as well. $\sgdber$ uses SGD like updates, but the data is processed in a different order than it is received from the dynamical system. This algorithm is based on the observation made in \cite{jain2021streaming} that for LSI, when SGD is run on the least squares loss in the forward order, there are spurious correlations which prevent the algorithm's convergence to the optimum parameter $\A$. Surprisingly, considering the data in the reverse order \emph{exactly} unravels these correlations to resolve the problem.
\begin{algorithm}[t!]
	\DontPrintSemicolon
	\SetKwInOut{Input}{Input}
	\SetKwInOut{Output}{Output}
	\SetKwFunction{RN}{ReadNext}\SetKwFunction{LN}{LeaveNext}
	\Input{Streaming data $\{X_\tau\}$, horizon $T$, buffer size $B$, buffer gap $u$, bound $R$, tail start: $t_0 \leq N/2$, link function $\phi$, step size $\gamma$}
	\Output{Estimate $\hat A_{t_0,t}$, for all $t_0\leq t\leq N-1$; $N=T/(B+u)$}
	\Begin{
		Total buffer size: $S\leftarrow B+u$, Number of buffers: $N\leftarrow T/S$\;
		$A^0_0=0$ \textsf{/*Initialization*/}\;
		\For{$t\leftarrow 1$ \KwTo $N$}{
			Form buffer $\textsf{Buf}^{t-1}=\{X^{t-1}_0, \dots, X^{t-1}_{S-1}\}$, where, $X^{t-1}_{i}\leftarrow X_{(t-1)\cdot S+i}$\;
			If $\exists i,\ s.t.,\ \norm{X^{t-1}_i}^2>R$, then \textbf{return} $\hat A_{t_0,t}=0$\;
			
			\For{$i\leftarrow 0$ \KwTo $B-1$}{
				\nl $A^{t-1}_{i+1}\leftarrow A^{t-1}_i - 2 \gamma \left[\phi(A^{t-1}_i X^{t-1}_{S-i-1}) - X^{t-1}_{S-i}\right] X^{t-1,\top}_{S-i-1}$
			}
			$A^{t}_0=A^{t-1}_B$\;
			If $t\geq t_0+1$, then $\hat A_{t_0,t}\leftarrow \frac{1}{t-t_0}\sum_{\tau=t_0+1}^{t} A^{\tau-1}_B$
		}
	}
	
	\caption{$\sgdber$}
	\label{alg:sgd_rer}
\end{algorithm}
 Reverse order traversal of data, even though one pass, does not give a streaming algorithm. Hence, we divide the data into multiple buffers of size $B$ and leave of size $u$ between the buffers (See Figure~\ref{fig:data_order_sgd_rer}). The data \emph{within} each buffer is processed in the reverse order whereas the buffers themselves are processed in the order received. See Figure~\ref{fig:data_order_sgd_rer} for an illustration of the processing order. The gaps $u$ are set large enough so that the buffers behave approximately independently. Setting $B \geq  10u$ we note that this simple strategy improves the sample efficiency compared to naive data dropping since we use \emph{most} of the samples for estimating $\A$.  We now present the main result for streaming setting. 
\begin{theorem}[Streaming Algorithm]\label{thm:sgd_rer_ub}
Suppose Assumptions
~\ref{assump:3},~\ref{assump:8},~\ref{assump:7} and ~\ref{assump:6} hold and that the data points are stationary. Set $\alpha=100$, $R= \frac{16(\alpha+2)d C_{\eta}\sigma^2\log T }{1-\rho}$, $u\geq\frac{2\alpha\log{T}}{\log (\tfrac{1}{\rho})}$, $B \geq \left(\bar{C}_1 \frac{d}{(1-\rho)(1-\rho^2)}\log\left(\frac{d}{1-\rho}\right), 10 u\right) $  for a global constant $\bar{C}_1$ dependent only $C_{\eta}$ and $\alpha$. Let $N=T/(B+u)$ be the number of buffers. Finally, set step-size $\gamma = \frac{C}{T^{\nu}}$ where $\nu = 6.5/7$ and let $T$ be large enough such that $\gamma \leq \min\left(\frac{\zeta}{4BR(1+\zeta)},  \tfrac{1}{2R}\right)$. If $N/2>t_0> c_1\frac{\log T}{\zeta\gamma B\lambda_{\min}(G)}=\Theta(T^{\nu}\log T)$ for some large enough constant $c_1>0$, then output $\hat{A}_{t_0,N}$ of Algorithm~\ref{alg:sgd_rer} satisfies: 
%There exists an event $\cd^{0,N-1}$ such that $\mathbb{P}\left(\cd^{0,N-1}\right) \geq 1-\prbnd$ and 
\begin{\Ieee}{LLL}
\Ex{\|\hat{A}_{t_0,N} -\A\|_{\mathsf{F}}^2} \leq C \frac{d^2\sigma^2 \log{T}}{T\lmin{G}\zeta^2} + \text{ Lower Order Terms }\Ieeen
\end{\Ieee}
where $C$ is a constant dependent on $C_{\eta}$, $\alpha$.
\end{theorem}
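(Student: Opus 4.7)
My plan is to extend the Reverse Experience Replay analysis of \cite{jain2021streaming} from the linear case to the NLDS setting by linearizing the non-linearity on each iteration. Writing $\Delta^{t-1}_i := A^{t-1}_i - A^*$ and using Assumption~\ref{assump:3} to introduce mean-value diagonal matrices $\phi'(\xi^{t-1}_i)$ with entries in $[\zeta,1]$, the inner-loop update becomes, row-wise,
\begin{equation*}
\Delta^{t-1}_{i+1} = \Delta^{t-1}_i\bigl(I - 2\gamma\,\phi'(\xi^{t-1}_i)\,X^{t-1}_{S-i-1}X^{t-1,\top}_{S-i-1}\bigr) + 2\gamma\,\eta^{t-1}_{S-i-1}X^{t-1,\top}_{S-i-1}.
\end{equation*}
Unrolling over the $B$ reverse-order updates inside buffer $t-1$ yields a bias--variance split $\Delta^{t-1}_B = P^{t-1}\Delta^{t-1}_0 + W^{t-1}$, where $P^{t-1}$ is a product of per-step contraction operators and $W^{t-1}$ is an accumulated noise term. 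This decomposition mirrors the linear analysis, except that the contraction and noise matrices now carry iterate-dependent factors $\phi'(\xi^{t-1}_i)$.

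First I would set up a \emph{coupling} between the true trajectory and a surrogate trajectory in which the state at the start of each buffer is replaced by an independent draw from the stationary distribution $\pi(A^*,\mu,\phi)$. The buffer gap $u \geq 2\alpha\log T/\log(1/\rho)$ together with Assumption~\ref{assump:6} gives a coupling error bounded by $O(\rho^u) = O(T^{-2\alpha})$ per buffer, which is negligible. In parallel, I would establish a uniform norm bound $\max_{t,i}\|X^{t-1}_i\|^2 \leq R$ with probability $\geq 1 - O(T^{-\alpha})$ using sub-Gaussianity (Assumption~\ref{assump:7}) and stability; conditioning on this event lets us keep the step-size condition $\gamma \leq \min\bigl(\zeta/(4BR(1+\zeta)), 1/(2R)\bigr)$ intact pathwise.

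Next I would analyze a single (coupled, bounded) buffer. For the bias term, I would show that $\mathbb{E}\|P^{t-1}\Delta^{t-1}_0\|_{\mathsf F}^2 \leq (1 - c\gamma B\zeta\lambda_{\min}(G))\mathbb{E}\|\Delta^{t-1}_0\|_{\mathsf F}^2 + \text{(higher-order in $\gamma$)}$, using the fact that the conditional expectation $\mathbb{E}[\phi'(\xi)X X^\top]$ dominates $\zeta\lambda_{\min}(G)\cdot I$, and controlling the difference $\phi'(\xi^{t-1}_i) - \phi'(A^* X^{t-1}_{S-i-1})$ via Assumption~\ref{assump:8} through a Taylor expansion that introduces a remainder of size $O(\gamma\|\Delta^{t-1}_i\|_{\mathsf F} R^{3/2})$. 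For the variance term $W^{t-1}$, the reverse-order processing is crucial: each $\eta^{t-1}_{S-i-1}$ is independent of $X^{t-1}_{S-i-1}$ and of all earlier factors in the product (they lie strictly in the past), so the cross-terms in $\mathbb{E}\|W^{t-1}\|_{\mathsf F}^2$ vanish and one obtains the clean bound $\mathbb{E}\|W^{t-1}\|_{\mathsf F}^2 \lesssim \gamma^2 B\, d\, \sigma^2\, \lambda_{\max}(G)$, up to $O(\gamma^3)$ corrections from the iterate-dependent $\phi'$.

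Finally I would chain across buffers and tail-average. The per-buffer contraction gives $\mathbb{E}\|\Delta^{t}_0\|_{\mathsf F}^2 \leq (1-c\gamma B\zeta\lambda_{\min}(G))^t\|\Delta^0_0\|_{\mathsf F}^2 + O\bigl(\gamma\sigma^2 d/(\zeta\lambda_{\min}(G))\bigr)$; for $t_0 = \Theta(T^\nu\log T)$ the initial bias is driven below the variance floor. Tail-averaging $\hat A_{t_0,N} = \frac{1}{N-t_0}\sum_{\tau=t_0+1}^N A^{\tau-1}_B$ across the $N-t_0 = \Theta(T/B)$ approximately-independent buffers shrinks the variance by a factor of $N-t_0$, yielding
\begin{equation*}
\mathbb{E}\|\hat A_{t_0,N} - A^*\|_{\mathsf F}^2 \;\lesssim\; \frac{\gamma\,\sigma^2 d^2}{\zeta^2\,\lambda_{\min}(G)\,(N-t_0)\,B} \;+\; \text{lower order terms},
\end{equation*}
and plugging in $\gamma = \Theta(T^{-\nu})$, $B+u \asymp B$, and $(N-t_0)B \asymp T$ produces the claimed $O(d^2\sigma^2\log T/(T\lambda_{\min}(G)\zeta^2))$ rate. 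The main obstacle is the third paragraph: in the linear case the contraction matrices have a deterministic conditional mean $I-2\gamma X X^\top$, but here $\phi'(\xi^{t-1}_i)$ depends on the iterate $\Delta^{t-1}_i$, coupling it to the noise and breaking the clean product structure. The role of Assumption~\ref{assump:8} is precisely to supply a uniform Lipschitz control that lets us Taylor-expand $\phi'(\xi^{t-1}_i)$ around $\phi'(A^* X^{t-1}_{S-i-1})$ and show the resulting deviations, when propagated through $B$ contractive factors, contribute only lower-order terms.
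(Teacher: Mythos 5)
Your overall scaffold — coupled process at buffer boundaries, MVT linearization, bias--variance split, tail averaging — matches the paper's approach. But the proposal has a genuine gap in exactly the place you flag as "the main obstacle," and the mechanism you sketch to close it does not have the right order in $\gamma$.

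In your third paragraph you first assert that because each $\eta^{t-1}_{S-i-1}$ is independent of $X^{t-1}_{S-i-1}$ and of "all earlier factors in the product," the cross terms in $\mathbb{E}\|W^{t-1}\|_{\mathsf F}^2$ vanish. That is false in the non-linear setting, and it is the central difficulty the paper must solve. In reverse order the iterate $a^{t-1}_k$ (for $k>j$) has already absorbed $\eta^{t-1}_{-j}$ through the target $X^{t-1}_{-(j-1)} = \phi(\A X^{t-1}_{-j})+\eta^{t-1}_{-j}$, so $\phi'(\xi^{t-1}_{-k})$ — which sits between $\langle a^{t-1}_k,X^{t-1}_{-k}\rangle$ and $\langle a^*,X^{t-1}_{-k}\rangle$ — is correlated with $\eta^{t-1}_{-j}$. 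The same leakage propagates across buffers because $\tilde a^{t}_0 = \tilde a^{t-1}_B$. So $\mathbb{E}[\cro] \neq 0$, and the entire point of Section~\ref{subsec:algorithmic_stability} of the paper is to show it is nonetheless small.

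Your proposed fix — Taylor-expand $\phi'(\xi^{t-1}_i)$ around $\phi'(\langle \a,X^{t-1}_{S-i-1}\rangle)$ using Assumption~\ref{assump:8} — removes the iterate dependence, but the perturbation it must absorb is $|\xi^{t-1}_i - \langle \a,X\rangle| \le \|a^{t-1}_i-\a\|\,\|X\|$, and $\|a^{t-1}_i-\a\|$ is $\Theta(1)$ (cf.\ Proposition~\ref{prop:crude_bnd}: it is of order $\sqrt{R\beta}/(\zeta\lambda_{\min})$, not of order $\gamma$). So each factor $I-2\gamma\phi'(\cdot)XX^\top$ is perturbed by $O(\gamma\norm{\phi''}\sqrt{R}\,R\cdot\Theta(1)) = O(\gamma)$, the product of $O(Bt)$ such factors is perturbed by $O(\gamma T)$, and the per-cross-term contribution ends up $O(\gamma^2\cdot\gamma T)=O(\gamma^3 T)$. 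Summed over the cross terms this does not stay below the leading $O(1/T)$ term for $\gamma=\Theta(T^{-6.5/7})$. The paper instead uses algorithmic stability: resample a \emph{single} noise vector $\eta^{t-r}_{-j}\to\bar\eta^{t-r}_{-j}$, generate the perturbed iterates $\bar{\tilde a}$, and observe (Lemma~\ref{lem:algo-stability}, eq.~\eqref{eq:crude_bnd_4}) that $\|\bar{\tilde a}^s_i - \tilde a^s_i\| = O(\gamma R B\sqrt{R\beta}/(\zeta\lambda_{\min}))$ — one extra factor of $\gamma$, precisely because a single resampled noise enters the SGD map with a $\gamma$ coefficient. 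Feeding this into the second-derivative bound gives a per-factor perturbation of $O(\gamma^2)$ (eq.~\eqref{eq:noise_resample_bound_5}), and a cross-term contribution of order $\gamma^4$, which is genuinely lower order. Your expansion point is wrong in an essential, not cosmetic, way: you need to compare the iterate to a slightly perturbed iterate, not to $\a$.

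A secondary, smaller accounting issue: your final display $\mathbb{E}\|\hat A_{t_0,N}-\A\|_{\mathsf F}^2 \lesssim \gamma\sigma^2 d^2 /(\zeta^2\lambda_{\min}(G)(N-t_0)B)$ has a spurious $\gamma$ in front; it would give $T^{-1-\nu}$, better than the minimax $1/T$. You are missing the geometric factor $\bigl\|\sum_{t_2>t_1}\prod_{s=t_1}^{t_2-1}\Htt{s}{0}{B-1}\bigr\| \lesssim 1/(\zeta\gamma B\lambda_{\min}(G))$ (paper's Claim~\ref{claim:avg_iter_geom}), which enters the off-diagonal terms of the averaged squared variance and cancels the $\gamma$ coming from the per-buffer variance $\cVt_s \approx \gamma d\beta/\zeta$.
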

%\pj{Shouldn't we have some condition on $T$ about how large $T$ should be?}
\begin{remark}
The lower order terms are of the order $\poly(R,B,\beta,1/\zeta,1/\lambda_{\min},\norm{\phi''})\gamma^{7/2}T^2 + \gamma^2 R \sigma^2 d \frac{1}{T^{\alpha/2-2}} + \norm{A_0-\A}^2_{\mathsf{F}}\left[\frac{e^{-c_2\zeta\gamma B\lambda_{\min}t_0}}{T\zeta\gamma \lambda_{\min}}\right] $. We refer to the proof in Section~\ref{sec:sgd_rer_proof} for details.
\end{remark}
\begin{remark}
Although the bound in Theorem~\ref{thm:sgd_rer_ub} is given for the algorithmic iterate at the end of the horizon, the proof shows that in fact we can bound the error of the iterates at the end of each buffer after $(1+c)t_0$ i.e. if $t\geq (1+c)t_0$ for some $c>0$ then we obtain 
$$\Ex{\|\hat{A}_{t_0,t} -\A\|_{\mathsf{F}}^2} \leq C \frac{d^2\sigma^2 \log{T}}{(tB) \lmin{G}\zeta^2} + \text{ Lower Order Terms } $$
\end{remark}
Note that the estimation error above matches the error by offline method up to log factors (see Theorem~\ref{thm:newton_ub_heavy_tail}). %$\lambda_{\min} = \lmin{G} \geq \sigma^2$.
Furthermore, while the method requires NLDS to be mixing, i.e., $\rho<1$, but the leading term in error rate does not have an explicit dependence on it. Moreover, the space complexity of the method is only $B \cdot d$ which scales as $d^2/((1-\rho)(1-\rho^2))$, i.e., it is $1/((1-\rho)(1-\rho^2))\sim \tmix^2$ factor worse than the obvious lower bound of O($d^2$) to store $A$. We leave further investigation into   space complexity optimization or tightening the lower bound for future work. Also, note that $u\leq B/10$, so $\sgdber$ wastes only about 10\% of the samples. Finally, the algorithm requires a reasonable upper bound on $\rho$ to set up various hyperparameters like $R, u, B$. However, it is not clear how to estimate such an upper bound only using the data, and seems like an interesting open question. % We leave estimation of $\rho$ for future work. 

See Section~\ref{sec:ideas_behind_proofs} for an explanation of the elements involved in the analysis of the algorithm and to Section~\ref{subsec:proof_strategy} for a detailed overview of the proof.
\section{Exponential Lower Bounds for Non-Expansive Link Functions}
The previous results showed that we can efficiently recover the matrix $\A$ given that the link function is uniformly expansive. We now consider non-expansive functions and show that parameter recovery is hard in this case. In particular, we show that even for the case of $\phi = \relu$, the noise being $\mathcal{N}(0,I)$, and $\|\A\| \leq \tfrac{1}{2}$, the error has an information theoretic lower bound which is exponential in the dimension. We note that this is consistent with Theorem 3 in \cite{foster2020learning} which too has an exponential dependence on the dimension (since the matrix $K \succeq I$). 

Before stating the results, we introduce some notation. Consider any algorithm $\mathcal{A}$, with accepts input $(X_0,\dots,X_T) $ and outputs an estimate $\hat{A} \in \mathbb{R}^{d\times d}$. For simplicity of calculation, we will assume that $X_0 = 0$ and $X_{t+1} = \relu(\A X_t) + \eta_t$. Since the mixing time is $O(1)$, similar results should hold for stationary sequences. We define the loss $\loss(\mathcal{A},T,\A) = \mathbb{E}\|\hat{A}-\A\|^2_F$, where the expectation is over the randomness in the data and the algorithm. By $\Theta(\tfrac{1}{2})$, we denote all the the elements of $B \in \mathbb{R}^{d\times d}$ such that $\|B\| \leq \tfrac{1}{2}$. The minimax loss is defined as: $$\loss(\Theta(\tfrac{1}{2}),T) := \inf_{\mathcal{A}}\sup_{\A \in \Theta(\tfrac{1}{2})} \loss(\mathcal{A},T,\A)\,.$$

\begin{theorem}[ReLU Lower Bound]\label{thm:relu_exp_lb}
For universal constants $c_0,c_1 >0$, we have:
$$\loss(\Theta(\tfrac{1}{2}),T) \geq c_0\min\left(1,\frac{\exp(c_1d)}{T}\right) \,.$$

\end{theorem}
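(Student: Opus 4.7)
I would proceed via Fano's inequality over a packing of $\Theta(\tfrac{1}{2})$. Let $V \subset \Theta(\tfrac{1}{2})$ be a finite family with pairwise Frobenius separation at least $2\epsilon$; standard Fano gives
$$\inf_{\mathcal{A}} \sup_{A \in V} \Pb{\|\hat A - A\|_F \geq \epsilon} \;\geq\; 1 - \frac{\max_{A,A' \in V}\kl(\mathbb{P}_{A}\|\mathbb{P}_{A'}) + \log 2}{\log |V|}.$$
The plan is to exhibit $V$ with $\log|V| = \Omega(d)$ and $\max\kl = O(1)$ whenever $T \leq \exp(c_1 d)$. This yields $\Ex{\|\hat A - A\|^2_F} \geq c_0 \epsilon^2$ for a fixed $\epsilon = \Theta(1)$, covering the regime $T \leq \exp(c_1 d)$; the regime $T \geq \exp(c_1 d)$ then follows by rescaling the perturbation magnitude by $\sqrt{\exp(c_1 d)/T}$ inside the same family.

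The KL between the two trajectory laws factors along the chain: since $X_{t+1} \mid X_t \sim \mathcal{N}(\relu(A X_t), I)$,
$$\kl(\mathbb{P}_A \| \mathbb{P}_{A'}) = \tfrac{1}{2}\sum_{t=0}^{T-1} \Exx{A}{\|\relu(A X_t) - \relu(A' X_t)\|^2}.$$
I would design the family so that each per-step term is $O(\exp(-c_1 d))$. The essential property of $\relu$ to exploit is that $\relu(z) = \relu(z') = 0$ whenever $z, z' \leq 0$ componentwise; hence any two matrices whose outputs land in the negative orthant on a high-probability event produce \emph{identical} ReLU responses there, regardless of their separation in parameter space. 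This is the phenomenon by which exponentially many alternatives can remain jointly indistinguishable.

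Concretely, I would choose a base matrix $A_0$ with $\|A_0\|_{\mathsf{op}} \leq 1/2$ whose induced stationary distribution (or the finite-horizon law starting from $X_0=0$) concentrates $X_t$ in a region where $A_0 X_t$ is strongly negative componentwise, with probability $1 - \exp(-c_1 d)$. Perturbations $A_v = A_0 + \eta B_v$ would be indexed by a packing $\{B_v\}$ in an appropriate operator-norm ball, with $\eta$ and $\{B_v\}$ chosen compatibly so that $A_v X_t$ also lies in the negative orthant on the \emph{same} high-probability event. Then, using the $1$-Lipschitzness of $\relu$ on the exceptional event, a Cauchy--Schwarz split with polynomial moment bounds on $\|(A_v - A_{v'}) X_t\|^2$ yields
$$\Exx{A_v}{\|\relu(A_v X_t) - \relu(A_{v'} X_t)\|^2} \;\lesssim\; \eta^2\,\mathrm{poly}(d)\cdot \exp(-c_1 d).$$
A volumetric packing of matrices in a $d$-dimensional subspace (e.g., $B_v = v\, u^{\top}$ for $v$ ranging over an $\exp(\Omega(d))$-packing of $S^{d-1}$, with $u$ a fixed direction chosen to enforce the ``negative orthant'' property above) supplies $|V| \geq \exp(c_1 d)$ with Frobenius separation $\Omega(\eta)$.

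The main obstacle is the construction itself: producing $A_0$ and perturbations $\{B_v\}$ for which the sign event $\{A_0 X_t \leq 0 \text{ componentwise}\} \cap \{A_v X_t \leq 0 \text{ componentwise}\}$ truly has probability $1 - \exp(-c_1 d)$ despite the constraint $\|A_v\|_{\mathsf{op}} \leq 1/2$. Because the stationary mean satisfies $\|\Ex{X_t}\| = O(\sqrt d)$ and the op-norm constraint limits $\|A_v \Ex{X_t}\| = O(\sqrt d)$, one cannot merely shift every coordinate of $A_v X_t$ to have $\Omega(\sqrt d)$-negative mean. I expect one must instead engineer \emph{correlated} activations across the $d$ coordinates (e.g., via a rank-one or low-rank structure so that all components of $A_v X_t$ share sign), and carefully verify using Gaussian tail bounds and the fixed-point equations for the stationary distribution that the induced sign event carries mass $1 - \exp(-c_1 d)$. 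Once this construction is in place, the rest of the argument is a routine application of Fano and rescaling.
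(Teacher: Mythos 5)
Your proposal identifies the same essential phenomenon the paper exploits---that $\relu$ outputs carry no information about the parameter when the pre-activation is negative---but you take a different route (Fano over a packing rather than the paper's two-point/Le~Cam method), and you leave the crucial construction unresolved in a way the paper handles cleanly.

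On the route: the paper uses a two-point argument with $\A \in \{A(0), A(\epsilon)\}$, where $A(\epsilon)$ agrees with $A(0)$ on the first $d-1$ rows (each $\tfrac14 e_i^{\top}$) and differs only in the last row $a_d(\epsilon) = -\tfrac{\epsilon}{\sqrt{d-1}}(1,\dots,1,0)^\top$. Since only the last row differs, the per-step KL reduces to $\tfrac12\Ex{|\relu(\langle a_d(\epsilon),X_t\rangle)|^2}$, and the whole game is to show $\langle a_d(\epsilon),X_t\rangle < 0$ except with probability $\exp(-\Omega(d))$. Your Fano-over-a-packing variant could also work, and would tighten constants by a logarithmic factor at most, so the choice is mainly one of economy---two points suffice here.

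On the gap: you correctly flag that the op-norm constraint prevents shifting every coordinate of $A_v X_t$ to be $\Omega(\sqrt d)$-negative, but you then speculate that one must engineer all $d$ (or low-rank-many) coordinates to share sign. This is the wrong target: since your perturbations $B_v$ can be supported on a \emph{single} row, only that one row's pre-activation needs to be negative; the other $d-1$ rows are common to all alternatives and contribute zero to the KL. The paper's trick for forcing the sign of that one inner product does not rely on shifting the mean of $X_t$ at all. It relies on the entrywise nonnegativity of $\relu(A X_{t-1})$: a row $a_d$ with all nonpositive entries automatically gives $\langle a_d, \relu(A X_{t-1})\rangle \le 0$, and because the diagonal block structure makes the first $d-1$ coordinates of $X_{t-1}$ i.i.d.\ with each $\ge \tfrac12$ with constant probability, a Chernoff bound yields $\langle a_d(\epsilon), \relu(A(\epsilon)X_{t-1})\rangle \le -c\,\epsilon\sqrt{d}$ with probability $1-\exp(-\Omega(d))$. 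Meanwhile the additive noise $\langle a_d(\epsilon),\eta_{t-1}\rangle \sim \mathcal{N}(0,\epsilon^2)$ is $O(\epsilon)$, not $O(\epsilon\sqrt d)$, so the sign is overwhelmingly negative. Note the ``length mismatch'' that makes this work: the row has $\ell_2$-norm $\epsilon$ (so it fits the op-norm budget) but $\ell_1$-norm $\epsilon\sqrt{d-1}$, and the $\relu$ outputs are $\Theta(1)$ each, so their sum against $-a_d(\epsilon)$ is $\Theta(\epsilon\sqrt d)$. Once you restrict $v$ to the negative orthant of the first $d-1$ coordinates (with $v_d = 0$) your packing construction would close this gap---this orthant still admits an $\exp(\Omega(d))$ packing of $S^{d-2}$ at constant separation, and then your Fano calculation goes through as you sketched.
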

We prove the theorem above using the two point method. We find a family of $\A$ in $\Theta(\tfrac{1}{2},T)$ such that $\langle X_{t},e_d\rangle = \langle \eta_{t-1},e_d\rangle$ with probability at-least $1-\exp(-\Omega(d))$. Therefore, with a large probability, we only observe noise in the last co-ordinate and hence do not obtain any information regarding the last row of $\A$ (i.e, $a_d^{*}$). We refer to Section~\ref{sec:relu_lb} for a full proof.

%\input{uniformly_bounded_ar.tex}
%ideas behind proofs

\section{Proof Sketch}
\label{sec:ideas_behind_proofs}
\paragraph{Quasi Newton Method.} 
Let $a^{*}_i$ be the $i$-th row of $\A$ and $a_i(l)$ be the $i$-th row of $A_l$ both in column vector form. The proofs of Theorems~\ref{thm:newton_ub_without_mixing} and~\ref{thm:newton_ub_heavy_tail} follow once we consider the lyapunov function $\Delta_{l,i} = \|\hat{G}^{1/2}(a_i(l)-a_i^{*})\|$ and show that 
\begin{equation}\label{eq:newton_lyapunov}
\Delta_{l+1,i} \leq (1-2\gamma \zeta)\Delta_l + \gamma \|\hat{G}^{-1/2}\hat{N}_i\|
\end{equation}

Where $\hat{N}_i := \frac{1}{T}\sum_{t=0}^{T-1}\langle e_i,\eta_t\rangle X_t$. In the case of Theorem~\ref{thm:newton_ub_without_mixing}, we use the sub-Gaussianity of the noise sequence and a martingale argument to obtain a high probability upper bound on $\sum_{i=1}^{d}\|\hat{G}^{-1/2}\hat{N}_i\|^2$ (see Lemma~\ref{lem:normalized_noise}). In the heavy tailed case considered in Theorem~\ref{thm:newton_ub_heavy_tail}, we use mixing time arguments along with Payley-Zygmund inequality to show the high probability lower isometry $\hat{G} \succeq c_0 G$ for some universal constant $c_0$ (see Lemma~\ref{lem:well_conditioned_grammian}). Using this lower isometry, we can replace $\hat{G}$ in Equation~\eqref{eq:newton_lyapunov} with $G$. The upper bounds follow once we note that $\mathbb{E}\hat{N}_i^{\intercal}G^{-1}\hat{N}_i = \frac{\sigma^2d}{T}$.\vspace*{-3pt}
\paragraph{SGD-RER.} 
%\label{sec:sgd_rer_explanation}
Due to the observations made in Section~\ref{sec:sgd_rer_results}, we can split the analysis into the following parts, which are explained in detail below. 
\begin{enumerate}[leftmargin=*]
\item Analyze the reverse order SGD \emph{within} the buffers.
\item Treat successive buffer as independent samples.
\item Give a bias-variance decomposition similar to the case of linear regression.
\item Use algorithmic stability to control 'spurious' coupling introduced by non-linearity in the bias-variance decomposition.\vspace*{-3pt}
\end{enumerate}
\paragraph{Coupled Process.} 
We deal with the dependence \emph{between} buffers using a fictitious coupled process, constructed just for the sake of analysis (see Definition~\ref{def:coupled_proc}). Leveraging the gap $u$, this process $(\tilde{X}_{\tau})$ is constructed such that $\tilde{X}_{\tau} \approx X_{\tau}$ with high probability and the `coupled buffers' containing data $\tilde{X}$ instead of $X$ are \emph{exactly} independent. Since $\tilde{X}_{\tau} \approx X_{\tau}$, the output of $\sgdber$ run with the fictitious coupled process should be close output of $\sgdber$ run with the actual data points. We then use the strategy outlined above to analyze $\sgdber$ with the coupled process. In the analysis given for $\sgdber$, all the quantities with $\tilde{\cdot}$ involve the coupled process $\tilde{X}$ instead of the real process $X$.\vspace*{-3pt}
\paragraph{Non-Linear Bias Variance Decomposition.}
We use the mean value theorem to linearize the non-linear problem. This works effectively when the step size $\gamma$ is a vanishing function of the horizon $T$. Observe that the a single SGD/ $\sgdber$ step for a single row can be written as:
\begin{align}
a^\prime_i - a_i^{*} &= a_i - a_i^{*} - 2\gamma(\phi(\langle a_i,X_{\tau}\rangle) - \phi(\langle a^{*}_i,X_{\tau}\rangle) )X_{\tau} + 2\gamma \langle\eta_{\tau},e_i\rangle X_{\tau} \nonumber \\
& = \left(I-2\gamma \phi^{\prime}(\beta_{\tau})X_{\tau}X_{\tau}^{\intercal} \right)(a_i - a_i^{*}) + 2\gamma \langle\eta_{\tau},e_i\rangle X_{\tau} \label{eq:non_linear_bias_var}
\end{align}
In the second step, we have used the mean value theorem. Equation~\eqref{eq:non_linear_bias_var} can be interpreted as follows: the matrix $\left(I-2\gamma \phi^{\prime}(\beta_{\tau})X_{\tau}X_{\tau}^{\intercal} \right)$ 'contracts' the distance between $a_i$ and $a_i^{*}$ whereas the noise $2\gamma \langle\eta_{\tau},e_i\rangle X_{\tau}$ is due to the inherent uncertainty. This gives us a bias-variance decomposition similar to the case of SGD with linear regression. We refer to Section~\ref{subsec:bias_variance_decomp} for details on unrolling the recursion in Equation~\eqref{eq:non_linear_bias_var} to obtain the exact bias-variance decomposition. 

{\bf Algorithmic Stability:} Unfortunately, non-linearities result in a `coupling' between the contraction matrices through the iterates via the first derivative $\phi^{\prime}(\beta_\tau)$ due to reverse order traversal. This is an important issue since unrolling the recursion in ~\eqref{eq:non_linear_bias_var}, we encounter terms such as $\langle \eta_{\tau},e_i\rangle(I - 2\gamma\phi^{\prime}(\beta_{\tau -1})X_{\tau-1}X_{\tau-1}^{\intercal})X_{\tau}$, which have zero mean in the linear case. However, in the non-linear case, $\beta_{\tau-1}$ depends on $\eta_{\tau}$ due to reverse order traversal. We show that such dependencies are `weak' using the idea of algorithmic stability (\cite{bousquet2002stability,hardt2016train}). In particular, we establish that the output of the algorithm is not affected too much if we re-sample the \emph{entire} data trajectory by independently re-sampling a single noise co-ordinate ($\eta_{\tau}$ becomes $\eta^{\prime}_{\tau}$ and $\beta_{\tau -1}$ becomes $\beta^{\prime}_{\tau -1}$) when the step size $\gamma$ is small enough (in other words, the output is stable under small perturbations). Via second derivative arguments, we show that $\beta_{\tau - 1} \approx \beta^{\prime}_{\tau-1}$.

Now observe that resampling noise $\eta_{\tau}$ does not affect the past value of data i.e, $X_{\tau},X_{\tau-1}$ and is independent of $\beta_{\tau -1}^{\prime}$ by construction. Therefore
 $$ 0 = \mathbb{E}\langle \eta_{\tau},e_i\rangle\left(I - 2\gamma\phi^{\prime}(\beta^{\prime}_{\tau -1})X_{\tau-1}X^{\intercal}_{\tau-1}\right)X_{\tau} \approx \mathbb{E}\langle \eta_{\tau},e_i\rangle\left(I - 2\gamma\phi^{\prime}(\beta_{\tau -1})X_{\tau-1}X^{\intercal}_{\tau-1}\right) X_{\tau} $$
Such a resampling procedure is also explored in~\cite{nagaraj2019sgd} for the analysis of SGD with random reshuffling. 

We put together all the ingredients above in order to prove the error bounds given in Theorem~\ref{thm:sgd_rer_ub}.

\section{Experiments}
\label{sec:expts}
In this section, we compare performance of our methods $\sgdber$  and Quasi Newton method on synthetic data against the performance of standard baselines $\sgd$ (called `Forward SGD' here), GLMtron, along with the $\sgder$ method that applies standard experience replay technique i.e, the points from a buffer are sampled {\em randomly} instead of the reverse order. Since GLMtron and Quasi Newton Method are offline and $\sgdber$, $\sgd$ and $\sgder$ are streaming, we compare the algorithms by plotting parameter error measured by the Frobenius norm with respect to the compute time. We also compare error vs. number of iterations for the streaming algorithms. We show the results of additional experiments by considering various buffer sizes and heavy tailed noise in Section~\ref{sec:add_exp}. 
\begin{figure}[t!]%{r}{0.5\columnwidth}
	\begin{subfigure}{ 0.5\textwidth}
	\centering
	% include first image
	\includegraphics[width=\linewidth, height = 0.6\linewidth]{{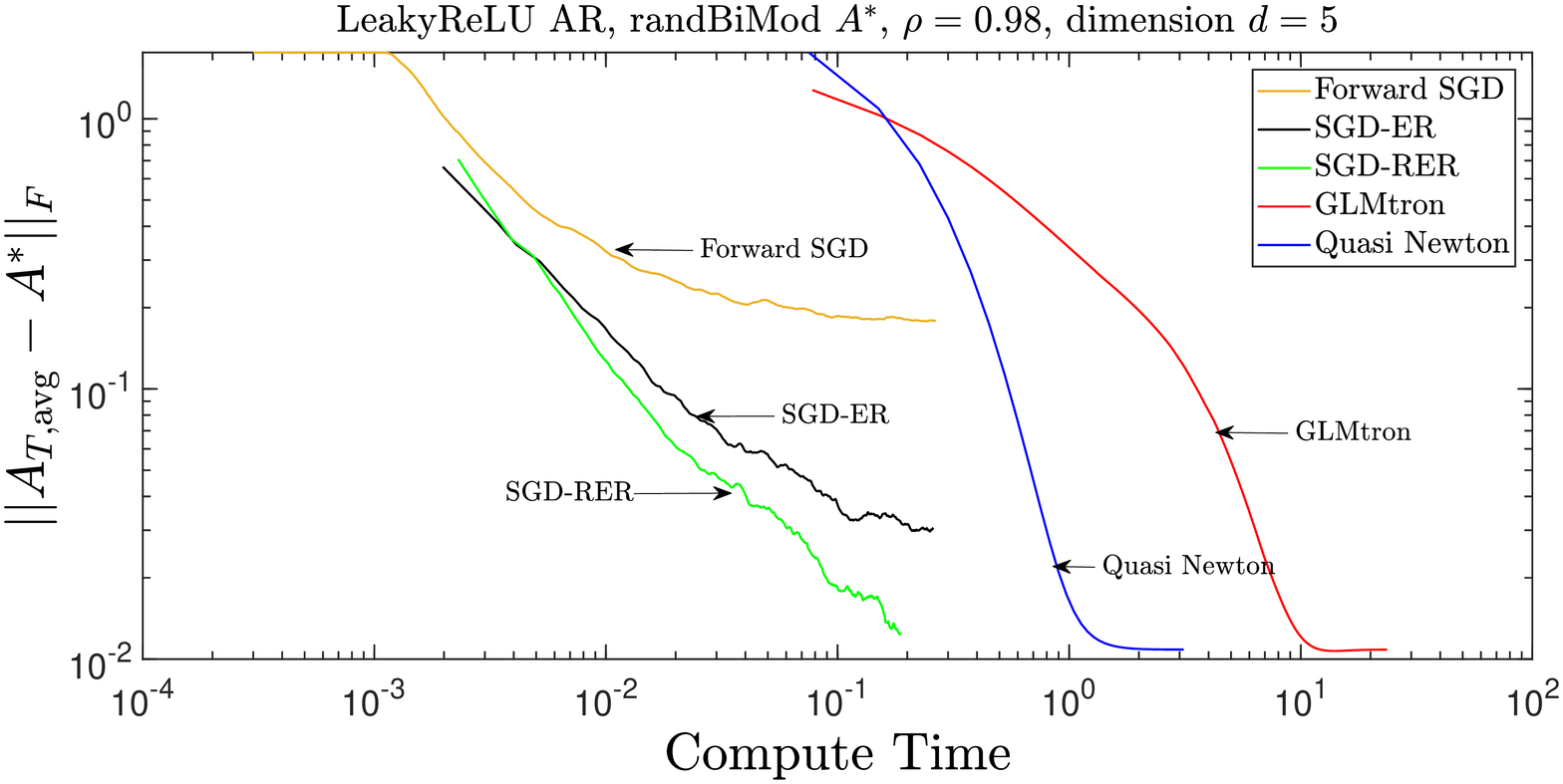}}
	\caption{Error vs. Computation}
	\label{fig:leaky_relu_1}
	\end{subfigure}
	\begin{subfigure}{.5\textwidth}
		\centering
		% include second image
		\includegraphics[width=\linewidth, height = 0.6\linewidth]{{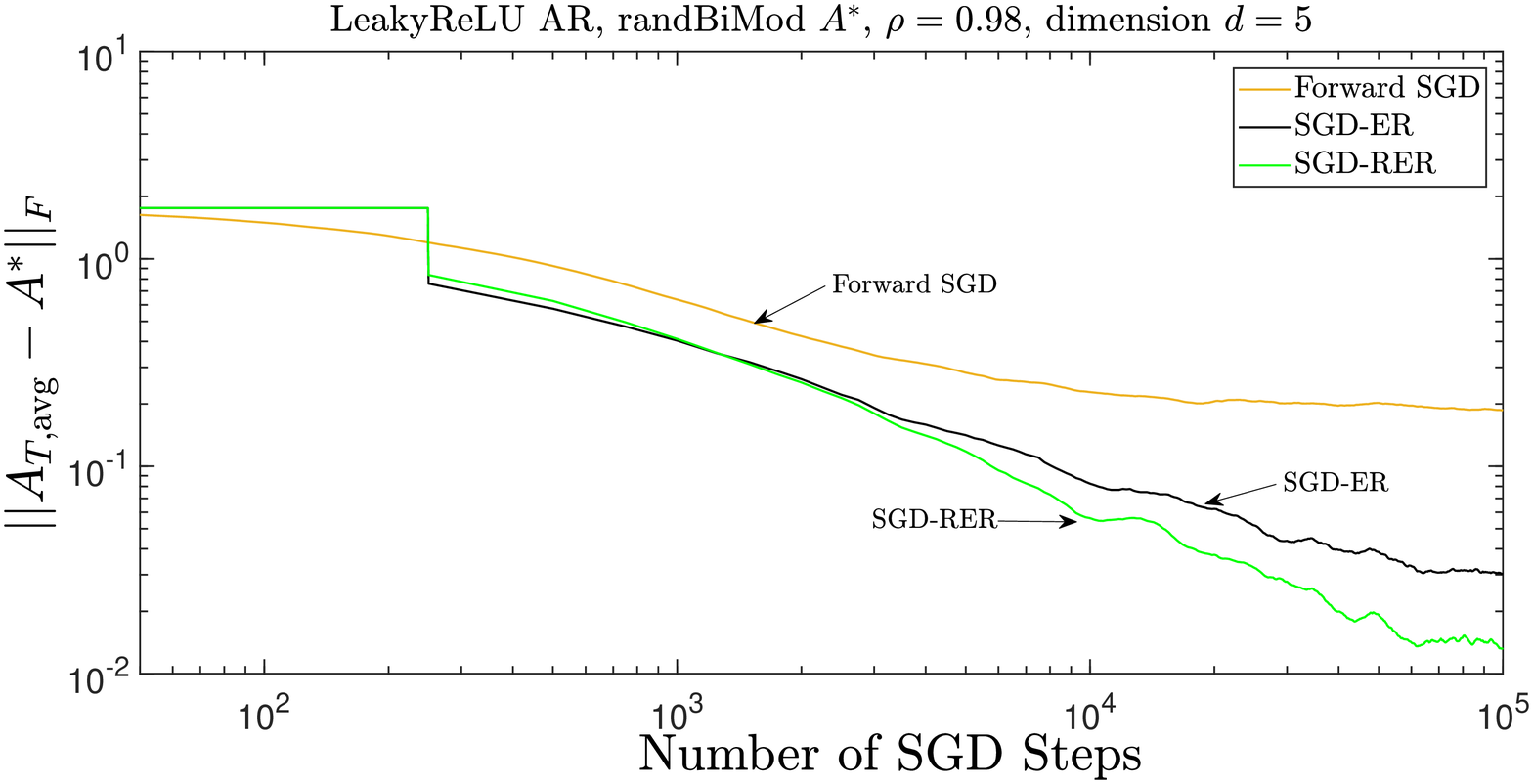}}  
		\caption{Error vs. SGD updates}
		\label{fig:leaky_relu_2}
	\end{subfigure}

	\caption{Performance of various algorithms for the case of $ \phi = \mathsf{LeakyReLU}$}
	\label{fig:simulation_comparison}
\end{figure}

\textbf{Synthetic data}: We sample data from $\glvar(\A,\mu,\phi)$ where $\mu\distas{}\mathcal{N}(0,\sigma^2 I)$ and $\A\in \mathbb{R}^{d\times d}$ is generated from the "RandBiMod" distribution. That is, $\A=U\Lambda U^{\top}$ with random orthogonal $U$, and $\Lambda$ is diagonal with $\lceil d/2\rceil$ entries on diagonal being $\rho$ and the remaining diagonal entries are set to $\rho/3$. $\phi$ is the leaky $\relu$ function given by $\phi(x) = 0.5 x \mathbbm{1}(x <0)+ x \mathbbm{1}(x \geq 0)$. We set $d=5$, $\rho=0.98$ and $\sigma^2=1$. We set a horizon of $T = 10^5$.

\textbf{Algorithm Parameters} We set $B = 240$ and $u = 10$ for the buffer size and gap size respectively for both $\sgdber$ and $\sgder$ and use full averaging (i.e, $\theta = 0$ in Algorithm~\ref{alg:sgd_rer} ). We set the step size $\gamma = \frac{5\log T}{T}$ for $\sgd$, $\sgdber$, and $\sgder$ and  $\gamma_{\mathsf{newton}} = 0.2$ and $\gamma_{\mathsf{GLMtron}} = 0.017 $. 

From Figure~\ref{fig:simulation_comparison} observe that $\sgder$ and $\sgd$ obtain sub-optimal results compared $\sgdber$, Quasi Newton Method and GLMtron. After a single pass, the performance of $\sgdber$ almost matches that of the offline algorithms. The step sizes for GLMtron have to be chosen to be small in-order to ensure that the algorithm does not diverge as noted in Section~\ref{sec:offline_learning}, which slows down its convergence time compared to the Quasi Newton method. We set the step size to be as large as possible without obtaining divergence to infinity.

%For a horizon $T$, we set $R=\frac{\max(1-\rho^2,1/T)}{\sigma^2 d\log T}$ and run $\sgdber$ with stepsize $\gamma=\frac{1}{2R}$ which is \emph{aggressive} compared what is required in theory in this paper. For simplicity, we set \iffalse{$R=\max_\tau \|X_\tau\|^2$,}\fi $B=100$, $u=10$, and $\theta\in \{0,1/2\}$. Figure~\ref{fig:1} plots parameter error for different methods. Clearly,  $\sgdber$ has very similar performance as that of $\ols$ whereas $\sgder$ and $\sgd$ seem to display residual bias for the chosen step-size (which is logarithmic in the horizon $T$) and buffer lengths. We also observe a similar behavior when we choose $\A=\rho I$. 

\section{Conclusion}
In this work, we studied the problem of learning non-linear dynamical systems of the form~\eqref{eq:master_equation} from a single trajectory and analyzed offline and online algorithms to obtain near-optimal error guarantees. In particular we showed that mixing time based arguments are not necessary for learning certain classes of non-linear dynamical systems. Even though we show that one cannot hope for efficient parameter recovery with non-expansive link functions like $\mathsf{ReLU}$, we do not deal with the problem of minimizing the `prediction' error - where we output a good predictor based on samples, without parameter recovery. We believe that this problem would require significantly different set of techniques than the ones established in this work and hope to investigate this in future work. Presently our work only deals with specific kinds of Markovian time evolution. It would also be interesting to understand in general, the kind of structures which allow for learning without mixing based arguments. Another related direction is to design simple and efficient algorithms like $\sgdber$ for learning with various models of dependent data by unraveling the dependency structure present in the data. 
%Bias-Variance decomposition, non-expansive case
\begin{ack}
D.N. was supported in part by NSF grant DMS-2022448.\\
S.S.K was supported in part by Teaching Assistantship (TA) from EECS, MIT.
\end{ack}
\clearpage
\bibliographystyle{unsrt}  

\bibliography{refs}

%%%%%%%%%%%%%%%%%%%%%%%%%%%%%%%%%%%%%%%%%%%%%%%%%%%%%%%%%%%%
\clearpage
\appendix

%\section{Appendix}
%Generalized Newton method

\section{Additional Experiments}
\label{sec:add_exp}
Based on the feedback from reviewers, we perform the following additional experiments which explore the robustness of the choice of buffer size in $\sgdber$, choice of step sizes for GLMtron and the behavior of the said algorithms with heavy tailed noise with a similar setup as in Section~\ref{sec:expts}. 

We first do an experimental study about the robustness of $\sgdber$ to the choice of buffer size in Figure~\ref{fig:buf_size_variation}. Notice that the performance remains the same for a large range of buffer sizes ( 100 from  to 2000). However the performance degrades when the buffer size is too large ( $\approx$ 10000). We believe this is the case since the number of buffers decreases as the buffer size increases and the output is averaged over too few number of iterates (In the case of B = 10000, the final output is just an average of 10 iterates). 

Next, we consider the the range of step sizes which allow GLMtron to converge in Figure~\ref{fig:glm_tron_steps} in order to supplement the discussion in Sections~\ref{sec:expts} and~\ref{sec:offline_learning} regarding GLMtron requiring smaller step-sizes. 
In smooth convex optimization, it is typically the case that the iterates diverge to infinity if the step size is chosen to be too large. Theoretically, this largest step-size is $\frac{2}{L}$ where $L$  is the largest eigenvalue of the Hessian. In the case of GLMtron, it was experimentally observed that if the step size was chosen to be about 1.5 times the step size reported in Section~\ref{sec:expts}, the iterates diverged. Quasi Newton method essentially normalizes the gradient with the inverse of the Hessian (or rather an approximation of the Hessian) in order to let it converge faster with large step sizes.

\begin{figure}[t!]%{r}{0.5\columnwidth}
	\begin{subfigure}{.5\textwidth}
	\centering
	% include first image
	\includegraphics[width=\linewidth]{{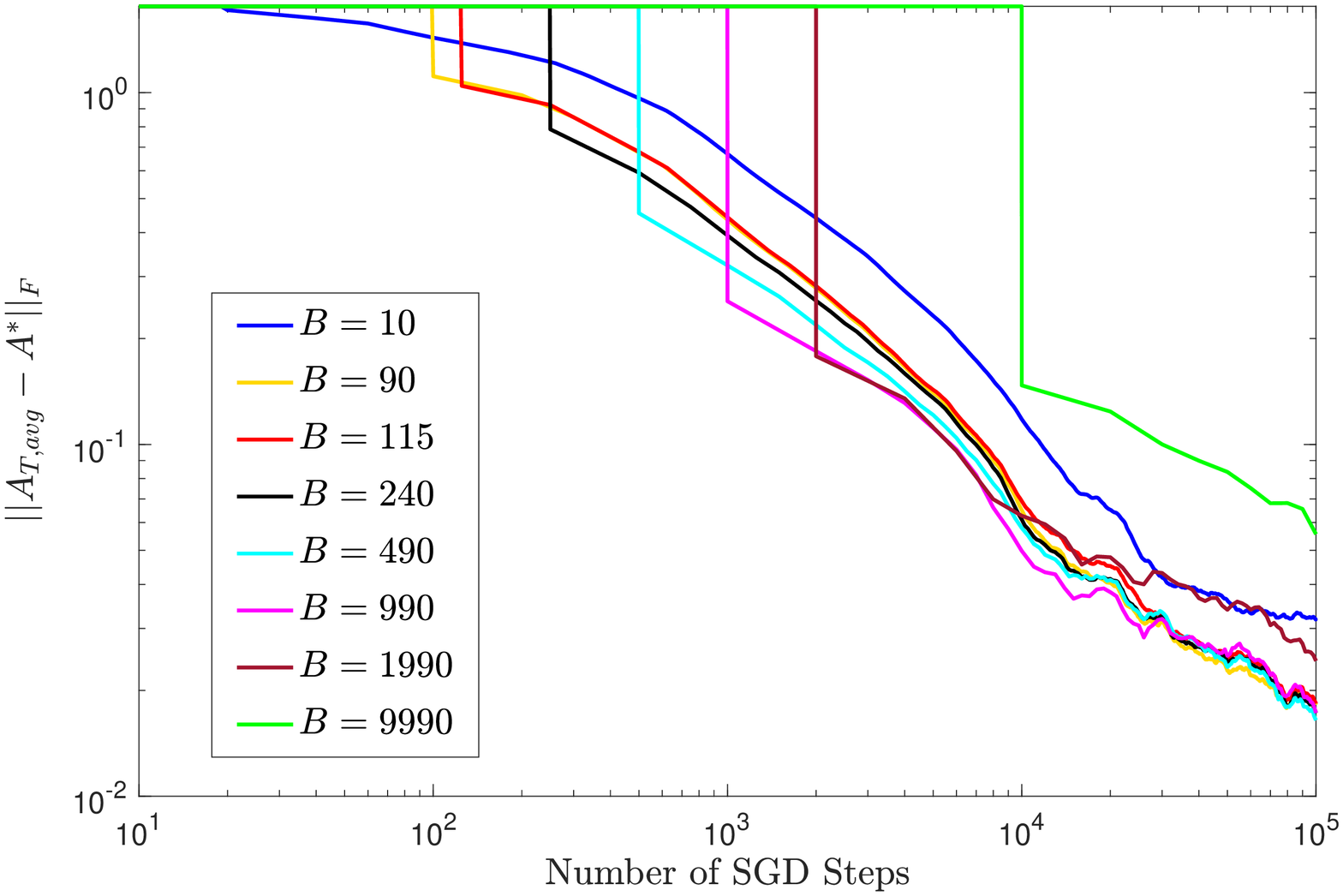}}
	\caption{Varying the Buffer Sizes}
	\label{fig:buf_size_variation}
	\end{subfigure}
	\begin{subfigure}{.5\textwidth}
		\centering
		% include second image
		\includegraphics[width=\linewidth]{{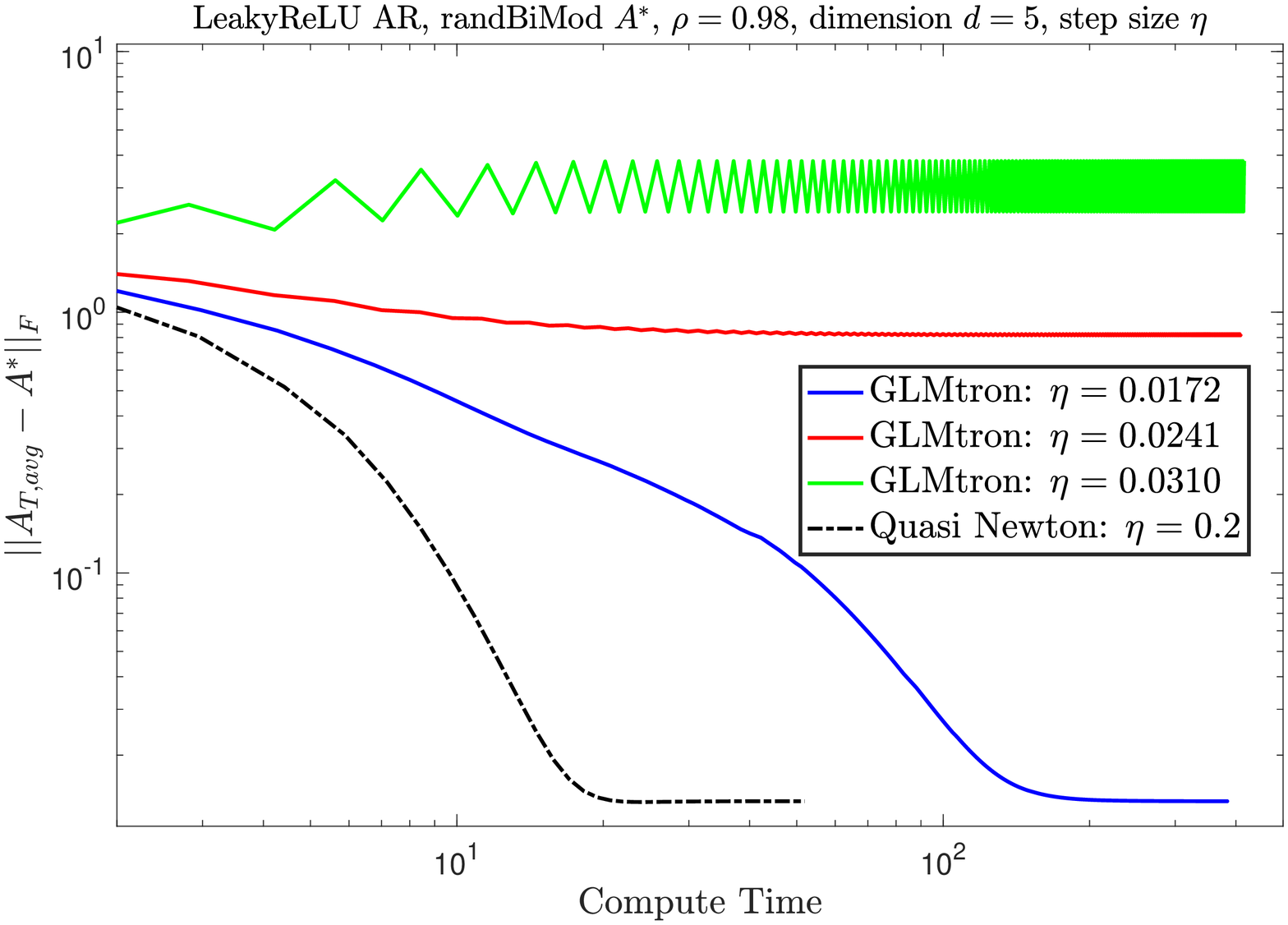}}  
		\caption{GLMtron step sizes}
		\label{fig:glm_tron_steps}
	\end{subfigure}
\vspace*{-5pt}
	\caption{Varying Parameters in the Algorithms}
	\label{fig:hyper_parameter_comparison}
\end{figure}

%\begin{figure}
%\centering
%\includegraphics[width=\linewidth]{{}}
%	\caption{Varying the Buffer Sizes}
%	\label{fig:buf_size_variation}
%\end{figure}

\begin{figure}[t!]%{r}{0.5\columnwidth}
	\begin{subfigure}{.5\textwidth}
	\centering
	% include first image
	\includegraphics[width=\linewidth]{{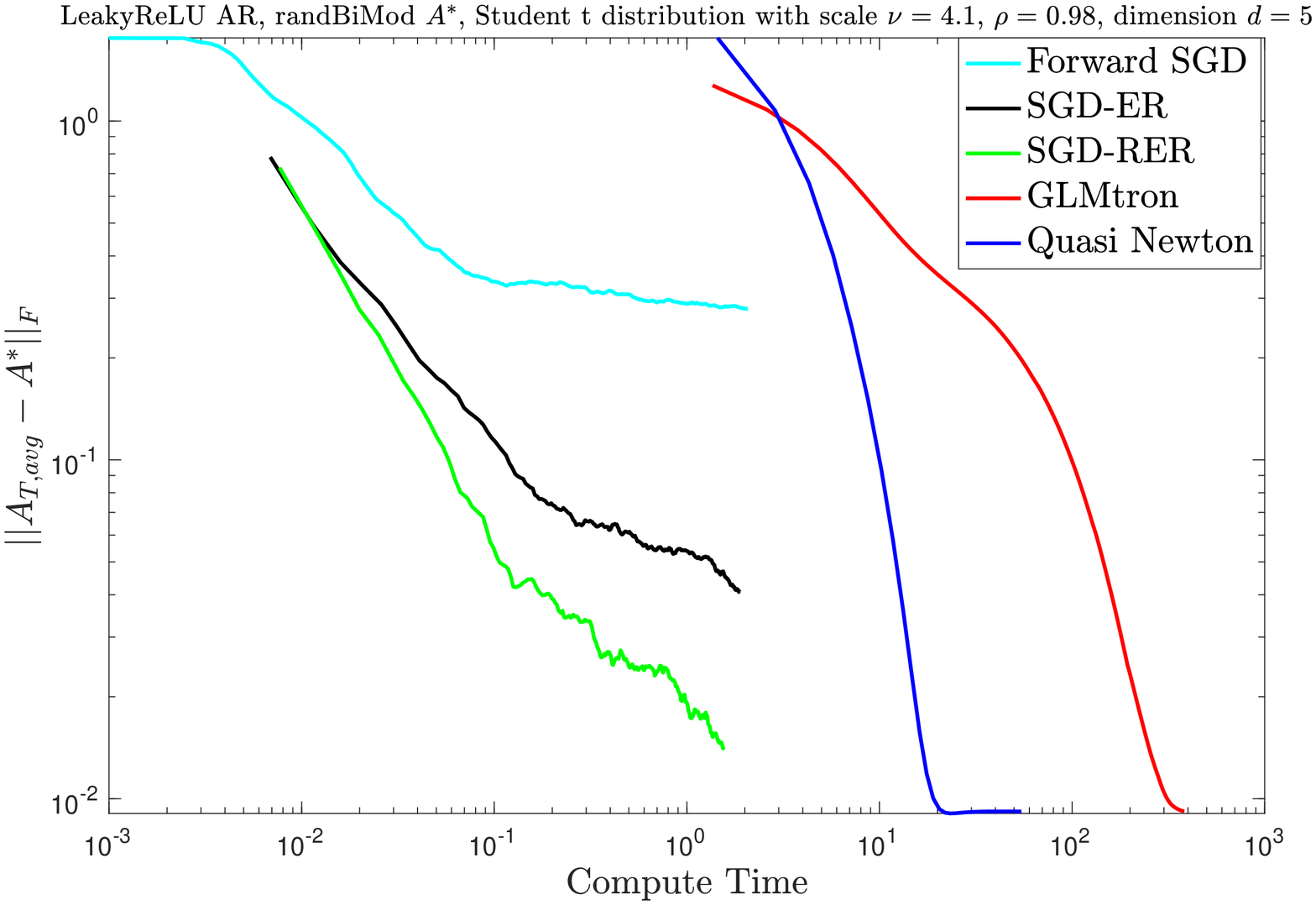}}
	\caption{Error vs. Computation}
	\label{fig:heavy_t_compute}
	\end{subfigure}
	\begin{subfigure}{.5\textwidth}
		\centering
		% include second image
		\includegraphics[width=\linewidth]{{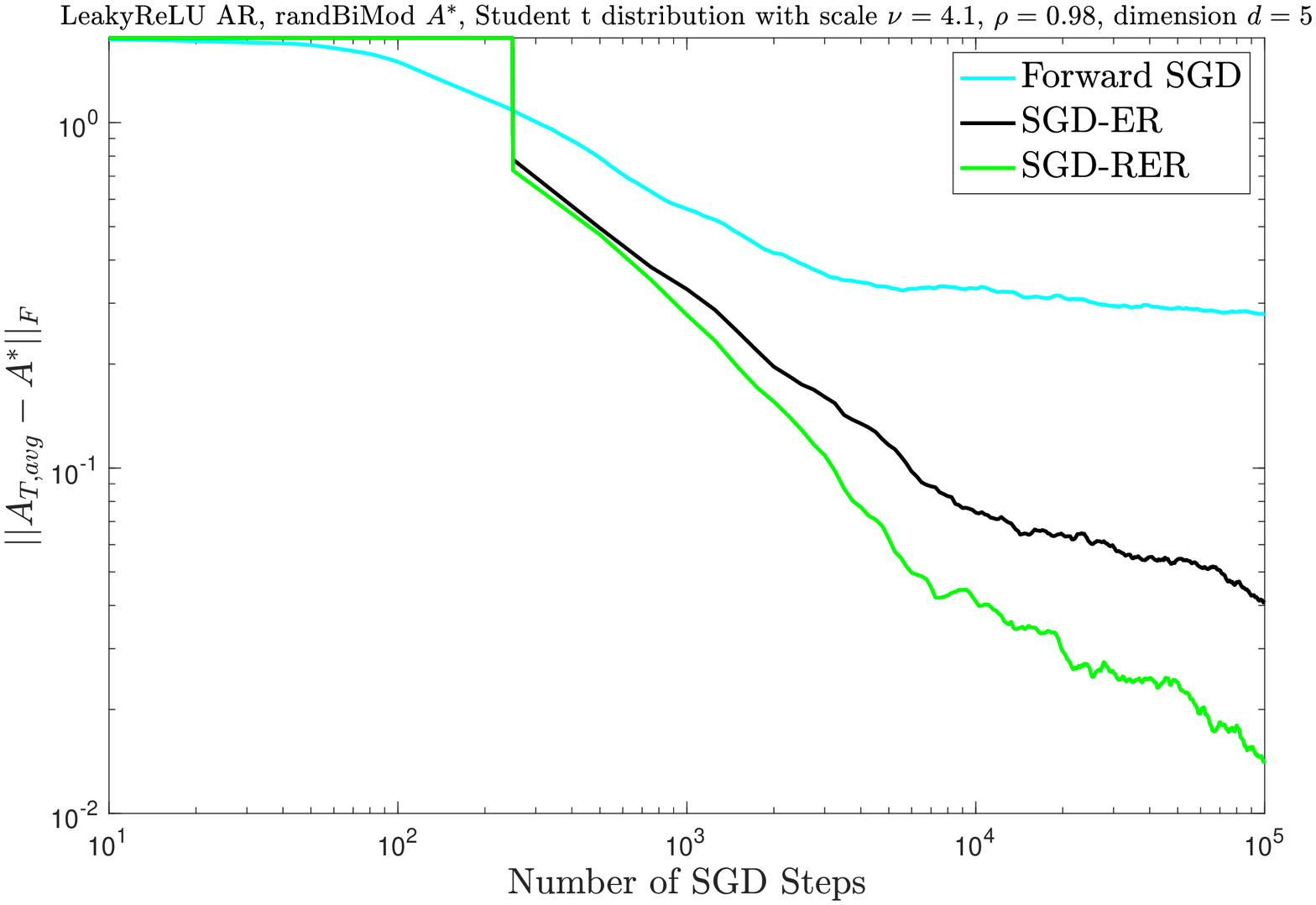}}  
		\caption{Error vs. SGD updates}
		\label{fig:heavy_t_steps}
	\end{subfigure}
\vspace*{-5pt}
	\caption{Performance of various algorithms with heavy tailed noise}
	\label{fig:ht_simulation}
\end{figure}

In Figure~\ref{fig:ht_simulation}, we consider the same system but with heavy tailed noise given by the student t distribution (scale $\nu = 4.1$) so that the $4$- th moment exists but higher moments do not. The typical behavior of Forward SGD, SGD-ER, SGD-RER and Quasi Newton methods seems to be similar to that observed in the Sub-Gaussian noise case. However, GLMtron requires much smaller step sizes to ensure convergence and hence it takes much longer. We believe that the reason for this is related to the explanation given above for GLMtron step sizes. The largest eigenvalue of the Hessian  depends on the quantity $X_t X_t^{\top}$, which can be much larger than in the sub-gaussian case and hence we need to pick much smaller step sizes. However, further research is needed to confirm this phenomenon. We also note that we did not provide theoretical guarantees for SGD-RER in the heavy tailed noise case. But it is still seen to typically perform very well.

\section{Analysis of the Quasi Newton Method}
\label{sec:gen_newton_analysis}
In this Section, we give the proofs of Theorems~\ref{thm:newton_ub_without_mixing} and~\ref{thm:newton_ub_heavy_tail}. 
Let $e_1,\dots,e_d$ be the standard basis vectors for $\mathbb{R}^d$. We will analyze the Quasi Newton method row by row. 
\begin{define}\label{def:row_decomp}
Given a matrix $A=[a_1,a_2,\cdots, a_d]^{\top}$, let $\mathcal{R}(A)=\{a_1,\cdots,a_d\}$ denote the set of vectors that are (transposes of) rows of the matrix $A$. We use $a^\top$ to represent a generic row of $A$. 
\end{define}
Follow Defintion~\ref{def:row_decomp}, we will consider the estimation of the $i$-th row $a^{*}_i$. Consider the gradient $\nabla\lossprox^{(i)} :\mathbb{R}^{d} \to \mathbb{R}^{d}$ given by:
$$\nabla\lossprox^{(i)}(a):= \frac{1}{T}\sum_{t=0}^{T-1}\left(\phi(\langle a,X_t\rangle) -\langle e_i,X_{t+1}\rangle\right)X_t \,.$$
We can write 
\begin{align}
\nabla\lossprox^{(i)}(a) &= \frac{1}{T}\sum_{t=0}^{T-1}\left(\phi(\langle a,X_t\rangle) -\phi(\langle a_i^{*}, X_t\rangle)\right)X_t - \langle\eta_t,e_i\rangle X_t\nonumber \\
&= \frac{1}{T}\sum_{t=0}^{T-1}\phi^{\prime}(\beta_t)\langle a- a_i^{*}, X_t\rangle)X_t - \langle\eta_t,e_i\rangle X_t\nonumber \\
&= \hat{K}_{a,i} (a-a_i^{*})- \hat{N}_i \label{eq:gradient_decomposition}
\end{align}
Where $\beta_t$ exist because of the mean value theorem. We can make sense of $\beta_t$ even when $\phi$ is only weakly differentiable and check that the proof below still follows.  Here, $\hat{K}_{a,i} := \frac{1}{T}\sum_{t=0}^{T-1}\phi^{\prime}(\beta_t)X_tX_t^{\intercal} $ and $\hat{N}_i := \frac{1}{T}\sum_{t=0}^{T-1}\langle\eta_t,e_i\rangle X_t$. In the first step we have used the dynamics in Equation~\eqref{eq:master_equation} to write down $X_{t+1}$ in terms of $X_t$ and $\eta_t$. \\
We now define $\hat{G} \in \mathbb{R}^{d\times d}$ by $\hat{G} := \frac{1}{T}\sum_{t=0}^{T-1}X_tX_t^{\intercal}\,.$ From the fact that $\zeta \leq\phi^{\prime}(\beta_t)\leq 1$, we note that for every $a \in \mathbb{R}^d$:
\begin{equation}\label{eq:PSD_ordering_grammian}
 \hat{G} \succeq \hat{K}_{a,i} \succeq \zeta\hat{G}
\end{equation}

Now consider the Quasi Newton Step given in Algorithm~\ref{alg:modified_newton_method}.

$$a_i(l+1) = a_i(l) - 2\gamma \hat{G}^{-1}\nabla\lossprox^{(i)}(a_i(l)) $$
Denoting $\hat{K}_{a_i(l),i}$ by $\hat{K}_{l,i}$ we use Equation~\eqref{eq:gradient_decomposition} to conclude:

\begin{align}
a_i(l+1)-a_i^{*} &= a_i(l)-a_{i}^{*} - 2\gamma \hat{G}^{-1}\hat{K}_{l,i}(a_i(l+1)-a_i^{*}) + 2\gamma \hat{G}^{-1}\hat{N}_i\nonumber \\
\implies
\sqrt{\hat{G}}(a_i(l+1)-a_i^{*}) &= (I-2\gamma\hat{G}^{-1/2}\hat{K}_{l,i}\hat{G}^{-1/2})\sqrt{\hat{G}}(a_i(l)-a_i^{*}) +2\gamma \hat{G}^{-1/2}\hat{N}_i \label{eq:generalized_difference}
\end{align}

Picking $\gamma < \frac{1}{2}$, we conclude from Equation~\eqref{eq:PSD_ordering_grammian} that:
$$(1-2\gamma)I \preceq I-2\gamma\hat{G}^{-1/2}\hat{K}_{l,i}\hat{G}^{-1/2} \preceq (1 - 2\gamma \zeta)I$$

We use the equation above in Equation~\eqref{eq:generalized_difference} along with triangle inequality to conclude:
$$\|\sqrt{\hat{G}}(a_i(l+1)-a_i^{*})\| \leq (1-2\gamma\zeta)\|\sqrt{\hat{G}}(a_i(l)-a_i^{*})\| + 2\gamma\|\hat{G}^{-1/2}\hat{N}_i\| $$

Unrolling the recursion above, we obtain that:
\begin{align}
\|\sqrt{\hat{G}}(a_i(m)-a_i^{*})\| &\leq (1-2\gamma\zeta)^{m}\|\sqrt{\hat{G}}(a_i(0)-a_i^{*})\| + \sum_{l = 0}^{m-1}2\gamma(1-2\gamma\zeta)^{l}\|\hat{G}^{-1/2}\hat{N}_i\| \nonumber\\
&\leq (1-2\gamma\zeta)^{m}\|\sqrt{\hat{G}}(a_i(0)-a_i^{*})\| + \frac{1}{\zeta}\|\hat{G}^{-1/2}\hat{N}_i\| \nonumber
\end{align}

Letting $A_{m}$ be the matrix with rows $a_i(m)$, we conclude:
\begin{equation}\label{eq:main_bound_newton}
\|A_{m}-\A\|^2_{\mathsf{F}} \leq 2\tfrac{\lmax{\hat{G}}}{\lmin{\hat{G}}} (1-2\gamma\zeta)^{2m}\|A_0 -\A\|^2_{\mathsf{F}} + \tfrac{2}{\zeta^2\lmin{\hat{G}}} \sum_{i=1}^{d}\|\hat{G}^{-1/2}\hat{N}_i\|^2
\end{equation}

Proof of Theorems~\ref{thm:newton_ub_without_mixing} and~\ref{thm:newton_ub_heavy_tail} follow once we provide high probability bounds for various terms in Equation~\eqref{eq:main_bound_newton}. We will first define some notation. Let $S(\rho,T) := \sum_{t=0}^{T}\rho^{T-t}$.
For $R,\kappa > 0$, we define the following events
\begin{enumerate}
\item $\mathcal{D}_{T}(R) := \{\sup_{0\leq t\leq T}\|X_t\|^2 \leq R\}$ 
\item $\mathcal{E}_T(\kappa) := \{\hat{G} \succeq \frac{\sigma^2I}{\kappa}\}$
\item $\mathcal{D}_T(R,\kappa):= \mathcal{D}_T(R)\cap\mathcal{E}_T(\kappa)$
\end{enumerate}
\begin{lemma}\label{lem:normalized_noise}
Under the Assumptions of Theorem~\ref{thm:newton_ub_without_mixing},
suppose $\delta \in (0,1/2)$ and take $R = C^2_{\rho}C_{\eta}(S(\rho,T))^2d\sigma^2\log (\tfrac{2T}{\delta}) $, $\kappa = 2$, and $T \geq \bar{C}_3 \left(d\log\left(\frac{R}{\sigma^2}\right) + \log\tfrac{1}{\delta}\right)$

$$\mathbb{P}\left(\sum_{i=1}^{d}\|\hat{G}^{-\tfrac{1}{2}}\hat{N}_i\|^2 \leq \frac{\bar{C}\sigma^2}{T}\left[d^2 \log\left(1+\tfrac{R}{\sigma^2}\right)  + d\log (\tfrac{1}{\delta})\right]\cap \mathcal{D}_T(R,\kappa)\right) \geq 1-2d\delta$$

Where $\bar{C},\bar{C}_3$ are constants depending only on $C_{\eta}$.
\end{lemma}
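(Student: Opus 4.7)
The plan is to exploit the fact that $T\hat N_i = \sum_{t=0}^{T-1}\langle \eta_t, e_i\rangle X_t$ is a vector-valued martingale adapted to $\mathcal{F}_t := \sigma(\eta_0,\ldots,\eta_t)$: $X_t$ is $\mathcal{F}_{t-1}$-measurable while $\langle \eta_t, e_i\rangle$ is conditionally $C_\eta\sigma^2$-sub-Gaussian given $\mathcal{F}_{t-1}$ by Assumption~\ref{assump:7}. With the deterministic regularizer $V := (T\sigma^2/2)I$ and $V_T := T\hat G = \sum_t X_t X_t^{\top}$, the self-normalized tail bound of Abbasi-Yadkori et al.\ yields, for each fixed $i$, with probability $\geq 1-\delta$,
\begin{equation*}
\|T\hat N_i\|^2_{(V+V_T)^{-1}} \;\leq\; 2C_\eta\sigma^2 \log\!\left(\tfrac{\det(V+V_T)^{1/2}\det(V)^{-1/2}}{\delta}\right).
\end{equation*}

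On the event $\mathcal{D}_T(R)$, $\operatorname{tr}(V_T) \leq TR$ gives $\det(V+V_T) \leq (T\sigma^2/2 + TR)^d$, so the log-determinant ratio is at most $\tfrac{d}{2}\log(1+2R/\sigma^2)$. On $\mathcal{E}_T(2)$, the inequality $V \preceq V_T$ implies $(V+V_T)^{-1} \succeq V_T^{-1}/2$; substituting and rewriting,
\begin{equation*}
\|\hat G^{-1/2}\hat N_i\|^2 \;=\; \tfrac{1}{T}\|T\hat N_i\|^2_{V_T^{-1}} \;\leq\; \tfrac{C\sigma^2}{T}\bigl[d\log(1+R/\sigma^2) + \log(1/\delta)\bigr].
\end{equation*}
Summing over $i=1,\ldots,d$ and taking a union bound produces the target $\tfrac{\bar C\sigma^2}{T}[d^2\log(1+R/\sigma^2)+d\log(1/\delta)]$ at the cost of a $d\delta$ failure probability, conditional on $\mathcal{D}_T(R)\cap \mathcal{E}_T(2)$.

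It therefore remains to establish $\mathbb{P}(\mathcal{D}_T(R)^c) \leq \delta$ and $\mathbb{P}(\mathcal{E}_T(2)^c) \leq \delta$. For $\mathcal{D}_T(R)$, I would unroll the recursion using Assumption~\ref{assump:4}: since $X_0 = 0$ and $\phi(0) = 0$, a telescoping comparison of trajectories with successively zeroed-out noises yields $\|X_t\| \leq C_\rho \sum_{k=0}^{t-1}\rho^{t-k-1}\|\eta_k\|$; Cauchy--Schwarz together with a sub-Gaussian union bound $\max_k\|\eta_k\|^2 \leq CC_\eta d\sigma^2\log(T/\delta)$ then gives $\sup_t\|X_t\|^2 \leq R^*$ with probability $\geq 1-\delta$. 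For $\mathcal{E}_T(2)$ the key structural fact is that $\eta_{t-1}$ is independent of $\mathcal{F}_{t-2}$, so $\mathbb{E}[X_tX_t^{\top}\mid \mathcal{F}_{t-2}] = \phi(\A X_{t-1})\phi(\A X_{t-1})^{\top} + \sigma^2 I \succeq \sigma^2 I$ pointwise. For each fixed unit $v$, the sequence $(v^\top X_t)^2 - \mathbb{E}[(v^\top X_t)^2\mid \mathcal{F}_{t-2}]$ is a martingale difference sequence with increments bounded by $4R$ on $\mathcal{D}_T(R)$, so Azuma/Freedman gives $v^\top \hat G v \geq 3\sigma^2/4$ with high probability; an $\epsilon$-net of the unit sphere of cardinality $(3/\epsilon)^d$ with $\epsilon \asymp \sigma/\sqrt{R}$ promotes this to the uniform bound $\hat G \succeq \sigma^2 I/2$, provided $T \gtrsim d\log(R/\sigma^2)+\log(1/\delta)$, matching the stated sample-complexity hypothesis.

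The principal obstacle is the uniform lower-isometry step $\mathcal{E}_T(2)$: in the non-mixing regime the trajectory norms can be as large as $R^* \sim C_\rho^2 S(\rho,T)^2 d\sigma^2 \log(T/\delta)$, so the covering argument must operate at resolution $\sqrt{R^*}$, producing exactly the $d\log(R^*/\sigma^2)$ factor in the sample-size requirement. A final union bound over $\{\mathcal{D}_T(R)^c\}$, $\{\mathcal{E}_T(2)^c\}$, and the $d$ self-normalized failures gives total failure probability $\leq (d+2)\delta \leq 2d\delta$, matching the lemma statement.
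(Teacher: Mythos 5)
Your self-normalized concentration step is essentially the paper's: both apply Theorem~1 of Abbasi-Yadkori et al.\ with a deterministic regularizer $V \propto T\sigma^2 I$, use the boundedness half of $\mathcal{D}_T(R,\kappa)$ to control $\log\det(\bar V_T)$, and use the lower-isometry half to replace $\bar V_T^{-1}$ by a constant multiple of $\hat G^{-1}$; and your bound on $\mathcal{D}_T(R)^c$ via telescoping trajectory comparisons is exactly Lemma~\ref{lem:unstable_concentration}.

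Where you diverge, and where there is a genuine gap, is in the lower-isometry step $\mathcal{E}_T(2)$. You center $(v^\top X_t)^2$ against $\mathbb{E}[(v^\top X_t)^2 \mid \mathcal{F}_{t-2}]$ and then claim ``increments bounded by $4R$ on $\mathcal{D}_T(R)$, so Azuma/Freedman gives\dots''. This does not follow: Azuma--Hoeffding (and the bounded-increment form of Freedman) need the increment bound to hold almost surely, or at least to be predictable. Here $|Z_t|$ is controlled by $\|X_{t-1}\|^2 \le R$ and $\|X_t\|^2 \le R$, the second of which is only $\mathcal{F}_{t-1}$-measurable --- not predictable for the shifted filtration $\mathcal{G}_k := \mathcal{F}_{k-1}$ in which $Z_t \in \mathcal{G}_t$. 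You also cannot simply condition the martingale inequality on the global event $\mathcal{D}_T(R) \in \mathcal{F}_{T-1}$. The paper's Lemma~\ref{lem:dominant_event} avoids this by decomposing $\hat G = \bar G + \bar K_X + \bar K_\eta$ with $\bar X_t = \phi(\A X_t)$: the cross term $\bar K_X$ is controlled by an exponential supermartingale whose validity rests directly on Assumption~\ref{assump:7} and holds \emph{unconditionally}, with the conditional tail on $\mathcal{D}_T(R)$ recovered afterwards by dividing by $\mathbb{P}(\mathcal{D}_T(R)) \ge 1-\delta$; the noise Gram $\bar K_\eta$ is an i.i.d.\ sub-Gaussian covariance handled by Vershynin's Theorem~5.39, again unconditionally; and $\bar G \succeq 0$ is free. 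Your route can likely be repaired --- e.g.\ truncate the observable to $(v^\top X_t)^2 \wedge R$ so the increments are a.s.\ bounded, then bound the bias $\mathbb{E}[(v^\top X_t)^2 \wedge R \mid \mathcal{F}_{t-2}] - \sigma^2$ on $\mathcal{D}_T(R)$ --- but as written the Azuma step is not justified and this is the one place the argument genuinely breaks.
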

We refer to Section~\ref{sec:self_normalized_noise} for the proof. 

\begin{lemma}\label{lem:well_conditioned_grammian}
Suppose the Assumptions  of Theorem~\ref{thm:newton_ub_heavy_tail} hold. There exist universal constants $C,C_1,c_0 > 0$ such that whenever $\delta \in (0,\tfrac{1}{2})$, $R :=  \frac{4TdC_{\rho}^2\sigma^2}{(1-\rho)^2 \delta}$ and $T \geq Cd\log(\tfrac{1}{\delta})\log(\tfrac{R}{\sigma^2}) \max\left(\frac{4C_{\rho}^6M_4}{(1-\rho)^4(1-\rho^2)\sigma^4}, \frac{\log\left( \tfrac{R C_1 C_{\rho}}{\sigma^2}\right)}{\log\left(\tfrac{1}{\rho}\right)}\right)$, we have with probability at-least $1-\delta$:
\begin{enumerate}
\item $$\hat{G} \succeq c_0 G \,.$$
\item $$\lmax{\hat{G}} \leq R$$
\end{enumerate}

Where $G:= \mathbb{E}X_tX_t^{\intercal}$ 

\end{lemma}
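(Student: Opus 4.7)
The lemma has two parts: a straightforward Markov bound on $\lmax{\hat G}$, and a more delicate lower isometry. For the upper bound, I would observe that $\lmax{\hat G} \le \tr(\hat G) = \tfrac{1}{T}\sum_{t}\norm{X_t}^2$, then bound $\mathbb{E}\norm{X_t}^2 \le C\, C_\rho^2 d\sigma^2/(1-\rho^2)$ under stationarity using Assumption~\ref{assump:4}: write the stationary $X_t$ as the almost-sure limit of trajectories launched from $0$ arbitrarily far in the past, apply exponential regularity to the telescoping differences, and use Minkowski's inequality. A single Markov inequality applied to $\sum_t\norm{X_t}^2$ then gives $\lmax{\hat G} \le R$ with probability $\ge 1-\delta/2$ for the stated choice $R = 4TdC_\rho^2\sigma^2/((1-\rho)^2\delta)$.

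For the lower isometry $\hat G \succeq c_0 G$, the natural four-step plan is:
(a) split the horizon into $K$ blocks of length $B$ separated by gaps of length $u = \Theta(\log T/\log(1/\rho))$, and introduce a coupled sequence $\{\tilde X_t\}$ of the kind in Definition~\ref{def:coupled_proc} so that the blocks are exactly independent, and the coupling is exact on every block with probability $1-1/\poly(T)$ by exponential regularity over the gaps;
(b) for a fixed unit vector $v$, apply Paley--Zygmund to $Z_k(v) := \tfrac{1}{B}\sum_{t\in\text{block }k}\dop{v}{\tilde X_t}^2$, using $\mathbb{E}Z_k(v)=v^\top G v \ge \sigma^2$ together with a telescoping/Minkowski bound on the stationary 4th moment driven by $M_4$ and $C_\rho$ to get $\mathbb{E}Z_k(v)^2 \le C\, C_\rho^6 M_4/((1-\rho)^4(1-\rho^2))$ (once $B$ is large enough that the between-sample mixing sum is controlled), yielding $\mathbb{P}(Z_k(v)\ge \tfrac12 v^\top G v) \ge p_0$ for a universal $p_0>0$;
(c) boost to $\mathbb{P}(v^\top \hat{\tilde G} v \ge c \cdot v^\top G v) \ge 1-e^{-cK}$ by a Chernoff bound on the independent blocks, since the block event is Bernoulli with success probability $\ge p_0$;
(d) make the bound uniform in $v$ via a union bound over an $\epsilon$-net of the sphere at scale $\epsilon = \Theta(\sigma^2/R)$, so that the net approximation error can be absorbed using the already-established upper bound on $\lmax{\hat G}$.
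The net has size $(R/\sigma^2)^{O(d)}$, so the exponent budget forces $K \gtrsim d\log(R/\sigma^2)+\log(1/\delta)$; with $B$ chosen from (b) as $\Omega(C_\rho^6 M_4/((1-\rho)^4(1-\rho^2)\sigma^4))$ and $u$ from (a) as $\Omega(\log(R C_1 C_\rho/\sigma^2)/\log(1/\rho))$, the total $T = K(B+u)$ lines up with the hypothesized lower bound (the $\max$ simply encodes whether $B$ or $u$ is the binding term). Finally, the high-probability coupling transfers $\hat{\tilde G} \succeq c' G$ back to $\hat G \succeq c_0 G$; since $u \le B$, the discarded gap samples cost only a constant factor in $c_0$.

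The main obstacle is the fourth-moment-only hypothesis on $\eta_t$: it rules out matrix Bernstein--type arguments and forces us to combine a constant-probability Paley--Zygmund bound with independent-block boosting rather than use a one-shot high-probability matrix concentration inequality. Carefully tracking the interplay among $C_\rho$, $1-\rho$, the block size $B$, the number of blocks $K$, and the net scale $\epsilon$, while simultaneously justifying the coupling via exponential regularity, is the most delicate bookkeeping step; the slight loss of factors of $C_\rho$ or $1-\rho$ in the Minkowski/mixing bounds is what ultimately produces the $C_\rho^6/((1-\rho)^4(1-\rho^2))$ factor in the stated sample complexity.
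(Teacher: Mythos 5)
Your proposal is correct and follows essentially the same route as the paper: a Markov bound for the operator-norm part (the paper applies Markov to $\sup_{\tau}\norm{X_\tau}^2$ rather than to $\tr\hat G$, but the constants line up either way), then blocking plus the coupled process of Definition~\ref{def:coupled_proc}, a fixed-vector Paley--Zygmund bound (item 2 of Lemma~\ref{lem:probable_contraction}), Chernoff boosting over the exactly-independent coupled blocks, and an $\epsilon$-net at scale $\Theta(\sigma^2/R)$ whose approximation error is absorbed by the already-established bound on $\lmax{\hat G}$. The only imprecision is calling the coupling ``exact with high probability'': it is a deterministic $\ell^2$-contraction coupling with a nonzero residual $\norm{X^t_i-\tilde X^t_i}\le C_\rho\rho^i\norm{X^t_0-\tilde X^t_0}$, and the paper tracks this explicitly via $\norm{\hat G^{(t)}-\tilde G^{(t)}}\le 4M^*C_\rho\rho^u$ and chooses $u$ so that this sits below $c_0\sigma^2$---functionally the same as the transfer in your step (d), just not a total-variation coupling.
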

We give the proof of this lemma in Section~\ref{sec:technical_proofs}.

\subsection{Proof of Theorem~\ref{thm:newton_ub_without_mixing}}

\begin{proof}
Note that $R$ in Lemma~\ref{lem:normalized_noise} is the same as $R^{*}$ in the statement of Theorem~\ref{thm:newton_ub_without_mixing}. We combine the result of Lemma~\ref{lem:normalized_noise} with the Equation~\eqref{eq:main_bound_newton}. Under the event $\mathcal{D}_T(R^{*},\kappa)$, we have  $\tfrac{\lmax{\hat{G}}}{\lmin{\hat{G}}} \leq \tfrac{2R}{\sigma^2}$ almost surely. Hence the result follows. 

\end{proof}

\subsection{Proof of Theorem~\ref{thm:newton_ub_heavy_tail}}
\begin{proof}

We again begin with Equation~\eqref{eq:main_bound_newton}. 
$$
\|A_{m}-\A\|^2_{\mathsf{F}} \leq 2\tfrac{\lmax{\hat{G}}}{\lmin{\hat{G}}} (1-2\gamma\zeta)^{2m}\|A_0 -\A\|^2_{\mathsf{F}} + \tfrac{2}{\zeta^2\lmin{\hat{G}}} \sum_{i=1}^{d}\|\hat{G}^{-1/2}\hat{N}_i\|^2$$

Let the event described in Lemma~\ref{lem:well_conditioned_grammian} be $\mathcal{W}$. Under this event, $\lmin{\hat{G}} \geq c_0\lmin{G}$, $\lmax{G} \leq R$ and $\hat{G}^{-1} \preceq \frac{1}{c_0}G^{-1}$. Therefore, we conclude:
$$
\|A_{m}-\A\|^2_{\mathsf{F}} \leq 2\tfrac{R}{c_0\lmin{G}} (1-2\gamma\zeta)^{2m}\|A_0 -\A\|^2_{\mathsf{F}} + \tfrac{2}{\zeta^2c_0^2\lmin{G}} \sum_{i=1}^{d}\hat{N}_i^{\intercal}G^{-1}\hat{N}_i$$
\end{proof}
Now, consider $\mathbb{E}\|A_m - \A\|^2_{\mathsf{R}}\mathbbm{1}(\mathcal{W})$. To conclude the result of the Theorem, we will show that $\mathbb{E}\hat{N}_i^{\intercal}G^{-1}\hat{N}_i \mathbbm{1}(\mathcal{W}) \leq \frac{d\sigma^2}{T} $. Indeed:
\begin{align}
\mathbb{E}\hat{N}_i^{\intercal}G^{-1}\hat{N}_i \mathbbm{1}(\mathcal{W}) &\leq \mathbb{E}\hat{N}_i^{\intercal}G^{-1}\hat{N}_i \nonumber \\
 &= \frac{1}{T^2}\sum_{t,s=0}^{T-1} \mathbb{E} \langle\eta_t,e_i\rangle\langle\eta_s,e_i\rangle X_t^{\intercal}G^{-1}X_s \label{eq:matrix_product_expectation}
\end{align}
Observe that whenever $t > s$, $\eta_t$ has mean zero and is independent of $X_t,X_s$ and $\eta_s$. Therefore, $\mathbb{E} \langle\eta_t,e_i\rangle\langle\eta_s,e_i\rangle X_t^{\intercal}G^{-1}X_s = 0$. When $t = s$, we conclude using the fact that $\mathbb{E}X_tX_t^{\intercal} = G$ that $\mathbb{E} \langle\eta_t,e_i\rangle^2 X_t^{\intercal}G^{-1}X_t = d \sigma^2$. 
Using the calculations above and Equation~\eqref{eq:matrix_product_expectation}, we conclude that:

$$\mathbb{E}\|A_m - \A\|^2_{\mathsf{R}}\mathbbm{1}(\mathcal{W}) \leq   \frac{2d^2\sigma^2}{c_0^2\zeta^2 T \lmin{G}} + 2\tfrac{R}{c_0\lmin{G}} (1-2\gamma\zeta)^{2m}\|A_0 -\A\|^2_{\mathsf{F}} $$

\section{Analysis of SGD-RER}
\label{sec:sgd_rer_analysis}
In this section we consider the following $(X_0,X_1,\dots,X_T)$ to be a stationary sequence from $\glvar(\A,\mu,\phi)$. We make Assumptions
~\ref{assump:3},~\ref{assump:8},~\ref{assump:7} and ~\ref{assump:6}. We aim to analyze Algorithm~\ref{alg:sgd_rer} and then prove Theorem~\ref{thm:sgd_rer_ub}. 

The data is divided into buffers of size $B$ and the buffers have a gap of size $u$ in between them. Let $S = B+u$. The algorithm runs SGD with respect to the proxy loss $\lossprox$ in the order described in Section~\ref{sec:ideas_behind_proofs}. Formally, let $X^t_j\equiv X_{tS+j}$ denote the the $j$-th sample in buffer $t$. We denote, for $0\leq i\leq B-1$, $X^t_{-i}\equiv X^t_{(S-1)-i}$ i.e., the $i$-th processed sample in buffer $t$. We use similar notation for noise samples i.e., $\eta^t_j\equiv\eta_{tS+j}$ and $\eta^t_{-j} \equiv \eta^t_{(S-1)-j}$.

The algorithm iterates are denoted by the sequence $(A^t_i: 0\leq t\leq N-1,\, 0\leq i\leq B-1)$ where $A^t_i$ denotes the iterate obtained after processing $i$-th (reversed) sample in buffer $t$ and $N=T/S$ is the total number of buffers. Note that we enumerate buffers from $0,1,\cdots N-1$. Formally
\begin{equation}
\label{eq:SGD-RER_iterate}
A^{t-1}_{i+1}=A^{t-1}_{i}-2\gamma\left(\phi(A^{t-1}_i X^{t-1}_{-i})-X^{t-1}_{-(i-1)}\right)X^{t-1,\top}_{-i}
\end{equation}
for $1\leq t\leq N$, $0\leq i\leq B-1$ and we set $A^t_0=A^{t-1}_{B}$ with $A^0_0=A_0$. 

The algorithm outputs the tail-averaged iterate at the end of each buffer $t$: $\hat A_{t_0,t}=\frac{1}{t-t_0}\sum_{\tau=t_0+1}^{t}A^{\tau-1}_B $ where $1\leq t\leq N$ and $0\leq t_0\leq t-1$.

\subsection{Proof Strategy}
\label{subsec:proof_strategy}
The proof of Theorem~\ref{thm:sgd_rer_ub} involves many intricate steps. Therefore, we give a detailed overview about the proof below.
\begin{enumerate}
\item In Section~\ref{subsec:coupling}  we first construct a fictitious coupled process $\tilde{X}_{\tau}$ such that for every data point within a buffer $t$, $\|\tilde{X}_{\tau} - X_{\tau}\|\lesssim \frac{1}{T^{\alpha}}$ for some fixed $\alpha > 0$ chosen arbitrarily beforehand. We then show that the iterates $\tilde{A}^{t}_{i}$ which are generated with $\sgdber$ is run with the coupled process $\tilde{X}_{\tau}$ is very close to the actual iterate $A^{t}_i$. The coupled process has the advantage that the data in the successive buffers are independent. We then only deal with the coupled iterates $\tilde{A}^{t}_i$ and appeal to Lemma~\ref{lem:coupled_iterate_replacement} to obtain bounds for $A^{t}_i$.

\item In Section~\ref{subsec:bias_variance_decomp}, we give the bias variance decomposition as is standard in the linear regression literature. We extend it to the non-linear case using the mean value theorem and treat the buffers as independent data samples. Here, the matrices $\Htt{s}{0}{B-1}$ defined on the data in buffer $s$ `contracts' the norm of $A^{s}_{i} - \A$ giving the `bias term' whereas the noise $\eta^{s}_i$ presents the `variance' term which is due to the inherent uncertaintly in the estimation problem.

\item We refer to Section~\ref{sec:grammian_well_conditioned} where we develop the contraction properties of the matrices $\prod_{s=0}^{t}\Htt{s}{0}{B-1}$  where we show that $\|\prod_{s=0}^{t}\Htt{s}{0}{B-1}\| \lesssim (1-\zeta \gamma B \lmin{G})^t$ in Theorem~\ref{thm:buffer_norm_upper_bound} after developing some probabilistic results regarding $\glvar(\A,\mu,\phi)$. This allows us show exponential decay of the bias.

\item We then turn to the squared variance term in Section~\ref{subsec:last_var}. We decompose it  into `diagonal terms' with non-zero expectation and `cross terms' with a vanishing expectation. Bounding the diagonal term is straight forward using standard recursive arguments and we give the bound in Claim~\ref{claim:dg_bound}. 

\item The `cross terms' which vanish in expectation in the linear case, do not because of the coupling introduced by the non-linearities through the iterates (see Section~\ref{sec:ideas_behind_proofs} for a short description). However, we establish `algorithmic stability' in Section~\ref{subsec:algorithmic_stability} where we show that the iterates depend only weakly on each of the noise vectors and hence the cross terms have expectation very close to zero. More specifically, we use the novel idea of re-sampling the whole trajectory ($\tilde{X}_{\tau}$) by re-sampling one noise vector only and show that the iterates of $\sgdber$ are not affected much. 

\item We use the `algorithmic stability' bounds to bound the cross terms in Sections~\ref{subsec:cross_bound}. We then combine the bounds to obtain the bound on the `variance term'

\item Finally, we analyze the tail averaged output in Sections~\ref{subsec:avg_decomp},~\ref{subsec:avg_var} and~\ref{subsec:avg_bias} and then combine these ingredients to prove Theorem~\ref{thm:sgd_rer_ub}.

\end{enumerate}

\subsection{Basic Notations and Coupled Process}
\label{subsec:coupling}

\begin{define}[Coupled process]
\label{def:coupled_proc}
Given the co-variates $\{X_\tau:\tau=0,1,.\cdots T\}$ and noise $\{\eta_\tau:\tau=1,2,\cdots,T\}$, we define $\{\tilde{X}_\tau:\tau=0,1,\cdots,T\}$ as follows:
\begin{enumerate}
\item For each buffer $t$ generate, independently of everything else, $\tilde{X}^t_0\distas{}\pi$, the stationary distribution of the $\glvar(\A,\mu,\phi)$ model.

\item Then, each buffer has the same recursion as eq \eqref{eq:master_equation}:
\begin{equation}
\label{eq:coupling}
\tilde{X}^t_{i+1}=\phi(\A \tilde{X}^t_i)+\eta^t_i,\, i=0,1,\cdots S-1, 
\end{equation}
where the noise vectors as same as in the actual process $\{X_\tau\}$.
\end{enumerate}
\end{define} 

\begin{lemma}[Coupling Lemma]\label{lem:coupling_lemma}
Under Assumption~\ref{assump:4}, for any buffer $t$, we have $\norm{X^t_i-\tilde X^t_i}\leq C_{\rho}\rho^i\norm{X^t_0-\tilde X^t_0},\, a.s$. Hence 
\begin{equation}
\label{eq:xx^T_coup_contract}
\norm{X^{t}_{i}X^{t,\top}_{i}-\tilde{X}^t_i \tilde X^{t,\top}_{i}}\leq 2(\sup_{\tau\leq T}\norm{X_{\tau}})\norm{X^{t}_{i}-\tilde X^{t}_{i}}\leq 4\sup_{\tau\leq T}\norm{X_\tau}^2 C_{\rho}\rho^i
\end{equation}
\end{lemma}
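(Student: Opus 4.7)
The plan is to view both trajectories as driven by identical noise within each buffer, differing only in their initial condition, and then invoke Assumption~\ref{assump:4} directly. Fix a buffer $t$ and an index $0\leq i\leq S-1$. By the definition of the coupled process (Definition~\ref{def:coupled_proc}), both $X^t_i$ and $\tilde{X}^t_i$ evolve according to $Z_{j+1}=\phi(\A Z_j)+\eta^t_j$ using the \emph{same} noise vectors $\eta^t_0,\dots,\eta^t_{i-1}$; the only difference is that $X^t_i$ starts from $X^t_0$ while $\tilde{X}^t_i$ starts from $\tilde{X}^t_0\sim \pi$. Writing each as $h_{i-1}(\cdot,\eta^t_0,\dots,\eta^t_{i-1})$ in the notation of Assumption~\ref{assump:4} and applying that assumption yields
\[
\|X^t_i-\tilde{X}^t_i\|\leq C_\rho\rho^{i-1}\|X^t_0-\tilde{X}^t_0\|\leq C_\rho\rho^i\|X^t_0-\tilde{X}^t_0\|\cdot\rho^{-1},
\]
which after possibly absorbing the constant $\rho^{-1}$ into $C_\rho$ gives the stated bound $\|X^t_i-\tilde X^t_i\|\leq C_\rho \rho^i\|X^t_0-\tilde{X}^t_0\|$ almost surely. (A minor bookkeeping issue is the precise exponent arising from the indexing convention in Assumption~\ref{assump:4}; this is harmless since the constant $C_\rho$ is universal.)

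For the second inequality, I would start from the algebraic identity
\[
X^t_i (X^t_i)^\top -\tilde X^t_i (\tilde X^t_i)^\top = (X^t_i-\tilde X^t_i)(X^t_i)^\top + \tilde X^t_i (X^t_i-\tilde X^t_i)^\top.
\]
Taking operator norm, using submultiplicativity and the triangle inequality gives $\|X^t_i(X^t_i)^\top-\tilde X^t_i(\tilde X^t_i)^\top\|\leq \|X^t_i-\tilde X^t_i\|(\|X^t_i\|+\|\tilde X^t_i\|)$. The factor $\|X^t_i\|+\|\tilde X^t_i\|$ is then bounded by $2\sup_{\tau\leq T}\|X_\tau\|$ — for $\|X^t_i\|$ this is immediate, and for $\|\tilde X^t_i\|$ one either adopts the natural convention that the supremum ranges over both coupled trajectories (they have the same marginal law when started at stationarity) or one uses $\|\tilde X^t_i\|\leq \|X^t_i\|+\|X^t_i-\tilde X^t_i\|$ and absorbs the extra term into the constant. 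This produces the middle inequality.

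Finally, to obtain the uniform bound $4\sup_\tau\|X_\tau\|^2 C_\rho\rho^i$, I would combine the first part with $\|X^t_0-\tilde X^t_0\|\leq \|X^t_0\|+\|\tilde X^t_0\|\leq 2\sup_\tau\|X_\tau\|$, yielding
\[
\|X^t_i X^{t,\top}_i-\tilde X^t_i\tilde X^{t,\top}_i\|\leq 2\sup_\tau\|X_\tau\|\cdot C_\rho\rho^i\cdot 2\sup_\tau\|X_\tau\|=4\sup_\tau\|X_\tau\|^2 C_\rho\rho^i.
\]

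The content here is essentially bookkeeping; Assumption~\ref{assump:4} does all the heavy lifting. The only subtle point is the interpretation of $\sup_\tau\|X_\tau\|$ when bounding $\|\tilde X^t_i\|$ and $\|\tilde X^t_0\|$, since the coupled process is fictitious and not part of the original trajectory. I expect this to be resolved implicitly by either an abuse of notation (the supremum really covers both trajectories, justified by identical marginal laws under stationarity) or by invoking the first inequality once more to transfer norm bounds from $X^t$ to $\tilde X^t$ at the cost of an $O(C_\rho\rho^i)$ correction that is lower order relative to the quantity being controlled.
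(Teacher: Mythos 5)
Your proof is correct, and since the paper states this lemma without proof (treating it as immediate from Assumption~\ref{assump:4} and elementary algebra), your write-up simply fills in the expected steps: apply the exponential regularity bound with identical noise and different initial conditions, then use the rank-one difference decomposition $X_i X_i^\top - \tilde X_i\tilde X_i^\top = (X_i-\tilde X_i)X_i^\top + \tilde X_i(X_i-\tilde X_i)^\top$ together with the triangle inequality. The subtlety you flag about whether $\sup_{\tau\le T}\norm{X_\tau}$ actually controls $\norm{\tilde X^t_i}$ is a genuine abuse of notation in the stated lemma; the paper implicitly resolves it exactly as you suggest, and in its actual downstream use (Lemma~\ref{lem:coupled_iterate_replacement}) it replaces this quantity with $R_{\max}:=\sup_{\tau\le T}\max(\norm{X_\tau}^2,\norm{\tilde X_\tau}^2)$, which covers both trajectories and confirms your reading.
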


With the above notation, we can write \eqref{eq:SGD-RER_iterate} in terms of a generic row (say row $r$) $a^{t-1,\top}_{i+1}$ of $A^{t-1}_{i+1}$ as follows. Let $\varepsilon^{t-1}_{-i}$ denote the element of $\eta^{t-1}_{-i}$ in row $r$. Similarly let $a^{*,\top}\equiv \left(\a\right)^{\top}$ denote the row $r$ of $\A$.  Then

\begin{equation}
\label{eq:SGD-RER_iterate_row}
a^{t-1,\top}_{i+1}=a^{t-1,\top}_{i}-2\gamma\left(\phi( X^{t-1,\top}_{-i}a^{t-1}_i)-\phi(X^{t-1,\top}_{-i}\a)\right)X^{t-1,\top}_{-i} +2\gamma \varepsilon^{t-1}_{-i}X^{t-1,\top}_{-i}
\end{equation}

Now, by the mean value theorem we can write 
\begin{equation}
\label{eq:SGD-RER_mvt}
\phi( X^{t-1,\top}_{-i}a^{t-1}_i)-\phi(X^{t-1,\top}_{-i}\a)=\phi'(\xi^{t-1}_{-i})(a^{t-1}_{-i}-\a)^{\top}X^{t-1}_{-i}
\end{equation}
where $\xi^{t-1}_{-i}$ lies between $X^{t-1,\top}_{-i}a^{t-1}_i$ and $X^{t-1,\top}_{-i}\a$. Hence we obtain

\begin{equation}
\label{eq:SGD-RER_iterate_row_1}
(a^{t-1}_{i+1}-\a)^{\top}=(a^{t-1}_{i}-\a)^{\top}(I-2\gamma\phi'(\xi^{t-1}_{-i})X^{t-1}_{-i}X^{t-1,\top}_{-i})+2\gamma \varepsilon^{t-1}_{-i}X^{t-1,\top}_{-i}
\end{equation}

%%%%%%%%%%%%%%%%%%%%%%%%%%%%%%%%%%%%%%%%%%%%

 \begin{define}[Coupled SGD Iteration]
 Consider the process described in Defintion~\ref{def:coupled_proc}. We define $\sgdber$ iterates run with the coupled process ($\tilde{X}_i^{t}$) as follows:
 $$\tilde{A}^{0}_0 = A^{0}_0$$
 \begin{equation}
\label{eq:coupled_iterate}
\tilde{A}^{t-1}_{i+1}=\tilde{A}^{t-1}_{i}-2\gamma\left(\phi(\tilde{A}^{t-1}_i \tilde{X}^{t-1}_{-i})-\tilde{X}^{t-1}_{-(i-1)}\right)\tilde{X}^{t-1,\top}_{-i}
\end{equation}
 \end{define}
Using Lemma~\ref{lem:coupling_lemma}, we can show that $\tilde{A}^{t}_i \approx A^{t}_i$. Note that successive buffers for the iterates $\tilde{A}^{t}_i$ are actually independent.  We state the following lemma which shows that we can indeed just analyze $\tilde{A}^{t}_i$ and then from this obtain error bounds for $A^{t}_i$. We refer to Section~\ref{sec:technical_proofs} for the the proof.

%\begin{lemma}\label{lem:bounded_iterates}
%Let $R_{\max} := \sup_{\tau \leq T}(\norm{X_{\tau}}^2,\norm{\tilde{X}}^2)$
%and suppose $\gamma \leq \frac{1}{2R_{\max}}$.  For every $t \in [N]$ and $i \in [B]$ we have:
%$$\|a^{t}_i\| \leq 2\gamma R_{\max} T  \,.$$
%\end{lemma}
%\begin{proof}
%Let the row under consideration be the $k$-th row and $e_k$ be the standard basis vector. Consider the $\sgdber$ iteration: 
%\begin{align}
% a^{t}_{i+1} &=a^{t}_{i}-2\gamma\left(\phi(\langle a^{t}_i, X^{t}_{-i}\rangle)-X^{t}_{-(i-1)}\right)X^{t}_{-i} \nonumber \\
%&= (I-2\gamma \zeta_{t,i}\Xt{t}{-i}\Xt{t,\top}{-i} ) a^{t}_i + 2\gamma \langle \Xt{t}{-(i-1))},e_k\rangle\Xt{t}{-i}  
%\end{align}
%Where $\zeta_{t,i} := \frac{\phi(\langle a^{t}_i, X^{t}_{-i}\rangle) }{\langle a^{t}_i, X^{t}_{-i}\rangle} \in [\zeta,1]$ exists in a weak sense due to our assumptions on $\phi$. 
%Observe that for our choice of $\gamma$, we have $\|(I-2\gamma \zeta_{t,i}\Xt{t}{-i}\Xt{t,\top}{-i} ) \| \leq 1$ and $\|\langle\Xt{t}{-(i-1)},e_k\rangle\Xt{t,\top}{-i} \| \leq R_{\max}$. Therefore, triangle inequality implies:
%$$\|a^{t}_{i+1}\| \leq \|a^{t}_i \|+ 2\gamma R_{\max}$$
%We conclude the bound in the Lemma.
%
%\end{proof}

\begin{lemma}
\label{lem:coupled_iterate_replacement}
 Suppose $\gamma < \frac{1}{2R_{\max}}$. we have for every $t \in [N]$ and $i \in [B]$.
$$\|a^{t}_i - \tilde{a}^{t}_i\| \leq (16\gamma^2R_{\max}^2T^2 + 8\gamma R_{\max}T) \rho^u$$
\end{lemma}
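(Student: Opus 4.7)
I would derive a row-wise per-step perturbation recursion for $e^{t}_i := a^t_i - \tilde{a}^t_i$, show that the linearized contraction factor has operator norm at most one under the step-size condition $\gamma < 1/(2R_{\max})$, and accumulate the resulting per-step errors across the at most $T$ total updates. Each per-step error will pick up a factor $\rho^u$ because the samples touched by the algorithm all lie at within-buffer index at least $u$, which is where Lemma~\ref{lem:coupling_lemma} supplies the decay.

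Step one: write \eqref{eq:SGD-RER_iterate_row_1} for both $a^{t-1}_i$ and $\tilde{a}^{t-1}_i$, observing that they share the same noise $\varepsilon^{t-1}_{-i}$ by Definition~\ref{def:coupled_proc}. Subtract and apply the mean value theorem a second time in the direction $a^{t-1}_i - \tilde{a}^{t-1}_i$, with covariate frozen at $X^{t-1}_{-i}$. This produces
\begin{equation*}
(e^{t-1}_{i+1})^{\top} = (e^{t-1}_i)^{\top}\bigl(I - 2\gamma D^{t-1}_i X^{t-1}_{-i} X^{t-1,\top}_{-i}\bigr) - 2\gamma\, r^{t-1}_i,
\end{equation*}
where $D^{t-1}_i \in [\zeta,1]$ is the new MVT slope and $r^{t-1}_i$ collects the ``perturbation'' terms that depend on $X^{t-1}_{-i}-\tilde{X}^{t-1}_{-i}$ and $X^{t-1}_{-(i-1)}-\tilde{X}^{t-1}_{-(i-1)}$. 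Under the event that all samples have squared norm bounded by $R_{\max}$ (otherwise the algorithm returns $0$) and using $\gamma < 1/(2R_{\max})$ together with $D^{t-1}_i \leq 1$, the contraction matrix has operator norm at most $1$, so $\|e^{t-1}_{i+1}\| \leq \|e^{t-1}_i\| + 2\gamma\|r^{t-1}_i\|$.

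Step two: control $\|r^{t-1}_i\|$ by an ``add-and-subtract $\tilde{a}^{t-1}_i$'' expansion inside the $\phi$ term, using $1$-Lipschitzness of $\phi$, $\phi(0)=0$, and the coupling bound \eqref{eq:xx^T_coup_contract} applied to indices $\geq u$. This yields $\|r^{t-1}_i\| \lesssim C_\rho R_{\max}\rho^u\bigl(\|\tilde{a}^{t-1}_i\|+1\bigr)$. The $i=0$ update requires the ``target'' $X^{t-1}_S = X^t_0$, which I handle by extending the coupled buffer by one step, $\tilde{X}^{t-1}_S := \phi(\A \tilde{X}^{t-1}_{S-1}) + \eta^{t-1}_{S-1}$; Lemma~\ref{lem:coupling_lemma} still gives $\|X^{t-1}_S - \tilde{X}^{t-1}_S\| \lesssim C_\rho\rho^S\sqrt{R_{\max}} \leq C_\rho\rho^u\sqrt{R_{\max}}$. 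Next, I bound $\|\tilde{a}^{t-1}_i\|$ by applying \eqref{eq:SGD-RER_iterate_row_1} to $\tilde{a}^{t-1}_i - \a$: the same ``operator norm $\leq 1$'' argument plus $|\varepsilon^{t-1}_{-i}| \leq \|\eta^{t-1}_{-i}\| \leq 2\sqrt{R_{\max}}$ (using $\|\A\|\leq 1$ and the norm bound on samples) telescopes over at most $T$ updates to yield $\|\tilde{a}^{t-1}_i\| \leq \|\a\| + 4\gamma R_{\max} T \leq 1 + 4\gamma R_{\max} T$.

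Step three: plug these into the per-step bound and sum. Since $e^0_0 = 0$ and the recursion carries across buffer boundaries (because $a^{t}_0 = a^{t-1}_B$ and likewise for the coupled iterates), summing over the $\leq T$ updates needed to reach $(t,i)$ gives
\[
\|e^{t}_i\| \lesssim C_\rho\gamma R_{\max}\rho^u \cdot T \cdot \bigl(1 + \gamma R_{\max} T\bigr) = C_\rho\bigl(\gamma R_{\max} T + \gamma^2 R_{\max}^2 T^2\bigr)\rho^u,
\]
which is exactly the form of the stated bound (absorbing $C_\rho$ into the constants). The only real subtlety is the bookkeeping: one must ensure that every perturbation term picks up the $\rho^u$ factor (in particular for the $i=0$ cross-buffer ``target'') and that the argument routes through the norm inequality $\|I-2\gamma D X X^{\top}\|\leq 1$ rather than through naive triangle-inequality growth of the form $(1 + c\gamma R_{\max})^{T}$, which would blow up exponentially and destroy the bound.
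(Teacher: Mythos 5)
Your proof plan is correct and follows essentially the same route as the paper's argument: derive a per-step recursion for $a^t_i - \tilde a^t_i$, observe that the contraction factor is nonexpansive when $\gamma < 1/(2R_{\max})$, bound the per-step perturbation by $O\bigl(\gamma R_{\max}(1+\gamma R_{\max}T)\rho^u\bigr)$ via Lemma~\ref{lem:coupling_lemma} and an a priori iterate-norm bound, and telescope over at most $T$ updates. The only cosmetic divergence is that the paper rewrites the actual update $a^t_{i+1}$ as if it used the coupled covariates $\tilde X$ plus a remainder $\Delta_{t,i}$, so its contraction is anchored on $\tilde X$ and the a priori bound is on $\|a^t_i\|$ (Lemma~\ref{lem:bounded_iterates}), whereas you anchor the mean-value contraction on the actual $X$ and bound $\|\tilde a^t_i\|$; both routes are sound, and your handling of the $i=0$ cross-buffer target via $\tilde X^{t-1}_S$ is already built into Definition~\ref{def:coupled_proc}.
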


We note that we can just analyze the iterates $\tilde{A}^{t}_{i}$ and then use Lemma~\ref{lem:coupled_iterate_replacement} to infer error bounds for $A^{t}_i$. Henceforth, we will only consider $\tilde{A}^{t}_{i}$.

Before proceeding, we will set up some notation. 
\subsection{Notations and Events}
\label{subsec:notations_events}
We define the following notations. Let $R>0$ to be decided later. 
\begin{align*}
&X^t_{-i}=X^t_{(S-1)-i},\, 0\leq i\leq S-1, \quad
%\nx=\max_{t\leq T}\norm{X_t}\label{eq:normx}\Ieeen \\
%\Gamma=\sum_{s=0}^{\infty}\A^s (\A^{\top})^s, \quad 
%\Gamma_t=\sum_{s=0}^{t-1}\A^s (\A^{\top})^s,\\
%\phi'(\xi^t_{-i})=\frac{\phi(a^{t,\top}_{-i}X^t_{-i})-\phi(\atr X^t_{-i})}%{\left(a^{t}_{-i}-\a\right)^{\top}X^t_{-i}}, 
 \quad \phi'(\tilde \xi^t_{-i})=\frac{\phi(
\tilde{a}^{t,\top}_{-i}\tilde X^t_{-i})-\phi(\atr \tilde X^t_{-i})}{\left(\tilde{a}^{t}_{-i}-\a\right)^{\top}\tilde X^t_{-i}} \\
%&\Pp{t}{-i}=\Pt{t}{-i},\quad 
%\Ht{t}{i}{j}=\begin{cases}\prod_{s=i}^{j}\Pp{t}{-s} & i\leq j\\
%	I & i>j 
%\end{cases},\\
&\Ppt{t}{-i}=\Ptt{t}{-i},\quad 
\Htt{t}{i}{j}=\begin{cases}\prod_{s=i}^{j}\Ppt{t}{-s} & i\leq j\\
	I & i>j 
\end{cases},\\
&\hat{\gamma}=4\gamma(1-\gamma R),\quad 
%\norm{A}\equiv \norm{A}_{\mathrm{op}}\, \mathrm{the\, operator \, norm \, of \, matrix\,} A\Ieeen\\
\cc^t_{-j}=\left\{\norm{X^{t}_{-j}}^2\leq R\right\},\quad
\cct^{t}_{-j}=\left\{\norm{\tilde X^{t}_{-j}}^2\leq R\right\},\\
&\cd^t_{-j}=\left\{\norm{X^{t}_{-i}}^2\leq R:\,j\leq i\leq B-1\right\}=\bigcap_{i=j}^{B-1}\cc^t_{-i},\\
&\cd^{s,t}=\begin{cases} \bigcap_{r=s}^t \cd^r_{-0} & s\leq t\\
\Omega & s>t 
\end{cases},\quad
\tilde\cd^t_{-j}=\left\{\norm{\tilde X^{t}_{-i}}^2\leq R:\,j\leq i\leq B-1\right\}=\bigcap_{i=j}^{B-1}\cct^t_{-i},\\
&\tilde \cd^{s,t}=\begin{cases} \bigcap_{r=s}^t \tilde \cd^r_{-0} & s\leq t\\
\Omega & s>t 
\end{cases},\quad
\cdh^t_{-j}=\cd^{t}_{-j}\cap \cdt^t_{-j},\quad
\cdh^{s,t}=\cd^{s,t}\cap \cdt^{s,t}.
\end{align*}

To execute algorithmic stability arguments, we will need to independently resample individual noise co-ordinates. To that end, define $(\bar{\eta}_{\tau})_{\tau}$ drawn i.i.d from the noise distribution $\mu$ and independent of everything else defined so far. We denote their generic rows by $\bar{\varepsilon}$. We use the following events which correspond to a generic row
\begin{\Ieee}{LLL}
\ce^t_{i,j}=\left\{\| \varepsilon^{t}_{-k}\|^2\leq \beta,\| \bar{\varepsilon}^{t}_{-k}\|^2\leq \beta  :\,i\leq k\leq j\right\}
\end{\Ieee}

\subsection{Setting the Parameter Values:}
\label{subsubsec:parameter_setting}
 We make Assumptions~\ref{assump:3},~\ref{assump:8},~\ref{assump:7} and ~\ref{assump:6} throughout. 
We set the parameters for $\sgdber$ as follows for the rest of the analysis. We note that some of these parameter values were set in Section~\ref{sec:sgd_rer_results}. 
\begin{enumerate}
\item $\alpha \geq 10$
\item $\beta = 4C_{\eta}\sigma^2(\alpha+2)\log{2T}$.
\item $R \geq \frac{16 (\alpha + 2) d C_{\eta}\sigma^2 \log{T}}{1-\rho}$
\item $\delta=1/(2T^{\alpha+1})$
\item $u \geq \frac{2\alpha\log{T}}{\log (\tfrac{1}{\rho})} = O(\tmix \log T)$
\item $B \geq \max\left(\bar{C}_1\frac{d}{(1-\rho)(1-\rho^2)},10u\right)$ where $\bar{C}_1$ depends only on $C_{\eta}$(see Theorem~\ref{thm:buffer_norm_upper_bound}) 
\item $\gamma \leq \min\left(\frac{\zeta}{4BR(1+\zeta)},  1/2R\right)$ (see Theorem~\ref{thm:buffer_norm_upper_bound})
\end{enumerate}
From Assumption~\ref{assump:7} and Theorem~\ref{thm:process_concentration}, we conclude that for this choice of $R$ and $\beta$, we must have:

\begin{\Ieee}{LLL}
\label{eq:crude_bnd_cor_1}
\Pb{\left(\cdh^{0,N-1}\cap \cap_{r=0}^{N-1}\ce^r_{0,B-1} \right)^C}\leq \prbndH \Ieeen
\end{\Ieee}

\subsection{Bias-variance decomposition}
\label{subsec:bias_variance_decomp}
Using the above notation we can unroll the recursion in \eqref{eq:SGD-RER_iterate_row_1} as follows. We will only focus on the algorithmic iterated at the end of each buffer, i.e., we set $i=B-1$ in \eqref{eq:SGD-RER_iterate_row_1}. 

\begin{align}
\label{eq:SGD-RER_iterate_row_2}
(\tilde{a}^{t-1}_{B}-\a)^{\top} = (a_0-\a)^{\top}\prod_{s=0}^{t-1}\Htt{s}{0}{B-1}+ 2\gamma\sum_{r=1}^{t}\sum_{j=0}^{B-1}\Nt{t-r}{-j}\Xtttr{t-r}{-j}\Htt{t-r}{j+1}{B-1}{\prod_{s=r-1}^{1}\Htt{t-s}{0}{B-1}} 
\end{align}
We call the above the \emph{bias-variance} decomposition where
\begin{equation}
\label{eq:SGD-RER_bias}
(\tilde{a}^{t-1,b}_{B}-\a)^{\top}=(a_0-\a)^{\top}\prod_{s=0}^{t-1}\Htt{s}{0}{B-1}
\end{equation}
is the bias, and 
\begin{equation}
\label{eq:SGD-RER_variance}
(\tilde{a}^{t-1,v}_{B})^{\top}=2\gamma\sum_{r=1}^{t}\sum_{j=0}^{B-1}\Nt{t-r}{-j}\Xtttr{t-r}{-j}\Htt{t-r}{j+1}{B-1}{\prod_{s=r-1}^{1}\Htt{t-s}{0}{B-1}} 
\end{equation}
is the variance. We have the following simple lemma on bias-variance decomposition.
\begin{lemma}
\label{lem:SGD-RER_bias_var}
\begin{\Ieee}{LLL}
\norm{\tilde{a}^{t-1}_B-\a}^2 &\preceq & 2\left(\norm{\tilde{a}^{t-1,b}_B-\a}^2+ \norm{\tilde{a}^{t-1,v}_{B}}^2\right)\Ieeen\label{eq:SGD-RER_bias_var}
\end{\Ieee}
\end{lemma}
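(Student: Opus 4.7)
My plan is to read off the claim directly from the decomposition already established in~\eqref{eq:SGD-RER_iterate_row_2} combined with one elementary norm inequality; there is essentially no combinatorial or probabilistic content here, and the lemma is a bookkeeping step whose sole purpose is to split the subsequent analysis of $\tilde a^{t-1}_B$ into two independent pieces.

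Concretely, the first thing I would do is verify that the bias-variance decomposition is additive in the claimed sense. Equation~\eqref{eq:SGD-RER_iterate_row_2} writes $(\tilde a^{t-1}_B - \a)^\top$ as the sum of two explicit terms: the first, $(a_0-\a)^\top \prod_{s=0}^{t-1}\tilde H^{s}_{0,B-1}$, is by definition~\eqref{eq:SGD-RER_bias} equal to $(\tilde a^{t-1,b}_B-\a)^\top$; the second, $2\gamma\sum_{r=1}^{t}\sum_{j=0}^{B-1}\varepsilon^{t-r}_{-j}\tilde X^{t-r,\top}_{-j}\tilde H^{t-r}_{j+1,B-1}\prod_{s=r-1}^{1}\tilde H^{t-s}_{0,B-1}$, is by definition~\eqref{eq:SGD-RER_variance} equal to $(\tilde a^{t-1,v}_B)^\top$. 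Hence we have the identity
$$(\tilde a^{t-1}_B - \a)^\top = (\tilde a^{t-1,b}_B - \a)^\top + (\tilde a^{t-1,v}_B)^\top.$$

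Next, I would apply the elementary inequality $\|x+y\|^2 \leq 2\|x\|^2 + 2\|y\|^2$ (which follows from $2\langle x,y\rangle \leq \|x\|^2 + \|y\|^2$, or equivalently from the convexity of $t\mapsto t^2$) with $x = \tilde a^{t-1,b}_B - \a$ and $y = \tilde a^{t-1,v}_B$. This yields
$$\norm{\tilde a^{t-1}_B - \a}^2 \leq 2\norm{\tilde a^{t-1,b}_B - \a}^2 + 2\norm{\tilde a^{t-1,v}_B}^2,$$
which is exactly the claimed bound.

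The only thing to flag is that the statement writes $\preceq$ rather than $\leq$, but since both sides are non-negative scalars this is simply the usual real-valued inequality, so no extra work is required. There is no genuine obstacle in this proof; the effort in the overall argument for Theorem~\ref{thm:sgd_rer_ub} is deferred to bounding the bias term, using the contraction estimates for $\prod_{s}\tilde H^{s}_{0,B-1}$ developed in Section~\ref{sec:grammian_well_conditioned} and Theorem~\ref{thm:buffer_norm_upper_bound}, and bounding the variance term, via the diagonal/cross split and the algorithmic-stability arguments sketched in Sections~\ref{subsec:last_var}--\ref{subsec:cross_bound}.
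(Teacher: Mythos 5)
Your proof is correct and is exactly the intended argument: the paper leaves this lemma without proof precisely because it is the immediate consequence of the decomposition in~\eqref{eq:SGD-RER_iterate_row_2} together with $\|x+y\|^2\leq 2\|x\|^2+2\|y\|^2$, which is what you wrote. Your note that $\preceq$ here just means the scalar inequality $\leq$ is also the right reading.
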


\subsection{Variance of last iterate - Diagonal Terms}
\label{subsec:last_var}
In this section our goal is to decompose $\norm{\tilde{a}^{t-1,v}_B}^2$ into diagonal terms and cross terms. We will then proceed to bound the diagonal terms. First, we have a preliminary lemma, which can be shown via a simple recursion.

\begin{lemma}
\label{lem:var_diag_recursion}
For $k\leq t$ define $S_k^t$ as 
\begin{\Ieee}{LLL}
\label{eq:var_Skt}
S_k^t=\sum_{r=k}^t\sum_{j=0}^{B-1}\phi'(\tilde \xi^{t-r}_{-j})\prodHtttr{t}{r-1}\Htttr{t-r}{j+1}{B-1}\Xtt{t-r}{-j}\Xtttr{t-r}{-j}\Htt{t-r}{j+1}{B-1}\prodHtt{t}{r-1}\Ieeen
\end{\Ieee}

Then, on the event $\cdt^{0,t-1}$, we have
\begin{\Ieee}{LLL}
\label{eq:var_S1t_bnd}
S_1^t\preceq \frac{1}{\gammah}\left(I-\prodHtttr{t}{t}\prodHtt{t}{t}\right)\Ieeen
\end{\Ieee}
where $\gammah=4\gamma(1-\gamma R)$
\end{lemma}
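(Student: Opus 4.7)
The plan is to show that each summand in $S_1^t$ is a telescoping PSD difference, obtained by squaring one factor of the contraction matrices $\Ptt{t-r}{-j}$. First I would establish a one-step PSD identity inside a single buffer. Since $\Ptt{t-r}{-j} = I - 2\gamma \phi'(\tilde\xi^{t-r}_{-j}) \Xtt{t-r}{-j} \Xtttr{t-r}{-j}$ is symmetric and a rank-one perturbation of the identity, a direct expansion gives
\begin{equation*}
\Ptt{t-r}{-j}^{2} = I - 2\gamma \phi'(\tilde\xi^{t-r}_{-j})\bigl(2 - 2\gamma\phi'(\tilde\xi^{t-r}_{-j})\|\Xtt{t-r}{-j}\|^{2}\bigr)\,\Xtt{t-r}{-j}\Xtttr{t-r}{-j}.
\end{equation*}
On the event $\cdt^{0,t-1}$ we have $\|\Xtt{t-r}{-j}\|^{2}\leq R$, and by Assumption~\ref{assump:3} we have $\phi'(\tilde\xi^{t-r}_{-j})\in[\zeta,1]\subseteq[0,1]$, hence the scalar prefactor is at least $\hat\gamma\phi'(\tilde\xi^{t-r}_{-j})$, giving the PSD bound $\Ptt{t-r}{-j}^{2} \preceq I - \hat\gamma\phi'(\tilde\xi^{t-r}_{-j})\Xtt{t-r}{-j}\Xtttr{t-r}{-j}$.

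Next I would sandwich this by $\Htttr{t-r}{j+1}{B-1}$ on the left and $\Htt{t-r}{j+1}{B-1}$ on the right. Using $\Htt{t-r}{j}{B-1} = \Ptt{t-r}{-j}\Htt{t-r}{j+1}{B-1}$ and symmetry of $\Ptt{t-r}{-j}$, this yields
\begin{equation*}
\Htttr{t-r}{j}{B-1}\Htt{t-r}{j}{B-1} \;\preceq\; \Htttr{t-r}{j+1}{B-1}\Htt{t-r}{j+1}{B-1} \;-\; \hat\gamma\,\phi'(\tilde\xi^{t-r}_{-j})\,\Htttr{t-r}{j+1}{B-1}\Xtt{t-r}{-j}\Xtttr{t-r}{-j}\Htt{t-r}{j+1}{B-1}.
\end{equation*}
The rightmost summand is exactly the inner (over $j$) term of $S_k^t$ before sandwiching by $\prodHtttr{t}{r-1}$ and $\prodHtt{t}{r-1}$. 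Telescoping the displayed inequality from $j=0$ to $j=B-1$, and using the convention $\Htt{t-r}{B}{B-1}=I$, the inner sum over $j$ is bounded by $\frac{1}{\hat\gamma}\bigl(I - \Htttr{t-r}{0}{B-1}\Htt{t-r}{0}{B-1}\bigr)$.

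Finally I would telescope across buffers. Sandwiching the previous display by $\prodHtttr{t}{r-1}$ on the left and $\prodHtt{t}{r-1}$ on the right and using the identities $\prodHtttr{t}{r-1}\Htttr{t-r}{0}{B-1} = \prodHtttr{t}{r}$ and $\Htt{t-r}{0}{B-1}\prodHtt{t}{r-1} = \prodHtt{t}{r}$, the $r$-th slice of $\hat\gamma S_1^t$ is bounded by $U_{r-1} - U_r$, where $U_r := \prodHtttr{t}{r}\prodHtt{t}{r}$ with the convention $U_0 = I$. Summing over $r=1,\ldots,t$ collapses the sum to $U_0 - U_t = I - \prodHtttr{t}{t}\prodHtt{t}{t}$, which is the claimed bound after dividing by $\hat\gamma$. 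The main (minor) obstacle is establishing the PSD bound on $\Ptt{t-r}{-j}^{2}$ cleanly with the correct constant $\hat\gamma = 4\gamma(1-\gamma R)$; everything downstream is algebraic telescoping.
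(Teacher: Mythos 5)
Your proof is correct and follows essentially the same telescoping-squares argument that the paper invokes by reference to Claim 1 of \cite{jain2021streaming}: expand $\Ptt{t-r}{-j}^2$, bound the scalar prefactor below by $\hat\gamma\phi'(\tilde\xi^{t-r}_{-j})$ on the event $\|\Xtt{t-r}{-j}\|^2\leq R$, and telescope first inside the buffer over $j$ and then across buffers over $r$. The only small point you leave implicit is that the standing parameter constraint $\gamma\leq 1/(2R)$ ensures $2-2\gamma\phi'(\cdot)\|\Xtt{t-r}{-j}\|^2\geq 1>0$, so the prefactor is positive and the PSD inequality points the right way.
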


\begin{proof}
The proof is similar to that of \cite[Claim 1]{jain2021streaming}. 
\end{proof}

Next, we write $\norm{\tilde{a}^{t-1,v}_B}^2$ as 
\begin{\Ieee}{LLL}
\label{eq:var_last_iter_3}
\norm{\tilde{a}^{t-1,v}_B}^2=\sum_{r=1}^t\sum_{j=0}^{B-1}\dg(t,r,j)+\sum_{r_1,r_2}\sum_{j_1,j_2}\cro(t,r_1,r_2,j_1,j_2)\Ieeen
\end{\Ieee}
where the second sum is over $(r_1,j_1)\neq (r_2,j_2)$ and
\begin{align}
\dg(t,r,j)= 4\gamma^2 |\varepsilon^{t-r}_{-j}|^2\cdot\Xtttr{t-r}{-j}\Htt{t-r}{j+1}{B-1}\prodHtt{t}{r-1}\prodHtttr{t}{r-1}\Htttr{t-r}{j+1}{B-1}\Xtt{t-r}{-j}\label{eq:var_dg}
\end{align}
and
\begin{\Ieee}{LLL}
\cro(t,r_1,r_2,j_1,j_2)&=& 4\gamma^2 \Nt{t-r_2}{-j_2}\Xtttr{t-r_2}{-j_2}\Htt{t-r_2}{j_2+1}{B-1}\prodHtt{t}{r_2-1}\cdot\\
&&\prodHtttr{t}{r_1-1}\Htttr{t-r_1}{j_1+1}{B-1}\Xtt{t-r_1}{-j_1}\Nt{t-r_1}{-j_1} \Ieeen\label{eq:var_cro}
\end{\Ieee}

Finally, we bound the diagonal term:

\begin{claim}
\label{claim:dg_bound}
\begin{\Ieee}{LLL}
\label{eq:dg_bound}
\Ex{\sum_{r=1}^{t}\sum_{j=0}^{B-1}\dg(t,r,j)\indt{0}{t-1}}&\leq &  \frac{\gamma d}{\zeta(1-\gamma R)}\beta  +16 C_{\eta}\sigma^2\gamma^2 R T\prbndsq  \Ieeen
\end{\Ieee}

\end{claim}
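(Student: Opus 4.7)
}

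The plan is to split each summand according to whether the one-dimensional noise coordinate $\varepsilon^{t-r}_{-j}$ is ``typical'' or ``large'', relative to the threshold $\beta$ fixed in Section~\ref{subsubsec:parameter_setting}. Concretely, I would write
\begin{equation*}
\dg(t,r,j)\indt{0}{t-1} = \dg(t,r,j)\indt{0}{t-1}\mathbbm{1}\!\left[|\varepsilon^{t-r}_{-j}|^2 \leq \beta\right] + \dg(t,r,j)\indt{0}{t-1}\mathbbm{1}\!\left[|\varepsilon^{t-r}_{-j}|^2 > \beta\right],
\end{equation*}
and handle the two pieces separately. On the event $\tilde\cd^{0,t-1}$ combined with $\gamma \leq 1/(2R)$ we have $\|\Ptt{s}{-j}\|_{\mathsf{op}}\leq 1$ for every relevant $s,j$, so every matrix factor appearing in $\Htt{s}{i}{j}$ has operator norm at most one; this is the uniform bound I will use repeatedly.

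For the typical piece, the key observation is that $\dg(t,r,j)$ differs from the corresponding summand of $\operatorname{tr}(S_1^t)$ only by the scalar factor $|\varepsilon^{t-r}_{-j}|^2 / \phi'(\tilde\xi^{t-r}_{-j})$. Writing $\dg(t,r,j) = 4\gamma^2|\varepsilon^{t-r}_{-j}|^2 \, Q(t,r,j)$, where $Q(t,r,j) \geq 0$ is the scalar quadratic form $\Xtttr{t-r}{-j}\Htt{t-r}{j+1}{B-1}\prodHtt{t}{r-1}\prodHtttr{t}{r-1}\Htttr{t-r}{j+1}{B-1}\Xtt{t-r}{-j}$, a direct trace calculation against the definition \eqref{eq:var_Skt} of $S_1^t$ gives $\operatorname{tr}(S_1^t) = \sum_{r,j}\phi'(\tilde\xi^{t-r}_{-j})\,Q(t,r,j)$. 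Using $\phi' \geq \zeta$ (Assumption~\ref{assump:3}) and Lemma~\ref{lem:var_diag_recursion}, together with $I - \prodHtttr{t}{t}\prodHtt{t}{t}\preceq I$ on $\tilde\cd^{0,t-1}$, I obtain
\begin{equation*}
\sum_{r=1}^{t}\sum_{j=0}^{B-1} Q(t,r,j)\,\indt{0}{t-1} \;\leq\; \frac{1}{\zeta}\operatorname{tr}(S_1^t)\,\indt{0}{t-1} \;\leq\; \frac{d}{\zeta\hat\gamma}.
\end{equation*}
Bounding $|\varepsilon^{t-r}_{-j}|^2 \leq \beta$ on the typical event and taking expectation, this piece contributes at most $4\gamma^2\beta \cdot d/(\zeta\hat\gamma) = \gamma d\beta / [\zeta(1-\gamma R)]$, which matches the first term in the claim.

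For the tail piece I use $\|\Xtt{t-r}{-j}\|^2 \leq R$ on $\tilde\cd^{0,t-1}$ and the operator-norm bound $\|M_{t,r,j}\|_{\mathsf{op}}\leq 1$ to get $Q(t,r,j)\indt{0}{t-1} \leq R$, whence the summand is at most $4\gamma^2 R\, |\varepsilon^{t-r}_{-j}|^2 \mathbbm{1}[|\varepsilon^{t-r}_{-j}|^2 > \beta]$. Since there are at most $tB \leq T$ index pairs and $\varepsilon^{t-r}_{-j}$ is sub-Gaussian with variance proxy $C_\eta\sigma^2$ (Assumption~\ref{assump:7}), I would apply Cauchy--Schwarz: $\mathbb{E}[|\varepsilon|^2\mathbbm{1}(|\varepsilon|^2>\beta)] \leq (\mathbb{E}|\varepsilon|^4)^{1/2} \mathbb{P}(|\varepsilon|^2 > \beta)^{1/2}$. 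With $\beta = 4C_\eta\sigma^2(\alpha+2)\log(2T)$, the sub-Gaussian tail yields $\mathbb{P}(|\varepsilon|^2 > \beta) \leq 2 T^{-2(\alpha+2)}$ and $\mathbb{E}|\varepsilon|^4 = O(C_\eta^2\sigma^4)$, which is comfortably stronger than the $4 C_\eta\sigma^2/T^{\alpha/2}$ needed to match the claimed bound of $16 C_\eta\sigma^2 \gamma^2 R T / T^{\alpha/2}$.

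The main obstacle is the first step --- recognizing and formalizing the trace identity that relates $\sum_{r,j} Q(t,r,j)$ to $\operatorname{tr}(S_1^t)$ and invoking Lemma~\ref{lem:var_diag_recursion} in a matrix (PSD) sense under the good event. The subtlety is that $S_1^t$ is a random matrix whose PSD bound from Lemma~\ref{lem:var_diag_recursion} requires $\indt{0}{t-1}$, so every inequality must be carried along with this indicator, and one must be careful that the non-linear factors $\phi'(\tilde\xi^{t-r}_{-j})$ --- which couple the $\tilde X$ and the iterates --- disappear cleanly after taking the trace (they do, because $\phi'\geq\zeta$ suffices here; the finer algorithmic-stability machinery of Section~\ref{subsec:algorithmic_stability} is only needed for the cross terms).
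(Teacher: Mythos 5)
Your proposal is correct and follows essentially the same route as the paper: both decompose the noise magnitude into a ``typical'' part bounded by $\beta$ (controlled via the trace identity $\sum_{r,j}\phi'(\tilde\xi^{t-r}_{-j})Q(t,r,j)=\operatorname{tr}S_1^t$, the inequality $\phi'\geq\zeta$, and Lemma~\ref{lem:var_diag_recursion}) and a sub-Gaussian tail part (controlled via $Q\leq R$ on the good event and Cauchy--Schwarz with the fourth-moment bound). The paper phrases the split as $|\varepsilon|^2\leq\beta+|\varepsilon|^2\mathbbm{1}[|\varepsilon|^2>\beta]$ rather than partitioning by the indicator, but that is a cosmetic difference; your exposition of the trace step and the role of $\hat\gamma=4\gamma(1-\gamma R)$ is accurate.
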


\begin{proof}
Notice that we can write
\begin{\Ieee}{LLL}
\label{eq:dg_bound_1}
\dg(t,r,j) &\leq & 4\gamma^2\left(\beta +|\Nt{t-r}{-j}|^2 1[|\Nt{t-r}{-j}|^2>\beta]\right)\cdot\\
&& \Xtttr{t-r}{-j}\Htt{t-r}{j+1}{B-1}\prodHtt{t}{r-1}\prodHtttr{t}{r-1}\Htttr{t-r}{j+1}{B-1}\Xtt{t-r}{-j} \Ieeen
\end{\Ieee}

Further
\begin{\Ieee}{LLL}
\label{eq:dg_bound_2}
\Xtttr{t-r}{-j}\Htt{t-r}{j+1}{B-1}\prodHtt{t}{r-1}\prodHtttr{t}{r-1}\Htttr{t-r}{j+1}{B-1}\Xtt{t-r}{-j}\indt{0}{t-1}\leq R\Ieeen
\end{\Ieee}

Combining the above two we obtain
\begin{\Ieee}{LLL}
\label{eq:dg_bound_3}
\sum_{r=1}^{t}\sum_{j=0}^{B-1}\dg(t,r,j)\indt{0}{t-1} &\leq & 4\gamma^2\beta \frac{\tr S_1^t}{\zeta}+R\sum_{r=1}^t\sum_{j=0}^{B-1}|\Nt{t-r}{-j}|^2 1[|\Nt{t-r}{-j}|^2>\beta] \Ieeen
\end{\Ieee}
where $S_1^t$ is defined in~\eqref{eq:var_Skt}. Now taking expectation, and using lemma~\ref{lem:var_diag_recursion} and Caucy-Schwarz inequality for the first and second terms, respectively, in \eqref{eq:dg_bound_3} we obtain the claim. Here we use the fact that $\Ex{|\Nt{t-r}{-j}|^4}\leq 16 C_{\eta}^2\sigma^4$ from \cite[Theorem 2.1]{boucheron2013concentration}
\end{proof}

%\begin{remark}
%\label{rem:dg_bound}
%We choose $\beta$ to be $\Theta(\log T)$, hence under the assumption of %sub-Gaussian noise, the second term in \eqref{eq:dg_bound} can be made %$O(1/T^{\alpha})$ for any $\alpha>0$. 
%\sk{I think it is possible to remove $\beta$ from the first term using algorithmic stability. But the analysis is tedious and moreover, we would need a bound on, not $S_1^t$, but something different where each term in the sum there involves a different coupled process (due to resampling different noises for each such term). This is doable but not sure if we should pursue}
%\end{remark}
\subsection{Algorithmic stability}
\label{subsec:algorithmic_stability}
In order to bound the cross terms in the variance, we need the notion of algorithmic stability. Here the idea is that if $\phi$ was identity, then $\Ex{\cro(t,r_1,r_2,j_1,j_2)}$ would vanish. But in the non-linear setting, this does not happen due to dependencies between $\Nt{t-r}{-j}$ and $\Htt{t-r}{j+1}{B-1}\prodHtt{t}{r-1}$ through the algorithmic iterates. We can still show that $\Ex{\cro(t,r_1,r_2,j_1,j_2)} \approx 0$ by showing that the iterates depend very weakly on each of the noise co-ordinates $\Nt{t-r}{-j}$. So our idea is to use algorithmic stability: we re-sample the whole trajectory of $X$ by re-sampling a single noise co-ordinate independently. We then show that the iterates are not affected much by such a re-sampling, which shows that the iterates are only weakly coupled to each individual noise vector.

To that end, we need some additional notation. We have the data $(X_{\tau})_{\tau}$ and the coupled process $(\tilde{X}_{\tau})_{\tau}$. Let the corresponding (coupled) algorithmic iterates be $(\tilde{a}^s_i: 0\leq s\leq N-1,\,0\leq i\leq B-1)$. Now $\tilde{a}_i^{s}$ are functions of $X_0$ and noise vectors $\{\eta^s_{-i}:0\leq s\leq N-1,\, 0\leq i\leq S-1\}$.  Suppose we re-sample the noise $\eta^r_{-j}$ independently of everything else to get $\bar \eta^r_{-j}$. So the new noise samples are:
$$\left(\eta^0_0,\eta^0_1,\cdots, \eta^r_0,\cdots,\eta^r_{(S-1)-(j+1)},\bar\eta^r_{(S-1)-j},\eta^r_{(S-1)-(j-1)},\cdots\right)\,.$$

We then run the dynamics in Equation~\eqref{eq:master_equation} with the new noise samples to obtain $(\bar X_{\tau})_{\tau}$ and the new coupled process $(\bar{\tilde X}_{\tau})_{\tau}$ obtained through the new noise sequence (but same stationary renewal given in Definition~\ref{def:coupled_proc}), and they satisfy the following:

\begin{\Ieee}{LLL}
\bar X^s_{-i}=\begin{cases}
X^s_{-i}, & s<r,\, 0\leq i\leq S-1\\
X^r_{-i}, & s=r,\, j\leq i\leq S-1\\
\end{cases}\\
\bar{\tilde X}^{s}_{-i}=\begin{cases}
\tilde X^s_{-i}, & s\in\{1,\cdots,r-1,r+1,\cdots,N-1\},\, 0\leq i\leq S-1\\
\tilde X^r_{-i}, & s=r,\, j\leq i\leq S-1\\
\end{cases}
\end{\Ieee}

We obtain the iterates $\bar{\tilde{a}}^s_i$ by running the update Equation~\eqref{eq:SGD-RER_iterate} with the data $\bar{\tilde{X}}_{\tau}$ instead of $X_{\tau}$. Accordingly, the algorithmic iterates change to $(\bar{\tilde{a}}^s_i: 0\leq s\leq N-1,\,0\leq i\leq B-1)$ that satisfy
\begin{\Ieee}{LLL}
\bar{\tilde{a}}^s_i =\tilde{a}^s_i \quad\mathrm{for}\, s<r,\, 0\leq i\leq B-1
\end{\Ieee}
This is because, resampling $\eta_{\tau}$ does note change the value of data $\tilde{X}_{\tau^{\prime}}$ for $\tau^{\prime} \leq \tau$.
Under the setting we have the following lemma:
\begin{lemma}
\label{lem:algo-stability}

Let $\cA^{t-1}$ be the following event
\begin{\Ieee}{LLL}
\label{eq:event_A}
\cA^{t-1}=\bigcap_{r=0}^{t-1}\bigcap_{j=0}^{B-1}\left\{\norm{\tilde{a}^{r}_{j}-\a}\leq \norm{a_0-\a}+\bar{C}\frac{\sqrt{R\beta}}{\zeta\lambda_{\min}}\right\}\Ieeen
\end{\Ieee}
For some constant $\bar{C}$ depending only on $C_{\eta}$, we have for any $1\leq t\leq N$
\begin{\Ieee}{LLL}
\label{eq:event_A_probab}
\Pb{ \cdh^{0,N-1}\cap\cA^{N-1} \cap\cap_{r=0}^{N-1} \ce^{r}_{0,B-1}}\geq 1-\prbnd\Ieeen 
\end{\Ieee}

Further more, on the event $\ce^{r}_{0,j}\cap \cdt^{r,N-1}\cap \cA^{r} $ we have:
\begin{\Ieee}{LLL}
& \bar{\tilde{a}}^s_i=\tilde{a}^s_i, \, 0\leq s<r,\, 0\leq i\leq B-1 \Ieeen\label{eq:algo_stab_1}\\
& \bar{\tilde{a}}^r_0=\tilde{a}^r_0 \Ieeen\label{eq:algo_stab_2}
\end{\Ieee}
and for $s\geq r$ we have
\begin{equation}
\label{eq:crude_bnd_4}
\norm{\bar{\tilde{a}}^s_i-\tilde{a}^s_i}\leq \bar{C_2}\gamma RB\frac{\sqrt{R\beta}}{\zeta\lambda_{\min}}+8\gamma RB\norm{a_0-\a}\leq \bar{C}_2\gamma RB\frac{\sqrt{R\beta}}{\zeta\lambda_{\min}}
\end{equation}

%\begin{\Ieee}{LLL}
% \norm{\bar a^s_i-\tilde{a}^s_i}\leq \begin{cases}
% 4i\gamma\sqrt{R\beta}+\sum_{k=0}^{i-1} 4\gamma R\norm{\tilde{a}^r_{k}-\a}, & s=r,\,1\leq i\leq j\\
% 4(j+1)\gamma\sqrt{R\beta}+\sum_{k=0}^{j-1} 4\gamma R\norm{\tilde{a}^r_{k}-\a}, s=r,\,& j+1\leq i\leq B-1\\
% 4(j+1)\gamma\sqrt{R\beta}+\sum_{k=0}^{j-1} 4\gamma R\norm{\tilde{a}^r_{k}-\a}, r<s,\,& 0\leq i\leq B-1\\ 
% \end{cases}\Ieeen\label{eq:algo_stab_3}
%\end{\Ieee}
%\dn{define event $\mathcal{A}_t$ here}
\end{lemma}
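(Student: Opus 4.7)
The plan is to dispatch the three claims in order: the probability estimate~\eqref{eq:event_A_probab}, the identities~\eqref{eq:algo_stab_1}--\eqref{eq:algo_stab_2}, and the quantitative stability bound~\eqref{eq:crude_bnd_4}.

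\textbf{Step 1 (probability of $\mathcal{A}^{N-1}$).} By~\eqref{eq:crude_bnd_cor_1} it suffices to show that $\cdh^{0,N-1}\cap \bigcap_r \cet{r}{0,B-1}$ deterministically implies $\cA^{N-1}$. On this event, $\|\tilde X^t_{-i}\|^2\le R$ and $|\varepsilon^t_{-i}|\le \sqrt{\beta}$ for all $t,i$, and since $\gamma\le 1/(2R)$ every matrix $\tilde P^s_{-i}=I-2\gamma\phi'(\tilde\xi^s_{-i})\tilde X^s_{-i}\tilde X^{s,\top}_{-i}$ has operator norm $\le 1$. Unrolling~\eqref{eq:SGD-RER_iterate_row_2} and applying the triangle inequality, the bias is bounded by $\norm{a_0-\a}$. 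For the variance, group the terms by buffer index $r$: each inner sum over $j$ is bounded by $2\gamma B\sqrt{R\beta}$, while the outer product $\prodHtt{t}{r-1}$ supplies a contraction factor $(1-c\zeta\gamma B\lambda_{\min}(G))^{r-1}$ from Theorem~\ref{thm:buffer_norm_upper_bound}. Summing the resulting geometric series gives a total variance contribution of order $\sqrt{R\beta}/(\zeta\lambda_{\min}(G))$, which is precisely the $\mathcal{A}^{N-1}$ bound after absorbing constants into $\bar C$.

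\textbf{Step 2 (invariance for $s<r$).} By construction of the coupled process (Definition~\ref{def:coupled_proc}), $\tilde X^s_\tau$ with $s<r$ is a measurable function of the fresh renewal $\tilde X^s_0$ and the noise vectors $\{\eta^s_\tau\}$, none of which is touched by resampling $\eta^r_{-j}$. Hence $\bar{\tilde X}^s_\tau=\tilde X^s_\tau$ for all $s<r$, and the iteration~\eqref{eq:coupled_iterate} gives $\bar{\tilde a}^s_i=\tilde a^s_i$ for $s<r$. In particular $\bar{\tilde a}^r_0=\bar{\tilde a}^{r-1}_B=\tilde a^{r-1}_B=\tilde a^r_0$.

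\textbf{Step 3 (quantitative bound for $s\ge r$).} I split this into the within-buffer phase $s=r$ and the post-buffer phase $s>r$. Let $\Delta^s_i:=\bar{\tilde a}^s_i-\tilde a^s_i$, so $\Delta^r_0=0$. Subtracting the row-form recursion~\eqref{eq:SGD-RER_iterate_row_1} from its barred counterpart yields
\begin{equation*}
(\Delta^r_{i+1})^{\top}=(\Delta^r_i)^{\top}\bar{\tilde P}^r_{-i}+(\tilde a^r_i-\a)^{\top}\bigl(\bar{\tilde P}^r_{-i}-\tilde P^r_{-i}\bigr)+2\gamma\bigl(\bar\varepsilon^r_{-i}-\varepsilon^r_{-i}\bigr)\tilde X^{r,\top}_{-i}+2\gamma\bar\varepsilon^r_{-i}\bigl(\bar{\tilde X}^r_{-i}-\tilde X^r_{-i}\bigr)^{\top}.
\end{equation*}
I will control each of the three perturbation terms using three facts: (a) Lemma~\ref{lem:coupling_lemma} combined with Assumption~\ref{assump:6} (so that $\|\phi(\A x)-\phi(\A y)\|\le\rho\|x-y\|$) gives the geometric bound $\|\bar{\tilde X}^r_{-i}-\tilde X^r_{-i}\|\le \rho^{j-1-i}\cdot 2\sqrt R$ for $i<j$, and $\bar{\tilde X}^r_{-i}=\tilde X^r_{-i}$ for $i\ge j$; (b) Assumption~\ref{assump:8} plus $\mathcal{A}^r$ let me bound $\|\bar{\tilde P}^r_{-i}-\tilde P^r_{-i}\|$ by $2\gamma$ times a Lipschitz term in the data and iterate differences, which on $\mathcal{A}^r$ is controlled by $\norm{a_0-\a}+\sqrt{R\beta}/(\zeta\lambda_{\min})$; (c) the resampled-noise term is nonzero only at $i=j$ and contributes at most $4\gamma\sqrt{R\beta}$. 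Since each $\bar{\tilde P}^r_{-i}$ has operator norm $\le 1$, iterating the recursion $B$ times and using the geometric summability in (a) yields
\[
\|\Delta^r_B\|\;\le\;\bar C_2\,\gamma RB\,\tfrac{\sqrt{R\beta}}{\zeta\lambda_{\min}}+8\gamma RB\,\norm{a_0-\a},
\]
which is~\eqref{eq:crude_bnd_4} for $s=r$. For $s>r$, the data and noise are identical across the two processes, so subtracting the two recursions gives $(\Delta^s_{i+1})^{\top}=(\Delta^s_i)^{\top}\tilde P^s_{-i}+(\tilde a^s_i-\a)^{\top}(\bar{\tilde P}^s_{-i}-\tilde P^s_{-i})$. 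The second term is a first-order Taylor remainder in the iterate difference and is controlled through Assumption~\ref{assump:8}; combined with the contractive product bound from Theorem~\ref{thm:buffer_norm_upper_bound}, this shows that $\|\Delta^s_i\|$ does not exceed $\|\Delta^r_B\|$ up to constants, finishing the lemma.

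\textbf{Main obstacle.} The delicate part is the within-buffer analysis in Step 3: the matrices $\tilde P^r_{-i}$ and $\bar{\tilde P}^r_{-i}$ depend on the iterates (through $\tilde\xi^r_{-i}$) as well as on the data, and the reverse-order traversal means that a single resampled noise vector perturbs both the early-processed data points and the $\phi'$ slopes simultaneously. Getting a clean telescoping bound requires carefully separating the contributions with indices $i<j$, $i=j$, and $i>j$, and using Assumption~\ref{assump:8} together with $\mathcal{A}^r$ to linearise the dependence of $\tilde P$ on the iterates. All other steps are relatively standard.
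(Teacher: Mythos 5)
Your Steps 1 and 2 track the paper's own argument: Step 1 reduces the probability bound to Proposition~\ref{prop:crude_bnd} (the crude iterate bound) plus Equation~\eqref{eq:crude_bnd_cor_1}, and Step 2 is the same measurability argument the paper uses. The substantive divergence, and the gap, is in Step 3.

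For the quantitative bound, the paper never invokes Assumption~\ref{assump:8}. Its recursion (Equation~\eqref{eq:algo_stab_3}) comes from two much simpler observations. First, on the segment where the covariates actually change ($s=r$, $1\le i\le j$), it suffices to bound $\|\bar{\tilde P}^r_{-i}-\tilde P^r_{-i}\|\le 4\gamma R$ by the crude fact that each factor $\phi'(\cdot)\in[\zeta,1]$ and both $\|\tilde X^r_{-i}\|^2,\|\bar{\tilde X}^r_{-i}\|^2\le R$; no Lipschitz estimate on $\phi'$ is used, which is why the final constant $\bar C_2$ depends only on $C_\eta$ and never on $\norm{\phi''}$. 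Second, and more importantly, on the segments where the data (and noise) are \emph{unchanged} ($s=r, i\ge j+1$, and $s>r$), the correct move is to apply the mean value theorem to $\phi$ between the \emph{two iterates} themselves:
$\Delta^s_{i+1}=\Delta^s_i-2\gamma\bigl(\phi(\bar{\tilde a}^{s,\top}_i\tilde X^s_{-i})-\phi(\tilde a^{s,\top}_i\tilde X^s_{-i})\bigr)\tilde X^s_{-i}=\bigl(I-2\gamma\phi'(\xi')\tilde X^s_{-i}\tilde X^{s,\top}_{-i}\bigr)^{\top}\Delta^s_i$
with $\phi'(\xi')\in[\zeta,1]$, so $\|\Delta^s_{i+1}\|\le\|\Delta^s_i\|$ exactly, with no residual. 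The paper's bound freezes at $i=j+1$ precisely because of this.

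Your Step 3 instead keeps the decomposition in terms of $\bar{\tilde P}^s_{-i}$ and $\tilde P^s_{-i}$ (slopes to $\a$, not between iterates) and controls $\|\bar{\tilde P}^s_{-i}-\tilde P^s_{-i}\|$ by the second derivative of $\phi$ and the iterate difference. This produces the recursion $\|\Delta^s_{i+1}\|\le\|\Delta^s_i\|\bigl(1+O(\gamma R^{3/2}\norm{\phi''}\,\norm{\tilde a^s_i-\a})\bigr)$: a \emph{self-referential, multiplicative} perturbation that does not terminate after buffer $r$ and, compounded over $\Theta(T)$ steps with $\gamma T\to\infty$, is not uniformly bounded. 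The contraction from Theorem~\ref{thm:buffer_norm_upper_bound} cannot rescue this because $\|\tilde P^s_{-i}\|\le 1$ step-by-step (the contraction only manifests over whole buffers in expectation), and even if it held per-step it would merely offset the multiplicative blowup under additional and unverified conditions tying $\norm{\phi''}$, $R$, $\beta$, $\lambda_{\min}$ together. You should instead abandon the $\bar{\tilde P}^s_{-i}-\tilde P^s_{-i}$ Taylor route for $s>r$ and $i\ge j+1$, and directly write the difference recursion with the slope between the two iterates, which gives $\|\Delta^s_{i+1}\|\le\|\Delta^s_i\|$ with no remainder and no dependence on $\norm{\phi''}$; and for $i\le j$, replace the Lipschitz estimate by the crude $4\gamma R$ operator-norm bound.
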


We give the proof in Section~\ref{sec:technical_proofs}

\begin{remark}
\label{rem:crude_bnd_3}
In expression~\eqref{eq:crude_bnd_4}, we have suppressed the dependence of $\norm{a_0-\a}$ for the ease of exposition with the rationale being that since $a_0 = 0$, it sould be lower order compared to $\sqrt{R\beta} $. 
\end{remark}
Hence we see from the above lemma that changing a particular noise sample in a particular buffer perturbs the algorithmic iterates by $O(\gamma \mathrm{poly}(RB))$. \\

Let $\cR^r_{-j}$ denote the re-sampling operator corresponding to re-sampling $\eta^r_{-j}$. That is, for any function $f((a_\tau),(X_{\tau}),(\tilde X_{\tau}))$ we have 
\begin{equation}
\label{eq:resampling_operator}
\cR^r_{-j}\left(f((a_\tau),(\tilde a_\tau),(X_{\tau}),(\tilde X_{\tau}))\right)=f((\bar a_\tau),(\bar{\tilde a}_\tau),(\bar X_{\tau}),(\bar{\tilde X}_{\tau}))
\end{equation}

We will drop the subscripts and superscripts on $\cR$ when there is no ambiguity on which noise is re-sampled. First we will prove a lemma that bounds the effect of re-sampling.

\begin{lemma}
\label{lem:noise_resample_bound}
 On the event $\ce^r_{0,j}\cap\cdt^{0,t-1}\cap \cA^{t-1}$, for some constant $C$ depending only on $C_{\eta}$:
\begin{\Ieee}{LLL}
\label{eq:noise_resample_bound}
\norm{\Htt{t-r}{j+1}{B-1}\prodHtt{t}{r-1}-\cR^{t-r}_{-j}\Htt{t-r}{j+1}{B-1}\left(\prod_{s=r-1}^1 \cR^{t-r}_{-j}\Htt{t-s}{0}{B-1}\right)}\\
\leq \bar{C} \frac{Bt \norm{\phi''}\gamma^2 R^3 B\sqrt{\beta}}{\zeta\lambda_{\min}}\Ieeen
\end{\Ieee}
\end{lemma}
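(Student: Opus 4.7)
The plan is to telescope the matrix-product difference into a sum of single-factor perturbations and then bound each perturbation by combining Assumption~\ref{assump:8} (bounded second derivative) with the algorithmic-stability estimate of Lemma~\ref{lem:algo-stability}. Throughout, we work on the event $\ce^r_{0,j}\cap\cdt^{0,t-1}\cap\cA^{t-1}$ specified in the statement, on which all relevant data norms are controlled by $R$ and the iterates remain in a bounded neighbourhood of $\A$.

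First, since $\gamma \leq 1/(2R)$ and $\phi'\in[\zeta,1]$, every factor $\Ptt{s}{-k}$ is symmetric PSD with operator norm at most $1$, and the same holds for its resampled counterpart $\cR\Ptt{s}{-k}$. Writing $\Pi := \Htt{t-r}{j+1}{B-1}\prodHtt{t}{r-1}$ as a product of $M\leq tB$ such factors and using the standard telescoping identity $\prod_i A_i - \prod_i B_i = \sum_i \bigl(\prod_{j<i} A_j\bigr)(A_i-B_i)\bigl(\prod_{j>i} B_j\bigr)$ together with submultiplicativity, I obtain
\begin{equation*}
\norm{\Pi - \cR\Pi} \;\leq\; \sum_{(s,k)} \norm{\Ptt{s}{-k} - \cR\Ptt{s}{-k}},
\end{equation*}
where the sum ranges over the $(s,k)$ that appear in the factorization of $\Pi$.

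Next, I would analyse a single term. A crucial observation is that the data in the product is \emph{unchanged} by the resampling of $\eta^{t-r}_{-j}$: for $s=t-r$ with $k\geq j+1$ the forward causal structure of the coupling recursion~\eqref{eq:coupling} implies that $\eta^{t-r}_{-j}$ only feeds into $\tilde X^{t-r}_{-k'}$ for $k'<j$; for $s\in\{t-r+1,\dots,t-1\}$, Definition~\ref{def:coupled_proc} makes the entire buffer independent of $\eta^{t-r}_{-j}$ because each buffer is initialized by an independent stationary draw. Consequently,
\begin{equation*}
\Ptt{s}{-k} - \cR\Ptt{s}{-k} \;=\; -2\gamma\,\bigl[\phi'(\tilde\xi^s_{-k}) - \phi'(\cR\tilde\xi^s_{-k})\bigr]\,\Xtt{s}{-k}\Xtttr{s}{-k},
\end{equation*}
so only the scalar $\phi'(\tilde\xi^s_{-k})$ changes. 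Using the integral representation $\phi'(\tilde\xi^s_{-k}) = \int_0^1 \phi'\bigl(\atr\Xtt{s}{-k} + \lambda (\tilde a^s_k - \a)^\top \Xtt{s}{-k}\bigr)\,d\lambda$, Assumption~\ref{assump:8}, and Cauchy--Schwarz, I would show
\begin{equation*}
\bigl|\phi'(\tilde\xi^s_{-k}) - \phi'(\cR\tilde\xi^s_{-k})\bigr| \;\leq\; \norm{\phi''}_{\infty}\,\norm{\Xtt{s}{-k}}\,\norm{\tilde a^s_k - \cR\tilde a^s_k}.
\end{equation*}
Invoking Lemma~\ref{lem:algo-stability} to bound $\norm{\tilde a^s_k - \cR\tilde a^s_k} \lesssim \gamma R B\sqrt{R\beta}/(\zeta\lambda_{\min})$ and combining with $\norm{\Xtt{s}{-k}}^2\leq R$ then gives a per-factor bound of order $\gamma^2 R^3 B\sqrt{\beta}\,\norm{\phi''}_{\infty}/(\zeta\lambda_{\min})$. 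Summing over the $M\leq tB$ factors produces the claimed estimate.

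The principal obstacle will be the implicitly defined $\tilde\xi^s_{-k}$: since the mean value theorem does not give us direct access to it, the integral representation above is essential for translating a small change in the iterates into a controlled change in $\phi'(\tilde\xi)$. A secondary piece of bookkeeping is to verify that the hypothesis event $\ce^r_{0,j}\cap\cdt^{0,t-1}\cap\cA^{t-1}$ is strong enough to activate Lemma~\ref{lem:algo-stability} at every buffer index $s\in\{t-r,\dots,t-1\}$ that appears in $\Pi$ (i.e.\ that $\cdt^{0,t-1}\cap\cA^{t-1}$ implies the version of $\cdt^{\cdot,\cdot}\cap\cA^{\cdot}$ required there, up to harmless index shifts), and that the resampled iterates never leave the stable region so that the sub-multiplicativity argument and Lemma~\ref{lem:algo-stability} apply uniformly in $(s,k)$.
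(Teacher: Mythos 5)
Your proof proposal is correct and follows essentially the same route as the paper's argument: telescope the product difference into a sum of per-factor perturbations, observe that the resampling of $\eta^{t-r}_{-j}$ leaves all covariates in the product unchanged and only alters $\phi'(\tilde\xi)$ through the iterates, bound the change in the divided difference via bounded $\phi''$, and invoke Lemma~\ref{lem:algo-stability} for the iterate perturbation before summing over the $\leq Bt$ factors. The only cosmetic difference is that the paper bounds the divided-difference Lipschitz constant by a two-step mean-value argument on $g(x)=(f(x)-f(x_0))/(x-x_0)$, whereas you use the equivalent integral representation $\phi'(\tilde\xi)=\int_0^1\phi'(x_0+\lambda(x-x_0))\,d\lambda$; both deliver the same $\tfrac12\|\phi''\|$ constant.
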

\begin{proof}
First, note that since we are re-sampling $\eta^{t-r}_{-j}$, the only difference between $\cR^{t-r}_{-j}\Htt{t-r}{j+1}{B-1}\left(\prod_{s=r-1}^1 \cR^{t-r}_{-j}\Htt{t-s}{0}{B-1}\right)$ and $\Htt{t-r}{j+1}{B-1}\prodHtt{t}{r-1}$ is that the algorithmic iterates $\tilde a^{s}_j$ that appear in the latter (through $\phi'(\cdot)$) are replaced by $\bar{\tilde a}^s_j$ in the former, but the covariates remain the same in both. 

Now, the matrix $\Htt{t-r}{j+1}{B-1}\prodHtt{t}{r-1}$ is of the form $\prod_{l=1}^{k}A_l$ where $\|A_l\| \leq 1$ under the conditioned events and is of the form  $I-2\gamma\phi'(\tilde \xi^{t-s}_{-i})\Xtt{t-s}{-j}\Xtttr{t-s}{-i}$. Similarly, we write: $\cR^{t-r}_{-j}\Htt{t-r}{j+1}{B-1}\left(\prod_{s=r-1}^1 \cR^{t-r}_{-j}\Htt{t-s}{0}{B-1}\right) = \prod_{l=1}^{k}\bar{A}_l$ where $\bar{A}_{l} = \cR^{t-r}_{-j}A_{l}$. Now consider the simple inequality under the condition that $\|A_l\|,\|\bar{A}_l\| \leq 1$

\begin{equation}\label{eq:component_decomposition}
\norm{\prod_{l=1}^{k}A_l - \prod_{l=1}^{k}\bar{A}_l} \leq \sum_{l=1}^{k}\norm{A_l - \bar{A}_l}
\end{equation}
Therefore, we will just bound each of the component differences $\norm{A_l - \bar{A}_l}$. To this end, consider a typical term $I-2\gamma\phi'(\tilde \xi^{t-s}_{-i})\Xtt{t-s}{-j}\Xtttr{t-s}{-i}$. We have
\begin{\Ieee}{LLL}
\label{eq:noise_resample_bound_1}
\left(I-2\gamma\phi'(\tilde \xi^{t-s}_{-i})\Xtt{t-s}{-i}\Xtttr{t-s}{-i}\right) -\cR^{t-r}_{-j}\left(I-2\gamma\phi'(\tilde \xi^{t-s}_{-i})\Xtt{t-s}{-i}\Xtttr{t-s}{-i}\right) \\
=2\gamma (\phi'(\tilde \xi^{t-s}_{-i})-\cR^{t-r}_{-j}\phi'(\tilde \xi^{t-s}_{-i}))\Xtt{t-s}{-i}\Xtttr{t-s}{-i}\Ieeen
\end{\Ieee}

Now 
\begin{\Ieee}{LLL}
\label{eq:noise_resample_bound_2}
\phi'(\tilde \xi^{t-s}_{-i})-\cR^{t-r}_{-j}\phi'(\tilde \xi^{t-s}_{-i})=\frac{\phi(\tilde a^{t-s,\top}_i \Xtt{t-s}{-i})-\phi(\atr \Xtt{t-s}{-i})}{(\tilde a^{t-s}_i-\a)^{\top}\Xtt{t-s}{-i}}-\frac{\phi(\bar{\tilde a}^{t-s,\top}_i \Xtt{t-s}{-i})-\phi(\atr \Xtt{t-s}{-i})}{(\bar{\tilde a}^{t-s}_i-\a)^{\top}\Xtt{t-s}{-i}}\\
\Ieeen
\end{\Ieee}

Now we can use the following simple result from calculus. Suppose $f$ is a real valued twice continuously differentiable function with bounded second derivative (denoted by $\norm{f''}$). Fix $x_0\in\mathbb{R}$. Let $g(x)=\frac{f(x)-f(x_0)}{x-x_0}$. By the mean value theorem, there exists $\xi$ such that:

$$g'(x)=\frac{f(x_0)-(f(x)+(x_0-x)f'(x)}{(x-x_0)^2}=\frac{1}{2}f''(\xi)$$
%for some $\xi$ between $x$ and $x_0$, where we used the mean value theorem in the last equality.  

Now for any $x,y$, we have
$$|g(x)-g(y)|=|g'(\xi_1)(x-y)|\leq \frac{1}{2}\norm{f''}|x-y|$$
for some $\xi_1$ between $x$ and $y$. Here again we use the mean value theorem in the equality above.  Now we will apply this result to $\phi$ with $x=\tilde a^{t-s,\top}_i \Xtt{t-s}{-i}$, $y=\bar{\tilde a}^{t-s,\top}_i \Xtt{t-s}{-i}$ and $x_0=\atr \Xtt{t-s}{-i}$ to get
\begin{\Ieee}{LLL}
\label{eq:noise_resample_bound_3}
\left|\phi'(\tilde \xi^{t-s}_{-i})-\cR^{t-r}_{-j}\phi'(\tilde \xi^{t-s}_{-i})\right|\leq \frac{1}{2}\norm{\phi''}\norm{\tilde a^{t-s}_i-\bar{\tilde a}^{t-s}_i}\norm{\Xtt{t-s}{-i}}\Ieeen
\end{\Ieee}

Now we appeal to lemma~\ref{lem:algo-stability}. In particular, using equation~\eqref{eq:crude_bnd_4} we see that, on the event $\ce^r_{0,j}\cap\cdt^{0,t-1}\cap \cA^{t-1}$,  
\begin{align}
\left|\phi'(\tilde \xi^{t-s}_{-i})-\cR^{t-r}_{-j}\phi'(\tilde \xi^{t-s}_{-i})\right|&\leq \frac{1}{2}\norm{\phi''}128C\gamma R B\frac{\sqrt{R\beta}}{\zeta\lambda_{\min}}\sqrt{R}\nonumber\\
&= 64C\norm{\phi''}\gamma R^2 B\frac{\sqrt{\beta}}{\zeta\lambda_{\min}}\label{eq:noise_resample_bound_4} \\
\implies 
\norm{2\gamma (\phi'(\tilde \xi^{t-s}_{-i})-\cR^{t-r}_{-j}\phi'(\tilde \xi^{t-s}_{-i}))\Xtt{t-s}{-i}\Xtttr{t-s}{-i}}&\leq 128C\norm{\phi''}\gamma^2 R^3 B\frac{\sqrt{\beta}}{\zeta\lambda_{\min}} \label{eq:noise_resample_bound_5}
\end{align}

We now use Equation~\eqref{eq:component_decomposition} with $k \leq Bt$ along with Equation~\eqref{eq:noise_resample_bound_5} to conclude the statement of the lemma. 
%Now we can bound $\norm{\Htt{t-r}{j+1}{B-1}\prodHtt{t}{r-1}-\cR^{t-r}_{-j}\Htt{t-r}{j+1}{B-1}\left(\prod_{s=r-1}^1 \cR^{t-r}_{-j}\Htt{t-s}{0}{B-1}\right)}$. 
%
%\begin{\Ieee}{LLL}
%\label{eq:noise_resample_bound_6}
%\norm{\Htt{t-r}{j+1}{B-1}\prodHtt{t}{r-1}-\cR^{t-r}_{-j}\Htt{t-r}{j+1}{B-1}\left(\prod_{s=r-1}^1 \cR^{t-r}_{-j}\Htt{t-s}{0}{B-1}\right)}\\
%\leq \sum_{s=1}^{(r-1)B+B-(j+1)}\binom{B(r-1)+B-(j+1)}{s} \left(128C\norm{\phi''}\gamma^2 R^3 B\frac{\sqrt{\beta}}{\zeta\lambda_{\min}}\right)^{s}\Ieeen
%\end{\Ieee}
%\dn{Why is there a binomial theorem type thing here?}
%Next using $\binom{N}{k}\leq N\binom{N}{k-1}$ and proceeding similarly as in \cite[Lemma 34]{jain2021streaming} we get
%\begin{\Ieee}{LLL}
%\label{eq:noise_resample_bound_7}
%\norm{\Htt{t-r}{j+1}{B-1}\prodHtt{t}{r-1}-\cR^{t-r}_{-j}\Htt{t-r}{j+1}{B-1}\left(\prod_{s=r-1}^1 \cR^{t-r}_{-j}\Htt{t-s}{0}{B-1}\right)}\\
%\leq (Br-(j+1))128C\norm{\phi''}\gamma^2 R^3 B\frac{\sqrt{\beta}}{\zeta\lambda_{\min}}e^{ (Br-(j+1)) 128C\norm{\phi''}\gamma^2 R^3 B\frac{\sqrt{\beta}}{\zeta\lambda_{\min}}}\Ieeen
%\end{\Ieee}
%
%Choosing $\gamma$ such that $Bt\cdot 128C\norm{\phi''}\gamma^2 R^3 B\frac{\sqrt{\beta}}{\zeta\lambda_{\min}}<1$ we get the claim (since $Bt\leq T$). 
\end{proof}

\subsection{Bound $\cro(t,r_1,r_2,j_1,j_2)$}

\label{subsec:cross_bound}

Next we will bound $\sum_{r}\sum_{j_1\neq j_2}\cro(t,r,r,j_1,j_2)$

\begin{claim}
\label{claim:cr_bnd_comm_buf}
\begin{\Ieee}{LLL}
\label{eq:cr_bnd_comm_buf}
\abs{\Ex{\sum_{r=1}^t\sum_{j_1\neq j_2}\cro(t,r,r,j_1,j_2)\indt{0}{t-1}}}&\leq & \bar{C}\left[ \frac{\sigma^2\gamma^2 RB}{T^{\alpha/2 - 1}}
+ \frac{\norm{\phi''} \gamma^4 T^2R^4 B^2\sigma^2\sqrt{\beta}}{\zeta\lambda_{\min}} \right]\Ieeen
\end{\Ieee}
Where $\bar{C}$ is a constant depending only on $C_{\eta}$
\end{claim}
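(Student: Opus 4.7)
The plan is to bound each summand $\Ex{\cro(t,r,r,j_1,j_2)\indt{0}{t-1}}$ with $j_1\neq j_2$ separately by combining an algorithmic-stability resampling argument on a ``good event'' with a small-probability crude bound on its complement, then summing. Assume without loss of generality $j_1<j_2$; write $\varepsilon_k:=\Nt{t-r}{-j_k}$, $X_k:=\Xtt{t-r}{-j_k}$, and
\begin{equation*}
M := \Htt{t-r}{j_2+1}{B-1}\prodHtt{t}{r-1}\prodHtttr{t}{r-1}\Htttr{t-r}{j_1+1}{B-1},
\end{equation*}
so that $\cro(t,r,r,j_1,j_2)=4\gamma^{2}\varepsilon_2 X_2^{\top}M X_1\varepsilon_1$. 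Set $\cV:=\cdh^{0,N-1}\cap\cA^{N-1}\cap\bigcap_{r'=0}^{N-1}\ce^{r'}_{0,B-1}$; by \eqref{eq:crude_bnd_cor_1} and Lemma~\ref{lem:algo-stability}, $\Pb{\cV^{C}}\leq T^{-\alpha}$. On $\cV^{C}$ I would use the crude bound $|\cro|\leq 4\gamma^{2}|\varepsilon_1||\varepsilon_2|\norm{X_1}\norm{X_2}\norm{M}$ with Cauchy--Schwarz, Assumption~\ref{assump:7}, and the sub-Gaussian trajectory tails underlying \eqref{eq:crude_bnd_cor_1} to get at most $\gamma^{2}R\sigma^{2}/T^{\alpha/2}$ per summand; summing the $tB^{2}$ summands with $tB\leq T$ reproduces the first term $\sigma^{2}\gamma^{2}RB/T^{\alpha/2-1}$.

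The heart of the argument is the on-event piece, handled via the resampling operator $\cR:=\cR^{t-r}_{-j_1}$ of \eqref{eq:resampling_operator}. Crucially I resample the smaller index $j_1$, not $j_2$: in the coupled forward dynamics, $\eta^{t-r}_{-j_1}$ produces the sample one step after $X_1$, so $X_1$ is strictly upstream of the resampled noise, and $X_2$ (associated with $j_2>j_1$) is even further upstream; hence both standalone covariates are invariant under $\cR$. What does change is the iterate sequence $\tilde{a}^{t-r}_s$ for $s>j_1$ and, through it, every $\phi'(\tilde\xi)$ factor in $M$, together with iterates in subsequent buffers. Define the linearized term
\begin{equation*}
\cro^{**}:=4\gamma^{2}\varepsilon_2\,X_2^{\top}\bar M\,X_1\,\varepsilon_1,\qquad \bar M:=\cR M,
\end{equation*}
keeping $\varepsilon_1,\varepsilon_2,X_1,X_2$ untouched. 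Then $\varepsilon_1$ is mean zero and jointly independent of $\bar M,X_2,X_1,\varepsilon_2,\cR(\indt{0}{t-1})$, so $\Ex{\cro^{**}\,\cR(\indt{0}{t-1})}=0$.

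It remains to control $|\Ex{\cro\,\indt{0}{t-1}-\cro^{**}\,\cR(\indt{0}{t-1})}|$, which I split as $\Ex{(\cro-\cro^{**})\indt{0}{t-1}}+\Ex{\cro^{**}(\indt{0}{t-1}-\cR(\indt{0}{t-1}))}$. On $\cV$ both indicators equal one (since $\cV$ is designed to control the original and resampled trajectories simultaneously via Lemma~\ref{lem:algo-stability}), so the second piece is supported on $\cV^{C}$ and is absorbed into the off-event bound. For the first piece, because only $M$ changes under $\cR$, we have $\cro-\cro^{**}=4\gamma^{2}\varepsilon_2 X_2^{\top}(M-\bar M)X_1\varepsilon_1$, and Lemma~\ref{lem:noise_resample_bound} yields $\norm{M-\bar M}\lesssim Bt\norm{\phi''}\gamma^{2}R^{3}B\sqrt{\beta}/(\zeta\lambda_{\min})$ on $\cV$. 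Cauchy--Schwarz with $\Ex{\varepsilon_k^{2}}\leq C_\eta\sigma^{2}$ and $\norm{X_k}^{2}\leq R$ produces at most $\gamma^{4}R^{4}B^{2}t\norm{\phi''}\sigma^{2}\sqrt{\beta}/(\zeta\lambda_{\min})$ per summand, and summing $tB^{2}$ copies with $tB\leq T$ yields the advertised $\norm{\phi''}\gamma^{4}T^{2}R^{4}B^{2}\sigma^{2}\sqrt{\beta}/(\zeta\lambda_{\min})$ term.

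The main obstacle I anticipate is the dependence of $\indt{0}{t-1}$ on the resampled noise, so the naive identity $\Ex{\cro^{**}\indt{0}{t-1}}=0$ fails; my resolution is to simultaneously resample the indicator and arrange $\cV$ so that $\indt{0}{t-1}=\cR(\indt{0}{t-1})=1$ on it, thereby confining the boundary discrepancy to the $T^{-\alpha}$-probability regime (and explaining the choice $\alpha\geq 10$ in Section~\ref{subsubsec:parameter_setting}). Choosing the smaller index $j_1$ for resampling is equally critical: resampling $\eta^{t-r}_{-j_2}$ would introduce a data-perturbation term $\bar M(X_1-\bar X_1)$ that scales only as $\gamma^{2}$ in the step size and would break the advertised $\gamma^{4}$ leading order; using $\cR^{t-r}_{-j_1}$ avoids this because both standalone $X$ factors are invariant.
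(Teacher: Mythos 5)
Your overall plan matches the paper's: resample $\eta^{t-r}_{-j_1}$ (the later-in-forward-time noise, which leaves $X_1$, $X_2$ and every covariate appearing in $M$ invariant), observe that $\cro^{**}$ (the paper's $\cro'$) has zero \emph{unconditional} mean, and then use Lemma~\ref{lem:noise_resample_bound} plus a small-probability crude bound on a residual term. Your arithmetic also reproduces the two target orders. The point where you diverge from the paper is in handling the indicator $\indt{0}{t-1}$ in the ``nearly zero-mean'' piece: the paper keeps $\cro'\indt{0}{t-1}$ and conditions on $\mathcal F_K=\sigma(K_1,\Nt{t-r}{-j_2})$ to exploit $\Ex{\Nt{t-r}{-j_1}\mid\mathcal F_K}=0$ together with the complement $\indtc{0}{t-1}$; you instead swap in $\cR(\indt{0}{t-1})$ so that $\Ex{\cro^{**}\cR(\indt{0}{t-1})}=0$ and then try to control the boundary term $\Ex{\cro^{**}(\indt{0}{t-1}-\cR(\indt{0}{t-1}))}$.

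That is where there is a genuine gap. You assert that on $\cV=\cdh^{0,N-1}\cap\cA^{N-1}\cap\bigcap_{r'}\ce^{r'}_{0,B-1}$ both $\indt{0}{t-1}$ and $\cR(\indt{0}{t-1})$ equal one, citing Lemma~\ref{lem:algo-stability}. That lemma bounds the perturbation of the algorithmic \emph{iterates} $\tilde a\to\bar{\tilde a}$; it says nothing about the perturbed \emph{covariates} $\bar{\tilde X}^{t-r}_{-i}$ for $i<j_1$, which are the only thing $\cR(\indt{0}{t-1})$ depends on that $\indt{0}{t-1}$ does not. None of $\cdh^{0,N-1}$, $\cA^{N-1}$, or the $\ce$-events constrain those resampled covariates, so on $\cV$ one can still have $\cR(\indt{0}{t-1})=0$, and the second piece is \emph{not} supported on $\cV^{C}$ as stated. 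The conclusion is still reachable, but the correct reason is different: (i) the covariates that actually enter $M$ and $\bar M$ (namely $\tilde X^{t-r}_{-i}$ for $i\ge j_1+1$ and all covariates in later buffers) are invariant under $\cR$, so $\indt{0}{t-1}=1$ and $\cR(\indt{0}{t-1})=1$ each separately imply $|\cro^{**}|\le 4\gamma^2 R|\varepsilon_1\varepsilon_2|$; (ii) the indicator difference is supported on the symmetric difference $\cdt^{0,t-1}\triangle\cR(\cdt^{0,t-1})$, and since $(\bar{\tilde X}_\tau)$ is equal in law to $(\tilde X_\tau)$, $\Pb{\cdt^{0,t-1}\triangle\cR(\cdt^{0,t-1})}\le 2/T^{\alpha}$. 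With that replacement for your on-$\cV$ argument, the rest of the proposal goes through and yields the claimed bound.
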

\begin{proof}
Let $j_1<j_2$. 	We will suppress the arguments of $\cro$ for brevity. First, we re-sample the noise which is ahead in the time, i.e., $\eta^{t-r}_{-j_1}$ (and hence the entry $\Nt{t-r}{-j_1}$ in the row under consideration). 

Let $\cro'$ denote the resampled version of  $\cro$ as defined below 
\begin{\Ieee}{LLL}
\label{eq:cr_bnd_comm_buf_1}
\cro'(t,r,r,j_1,j_2)&:=& 4\gamma^2\Nt{t-r}{-j_1}\Nt{t-r}{-j_2} \cR^{t-r}_{-j_1}\left[\Xtttr{t-r}{-j_2}\Htt{t-r}{j_2+1}{B-1}\prodHtt{t}{r-1}\cdot\right.\\
&&\left. \prodHtttr{t}{r-1}\Htttr{t-r}{j_1+1}{B-1}\Xtt{t-r}{-j_1}\right]\\
&=& 4\gamma^2\Nt{t-r}{-j_1}\Nt{t-r}{-j_2} \Xtttr{t-r}{-j_2}\cR^{t-r}_{-j_1}\left(\Htt{t-r}{j_2+1}{B-1}\right)\cR^{t-r}_{-j_1}\prodHtt{t}{r-1}\cdot\\
&& \cR^{t-r}_{-j_1}\prodHtttr{t}{r-1} \cR^{t-r}_{-j_1}\left(\Htttr{t-r}{j_1+1}{B-1}\right) \Xtt{t-r}{-j_1}\Ieeen
\end{\Ieee}
where we have used the fact that $\cR^{t-r}_{-j_1}$ has no effect on the items from the process $(\tilde X_{\tau})_{\tau}$ \emph{that appear} in the expression above. Note the this is \textbf{not} $\cR^{t-r}_{-j_1} \cro $, since in $\cR^{t-r}_{-j_1}\cro$ we would have $\bar{\epsilon}^{t-r}_{-j_1}$ instead. Now, since the new algorithmic iterates $(\bar{\tilde a}^s_i)$ depend on $\bar\eta^{t-r}_{-j_1}$ but not on $\eta^{t-r}_{-j_1}$, it is immediate that 
$$\Ex{\cro'(t,r,r,j_1,j_2)}=0$$
%%%%%%%%%%%%%%%%%%%%%%%%%%%%%%%%%%%%%%%%%%%%%%%%%%%%%%%%%%%%%%%%

For convenience, we introduce some notation which is only used in this proof.  $\cro'(t,r,r,j_1,j_2)$ can be written in the form $4\gamma^2\Nt{t-r}{-j_1} \Nt{t-r}{-j_2} K_1$ for some random variable $K_1$ independent of $\Nt{t-r}{-j_1}$. Under the event $\cdt^{0,t-1}$, we can easily show that $|K_1| \leq R$ almost surely. Let $\mathcal{F}_{K} = \sigma(K_1,\Nt{t-r}{-j_2})$. Let $\mathcal{M} := \{|K_1| \leq R\}$. Clearly, $\cdt^{0,t-1}\subseteq \mathcal{M}$ and $\Nt{t-r}{-j_1}\perp \mathcal{F}_{K}$. We conclude:
\begin{align}
\biggr|\Ex{\cro'\indt{0}{t-1}}\biggr| &= \biggr|\Ex{\cro'\indt{0}{t-1}1\left[\mathcal{M}\right]}\biggr| \nonumber \\
&= 4\gamma^2\biggr|\Ex{\Ex{\Nt{t-r}{-j_1}\indt{0}{t-1}\bigr|\mathcal{F}_K}K_1 \Nt{t-r}{-j_2}1\left[\mathcal{M}\right]}\biggr| \nonumber \\
&\leq 4\gamma^2\Ex{\bigr|\Ex{\Nt{t-r}{-j_1}\indt{0}{t-1}\bigr|\mathcal{F}_K}\bigr| \cdot |K_1| \cdot|\Nt{t-r}{-j_2}|1\left[\mathcal{M}\right]}  \label{eq:restriction_expectation}
\end{align}

We note that: $\biggr|\Ex{\Nt{t-r}{-j_1}\indt{0}{t-1}\bigr|\mathcal{F}_K}\biggr| = \biggr|\Ex{\Nt{t-r}{-j_1}\indt{0}{t-1,C}\bigr|\mathcal{F}_K}\biggr| \leq \sigma^2 \sqrt{\mathbb{P}\left(\indt{0}{t-1,C}|\mathcal{F}_K\right)}$. Using this in Equation~\eqref{eq:restriction_expectation}, and that under event $\mathcal{M}$, $|K_1| \leq R$ we apply Cauchy-Schwarz inequality again to conclude:
\begin{equation}\label{eq:cr_bnd_comm_buf_4}
\biggr|\Ex{\cro'\indt{0}{t-1}}\biggr| \leq 4\gamma^2\sigma^2 R \sqrt{\mathbb{P}(\cdt^{0,t-1,C})} \leq \frac{4\gamma^2\sigma^2 R}{T^{\alpha/2}}\end{equation}

Using similar technique as lemma~\ref{lem:noise_resample_bound}, we have that on the event $\ce^r_{0,j_1}\cap\cdt^{0,t-1}\cap \cA^{t-1}$, 
\begin{\Ieee}{LLL}
\label{eq:cr_bnd_comm_buf_5}
\norm{\Htt{t-r}{j_2+1}{B-1}\prodHtt{t}{r-1}-\cR^{t-r}_{-j_1}\Htt{t-r}{j_2+1}{B-1}\left(\prod_{s=r-1}^1 \cR^{t-r}_{-j_2}\Htt{t-s}{0}{B-1}\right)}\\
\leq \bar{C} \frac{T \norm{\phi''}\gamma^2 R^3 B\sqrt{\beta}}{\zeta\lambda_{\min}}\Ieeen
\end{\Ieee}

Therefore, on the event $\ce^r_{0,j_1}\cap\ce^r_{0,j_2}\cap\cdt^{0,t-1}\cap \cA^{t-1}$, we have
\begin{align}
&\abs{\cro-\cro'}\leq  \gamma^4\bar{C} R^4\left|\Nt{t-r}{-j_1}\Nt{t-r}{-j_2}\right|  T \norm{\phi''} B\frac{\sqrt{\beta}}{\zeta\lambda_{\min}} \nonumber\\
\implies
&\Ex{\abs{\cro-\cro'}1\left[\ce^r_{0,j_1}\cap\ce^r_{0,j_2}\cap\cdt^{0,t-1}\cap \cA^{t-1}\right]} 
\leq \bar{C}\norm{\phi''} \gamma^4 TR^4 B\frac{\sigma^2\sqrt{\beta}}{\zeta\lambda_{\min}} \label{eq:cross_term_bound_1}
\end{align}
We note that over the event $\cdt^{0,t-1}$, we must have $|\cro -\cro'|  \leq 2R|\Nt{t-r}{-j_1}\Nt{t-r}{-j_2}| $. Combining this with Equation~\eqref{eq:cross_term_bound_1} and noting that $\mathbb{P}\left(\ce^r_{0,j_1}\cap\ce^r_{0,j_2}\cap\cdt^{0,t-1}\cap \cA^{t-1}\right)\geq 1- \frac{1}{T^{\alpha}}$, we conclude:
\begin{\Ieee}{LLL}
\label{eq:cr_bnd_comm_buf_7}
&&\Ex{\abs{\sum_{r=1}^t\sum_{j_1\neq j_2}\cro(t,r,r,j_1,j_2)-\cro'(t,r,r,j_1,j_2) }\indt{0}{t-1}} \\
& \leq & \bar{C}\left[\norm{\phi''} \gamma^4 T^2R^4 B^2\frac{\sigma^2\sqrt{\beta}}{\zeta\lambda_{\min}} +\sigma^2\gamma^2 R TB\prbndsq\right] \Ieeen
\end{\Ieee}
Hence combining \eqref{eq:cr_bnd_comm_buf_4} and \eqref{eq:cr_bnd_comm_buf_7} we conclude the statement of the claim.
\end{proof}

Next we want to bound $\cro(t,r_1,r_2,j_1,j_2)$ for $r_2>r_1$ and arbitrary $j_1$ and $j_2$. Recall the definition of $\tilde{a}^{t-1,v}_B$ from \eqref{eq:SGD-RER_variance}. Via simple rearrangement of summation, we can express $\sum_{r_2>r_1}\sum_{j_1,j_2}\cro(t,r_1,r_2,j_1,j_2)$ in terms of $\tilde a^{t-r_1-1,v}_B$ as follows.

\begin{lemma}
\label{lem:cr_base_recursion}
\begin{\Ieee}{LLL}
\label{eq:cr_base_recursion}
\sum_{r_2>r_1}\sum_{j_1,j_2}\cro(t,r_1,r_2,j_1,j_2) \\
=2\gamma \sum_{r_1=1}^{t-1}\sum_{j_1=0}^{B-1}(\tilde{a}^{t-r_1-1,v}_B)^{\top}\prodHtt{t}{r_1}\prodHtttr{t}{r_1-1}\Htttr{t-r_1}{j_1+1}{B-1}\Xtt{t-r_1}{-j_1}\Nt{t-r_1}{-j_1}\Ieeen
\end{\Ieee}
\end{lemma}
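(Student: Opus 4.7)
The proof is purely algebraic: it is a re-indexing and regrouping of the triple sum on the left-hand side, with no probabilistic input needed. My plan is to (i) substitute the definition of $\cro$, (ii) reparametrize $r_2 = r_1 + r$ with $r\ge 1$, (iii) split the matrix product $\prodHtt{t}{r_2-1}$ at the $r_1$ boundary, and (iv) recognize the resulting inner double sum in $(r,j_2)$ as precisely the variance iterate $(\tilde{a}^{t-r_1-1,v}_B)^{\top}$ given by Equation~\eqref{eq:SGD-RER_variance}.

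Concretely, after the substitution the $(r,j_2)$-dependent portion of the summand carries the factor $\prodHtt{t}{r_1+r-1}$. Using only the definition $\prodHtt{t}{k}=\prod_{s=k}^{1}\Htt{t-s}{0}{B-1}$ and the index shift $s\mapsto s-r_1$ applied to the outer segment of this product, one obtains the factorization
$$\prodHtt{t}{r_1+r-1} = \left(\prod_{s=r-1}^{1}\Htt{t-r_1-s}{0}{B-1}\right)\prodHtt{t}{r_1}.$$
The right factor $\prodHtt{t}{r_1}$ does not depend on $(r,j_2)$ and will survive into the final expression, while the left factor is exactly the product that appears inside the formula for $(\tilde{a}^{t-r_1-1,v}_B)^{\top}$.

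After this factorization, holding $(r_1,j_1)$ fixed and summing first in $(r,j_2)$, comparison with Equation~\eqref{eq:SGD-RER_variance} applied at time $t-r_1$ in place of $t$ gives the identification
$$2\gamma\sum_{r=1}^{t-r_1}\sum_{j_2=0}^{B-1}\Nt{t-r_1-r}{-j_2}\Xtttr{t-r_1-r}{-j_2}\Htt{t-r_1-r}{j_2+1}{B-1}\prod_{s=r-1}^{1}\Htt{t-r_1-s}{0}{B-1} = (\tilde{a}^{t-r_1-1,v}_B)^{\top}.$$
Substituting this converts the $4\gamma^{2}$ prefactor into $2\gamma\cdot(\tilde{a}^{t-r_1-1,v}_B)^{\top}$, and the remaining $(r_1,j_1)$-dependent factors $\prodHtt{t}{r_1}\prodHtttr{t}{r_1-1}\Htttr{t-r_1}{j_1+1}{B-1}\Xtt{t-r_1}{-j_1}\Nt{t-r_1}{-j_1}$ coincide with those on the RHS. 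Summing over $r_1=1,\dots,t-1$ and $j_1=0,\dots,B-1$ then yields the claimed identity.

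The only delicate point is the non-commutativity of matrix multiplication when splitting $\prodHtt{t}{r_1+r-1}$: under the convention $\prod_{s=k}^{1}$ denoting the product read in decreasing order of $s$ (so the $s=1$ factor is rightmost), the $r_1$-block must end up to the right of the $r$-block in order to match the position of $\prodHtt{t}{r_1}$ in the target expression. Once this bookkeeping is verified the identity follows directly from the definitions, so I do not anticipate a substantive obstacle.
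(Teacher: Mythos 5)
The paper gives no proof beyond the remark that the identity follows ``via simple rearrangement of summation,'' and your proposal is exactly that rearrangement carried out correctly: reparametrize $r_2=r_1+r$, split $\prodHtt{t}{r_1+r-1}$ into the $r$-block and the $r_1$-block (with the $r_1$-block on the right, consistent with the paper's ordering of these non-commuting products), shift indices $s\mapsto s-r_1$ in the left block, and recognize the inner $(r,j_2)$ sum as $(\tilde{a}^{t-r_1-1,v}_B)^{\top}/(2\gamma)$ via Equation~\eqref{eq:SGD-RER_variance} at time $t-r_1$. This is a complete and correct proof along the route the paper intends.
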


\begin{claim}
\label{claim:cr_bnd_diff_buf}
 
\begin{\Ieee}{LLL}
\label{eq:cr_bnd_diff_buf}
\abs{\Ex{\sum_{r_1\neq r_2}\sum_{j_1,j_2}\cro(t,r_1,r_2,j_1,j_2)\indt{0}{t-1}}}\leq \bar{C}\gamma^2 R(Bt)^2\sigma^2 \prbndsq+\\
\bar{C}
\left(\norm{\phi''}\gamma^3 T^2 R^3 B\frac{\sqrt{\beta}}{\zeta\lambda_{\min}} +\gamma^2 TRB\right)\sqrt{R\sigma^2}\sqrt{\sup_{s\leq N-1}\Ex{\norm{\tilde{a}^{s,v}_B}^2\indt{0}{s}}}\\
\Ieeen
\end{\Ieee}
\end{claim}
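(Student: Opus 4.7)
By transposing the scalar quadratic form, $\cro(t,r_1,r_2,j_1,j_2)=\cro(t,r_2,r_1,j_2,j_1)$, so $\sum_{r_1\neq r_2}\sum_{j_1,j_2}\cro = 2\sum_{r_2>r_1}\sum_{j_1,j_2}\cro$, and Lemma~\ref{lem:cr_base_recursion} recasts this as $4\gamma\sum_{r_1=1}^{t-1}\sum_{j_1=0}^{B-1}U_{r_1,j_1}\,\Nt{t-r_1}{-j_1}$, where
$$U_{r_1,j_1}:=(\tilde a^{t-r_1-1,v}_B)^{\top}\prodHtt{t}{r_1}\prodHtttr{t}{r_1-1}\Htttr{t-r_1}{j_1+1}{B-1}\Xtt{t-r_1}{-j_1}.$$
For each pair $(r_1,j_1)$ I apply the resampling trick of Claim~\ref{claim:cr_bnd_comm_buf}: let $U'_{r_1,j_1}:=\cR^{t-r_1}_{-j_1}[U_{r_1,j_1}]$ denote the version obtained after replacing $\eta^{t-r_1}_{-j_1}$ by an independent copy $\bar\eta^{t-r_1}_{-j_1}$. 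Crucially, $\tilde a^{t-r_1-1,v}_B$ is measurable with respect to noises from buffers strictly earlier than $t-r_1$, and $\Xtt{t-r_1}{-j_1}$ depends only on $\{\eta^{t-r_1}_i:0\leq i\leq (S-2)-j_1\}$, so neither factor changes; only the matrix product is modified, through the algorithmic iterates. After resampling, $U'_{r_1,j_1}$ is independent of $\Nt{t-r_1}{-j_1}$, giving $\Ex{U'_{r_1,j_1}\Nt{t-r_1}{-j_1}}=0$.

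\textbf{Key decomposition.} I split $\Ex{U_{r_1,j_1}\Nt{t-r_1}{-j_1}\indt{0}{t-1}}$ into $\Ex{(U_{r_1,j_1}-U'_{r_1,j_1})\Nt{t-r_1}{-j_1}\indt{0}{t-1}}$ plus $\Ex{U'_{r_1,j_1}\Nt{t-r_1}{-j_1}\indt{0}{t-1}}$. For the first summand, a direct adaptation of Lemma~\ref{lem:noise_resample_bound} bounds the perturbed matrix-product difference by $\bar C\,Bt\,\|\phi''\|\gamma^2 R^3 B\sqrt{\beta}/(\zeta\lambda_{\min})$ on $\cA^{t-1}\cap\cdt^{0,t-1}\cap\ce^{t-r_1}_{0,j_1}$. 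Coupled with $\|\Xtt{t-r_1}{-j_1}\|\leq\sqrt R$ under $\cdt^{0,t-1}$, we obtain $|U_{r_1,j_1}-U'_{r_1,j_1}|\leq \|\tilde a^{t-r_1-1,v}_B\|\cdot\sqrt R\cdot(\text{matrix diff})$; Cauchy--Schwarz in expectation against $\Nt{t-r_1}{-j_1}$ pulls out $\sigma\sqrt{\sup_s\Ex{\|\tilde a^{s,v}_B\|^2\indt{0}{s}}}$. Summing over the at most $Bt\leq T$ index pairs and the $4\gamma$ prefactor produces the $\|\phi''\|\gamma^3 T^2 R^3 B\sqrt{\beta}/(\zeta\lambda_{\min})$ contribution multiplied by $\sqrt{R\sigma^2}\sqrt{\sup_s\Ex{\|\tilde a^{s,v}_B\|^2\indt{0}{s}}}$. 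For the second summand, I mimic the conditional argument from Claim~\ref{claim:cr_bnd_comm_buf}: conditioning on the $\sigma$-algebra generated by everything except $\varepsilon^{t-r_1}_{-j_1}$, independence yields $|\Ex{\varepsilon^{t-r_1}_{-j_1}\indt{0}{t-1}|\cdot}|\leq\sigma\sqrt{\Pb{\indtc{0}{t-1}|\cdot}}$; combining with $|U'_{r_1,j_1}|\leq\sqrt R\,\|\tilde a^{t-r_1-1,v}_B\|$ on the resampled good event and the tail $\Pb{\indtc{0}{t-1}}\leq T^{-\alpha}$, a subsequent Cauchy--Schwarz yields the $\gamma^2 R(Bt)^2\sigma^2/T^{\alpha/2}$ low-probability term and the residual $\gamma^2 TRB\,\sqrt{R\sigma^2}\sqrt{\sup_s\Ex{\|\tilde a^{s,v}_B\|^2\indt{0}{s}}}$ contribution.

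\textbf{Main obstacle.} The central difficulty, absent in Claim~\ref{claim:cr_bnd_comm_buf}, is that $\tilde a^{t-r_1-1,v}_B$ is \emph{not} deterministically bounded on $\cdt^{0,t-1}$, whereas the analogous quantity $K_1$ there was bounded by $R$ under the event. I therefore cannot reduce to a pointwise estimate and must use Cauchy--Schwarz to extract the $\sqrt{\sup_s\Ex{\|\tilde a^{s,v}_B\|^2\indt{0}{s}}}$ factor, which later has to be closed self-consistently against the variance recursion (so the claim is really an implicit inequality in this supremum). A secondary difficulty is that the matrix product in $U_{r_1,j_1}$ spans from buffer $t-1$ down through buffer $t-r_1$, so the perturbation due to resampling accumulates linearly in its length (up to $Bt\lesssim T$ factors), which is the source of the $T$ in the $\|\phi''\|$ term; the algorithmic-stability estimate of Lemma~\ref{lem:algo-stability} must therefore be invoked uniformly through $\cA^{t-1}$ across all buffers touched by the product.
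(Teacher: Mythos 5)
Your proposal reuses the resampling strategy of Claim~\ref{claim:cr_bnd_comm_buf}, and the general architecture (symmetry to restrict to $r_2>r_1$, Lemma~\ref{lem:cr_base_recursion}, splitting into $U-U'$ and $U'$ contributions, Cauchy--Schwarz to pull out $\sqrt{\sup_s\Ex{\|\tilde a^{s,v}_B\|^2\indt{0}{s}}}$) matches the paper's. However, there is a genuine gap in the step where you bound $|U_{r_1,j_1}-U'_{r_1,j_1}|$.

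You assert that after resampling $\eta^{t-r_1}_{-j_1}$, ``only the matrix product is modified, through the algorithmic iterates.'' This is not correct. Resampling $\eta^{t-r_1}_{-j_1}=\eta^{t-r_1}_{(S-1)-j_1}$ changes the covariates $\tilde X^{t-r_1}_{-i}$ for every $i<j_1$ (they are generated from that noise), and those covariates appear in the factor $\Htt{t-r_1}{0}{B-1}$ contained in $\prodHtt{t}{r_1}$. This is the precise reason the paper writes, after equation~\eqref{eq:cr_bnd_diff_buf_2}, that one ``cannot continue the analysis like in Claim~\ref{claim:cr_bnd_comm_buf} because \dots $\Htt{t-r_1}{0}{B-1}$ changes not just because of the iterates \dots but also due to $\tilde X\to\bar{\tilde X}$.'' In Claim~\ref{claim:cr_bnd_comm_buf} the analogous matrix product $\Htt{t-r}{j+1}{B-1}\prodHtt{t}{r-1}$ only touches positions $\geq j+1$ of buffer $t-r$, so the data there is unaffected by the resample and Lemma~\ref{lem:noise_resample_bound} (which treats the case where only $\phi'(\cdot)$ changes) applies. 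Here it does not apply directly: you need a separate, crude bound $\|\Htt{t-r_1}{0}{B-1}-\cR^{t-r_1}_{-j_1}(\Htt{t-r_1}{0}{B-1})\|\leq 4\gamma RB$ for the data-driven change (the paper's equation~\eqref{eq:cr_bnd_diff_buf_8}). It is \emph{this} term, propagated through the $2\gamma\cdot Bt$ Cauchy--Schwarz sum, that produces the $\gamma^2 TRB\sqrt{R\sigma^2}\sqrt{\sup_s\Ex{\cdot}}$ contribution in the claim. Your proof attributes that term to the ``second summand'' $\Ex{U'\Nt{t-r_1}{-j_1}\indt{0}{t-1}}$, but that cannot work: once the event bookkeeping is done (as in the paper's Lemma~\ref{lem:cr_bnd_diff_buf_cond}--equation~\eqref{eq:cr_bnd_diff_buf_6}), every term coming from the $\cro'$ piece carries the $T^{-\alpha/2}$ small-probability factor, so it only produces the $\gamma^2 R(Bt)^2\sigma^2 T^{-\alpha/2}$ contribution, not the $\gamma^2 TRB$ one. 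As written, your argument omits the data-perturbation contribution entirely, so the resulting bound would be missing the $\gamma^2 TRB$ term; the proof does not close.
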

The proof of the claim essentially proceeds similar to that of Claim~\ref{claim:cr_bnd_comm_buf} but with additional complications. We refer to Section~\ref{sec:technical_proofs} for the proof.

Combining everything in this section we have the following proposition.

\begin{proposition}
\label{prop:var_main_recur}  Let 
\begin{equation}
\label{eq:cal_v_tilde}
\cVt_{t-1}=\Ex{\norm{\tilde{a}^{t-1,v}_B}^2\indt{0}{t-1}}
\end{equation}

Then for some constant $\bar{C}$ which depends only on $C_{\eta}$:

\begin{\Ieee}{LLL}
\label{eq:var_main_recur}
\sup_{s\leq N-1}\cVt_{s} &\leq & \frac{2\gamma d}{\zeta(1-\gamma R)}\beta  + \bar{C}\norm{\phi''} \gamma^4 T^2R^4 B^2\frac{\sigma^2\sqrt{\beta}}{\zeta\lambda_{\min}} + \bar{C}\sigma^2\gamma^2 R T^2 \prbndsq  +\\
&& \bar{C}R\sigma^2\left(\norm{\phi''}^2\gamma^6 T^4 R^6 B^2\frac{\beta}{\zeta^2\lambda_{\min}^2} +\gamma^4 T^2 R^2 B^2\right)
\Ieeen
\end{\Ieee}

\end{proposition}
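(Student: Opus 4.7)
The plan is to combine the three expectation bounds from Claims \ref{claim:dg_bound}, \ref{claim:cr_bnd_comm_buf}, and \ref{claim:cr_bnd_diff_buf} through the decomposition \eqref{eq:var_last_iter_3} of $\norm{\tilde{a}^{t-1,v}_B}^2$, and then close the argument by resolving a self-referential scalar inequality in $\sup_s \cVt_s$.

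First, starting from \eqref{eq:var_last_iter_3}, I would multiply by $\indt{0}{t-1}$ and take expectations. The cross sum, which runs over $(r_1,j_1)\neq (r_2,j_2)$, naturally splits into a ``same-buffer'' part ($r_1 = r_2$ with $j_1 \neq j_2$) and a ``different-buffer'' part ($r_1 \neq r_2$, any $j_1,j_2$). Applying the triangle inequality yields
\begin{equation*}
\cVt_{t-1} \leq \Ex{\sum_{r,j}\dg(t,r,j)\indt{0}{t-1}} + \abs{\Ex{\sum_{r,\,j_1\neq j_2}\cro(t,r,r,j_1,j_2)\indt{0}{t-1}}} + \abs{\Ex{\sum_{r_1\neq r_2,\,j_1,j_2}\cro(t,r_1,r_2,j_1,j_2)\indt{0}{t-1}}}.
\end{equation*}
Now the three claims can be applied term by term. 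Using $Bt \leq BN = T$ to absorb the residual $t$-dependence in the $T^{-\alpha/2}$ tail bounds, and then taking the supremum over $t-1 \leq N-1$ on both sides, one arrives at an inequality of the form $Z^2 \leq P + QZ$, where $Z := \sqrt{\sup_{s\leq N-1} \cVt_s}$, the quantity $P$ collects every term produced by the three claims that does not feature $\sup_s \cVt_s$ (the Dg bound, the same-buffer Cr bound, and the first summand of the diff-buffer Cr bound), and $Q := \bar{C}\bigl(\norm{\phi''}\gamma^3 T^2 R^3 B \sqrt{\beta}/(\zeta\lambda_{\min}) + \gamma^2 T R B\bigr)\sqrt{R\sigma^2}$ is the coefficient of the self-referential term arising from Claim \ref{claim:cr_bnd_diff_buf}.

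The final step is to solve this scalar quadratic inequality. From $Z^2 - QZ - P \leq 0$ one obtains $Z \leq Q + \sqrt{P}$, and hence $\sup_s \cVt_s = Z^2 \leq 2Q^2 + 2P$. Expanding $2Q^2$ with $(a+b)^2 \leq 2a^2+2b^2$ yields exactly the last displayed block $\bar{C}R\sigma^2\bigl(\norm{\phi''}^2 \gamma^6 T^4 R^6 B^2 \beta/(\zeta^2 \lambda_{\min}^2) + \gamma^4 T^2 R^2 B^2\bigr)$ in \eqref{eq:var_main_recur}, while $2P$ reproduces the first three terms of the stated bound.

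I expect the only nontrivial obstacle to be the bookkeeping: verifying that the $\gamma^2 R B T$ contribution from Claim \ref{claim:cr_bnd_comm_buf} and the $\gamma^2 R (Bt)^2 \sigma^2 / T^{\alpha/2}$ contribution from Claim \ref{claim:cr_bnd_diff_buf} can indeed be consolidated into the single $\bar{C}\sigma^2 \gamma^2 R T^2/T^{\alpha/2}$ term in the statement (possible because $Bt\leq T$ and $\alpha\geq 10$ in our parameter regime), and checking that the event $\cdt^{0,t-1}$ appearing on the LHS is indexed compatibly with the event $\cdt^{0,s}$ inside the supremum on the RHS of Claim \ref{claim:cr_bnd_diff_buf}, so that the self-referential step of taking $\sup_{t-1 \leq N-1}$ is legitimate.
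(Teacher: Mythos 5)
Your proposal is correct and matches the paper's own proof essentially step for step: decompose via \eqref{eq:var_last_iter_3}, invoke Claims~\ref{claim:dg_bound}, \ref{claim:cr_bnd_comm_buf}, \ref{claim:cr_bnd_diff_buf}, take the supremum over $t$, and resolve the resulting self-referential inequality $Z^2 \le P + QZ$ to get $\sup_s\cVt_s \le 2P + Q^2$ (the paper writes this as the elementary fact $x^2 \le c_1 + c_2 x \Rightarrow x^2 \le c_2^2 + 2c_1$). The bookkeeping concerns you flag are indeed the only places requiring care, and they resolve exactly as you say using $Bt \le T$.
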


\begin{proof}
In the whole proof, we will denote any large enough constant depending on $C_{\eta}$ by $\bar{C}$. From claims~\ref{claim:dg_bound}, \ref{claim:cr_bnd_comm_buf} and \ref{claim:cr_bnd_diff_buf} along with equation~\eqref{eq:var_last_iter_3} we have 
\begin{\Ieee}{LLL}
\label{eq:var_main_recur_1}
\Ex{\norm{\tilde{a}^{t-1,v}_B}^2\indt{0}{t-1}} \\
\leq  \frac{\gamma d}{\zeta(1-\gamma R)}\beta  + \bar{C}\norm{\phi''} \gamma^4 T^2R^4  B^2\frac{\sigma^2\sqrt{\beta}}{\zeta\lambda_{\min}} + \bar{C}\sigma^2\gamma^2 R T\prbndsq (1+B+T) +\\
\left(\bar{C}\norm{\phi''}\gamma^3 T^2 R^3 B\frac{\sqrt{\beta}}{\zeta\lambda_{\min}} +\gamma^2 TRB\right)\sqrt{R\sigma^2}\sqrt{\sup_{s\leq N-1}\Ex{\norm{\tilde a^{s,v}_B}^2\indt{0}{s}}}\\
\Ieeen
\end{\Ieee}

Thus
\begin{\Ieee}{LLL}
\label{eq:var_main_recur_2}
\sup_{s\leq N-1}\cVt_{s}&\leq &  \frac{\gamma d}{\zeta(1-\gamma R)}\beta  + \bar{C}\norm{\phi''} \gamma^4 T^2R^4 B^2\frac{\sigma^2\sqrt{\beta}}{\zeta\lambda_{\min}} + \bar{C}\sigma^2\gamma^2 R T^2 \prbndsq  +\\
&&\bar{C}\left(\norm{\phi''}\gamma^3 T^2 R^3 B\frac{\sqrt{\beta}}{\zeta\lambda_{\min}} +\gamma^2 TRB\right)\sqrt{R\sigma^2}\sqrt{\sup_{s\leq N-1}\cVt_s}\\
\Ieeen
\end{\Ieee}

Finally, we need to solve the above recursive relation. We note a simple fact: Let $c_1,c_2>0$ be constants and let $x>0$ satisfy
\begin{equation}
x^2\leq c_1+c_2 x
\end{equation}
then 
\begin{equation}
x^2\leq \frac{1}{4}\left(c_2+\sqrt{c_2^2+4c_1}\right)^2\leq c_2^2+2c_1
\end{equation}
where in the last inequality above we used the fact that $(a+b)^2\leq 2(a^2+b^2)$.

Thus,
\begin{\Ieee}{LLL}
\label{eq:var_main_recur_4}
\sup_{s\leq N-1}\cVt_{s}&\leq &  
 \frac{2\gamma d}{\zeta(1-\gamma R)}\beta  + \bar{C}\norm{\phi''} \gamma^4 T^2R^4 B^2\frac{\sigma^2\sqrt{\beta}}{\zeta\lambda_{\min}} + \bar{C}\sigma^2\gamma^2 R T^2 \prbndsq  +\\
&& \bar{C}R\sigma^2\left(\norm{\phi''}^2\gamma^6 T^4 R^6 B^2\frac{\beta}{\zeta^2\lambda_{\min}^2} +\gamma^4 T^2 R^2 B^2\right)
\Ieeen
\end{\Ieee}

\end{proof}

%
%\begin{theorem}
%\label{thm:var_last_iter_main}
%\sk{state all conditions on $\gamma$}. There exists a constant $\bar{C}$ depending only on $C_{\eta}$ such that
%\begin{\Ieee}{LLL}
%\label{eq:var_last_iter_main}
%&&\Ex{\norm{a^{t-1,v}_B-\a}^2\ind{0}{t-1}}\\
%&\leq &  \frac{2\gamma d}{\zeta(1-\gamma R)}\beta  + \bar{C}\norm{\phi''} \gamma^4 T^2R^4 B^2\frac{\sigma^2\sqrt{\beta}}{\zeta\lambda_{\min}} + \bar{C}\sigma^2\gamma^2 R T^2 \prbndsq  +\\
%&& \bar{C}R\sigma^2\left(\norm{\phi''}^2\gamma^6 T^4 R^6 B^2\frac{\beta}{\zeta^2\lambda_{\min}^2} +\gamma^4 T^2 R^2 B^2\right)
%\Ieeen
%\Ieeen
%\end{\Ieee}
%
%Hence choosing $u$ as in  \sk{state the main assumption on $u$ i.e., it is of order $log T$} we get 
%\begin{\Ieee}{LLL}
%\label{eq:var_last_iter_main_1}
%\Ex{\norm{a^{t-1,v}_B-\a}^2\ind{0}{t-1}}\leq O\left(\gamma d+\gamma^4 T^2+\gamma^6T^4\right)
%\end{\Ieee}
%
%\end{theorem}

\subsection{Bias of last iterate}
In this part, we will bound the expectation of the bias term $\norm{\tilde{a}^{t-1,b}_B-\a}^2$. 

\begin{theorem}
\label{thm:bias_last_iter_main}

For some universal constant $c_0$:

\begin{\Ieee}{LLL}
\label{eq:bias_last_iter_main}
\Ex{\norm{\tilde{a}^{t-1,b}_B-\a}^2\indt{0}{t-1}}&\leq &\norm{a_0-\a}^2 (1-c_0\zeta\gamma B\lambda_{\min})^{ t}\Ieeen
\end{\Ieee}
where $\tilde{a}^{t-1,b}-\a$ is defined in~\eqref{eq:SGD-RER_bias}
\end{theorem}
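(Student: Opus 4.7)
The plan is to reduce the squared bias to a norm bound on the matrix product $M_t := \prod_{s=0}^{t-1} \Htt{s}{0}{B-1}$ and then invoke the per-buffer contraction developed in Section~\ref{sec:grammian_well_conditioned}. From the decomposition in~\eqref{eq:SGD-RER_bias}, $\tilde{a}^{t-1,b}_B - \a = M_t^{\top}(a_0 - \a)$, so
\[ \|\tilde{a}^{t-1,b}_B - \a\|^2 \le \|a_0 - \a\|^2 \cdot \|M_t\|^2. \]
Hence it suffices to prove $\Ex{\|M_t\|^2 \indt{0}{t-1}} \le (1 - c_0 \zeta \gamma B \lmin{G})^t$ for some universal constant $c_0 > 0$.

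To exploit the structure I will use two key facts about the coupled process of Definition~\ref{def:coupled_proc}: (i) distinct buffers are mutually independent, because each buffer is initialized by an independent stationary draw $\tilde{X}^s_0 \sim \pi$ and evolves using only its own noise samples; and (ii) the restriction event factorizes, $\indt{0}{t-1} = \prod_{s=0}^{t-1} \indt{s}{s}$, since $\cdt^s_{-0}$ depends only on data within buffer $s$. Define $v_{-1} := a_0 - \a$ and inductively $v_s := \Htttr{s}{0}{B-1}\, v_{s-1}$, so that $\tilde{a}^{t-1,b}_B - \a = v_{t-1}$ and
\[ \|v_s\|^2 = v_{s-1}^{\top}\, \Htt{s}{0}{B-1}\, \Htttr{s}{0}{B-1}\, v_{s-1}. \]
Conditioning on $\mathcal{F}_{s-1}$, the $\sigma$-algebra generated by buffers $0, \dots, s-1$, and using the independence of buffer $s$ from $\mathcal{F}_{s-1}$ (so that $v_{s-1}$ and $\indt{s}{s}$ decouple),
\[ \Ex{\|v_s\|^2\, \indt{s}{s} \mid \mathcal{F}_{s-1}} = v_{s-1}^{\top} \Ex{\Htt{s}{0}{B-1}\, \Htttr{s}{0}{B-1}\, \indt{s}{s}}\, v_{s-1}. \]

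The heart of the argument is the single-buffer operator-inequality contraction
\[ \Ex{\Htt{s}{0}{B-1}\, \Htttr{s}{0}{B-1}\, \indt{s}{s}} \preceq (1 - c_0 \zeta \gamma B \lmin{G})\, I, \]
which is precisely the content of Theorem~\ref{thm:buffer_norm_upper_bound} in Section~\ref{sec:grammian_well_conditioned}. Granting this, multiplying by the $\mathcal{F}_{s-1}$-measurable factor $\prod_{s' < s} \indt{s'}{s'}$ and taking full expectation gives
\[ \Ex{\|v_s\|^2\, \indt{0}{s}} \le (1 - c_0 \zeta \gamma B \lmin{G})\, \Ex{\|v_{s-1}\|^2\, \indt{0}{s-1}}, \]
and iterating $t$ times yields $\Ex{\|v_{t-1}\|^2 \indt{0}{t-1}} \le \|a_0 - \a\|^2 (1 - c_0 \zeta \gamma B \lmin{G})^t$, which is the statement of the theorem.

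The main obstacle is establishing the per-buffer contraction (Theorem~\ref{thm:buffer_norm_upper_bound}). At a high level, one unrolls the product $\prod_{i=0}^{B-1}(I - 2\gamma \phi'(\tilde{\xi}^s_{-i})\, \tilde{X}^s_{-i} \tilde{X}^{s,\top}_{-i})$: by expansivity each $\phi'(\tilde{\xi}^s_{-i}) \ge \zeta$, so the linear-in-$\gamma$ part has conditional expectation dominated by $-2\gamma \zeta B G$ through stationarity, contributing the contraction factor $1 - c_0 \zeta \gamma B \lmin{G}$; the quadratic and higher-order cross terms are controlled using the boundedness $\|\tilde{X}^s_{-i}\|^2 \le R$ that holds on $\cdt^s_{-0}$ together with the step-size constraint $\gamma \le \zeta/(4BR(1+\zeta))$ from Section~\ref{subsubsec:parameter_setting}. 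Once this single-buffer lemma is in hand, the remainder of the proof is a clean geometric recursion enabled by inter-buffer independence in the coupled process.
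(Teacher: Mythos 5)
Your overall skeleton is sound and essentially the paper's: unroll the bias to the matrix product $\prod_{s=0}^{t-1}\Htt{s}{0}{B-1}$, exploit inter-buffer independence of the coupled process, factorize the restriction event $\indt{0}{t-1}=\prod_s \indt{s}{s}$, and iterate a per-buffer contraction. However, there is a genuine gap at the crux of the argument, namely the per-buffer contraction itself.

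You write that the operator inequality
\[
\Ex{\Htt{s}{0}{B-1}\,\Htttr{s}{0}{B-1}\,\indt{s}{s}} \preceq (1-c_0\zeta\gamma B\lmin{G})\,I
\]
``is precisely the content of Theorem~\ref{thm:buffer_norm_upper_bound}.'' It is not. Theorem~\ref{thm:buffer_norm_upper_bound} asserts (1) a deterministic non-expansiveness $\|\prod\Htttr{s}{0}{B-1}\|\le 1$ on $\cdt^{b,a}$, and (2) a \emph{high-probability} bound on the operator norm of a product over \emph{many} buffers, obtained by an $\epsilon$-net and a binomial/union-bound argument. Neither clause gives an operator inequality for a single buffer \emph{in expectation}. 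Your ``high-level'' sketch of why the contraction should hold is also incomplete: you say the linear-in-$\gamma$ part has expectation dominated by $-2\gamma\zeta BG$ ``through stationarity,'' but the presence of $\indt{s}{s}$ breaks this. Applying Lemma~\ref{lem:contraction} on $\cdt^s_{-0}$ reduces the problem to lower-bounding $\Ex{\hat G^{(s)}_B\,\indt{s}{s}}$, and the relevant fact one needs is $\Ex{\hat G^{(s)}_B\,\indt{s}{s}}\succeq cG$ for a constant $c>0$; stationarity gives $\Ex{\hat G^{(s)}_B}=G$ but says nothing directly about the restricted expectation. This is exactly what Lemma~\ref{lem:probable_contraction} (a Paley--Zygmund anti-concentration bound, together with its conditional variant) is for, and your proposal does not invoke or reproduce it.

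The paper's own proof of Theorem~\ref{thm:bias_last_iter_main} circumvents the operator inequality entirely: it conditions on $\Htttr{s}{0}{B-1}$ for $s<v$, invokes Claim~\ref{claim:contraction_imporbability} (which rests on Lemma~\ref{lem:probable_contraction}) to show that the event $\mathcal{G}_v$ of contraction by the factor $(1-\zeta\gamma B\lmin{G})$ occurs with conditional probability at least $q_0$, bounds the complementary event by the deterministic norm bound $\|\Htttr{v}{0}{B-1}x_v\|\le \|x_v\|$, and iterates to get the expected contraction factor $(1-q_0\zeta\gamma B\lmin{G})$. The expectation recursion you set up would work once the correct per-buffer ingredient is plugged in; the fix is to derive the single-buffer contraction from Lemma~\ref{lem:contraction} plus Lemma~\ref{lem:probable_contraction} (or via the event $\mathcal{G}_v$ as the paper does), not from Theorem~\ref{thm:buffer_norm_upper_bound}.
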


\begin{proof}
%First note that $\cdh^{0,t-1} \subseteq \cdt^{0,t-1}$. Therefore, 
%\begin{equation}\label{eq:inclusion_upper_bound}
%\Ex{\norm{\tilde{a}^{t-1,b}_B-\a}^2\indh{0}{t-1}}\leq \Ex{\norm{\tilde{a}^{t-1,b}_B-\a}^2\indt{0}{t-1}}
%\end{equation}
%By definition of our coupling, we have that when conditioned on  $\cdt^{0,t-1}$, the matrices $(\Htt{s}{0}{B-1})_{s\leq t-1}$ are all independent.
 Define $x_v = \prod_{s=0}^{v-1} \Htt{s,\top}{0}{B-1} (a_0-a^{*})$. Here, we consider the event $\mathcal{G}_v$ considered in Claim~\ref{claim:contraction_imporbability} in the proof of Theorem~\ref{thm:buffer_norm_upper_bound}, and show that for some universal constant $q_0 >0$, 
\begin{equation}\label{eq:non_triv_contraction}
 \mathbb{P}(\|\Htt{v,\top}{0}{B-1} x_v\|^2 \geq (1-\zeta \gamma \lmin{G})\|x_v\|^2\bigr|\cdt^{0,t-1},x_v) \geq q_0
\end{equation}

From Theorem~\ref{thm:buffer_norm_upper_bound} we also note that conditioned on $\cdt^{0,t-1}$, almost surely: 
$$\|\Htt{v,\top}{0}{B-1} x_v\|^2 \leq 1 $$ 
 
We let  $\mathcal{G}_v$ be the event lower bounded in Equation~\eqref{eq:non_triv_contraction}. 
 
\begin{align}
\mathbb{E}\left[\| x_{v+1}\|^2\bigr| \cdt^{0,t-1}\right]&= 
\mathbb{E}\left[\|\Htt{v,\top}{0}{B-1} x_v\|^2\bigr| \cdt^{0,t-1}\right] \nonumber \\ &=
\mathbb{E}\left[\|\Htt{v,\top}{0}{B-1} x_v\|^2 1\left[ \mathcal{G}_v\right]+\|\Htt{v,\top}{0}{B-1} x_v\|^21\left[ \mathcal{G}^C_v\right]\bigr| \cdt^{0,t-1}\right]  \nonumber \\
&\leq \mathbb{E}\left[(1-\gamma\zeta\lmin{G})\|x_v\|^2 1\left[ \mathcal{G}_v\right] + \|x_v\|^21\left[ \mathcal{G}^C_v\right]\bigr| \cdt^{0,t-1}\right]  \nonumber \\
&=\mathbb{E}\left[\|x_v\|^2\left[1-\gamma\zeta\lmin{G}\mathbb{P}(\mathcal{G}_v\bigr|\cdt^{0,t-1}\, x_v)\right]\bigr| \cdt^{0,t-1}\right]  \nonumber \\
&\leq \mathbb{E}\left[\|x_v\|^2\left[1-\gamma\zeta\lmin{G}q_0\right]\bigr| \cdt^{0,t-1}\right]
\label{eq:bias_recursion}
\end{align}

Unrolling the recursion given by Equation~\eqref{eq:bias_recursion}, and noting that $\tilde{a}^{t-1,b}_B-\a = x_{t}$, we conclude
$$\Ex{\norm{\tilde{a}^{t-1,b}_B-\a}^2\indt{0}{t-1}} \leq (1-c_0\gamma B \lambda_{\min}\zeta)^{t} \,.$$

Hence we have the theorem.

\end{proof}

\subsection{Average iterate: bias-variance decomposition}
\label{subsec:avg_decomp}
In the part, we will consider the tail-averaged iterate where a generic row is given by 
\begin{\Ieee}{LLL}
\label{eq:avg_iterate_row_1}
\ahtto=\frac{1}{N-t_0}\sum_{t=t_0+1}^{N}\tilde{a}^{t-1}_B\Ieeen
\end{\Ieee}
where $t_0\in\{0,1,\cdots,N-1\}$.

Thus we can write $\ahto-\a$ as
\begin{\Ieee}{LLL}
\label{eq:avg_iter_bias_var_decomp}
\ahtto-\a=(\ahttv)+(\ahttb-\a)\Ieeen
\end{\Ieee}
where
\begin{\Ieee}{LLL}
\ahttv=\frac{1}{N-t_0}\sum_{t=t_0+1}^{N}(\tilde{a}^{t-1,v}_B)\Ieeen\label{eq:avg_iter_var_def}\\
\ahttb-\a=\frac{1}{N-t_0}\sum_{t=t_0+1}^{N}(\tilde{a}^{t-1,b}_B-\a)\Ieeen\label{eq:avg_iter_bias_def}\\
\end{\Ieee}

\subsection{Variance of average iterate}
\label{subsec:avg_var}

\begin{remark}
From now on we will use the following notation: 
$$\sum_{t}\equiv \sum_{t=t_0+1}^N$$
$$\sum_{t_1,t_2}\equiv \sum_{t_1,t_2=t_0+1}^N$$
$$\sum_{t_1\neq t_2}\equiv \sum_{\substack{t_1,t_2=t_0+1\\ t_1\neq t_2}}^N$$
$$\sum_{t_2>t_1}\equiv \sum_{t_1=t_0+1}^{N-1}\sum_{t_2=t_1+1}^{N} $$
\end{remark}

Next we expand $\norm{\ahttv}^2$ 
\begin{\Ieee}{LLL}
\label{eq:avg_iter_var_expand}
\norm{\ahttv}^2&=&\frac{1}{(N-t_0)^2}\sum_t \norm{\tilde{a}^{t-1,v}_B}^2+\\
&&\frac{1}{(N-t_0)^2}\sum_{t_1\neq t_2}(\tilde{a}^{t_2-1,v}_B)^{\top}(\tilde{a}^{t_1-1,v}_B)\Ieeen
\end{\Ieee}

\begin{claim}
\label{claim:avg_iter_var_aux1}
 For $t_2>t_1$
\begin{\Ieee}{LLL}
\label{eq:avg_iter_var_aux1}
\abs{\Ex{\left[(\tilde{a}^{t_2-1,v}_B)^{\top}\left((\tilde{a}^{t_1-1,v}_B)-\left(\prod_{s=t_2-t_1}^1 \Htt{t_2-s}{0}{B-1}\right)(\tilde{a}^{t_1-1,v}_B)\right)\right]\indt{0}{N-1}}}\\
\leq C_1\poly(R,B,\beta,1/\zeta,1/\lambda_{\min},\norm{\phi''})\left(\gamma^{7/2}T^2+\gamma^5 T^3+\gamma^6 T^4\right)\Ieeen
\end{\Ieee}
\end{claim}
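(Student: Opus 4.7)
The plan is to begin from an exact propagation identity that survives the nonlinearity. Splitting the outer sum in the variance formula~\eqref{eq:SGD-RER_variance} for $(\tilde{a}^{t_2-1,v}_B)^{\top}$ at $r=t_2-t_1$, the lower tail (corresponding to buffers $0,\dots,t_1-1$) factors cleanly because for each such $r$ the product $\prod_{s=r-1}^{1}\Htt{t_2-s}{0}{B-1}$ decomposes as an ``in-block'' product over buffers $t_1-r',\dots,t_1-1$ times the common trailing block over buffers $t_1,\dots,t_2-1$. Re-indexing via $r':=r-(t_2-t_1)$ yields the identity
\begin{equation}\label{eq:propagation_identity_sketch}
(\tilde{a}^{t_2-1,v}_B)^{\top} \;=\; (\tilde{a}^{t_1-1,v}_B)^{\top}\,\Pi \;+\; A_2^{\top}, \qquad \Pi \;:=\; \prod_{s=t_2-t_1}^{1}\Htt{t_2-s}{0}{B-1},
\end{equation}
where $A_2^{\top}$ is the restriction of~\eqref{eq:SGD-RER_variance} to $r\leq t_2-t_1$, i.e., the noise contributions from buffers $t_1,\dots,t_2-1$. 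This identity is exact even in the nonlinear case because the $\phi'(\tilde\xi)$'s inside the $\tilde H$'s are fixed by the algorithm and agree between the two expansions. Substituting~\eqref{eq:propagation_identity_sketch} splits the target quantity as
\[
(\tilde{a}^{t_2-1,v}_B)^{\top}(I-\Pi)\tilde{a}^{t_1-1,v}_B \;=\; A_2^{\top}(I-\Pi)\tilde{a}^{t_1-1,v}_B \;+\; (\tilde{a}^{t_1-1,v}_B)^{\top}\Pi(I-\Pi)\tilde{a}^{t_1-1,v}_B,
\]
and it suffices to control each summand in expectation restricted to $\cdt^{0,N-1}$.

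For the first summand $A_2^{\top}(I-\Pi)\tilde{a}^{t_1-1,v}_B$, I would reuse the algorithmic-stability re-sampling strategy of Claims~\ref{claim:cr_bnd_comm_buf} and~\ref{claim:cr_bnd_diff_buf}. For each pair $(r,j)$ with $r\leq t_2-t_1$, resample the single noise coordinate $\varepsilon^{t_2-r}_{-j}$ occurring in $A_2^{\top}$ by an independent copy $\bar{\varepsilon}^{t_2-r}_{-j}$. The resampled surrogate has zero expectation because (i) $\tilde{a}^{t_1-1,v}_B$ only depends on noise from buffers strictly smaller than $t_1$ and is hence unaffected; (ii) the resampled iterates and the resampled $\Pi$ are independent of $\bar{\varepsilon}^{t_2-r}_{-j}$ by construction; and (iii) $\bar{\varepsilon}^{t_2-r}_{-j}$ is centered. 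Lemma~\ref{lem:noise_resample_bound}---which uses Assumption~\ref{assump:8} on $\phi''$ together with the algorithmic stability of Lemma~\ref{lem:algo-stability}---then bounds the error introduced by switching back from the resampled to the original copy in each matrix product; summing over the $O(TB)$ pairs $(r,j)$ in $A_2^{\top}$ and multiplying by $(I-\Pi)\tilde{a}^{t_1-1,v}_B$ (whose $L^2$ mass is controlled by Proposition~\ref{prop:var_main_recur}) accumulates the explicit $\gamma$ and $T$ powers that match the $\gamma^{5}T^{3}$ and $\gamma^{7/2}T^{2}$ terms of the claim, while the residual event $(\cdt^{0,N-1})^{C}\cup\bigcup_r(\cet{r}{0,B-1})^{C}$ is negligible by the choice $\delta=1/(2T^{\alpha+1})$ of Section~\ref{subsubsec:parameter_setting}.

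For the second summand $(\tilde{a}^{t_1-1,v}_B)^{\top}\Pi(I-\Pi)\tilde{a}^{t_1-1,v}_B$ there is no noise coordinate to cancel, so I would instead use the geometric contraction of $\Pi$ supplied by Theorem~\ref{thm:buffer_norm_upper_bound} (or even the crude $\|\Pi\|_{\mathsf{op}}\leq 1$ consequence) together with Cauchy--Schwarz and the a priori variance estimate $\Ex{\norm{\tilde a^{t_1-1,v}_B}^2\indt{0}{N-1}}$ from Proposition~\ref{prop:var_main_recur}; this generates the $\gamma^{6}T^{4}$ term. The hard part will be the bookkeeping in the first step: a single resampling perturbs $A_2^{\top}$ directly and $\Pi$ indirectly through the iterate-dependence of $\phi'(\tilde\xi^t_{-i})$, so the difference between the original and resampled quantities must be expanded product-by-product via a telescoping analogous to that in the proof of Lemma~\ref{lem:noise_resample_bound}, and the resulting polynomial factors in $R,B,\beta,1/\zeta,1/\lambda_{\min},\|\phi''\|$ must be carefully gathered into the $\poly(\cdot)$ prefactor of the statement.
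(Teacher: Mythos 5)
Your exact propagation identity $(\tilde{a}^{t_2-1,v}_B)^{\top} = (\tilde{a}^{t_1-1,v}_B)^{\top}\Pi + A_2^{\top}$ with $\Pi = \prod_{s=t_2-t_1}^{1}\Htt{t_2-s}{0}{B-1}$ is precisely Equation~\eqref{eq:avg_iter_var_aux1_1} of the paper's proof, and the resampling/stability machinery you invoke for $A_2^{\top}(\cdot)\tilde{a}^{t_1-1,v}_B$ is the paper's argument as well. The gap is in the second summand.

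The statement of the claim as printed appears to contain an index slip: the paper's proof (Equations~\eqref{eq:avg_iter_var_aux1_2}--\eqref{eq:avg_iter_var_aux1_8}) bounds the centered cross term
\[
(\tilde{a}^{t_2-1,v}_B)^{\top}\tilde{a}^{t_1-1,v}_B - (\tilde{a}^{t_1-1,v}_B)^{\top}\Pi\,\tilde{a}^{t_1-1,v}_B \;=\; A_2^{\top}\tilde{a}^{t_1-1,v}_B,
\]
and the quadratic form $(\tilde{a}^{t_1-1,v}_B)^{\top}\Pi\,\tilde{a}^{t_1-1,v}_B$ is \emph{not} controlled inside this claim at all; it is passed to Claim~\ref{claim:avg_iter_geom}, where it is summed over $t_2 > t_1$ and tamed by the geometric-series bound of Theorem~\ref{thm:prod_buff_norm}. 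That residual is in fact the source of the leading $\frac{d\beta}{\zeta^2\lmin{G} B(N-t_0)}$ term in Theorem~\ref{thm:avg_iter_var_main}, and it is \emph{not} of size $\gamma^{7/2}T^2 + \gamma^5T^3 + \gamma^6T^4$ for a fixed pair.

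Reading the statement literally, your decomposition forces you to handle $(\tilde{a}^{t_1-1,v}_B)^{\top}\Pi(I-\Pi)\tilde{a}^{t_1-1,v}_B$, and the step you propose does not close. By Cauchy--Schwarz the best uniform bound under the good events is $|(\tilde{a}^{t_1-1,v}_B)^{\top}\Pi(I-\Pi)\tilde{a}^{t_1-1,v}_B|\le 2\norm{\tilde{a}^{t_1-1,v}_B}^2$, because $\norm{\Pi(I-\Pi)}$ is $\Theta(1)$ for $t_2-t_1$ of order $(\zeta\gamma B\lmin{G})^{-1}$ (the per-buffer contraction from Lemma~\ref{lem:contraction} only gives $\norm{I-\Pi}\lesssim \min\{2,\gamma B R (t_2-t_1)\}$ and Theorem~\ref{thm:buffer_norm_upper_bound} only helps once $t_2-t_1$ is large). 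Taking expectation restricted to $\cdt^{0,N-1}$, Proposition~\ref{prop:var_main_recur} gives $\Ex{\norm{\tilde{a}^{t_1-1,v}_B}^2\indt{0}{t_1-1}} \lesssim \frac{\gamma d\beta}{\zeta} + (\text{lower order})$, so the second summand is of order $\gamma$ per pair --- the leading $\frac{\gamma d\beta}{\zeta}$ piece of $\cVt_{t_1-1}$ survives and is not the $\gamma^6 T^4$ subleading term you are pointing at. For the scaling $\gamma = \Theta(T^{-6.5/7})$ used in Theorem~\ref{thm:sgd_rer_ub} one has $\gamma \gg \gamma^{7/2}T^2 \gg \gamma^5T^3, \gamma^6T^4$, so this contribution strictly overwhelms the claimed right-hand side. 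In short, the claim as literally typeset is not what the argument needs (or proves), and your attempt to repair the extra term by the crude contraction bound does not work; the $\Pi$-weighted quadratic form must be handled only after summation over $t_2$, as in Claim~\ref{claim:avg_iter_geom}.
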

\begin{proof}
From~\eqref{eq:SGD-RER_variance} we can write
\begin{\Ieee}{LLL}
\label{eq:avg_iter_var_aux1_1}
(\tilde{a}^{t_2-1,v}_B)^{\top}&=&(\tilde{a}^{t_1-1,v}_B)^{\top}\left(\prod_{s=t_2-t_1}^1 \Htt{t_2-s}{0}{B-1}\right)+\\
&&2\gamma\sum_{r=1}^{t_2-t_1}\sum_{j=0}^{B-1}\Nt{t_2-r}{-j}\Xtttr{t_2-r}{-j}\Htt{t_2-r}{j+1}{B-1}\prodHtt{t_2}{r-1}\Ieeen
\end{\Ieee}

Hence 
\begin{\Ieee}{LLL}
\label{eq:avg_iter_var_aux1_2}
(\tilde{a}^{t_2-1,v}_B)^{\top}(\tilde{a}^{t_1-1,v}_B)=(\tilde{a}^{t_1-1,v}_B)^{\top}\left(\prod_{s=t_2-t_1}^1 \Htt{t_2-s}{0}{B-1}\right)(\tilde{a}^{t_1-1,v}_B)+\\
2\gamma \sum_{r=1}^{t_2-t_1}\sum_{j=0}^{B-1}\Nt{t_2-r}{-j}\Xtttr{t_2-r}{-j}\Htt{t_2-r}{j+1}{B-1}\prodHtt{t_2}{r-1}(\tilde{a}^{t_1-1,v}_B)\Ieeen
\end{\Ieee}

Now recall the noise re-sampling operator $\cR^{t_2-r}_{-j}$ from~\eqref{eq:resampling_operator}. It is easy to see that 
\begin{\Ieee}{LLL}
\label{eq:avg_iter_var_aux1_3}
\Ex{2\gamma \sum_{r=1}^{t_2-t_1}\sum_{j=0}^{B-1}\Nt{t_2-r}{-j}\cR^{t_2-r}_{-j}\left[\Xtttr{t_2-r}{-j}\Htt{t_2-r}{j+1}{B-1}\prodHtt{t_2}{r-1}(\tilde{a}^{t_1-1,v}_B)\right]}=0\\
\Ieeen
\end{\Ieee}
(Note that $\cR^{t_2-r}_{-j}\Xtttr{t_2-r}{-j}=\Xtttr{t_2-r}{-j}$ )

Thus, using the decomposition 
$$\cdt^{0,N-1}=\cdt^{0,t_2-r-1}\cap \cdt^{t_2-r+1,N-1}\cap \cdt^{t_2-r}_{-j}\cap \cap_{i=0}^{j-1}\cct^{t_2-r}_{-i}$$
we get
\begin{\Ieee}{LLL}
\label{eq:avg_iter_var_aux1_4}
&&\abs{\Ex{2\gamma \sum_{r=1}^{t_2-t_1}\sum_{j=0}^{B-1}\Nt{t_2-r}{-j}\Xtttr{t_2-r}{-j}\cdot \right.\right.\\
&& \left.\left.\cR^{t_2-r}_{-j}\left[\Htt{t_2-r}{j+1}{B-1}\prodHtt{t_2}{r-1}(\tilde{a}^{t_1-1,v}_B)\right]\indt{0}{N-1}}}\\
&\leq & 4\gamma^2 R (Bt_1)(B(t_2-t_1))C_{\eta}\sigma^2\prbndsq \\
&\leq & 4\gamma^2 R C_{\eta}\sigma^2 T^2\prbndsq\Ieeen
\end{\Ieee}

Next we need to bound the effect due to noise re-sampling. On the event $\cdt^{0,N-1}\cap\cA^{N-1}\cap\cap_{r=0}^{N-1}\ce^r_{0,B-1}$, we have
\begin{\Ieee}{LLL}
\label{eq:avg_iter_var_aux1_5}
\norm{\Htt{t_2-r}{j+1}{B-1}\prodHtt{t_2}{r-1}-\cR^{t_2-r}_{-j}\left[\Htt{t_2-r}{j+1}{B-1}\prodHtt{t_2}{r-1}\right]}\\
\leq C\norm{\phi''}\gamma^2 TR^3B\frac{\sqrt{\beta}}{\zeta\lambda_{\min}}\Ieeen
\end{\Ieee} 

Thus 
\begin{\Ieee}{LLL}
\label{eq:avg_iter_var_aux1_6}
&&2\gamma \abs{\Ex{ \sum_{r=1}^{t_2-t_1}\sum_{j=0}^{B-1}\Nt{t_2-r}{-j}\Xtttr{t_2-r}{-j}\indt{0}{N-1}\cdot \right.\right.\\
&& \left.\left.\left(\Htt{t_2-r}{j+1}{B-1}\prodHtt{t_2}{r-1}-\cR^{t_2-r}_{-j}\left[\Htt{t_2-r}{j+1}{B-1}\prodHtt{t_2}{r-1}\right]\right)(\tilde{a}^{t_1-1,v}_B)}}\\
&\leq & \bar{C}\gamma \left(\norm{\phi''}\gamma^2T R^3B\frac{\sqrt{\beta}}{\zeta\lambda_{\min}}\right)B(t_2-t_1)\sqrt{R\sigma^2}\Ex{\norm{(\tilde{a}^{t_1-1,v}_B)}\indt{0}{t_1-1}} \\
&& + \bar{C}\gamma^2 (2R) \sigma^2 (Bt_1)(B(t_2-t_1))\prbndsq\Ieeen
\end{\Ieee}

Now from proposition~\ref{prop:var_main_recur}, there is a constant $C_1$ such that 

\begin{\Ieee}{LLL}
\label{eq:avg_iter_var_aux1_7}
\left(\Ex{\norm{\tilde{a}^{t_1-1,v}}\indt{0}{t_1-1}}\right)^2\leq \\
C_1\left(\frac{\gamma d}{\zeta(1-\gamma R)}\beta  + \norm{\phi''} \gamma^4 T^2 R^4 B^2\frac{\sigma^2\sqrt{\beta}}{\zeta\lambda_{\min}} + \sigma^2\gamma^2 R T^2 \prbndsq  +\right.\\
\left.  R\sigma^2\left( \norm{\phi''}^2\gamma^6 T^4 R^6 B^2\frac{\beta}{\zeta^2\lambda_{\min}^2} +\gamma^4 T^2 R^2 B^2\right)\right)\Ieeen
\end{\Ieee}

So
\begin{\Ieee}{LLL}
\label{eq:avg_iter_var_aux1_8}
&&2\gamma \abs{\Ex{ \sum_{r=1}^{t_2-t_1}\sum_{j=0}^{B-1}\Nt{t_2-r}{-j}\Xtttr{t_2-r}{-j}\indt{0}{N-1}\cdot \right.\right.\\
&& \left.\left.\left(\Htt{t_2-r}{j+1}{B-1}\prodHtt{t_2}{r-1}-\cR^{t_2-r}_{-j}\left[\Htt{t_2-r}{j+1}{B-1}\prodHtt{t_2}{r-1}\right]\right)(\tilde{a}^{t_1-1,v}_B)}}\\
&\leq & \poly(R,B,\beta,1/\zeta,1/\lambda_{\min},\norm{\phi''})\left(\gamma^{7/2}T^2+\gamma^5 T^3+\gamma^6 T^4\right)\Ieeen
\end{\Ieee}

where we absorbed terms involving $\prbnd$ since $\alpha$ is taken to be large.

\end{proof}

\begin{claim}
\label{claim:avg_iter_geom}

\begin{\Ieee}{LLL}
\label{eq:avg_iter_geom}
\abs{\Ex{(\tilde{a}^{t_1-1,v}_B)^{\top}\sum_{t_2>t_1}\left(\prod_{s=t_2-t_1}^1 \Htt{t_2-s}{0}{B-1}\right)(\tilde{a}^{t_1-1,v}_B) \indt{0}{N-1}}}\\
\leq \cV_{t_1-1}\frac{C}{\zeta\gamma B\lambda_{\min}}+16(N-t_1)\gamma^2 RC_{\eta}\sigma^2 T^2\prbndsq \Ieeen
\end{\Ieee}
where $\cV_{t_1-1}$ is defined in \eqref{eq:cal_v_tilde}.
\end{claim}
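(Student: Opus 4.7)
The plan is to exploit the fact that the coupled buffers are mutually independent and reduce the claim to a bias-style geometric contraction derived from Theorem~\ref{thm:bias_last_iter_main}.

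Setup: Let $y \equiv \tilde{a}^{t_1-1,v}_B$, which by~\eqref{eq:SGD-RER_variance} is measurable with respect to $\mathcal{F}_{t_1-1} := \sigma(\tilde X^s_i : s \le t_1 - 1)$. Let $M_{t_2} := \prod_{s=t_2-t_1}^{1}\Htt{t_2-s}{0}{B-1} = \Htt{t_1}{0}{B-1}\cdots\Htt{t_2-1}{0}{B-1}$, which is built from the buffers $t_1,\ldots,t_2-1$. Since $\indt{0}{N-1} \le \indt{0}{t_1-1}\,\indt{t_1}{t_2-1}$ and $|y^{\top} M_{t_2} y| \le \|y\|\,\|M_{t_2}^{\top} y\|$, we first bound, for each $t_2>t_1$,
\begin{align*}
\bigl|\Ex{y^{\top} M_{t_2} y \,\indt{0}{N-1}}\bigr|
\le \Ex{\|y\|\,\indt{0}{t_1-1}\cdot \|M_{t_2}^{\top} y\|\,\indt{t_1}{t_2-1}}.
\end{align*}

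The main step is a bias-style contraction applied to future buffers. Conditioning on $\mathcal{F}_{t_1-1}$ fixes $y$. Because the coupled data across buffers are independent and the single-buffer contraction event $\mathcal{G}_v$ invoked in the proof of Theorem~\ref{thm:bias_last_iter_main} has probability at least $q_0$ conditional on any past and any current iterate, the same recursion used in that proof applies verbatim with the deterministic $a_0-\a$ replaced by the $\mathcal{F}_{t_1-1}$-measurable vector $y$, and with the buffer range $\{0,\ldots,t-1\}$ replaced by $\{t_1,\ldots,t_2-1\}$. This yields
\begin{align*}
\Ex{\|M_{t_2}^{\top} y\|^2 \,\indt{t_1}{t_2-1}\,\big|\,\mathcal{F}_{t_1-1}} \le (1-c_0\zeta\gamma B\lambda_{\min})^{\,t_2-t_1}\,\|y\|^2.
\end{align*}
Applying Jensen's inequality to pass to $\|M_{t_2}^{\top} y\|$, combining with $\|y\|\indt{0}{t_1-1}$, and taking outer expectation yields $\bigl|\Ex{y^{\top} M_{t_2} y\,\indt{0}{N-1}}\bigr| \le (1-c_0\zeta\gamma B\lambda_{\min})^{(t_2-t_1)/2}\,\cV_{t_1-1}$.

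Summing $t_2 = t_1+1,\ldots,N$ and using $\sqrt{1-c}\le 1-c/2$ for small $c$ turns the exponentials into a geometric series bounded by $C/(\zeta\gamma B \lambda_{\min})$, giving the first term $\cV_{t_1-1}\tfrac{C}{\zeta\gamma B\lambda_{\min}}$. The residual $16(N-t_1)\gamma^2 R C_\eta \sigma^2 T^2\,\prbndsq$ comes from the gap between $\indt{0}{N-1}$ and $\indt{0}{t_1-1}\indt{t_1}{t_2-1}$: for each $t_2$ this gap is controlled by the crude almost-sure bound $\|y\|^2\|M_{t_2}\|_{\mathrm{op}} \lesssim \gamma^2 R\sigma^2 T^2$ (using the construction of $y$ from at most $T$ reverse-order SGD steps) together with the probability bound $\Pb{\cdt^{0,N-1,C}} \le \prbndsq$ from~\eqref{eq:crude_bnd_cor_1}, and there are $N-t_1$ terms in the sum.

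The only delicate point is the second step: extending the bias contraction of Theorem~\ref{thm:bias_last_iter_main} to a random initial vector $y$ that depends on the past, and moreover to matrices $\Htt{s}{0}{B-1}$ for $s\ge t_1$ that themselves depend on the iterate propagated from $\tilde{a}^{t_1-1}_B$ (hence on $y$). Both complications dissolve because the per-buffer contraction probability $q_0$ of Claim~\ref{claim:contraction_imporbability} holds uniformly over the vector being contracted and the current iterate; only the (buffer-independent) distribution of $\tilde X^s$ enters, so conditioning on $\mathcal{F}_{s-1}$ does not diminish the contraction probability.
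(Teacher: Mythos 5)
Your proposal is correct in substance, but it takes a genuinely different route from the paper. The paper bounds the whole geometric sum at once: it invokes Theorem~\ref{thm:prod_buff_norm} to obtain a high-probability operator-norm bound $\|\sum_{t_2>t_1}\prod_{s=t_1}^{t_2-1}\Htt{s}{0}{B-1}\|\le C/(\zeta\gamma B\lambda_{\min})$ (conditional on the good event), then multiplies this into $\|y\|^2$, and pays for the low-probability complementary event with the $T^{-\alpha/2}$ term. You instead bound each summand individually via an expected-contraction recursion lifted from the proof of Theorem~\ref{thm:bias_last_iter_main}, using the per-buffer probability $q_0$ from Claim~\ref{claim:contraction_imporbability}, then sum a geometric series. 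Your key observation — that the contraction probability in Claim~\ref{claim:contraction_imporbability} holds conditionally on the current vector and iterate because $\hat{G}^v_B$ is independent of both, so the deterministic $a_0-\a$ may be replaced by the $\mathcal{F}_{t_1-1}$-measurable $y$ — is exactly right and is the crux of why the argument transfers. Your route avoids the extra parameter assumption $d + \log(N/\delta)\le 1/(\zeta\gamma B\lambda_{\min})$ that the paper imposes to turn the Theorem~\ref{thm:prod_buff_norm} bound into $C/(\zeta\gamma B\lambda_{\min})$, but it requires re-running the conditional contraction machinery rather than quoting a ready-made lemma.

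One minor inaccuracy in your accounting: you attribute the residual $16(N-t_1)\gamma^2 RC_{\eta}\sigma^2 T^2 T^{-\alpha/2}$ to ``the gap between $\indt{0}{N-1}$ and $\indt{0}{t_1-1}\indt{t_1}{t_2-1}$,'' but $\tilde\cd^{0,N-1}\subseteq\tilde\cd^{0,t_1-1}\cap\tilde\cd^{t_1,t_2-1}$, so $\indt{0}{N-1}\le\indt{0}{t_1-1}\indt{t_1}{t_2-1}$ and your first inequality is lossless. In the paper this residual arises from the \emph{complement} of the high-probability operator-norm event, bounded via a crude almost-sure estimate of $\|y\|^2\|\sum M_{t_2}\|$ on $\tilde\cd^{0,N-1}$ together with $\mathbb{P}(\tilde\cd^{0,N-1,C})\le T^{-\alpha}$ and Cauchy--Schwarz. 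In your per-term conditional-expectation argument this low-probability bookkeeping does not actually arise (your bound is cleaner and the residual is superfluous), but since the claimed inequality is stated with the residual present, including it does no harm.
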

\begin{proof}
Note that 
$$\prod_{s=t_2-t_1}^1 \Htt{t_2-s}{0}{B-1}=\prod_{s=t_1}^{t_2-1} \Htt{s}{0}{B-1}$$

From theorem~\ref{thm:prod_buff_norm}, there is a universal constant $C$  such that with $\delta=\prbndH$ we have
\begin{\Ieee}{LLL}
\label{eq:avg_iter_geom_1}
\Pb{\norm{\sum_{t_2>t_1}\prod_{s=t_1}^{t_2-1} \Htt{s}{0}{B-1}}>C\left(d+\log \frac{N}{\delta}+\frac{1}{\zeta\gamma B\lambda_{\min}}\right)|\cdt^{t_1,N-1}}\leq \prbndH\Ieeen
\end{\Ieee}

Since $\Pb{\cdt^{t_1,N-1}}\leq\prbndH$ we obtain
\begin{\Ieee}{LLL}
\label{eq:avg_iter_geom_2}
\Pb{\norm{\sum_{t_2>t_1}\prod_{s=t_1}^{t_2-1} \Htt{s}{0}{B-1}}>C\left(d+\log \frac{N}{\delta}+\frac{1}{\zeta\gamma B\lambda_{\min}}\right)}\leq \prbnd\Ieeen
\end{\Ieee}

Choosing $\gamma$ such that
$$d+\log\frac{N}{\delta}\leq \frac{1}{\zeta\gamma B\lambda_{\min}}$$ 
we get
\begin{\Ieee}{LLL}
\label{eq:avg_iter_geom_3}
\Pb{\norm{\sum_{t_2>t_1}\prod_{s=t_1}^{t_2-1} \Htt{s}{0}{B-1}}>\frac{C}{\zeta\gamma B\lambda_{\min}}}\leq \prbnd\Ieeen
\end{\Ieee}

Thus conditioning on the event 
$$\norm{\sum_{t_2>t_1}\prod_{s=t_1}^{t_2-1} \Htt{s}{0}{B-1}} \leq \frac{C}{\zeta\gamma B\lambda_{\min}}$$
we obtain
\begin{\Ieee}{LLL}
\label{eq:avg_iter_geom_4}
\abs{\Ex{(\tilde{a}^{t_1-1,v}_B)^{\top}\sum_{t_2>t_1}\left(\prod_{s=t_2-t_1}^1 \Htt{t_2-s}{0}{B-1}\right)(\tilde{a}^{t_1-1,v}_B) \indt{0}{N-1}}}\\
\leq \Ex{\norm{\tilde{a}^{t_1-1,v}_B}^2\indt{0}{t-1}}\frac{C}{\zeta\gamma B\lambda_{\min}}+(N-t_1)4\gamma^2 R(4C_{\eta}\sigma^2)(Bt_1)^2\prbndsq\\
\leq \cV_{t_1-1}\frac{C}{\zeta\gamma B\lambda_{\min}}+16(N-t_1)\gamma^2 RC_{\eta}\sigma^2 T^2\prbndsq
\Ieeen
\end{\Ieee}

\end{proof}

Thus combining everything we have the following theorem
\begin{theorem}
\label{thm:avg_iter_var_main}
 Suppose $\gamma \gtrsim \frac{1}{T}$. Then
\begin{\Ieee}{LLL}
\label{eq:avg_iter_var_main_a}
\Ex{\norm{\ahttv}^2\indt{0}{N-1}} \leq C_1\frac{d\beta}{\zeta^2\lambda_{\min} B(N-t_0)}+\bar{P}\cdot
\left(\gamma^{7/2}T^2+\gamma^6 T^4\right)\Ieeen
\end{\Ieee}
$\bar{P} = \poly(R,B,\beta,1/\zeta,1/\lambda_{\min},\norm{\phi''},C_{\eta})$ and $C_1 > 0$ is some constant.
%Thus choosing $t_0\leq \theta N$ for $\theta<1$ and $\gamma=\Theta(1/T^{\nu})$ for $1>\nu>6/7$  we get 
%\begin{\Ieee}{LLL}
%\label{eq:avg_iter_var_main_b}
%\Ex{\norm{\ahttv}^2\ind{0}{N-1}} &\leq & C_1\frac{d\beta}{\zeta^2\lambda_{\min}(G) T}+\\
%&&\poly(R,B,\beta,1/\zeta,1/\lambda_{\min},\norm{\phi''})o(1/T)\Ieeen
%\end{\Ieee}

\end{theorem}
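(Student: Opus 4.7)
The starting point is the decomposition in equation~\eqref{eq:avg_iter_var_expand}, which splits $\norm{\ahttv}^2$ into a diagonal contribution $\sum_t \norm{\tilde{a}^{t-1,v}_B}^2$ and an off-diagonal contribution $\sum_{t_1\neq t_2}(\tilde{a}^{t_2-1,v}_B)^{\top}(\tilde{a}^{t_1-1,v}_B)$, each divided by $(N-t_0)^2$. The plan is to handle these two pieces separately and then combine, so that after the $1/(N-t_0)^2$ normalization the leading-order piece produces the $d\beta/(\zeta^2\lambda_{\min}B(N-t_0))$ rate and all the rest collapses into $\bar{P}\bigl(\gamma^{7/2}T^2+\gamma^6 T^4\bigr)$.

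For the diagonal piece, I would invoke Proposition~\ref{prop:var_main_recur} termwise: $\Ex{\norm{\tilde{a}^{t-1,v}_B}^2\indt{0}{t-1}}\le \cVt_{t-1}\le \tfrac{2\gamma d}{\zeta(1-\gamma R)}\beta + \text{l.o.t.}$ Summing over $N-t_0$ indices and dividing by $(N-t_0)^2$ yields a contribution of order $\gamma d\beta/(\zeta(N-t_0))$ plus lower-order polynomial slack. Under the hypothesis $\gamma\gtrsim 1/T$ and the working range of $\gamma$ imposed in Section~\ref{subsubsec:parameter_setting} (in particular $\gamma R\le 1/2$ and $\gamma B\lesssim \zeta^{-1}\lambda_{\min}^{-1}$), this diagonal contribution is dominated by the cross-term contribution described next, and its lower-order pieces are polynomial in $(R,B,\beta,1/\zeta,1/\lambda_{\min},\norm{\phi''})$ times powers of $\gamma$ and $T$ that fit within the $\bar{P}(\gamma^{7/2}T^2+\gamma^6 T^4)$ budget.

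For the off-diagonal piece, by symmetry I reduce to $t_2>t_1$. Using the one-step identity~\eqref{eq:avg_iter_var_aux1_2},
\begin{equation*}
(\tilde{a}^{t_2-1,v}_B)^{\top}(\tilde{a}^{t_1-1,v}_B)
=(\tilde{a}^{t_1-1,v}_B)^{\top}\!\Big(\!\prod_{s=t_2-t_1}^{1}\!\Htt{t_2-s}{0}{B-1}\Big)(\tilde{a}^{t_1-1,v}_B)
+\text{noise remainder},
\end{equation*}
so each cross term is split into a quadratic form in $\tilde{a}^{t_1-1,v}_B$ and a residual. The quadratic-form sum over $t_2>t_1$ is exactly the object controlled by Claim~\ref{claim:avg_iter_geom}, giving $\cVt_{t_1-1}\cdot C/(\zeta\gamma B\lambda_{\min})$ up to lower-order terms. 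Plugging in the leading bound on $\cVt_{t_1-1}$ from Proposition~\ref{prop:var_main_recur} produces a clean cancellation of $\gamma$ and yields the target rate $C_1 d\beta/(\zeta^2\lambda_{\min}B)$ per index $t_1$; summing over $t_1$ and dividing by $(N-t_0)^2$ gives the claimed $C_1 d\beta/(\zeta^2\lambda_{\min}B(N-t_0))$ leading term. The noise remainder across all pairs is exactly the quantity bounded by Claim~\ref{claim:avg_iter_var_aux1}, which is already a polynomial-in-parameters multiple of $\gamma^{7/2}T^2+\gamma^5 T^3+\gamma^6 T^4$.

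The main obstacle is bookkeeping: after summing over all pairs $(t_1,t_2)$ one picks up an extra factor of $N-t_0\le T$ on top of the per-pair bounds in Claim~\ref{claim:avg_iter_var_aux1} and on top of the $N-t_1$ factor already present in the remainder of Claim~\ref{claim:avg_iter_geom}. One must verify that after the overall $1/(N-t_0)^2$ normalization these extra $T$-factors leave the total error strictly within $\bar{P}(\gamma^{7/2}T^2+\gamma^6 T^4)$, absorbing the $1/T^{\alpha/2}$ terms using the large choice $\alpha=100$ from Section~\ref{subsubsec:parameter_setting}. Once this accounting is done, combining diagonal and off-diagonal estimates gives the stated bound.
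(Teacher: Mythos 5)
Your plan is correct and reconstructs exactly the argument the paper leaves implicit after Claim~\ref{claim:avg_iter_geom}: start from the expansion~\eqref{eq:avg_iter_var_expand}, bound the diagonal sum termwise via Proposition~\ref{prop:var_main_recur}, split each off-diagonal pair $t_2>t_1$ via the one-step identity into a quadratic form (handled by Claim~\ref{claim:avg_iter_geom}, whose $C/(\zeta\gamma B\lambda_{\min})$ factor cancels the $\gamma$ from the leading part of $\cVt_{t_1-1}$ to produce the $d\beta/(\zeta^2\lambda_{\min}B(N-t_0))$ rate) plus a resampling residual (Claim~\ref{claim:avg_iter_var_aux1}), and verify that all lower-order pieces fit under $\bar{P}(\gamma^{7/2}T^2+\gamma^6T^4)$. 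One point worth making explicit, which you flag only in passing: the hypothesis $\gamma\gtrsim 1/T$ is precisely what lets the intermediate $\gamma^5T^3$ term from Claim~\ref{claim:avg_iter_var_aux1} be absorbed into $\gamma^6T^4$ (since $\gamma T\gtrsim 1$ gives $\gamma^5T^3\lesssim\gamma^6T^4$), and likewise lets the $\cVt$ lower-order terms survive division by $\gamma$ in the quadratic-form contribution.
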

%\sk{actual bound is below}
%\begin{\Ieee}{LLL}
%\label{eq:avg_iter_var_main_a}
%\Ex{\norm{\ahtv-\a}^2\ind{0}{N-1}} \leq C_1\frac{d\beta}{\zeta^2\lambda_{\min} B(N-t_0)(1-\gamma R)}+\\
%\poly(R,B,\beta,1/\zeta,1/\lambda_{\min},\norm{\phi''})\frac{1}{(N-t_0)B}O\left(\gamma^3 T^2+\gamma^5 T^4+\gamma^3 T^2\right)+\\
%\poly(R,B,\beta,1/\zeta,1/\lambda_{\min},\norm{\phi''})O\left(\gamma^{7/2}T^2+\gamma^5 T^3+\gamma^6 T^4\right)\Ieeen
%\end{\Ieee}

\subsection{Bias of average iterate}
\label{subsec:avg_bias}
\begin{theorem}
\label{thm:avg_iter_bias_main}
There are constants $C,c_1,c_2$ such that :
\begin{align}
\Ex{\norm{\ahttb-\a}^2\indt{0}{N-1}}& \leq & C\norm{a_0-a}^2\left[e^{-c_2\zeta\gamma B\lambda_{\min}t_0}\min\left\{1,\frac{1}{(N-t_0)\zeta\gamma B\lambda_{\min}}\right\}\right]
\end{align}
\end{theorem}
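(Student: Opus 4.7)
The plan is to reduce the bound on the averaged-iterate bias to a convex combination of the per-buffer bias bounds supplied by Theorem~\ref{thm:bias_last_iter_main}, then handle the resulting geometric sum carefully to produce the $\min\{1,\,1/((N-t_0)\zeta\gamma B\lmin{G})\}$ dichotomy.

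The first step is to apply Jensen's (equivalently Cauchy--Schwarz) inequality to the definition~\eqref{eq:avg_iter_bias_def}:
\begin{equation*}
\|\ahttb-\a\|^2 \;\leq\; \frac{1}{N-t_0}\sum_{t=t_0+1}^{N}\|\tilde a^{t-1,b}_{B}-\a\|^2.
\end{equation*}
Taking expectations against the indicator $\indt{0}{N-1}$ and using $\cdt^{0,N-1}\subseteq \cdt^{0,t-1}$ for every $t\leq N$ (so that $\indt{0}{N-1}\leq \indt{0}{t-1}$), Theorem~\ref{thm:bias_last_iter_main} applied term-by-term yields
\begin{equation*}
\Ex{\|\ahttb-\a\|^2\indt{0}{N-1}} \;\leq\; \frac{\|a_0-\a\|^2}{N-t_0}\sum_{t=t_0+1}^{N}(1-c_0\zeta\gamma B\lmin{G})^{t}.
\end{equation*}

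The second step is a purely deterministic geometric-series estimate. Writing $\alpha := c_0\zeta\gamma B\lmin{G}\in(0,1)$, we have
\begin{equation*}
\sum_{t=t_0+1}^{N}(1-\alpha)^{t} \;=\; (1-\alpha)^{t_0+1}\sum_{k=0}^{N-t_0-1}(1-\alpha)^{k} \;\leq\; (1-\alpha)^{t_0+1}\,\min\!\left\{N-t_0,\,\tfrac{1}{\alpha}\right\},
\end{equation*}
where the $\min$ comes from using either the trivial bound $\sum_{k=0}^{M-1}(1-\alpha)^k \leq M$ or the geometric bound $\leq 1/\alpha$, whichever is smaller. Combining with $(1-\alpha)^{t_0+1}\leq e^{-\alpha t_0}$ gives
\begin{equation*}
\Ex{\|\ahttb-\a\|^2\indt{0}{N-1}} \;\leq\; \|a_0-\a\|^2\, e^{-c_0\zeta\gamma B\lmin{G}\,t_0}\,\min\!\left\{1,\,\frac{1}{(N-t_0)\,c_0\zeta\gamma B\lmin{G}}\right\},
\end{equation*}
which is the statement of Theorem~\ref{thm:avg_iter_bias_main} with $c_1=c_2=c_0$ and $C=1$ (absorbed into the constants).

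No step here is genuinely difficult: the heavy lifting (contraction of $\prod_s \Htt{s}{0}{B-1}$ at rate $1-c_0\zeta\gamma B\lmin{G}$, established via Theorem~\ref{thm:buffer_norm_upper_bound} and Claim~\ref{claim:contraction_imporbability}) has already been done inside Theorem~\ref{thm:bias_last_iter_main}. The only subtle point is to notice that one should \emph{not} pull the norm inside via $\|\sum \prod \Htt{s}{0}{B-1}\|$ bounds from Theorem~\ref{thm:prod_buff_norm} --- doing so would cost an extra factor and lose the $e^{-c_2\zeta\gamma B\lmin{G}t_0}$ burn-in gain. Instead, the convexity argument above keeps the per-term exponential decay intact and lets the geometric sum be evaluated directly, simultaneously producing both the $1/((N-t_0)\zeta\gamma B\lmin{G})$ decay (forward variance-reduction by averaging) and the $1$ trivial cap for short averaging windows.
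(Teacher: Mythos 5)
Your proof is correct and is essentially what the paper intends: the paper's proof of Theorem~\ref{thm:avg_iter_bias_main} is given only as ``The proof follows from an application of Theorem~\ref{thm:bias_last_iter_main},'' and your Jensen-plus-nested-indicator-plus-geometric-series argument is exactly the natural way to unpack that one-liner, with the constant bookkeeping (taking $c_2=c_0$ and $C=\max(1,1/c_0)$ to absorb the $c_0$ appearing inside the $\min$) working out as claimed.
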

The proof follows from an application of Theorem~\ref{thm:bias_last_iter_main}.

\subsection{Proof of Theorem~\ref{thm:sgd_rer_ub}}
\label{sec:sgd_rer_proof}
\begin{proof}
Let $\bar{P}$ denote the polynomial in Theorem~\ref{thm:avg_iter_var_main}.
Theorems~\ref{thm:avg_iter_bias_main} and ~\ref{thm:avg_iter_var_main}, imply for every row of the coupled iterate:
\begin{align}
\label{eq:sgd_rer_proof_1}
\mathbb{E}\left[\|\ahtto-\a\|^2\indt{0}{N-1}\right] &\leq 2\mathbb{E}\left[\|\ahttv\|^2\indt{0}{N-1}\right]+2\mathbb{E}\left[\|\ahttb-\a\|^2\indt{0}{N-1}\right] \nonumber\\
&\leq C_2 \frac{d\beta}{\zeta^2\lmin{G} B(N-t_0)} + \bar{P}\cdot \left(\gamma^{7/2}T^2+\gamma^6 T^4\right) \nonumber \\&+  C\norm{a_0-a}^2\left[e^{-c_2\zeta\gamma B\lambda_{\min}t_0}\min\left\{1,\frac{1}{(N-t_0)\zeta\gamma B\lambda_{\min}}\right\}\right]
\end{align} 

Thus for the actual process we can use the following decomposition
\begin{\Ieee}{LLL}
\label{eq:sgd_rer_proof_2}
\Ex{\norm{\ahto-\a}^2\ind{0}{N-1}}&\leq & \Ex{\norm{\ahto-\a}^2\indh{0}{N-1}}+\\
&&\Ex{\norm{\ahto-\a}^2\ind{0}{N-1}\indtc{0}{N-1}}\\
&\leq & \Ex{\norm{\ahto-\a}^2\indh{0}{N-1}} +C\gamma^2 T^2 R C_{\eta}\sigma^2\prbndsq\\
&& +2\norm{a_0-\a}^2\prbnd
\end{\Ieee}
where we used the fact on the event $\cd^{0,N-1}$
\begin{\Ieee}{LLL}
\label{eq:sgd_rer_proof_3}
\norm{\ahto-\a}^2 \leq \frac{1}{N-t_0}\sum_{t=t_0+1}^{N}\left(2\norm{a_0-\a}^2+2(Bt)(4\gamma^2 R \sum_{r=1}^t\sum_{j=0}^{B-1}|\Nt{t-r}{-j}|^2)\right)\Ieeen
\end{\Ieee}
and then used Cauchy-Schwarz inequality for the expectation over $\cdt^{0,N-1,C}$. 

Now using lemma~\ref{lem:coupled_iterate_replacement} we get
\begin{\Ieee}{LLL}
\label{eq:sgd_rer_proof_4}
\Ex{\norm{\ahto-\a}^2\indh{0}{N-1}}\leq \Ex{\norm{\ahtto-\a}^2\indh{0}{N-1}}+C\gamma^2 R^2 T^2 \prbnd\Ieeen
\end{\Ieee}
since we are choosing $u$ such that $\rho^u\leq \prbnd$. Using $\cdh^{0,N-1}\subset \cdt{0,N-1}$ we get
\begin{\Ieee}{LLL}
\label{eq:sgd_rer_proof_5}
\Ex{\norm{\ahto-\a}^2\ind{0}{N-1}}\leq \Ex{\norm{\ahtto-\a}^2\indt{0}{N-1}} +C\gamma^2 R \sigma^2 \frac{1}{T^{\alpha/2-2}}\Ieeen
\end{\Ieee}
where we absorbed all terms of order $\prbnd$ (including those depending on $\norm{a_0-\a}$) to the last term in the above display.

Thus
\begin{\Ieee}{LLL}
\label{eq:sgd_rer_proof_6}
\Ex{\|\ahtto-\a\|^2\indt{0}{N-1}} &\leq & C_1 \frac{d\beta}{\zeta^2\lmin{G} B(N-t_0)} + \bar{P}\cdot \left(\gamma^{7/2}T^2+\gamma^6 T^4\right) \nonumber \\
&&+  C_2\norm{a_0-a}^2\left[\frac{e^{-c_2\zeta\gamma B\lambda_{\min}t_0}}{(N-t_0)\zeta\gamma B\lambda_{\min}}\right]\\
&&+C_3\gamma^2 R \sigma^2 \frac{1}{T^{\alpha/2-2}}\Ieeen
\end{\Ieee}

Summing over all the rows we get a bound on the Frobenius norm. Lastly if the event $\cd^{0,N-1}$ does not occur, the $\hat A_{t_0,N}$ is the zero matrix and hence 
\begin{\Ieee}{LLL}
\label{eq:sgd_rer_proof_7}
\Ex{\|\hat{A}_{t_0,N}-\A\|_{\mathsf{F}}^2\indc{0}{N-1}}\leq \norm{\A}^2\prbnd\Ieeen. 
\end{\Ieee}

Therefore:

\begin{\Ieee}{LLL}
\label{eq:sgd_rer_proof_8}
\Ex{\|\hat{A}_{t_0,N}-\A\|_{\mathsf{F}}^2}&\leq & \bar{C} \frac{d^2\beta}{\zeta^2\lmin{G} B(N-t_0)} + \bar{P}\cdot d\left(\gamma^{7/2}T^2+\gamma^6 T^4\right)\\
&& + \bar{C}\norm{A_0-\A}^2_{\mathsf{F}}\left[\frac{e^{-c_2\zeta\gamma B\lambda_{\min}t_0}}{(N-t_0)\zeta\gamma B\lambda_{\min}}\right] \\
&&+ \bar{C}\gamma^2 R \sigma^2 d \frac{1}{T^{\alpha/2-2}}\Ieeen
\end{\Ieee}

\end{proof}

\section{Concentration Under Stationary Measure}

\label{sec:concentration}
In this section, we will consider the process $\glvar(\A,\mu,\phi)$ and the concentration of measure under its stationary distribution. In what follows, we will use the fact that $\phi$ is $1$-Lipschitz as in the definition of $\glvar$, even when we don't explicitly use Assumption~\ref{assump:3}.
\begin{proposition}\label{prop:stationarity_existence}
Under Assumption~\ref{assump:4} with $\rho < 1$, the process is exponentially Ergodic and has a stationary distribution $\pi$. Suppose $X \sim \pi$ then $\mathbb{E}\|X\|^4 < \infty$.
\end{proposition}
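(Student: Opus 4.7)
The plan is to combine a synchronous coupling argument with Banach's fixed-point theorem for existence of $\pi$, together with a telescoping decomposition for the moment bound. Let $P$ denote the Markov transition kernel induced by~\eqref{eq:evolution_equation}, and let $\mathcal{P}_p(\mathbb{R}^d)$ be the space of probability measures on $\mathbb{R}^d$ with finite $p$-th moment, equipped with the Wasserstein-$p$ distance $W_p$. For any $\mu_1, \mu_2 \in \mathcal{P}_p$, I would take an optimal $W_p$-coupling $(X_0, X_0')$ and evolve both copies under the \emph{same} noise realizations $\eta_0, \eta_1, \ldots$. Assumption~\ref{assump:4} applied pathwise gives $\|X_T - X_T'\|_2 \leq C_\rho \rho^{T-1}\|X_0 - X_0'\|_2$ almost surely, and taking $L^p$ norms on both sides yields
\begin{equation*}
W_p(\mu_1 P^T, \mu_2 P^T) \leq C_\rho \rho^{T-1} W_p(\mu_1, \mu_2).
\end{equation*}
Choosing $T_0$ with $C_\rho \rho^{T_0-1} < 1$ makes $P^{T_0}$ a strict contraction on the complete metric space $(\mathcal{P}_p, W_p)$, so Banach's fixed-point theorem produces a unique fixed point $\pi$. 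Since $P^{T_0}(P\pi) = P(P^{T_0}\pi) = P\pi$, uniqueness forces $P\pi = \pi$, and the contraction bound also gives exponential convergence $W_p(\mu P^t, \pi) \lesssim \rho^t$ for every $\mu \in \mathcal{P}_p$, which is the claimed exponential ergodicity.

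For the fourth-moment bound under $\pi$, the plan is first to show $\sup_t \mathbb{E}\|Y_t\|_2^4 < \infty$ for the process $Y_t$ started at $Y_0 = 0$, and then transfer this bound to $\pi$ by a limiting argument. To uniformly control $Y_t$, I would introduce auxiliary trajectories $Y_t^{(s)}$ started at $0$ at time $s$ and driven by the \emph{actual} noise $\eta_s, \eta_{s+1}, \ldots$, so that $Y_t^{(0)} = Y_t$ and $Y_t^{(t)} = 0$. Applying Assumption~\ref{assump:4} with initial conditions $Y_{s+1}^{(s)} = \eta_s$ versus $Y_{s+1}^{(s+1)} = 0$, both fed the common noise $\eta_{s+1}, \ldots, \eta_{t-1}$, gives $\|Y_t^{(s)} - Y_t^{(s+1)}\|_2 \leq C_\rho \rho^{t-s-2}\|\eta_s\|_2$ for $t \geq s+2$, while the boundary case $t = s+1$ contributes $\|\eta_s\|_2$ directly. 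Telescoping $Y_t = \sum_{s=0}^{t-1}(Y_t^{(s)} - Y_t^{(s+1)})$ then produces
\begin{equation*}
\|Y_t\|_2 \leq c_\rho \sum_{s=0}^{t-1} \rho^{t-1-s}\|\eta_s\|_2
\end{equation*}
for a constant $c_\rho$ depending only on $(C_\rho, \rho)$. Since $\mathbb{E}\|\eta_s\|_2^4 = M_4$, Minkowski's inequality then yields $(\mathbb{E}\|Y_t\|_2^4)^{1/4} \leq c_\rho M_4^{1/4}/(1-\rho)$ uniformly in $t$.

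To conclude, note that $\mathrm{Law}(Y_t) = \delta_0 P^t \to \pi$ in $W_2$, hence weakly, by the contraction above. Applying Fatou's lemma to $x \mapsto \|x\|_2^4$ (or, more carefully, to the truncations $\min(\|x\|_2^4, R)$ followed by $R \to \infty$ via monotone convergence on the left-hand side) gives $\mathbb{E}_\pi \|X\|_2^4 \leq \liminf_t \mathbb{E}\|Y_t\|_2^4 < \infty$, as required. The main bookkeeping obstacle I expect is the telescoping step, where Assumption~\ref{assump:4} does not cleanly apply at the boundary $s=t-1$ (the rate $\rho^{T-1}$ degenerates) and the endpoint must be accounted for separately; beyond this, the argument is a routine combination of synchronous coupling, Banach's theorem, and weak lower-semicontinuity of moments.
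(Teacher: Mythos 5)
Your proof is correct and follows essentially the same route the paper intends: the paper's own proof is a one-line pointer to a Wasserstein-contraction argument, and your synchronous-coupling plus Banach fixed-point construction is exactly that argument spelled out. Your telescoping decomposition $\|Y_t\| \lesssim \sum_s \rho^{t-1-s}\|\eta_s\|$ is identical to the paper's Lemma~\ref{lem:unstable_concentration} (which the paper then combines with Fatou's lemma in Theorem~\ref{thm:heavy_tail_concentration} to transfer the moment bound to $\pi$), and your handling of the boundary term $s=t-1$ and the Fatou/truncation transfer to the stationary law is careful and sound.
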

 The result follows from a technique similar to the one used for Proposition \cite{dieuleveut2020bridging} by considering the process in the space of measures endowed with the Wasserstein metric. 

\subsection{Sub-Gaussian Case: Stable Systems}
We will first consider the process with $X_0 = 0$ and prove concentration for $X_t$ for arbitrary $t$, and then use distributional convergence results to prove the concentration results at stationarity. First, we prove some preparatory lemmas.

\begin{lemma}\label{lem:tail_integration}
Suppose $Y$ is a $\nu^2$ sub-Gaussian random variable with zero mean. Then, for any $\lambda \leq \frac{1}{4\nu^2}$, we have:
$$\mathbb{E}\exp(\lambda Y^2) \leq 1 + 8\lambda\nu^2\,.$$
\end{lemma}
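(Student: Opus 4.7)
The plan is to reduce the moment generating function of $Y^2$ to an elementary integral involving the sub-Gaussian tail bound. Since $Y$ is $\nu^2$ sub-Gaussian with mean zero, a standard Chernoff argument gives $\mathbb{P}(|Y| \geq t) \leq 2\exp(-t^2/(2\nu^2))$, and equivalently $\mathbb{P}(Y^2 \geq s) \leq 2\exp(-s/(2\nu^2))$ for all $s \geq 0$.

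Next, I would express the moment generating function of $Y^2$ via the layer-cake / tail-integration formula: writing $e^{\lambda Y^2} - 1 = \int_0^{Y^2} \lambda e^{\lambda s}\,ds$ and taking expectations (Fubini) yields
\begin{equation*}
\mathbb{E}\exp(\lambda Y^2) \;=\; 1 + \lambda \int_0^\infty e^{\lambda s}\, \mathbb{P}(Y^2 \geq s)\, ds.
\end{equation*}
Plugging in the sub-Gaussian tail bound and integrating the pure exponential gives
\begin{equation*}
\mathbb{E}\exp(\lambda Y^2) \;\leq\; 1 + 2\lambda \int_0^\infty \exp\!\left(\bigl(\lambda - \tfrac{1}{2\nu^2}\bigr) s\right) ds \;=\; 1 + \frac{2\lambda}{\tfrac{1}{2\nu^2} - \lambda},
\end{equation*}
provided $\lambda < 1/(2\nu^2)$ so the integral converges.

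Finally, the constraint $\lambda \leq 1/(4\nu^2)$ forces $\tfrac{1}{2\nu^2} - \lambda \geq \tfrac{1}{4\nu^2}$, so the denominator is at least $1/(4\nu^2)$ and the right-hand side is bounded by $1 + 8\lambda \nu^2$, as claimed. There is no real obstacle here: the only subtle point is choosing the cutoff $\lambda \leq 1/(4\nu^2)$ (rather than the naive $\lambda < 1/(2\nu^2)$) in order to keep $\tfrac{1}{2\nu^2}-\lambda$ bounded away from zero and thereby obtain a clean linear-in-$\lambda$ upper bound of the form $1 + C \lambda \nu^2$ with an explicit constant $C = 8$.
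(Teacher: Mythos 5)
Your proposal is correct and takes essentially the same approach as the paper: both are tail-integration arguments that reduce to the same elementary integral and produce the identical intermediate bound $\frac{4\lambda\nu^2}{1-2\lambda\nu^2}$, after which $\lambda \leq \tfrac{1}{4\nu^2}$ gives $1+8\lambda\nu^2$. The only cosmetic difference is that you integrate the tail of $Y^2$ against $\lambda e^{\lambda s}$ (Fubini on $e^{\lambda Y^2}-1=\int_0^{Y^2}\lambda e^{\lambda s}\,ds$), whereas the paper integrates the tail of $Z=e^{\lambda Y^2}$ directly; these are related by the substitution $\gamma = e^{\lambda s}$.
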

We refer Section~\ref{sec:technical_proofs} for the proof. Now, consider the random variable $Z_{t+1} = \|X_{t+1}\|^{2} - \sum_{s=0}^{t}\rho^{t-s}\|\eta_s\|^2$. By assumption, we have $X_0 = 0$. Therefore we must have $Z_{0} = 0$. 
We have the following lemma:
\begin{lemma}\label{lem:super_martingale_type_bound}
Suppose that Assumptions~\ref{assump:7} and ~\ref{assump:6} hold and $\rho$ be as given in Assumption~\ref{assump:6}. For any $\lambda$ such that $0 \leq \lambda \leq \frac{1-\rho}{2\rho C_{\eta}\sigma^2}$, we have:
$$\mathbb{E}\exp(\lambda Z_{t+1}) \leq 1\,.$$
\end{lemma}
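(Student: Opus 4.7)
The key algebraic observation is that $Z_{t+1}$ satisfies a one-step affine recursion in $Z_t$. Writing out telescoping,
\begin{equation*}
Z_{t+1} \;=\; \rho Z_t \;+\; W_{t+1}, \qquad W_{t+1} := \|X_{t+1}\|^2 - \rho\|X_t\|^2 - \|\eta_t\|^2.
\end{equation*}
Using $\phi(0)=0$, $1$-Lipschitzness of $\phi$, and Assumption~\ref{assump:6}, we have $\|\phi(A^{*}X_t)\| \leq \rho\|X_t\|$. Expanding $\|X_{t+1}\|^2 = \|\phi(A^{*}X_t)\|^2 + 2\langle \phi(A^{*}X_t),\eta_t\rangle + \|\eta_t\|^2$ and subtracting $\rho\|X_t\|^2 + \|\eta_t\|^2$ yields
\begin{equation*}
W_{t+1} \;\leq\; -\rho(1-\rho)\|X_t\|^2 \;+\; 2\langle \phi(A^{*}X_t),\eta_t\rangle.
\end{equation*}
The cancellation of $\|\eta_t\|^2$ is the point of the definition of $Z_t$; otherwise one would need exponential moments of $\|\eta_t\|^2$, which Assumption~\ref{assump:7} does not directly give.

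Next I would bound the conditional MGF. Let $\mathcal{F}_t = \sigma(X_0,\eta_0,\ldots,\eta_{t-1})$. Given $\mathcal{F}_t$, the scalar $2\langle \phi(A^{*}X_t),\eta_t\rangle$ is sub-Gaussian with variance proxy $4\|\phi(A^{*}X_t)\|^2 C_{\eta}\sigma^2 \leq 4\rho^2\|X_t\|^2 C_{\eta}\sigma^2$. Hence for any $\lambda\geq 0$,
\begin{equation*}
\mathbb{E}\bigl[\exp(\lambda W_{t+1})\,\big|\,\mathcal{F}_t\bigr] \;\leq\; \exp\!\Bigl(\lambda\rho\|X_t\|^2 \bigl[2\lambda\rho C_{\eta}\sigma^2 - (1-\rho)\bigr]\Bigr).
\end{equation*}
For $\lambda \leq \tfrac{1-\rho}{2\rho C_{\eta}\sigma^2}$ the bracket is $\leq 0$, so $\mathbb{E}[\exp(\lambda W_{t+1})\mid \mathcal{F}_t] \leq 1$.

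Finally I would induct on $t$. The base case is immediate: $X_0 = 0$ and $\phi(0)=0$ give $X_1 = \eta_0$, so $Z_1 = 0$ and $\mathbb{E}\exp(\lambda Z_1) = 1$. For the inductive step, the recursion $Z_{t+1} = \rho Z_t + W_{t+1}$, together with the fact that $Z_t$ is $\mathcal{F}_t$-measurable, gives
\begin{equation*}
\mathbb{E}\exp(\lambda Z_{t+1}) \;=\; \mathbb{E}\!\left[\exp(\lambda\rho Z_t)\cdot \mathbb{E}[\exp(\lambda W_{t+1})\mid \mathcal{F}_t]\right] \;\leq\; \mathbb{E}\exp(\lambda\rho Z_t).
\end{equation*}
Since $\lambda\rho \leq \lambda \leq \tfrac{1-\rho}{2\rho C_{\eta}\sigma^2}$, the inductive hypothesis applied with the smaller parameter $\lambda\rho$ closes the bound. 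The mild subtlety is to phrase the induction hypothesis as "$\mathbb{E}\exp(\lambda' Z_t) \leq 1$ for all $0\leq\lambda'\leq \tfrac{1-\rho}{2\rho C_{\eta}\sigma^2}$" rather than for a single $\lambda$, so that the shrinkage factor $\rho$ on the right-hand side is absorbed cleanly; no other step is expected to be delicate.
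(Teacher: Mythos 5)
Your proposal is correct and follows essentially the same route as the paper: both derive the pointwise bound $\|X_{t+1}\|^2 - \|\eta_t\|^2 \leq \rho^2\|X_t\|^2 + 2\langle\phi(\A X_t),\eta_t\rangle$, bound the cross term via sub-Gaussianity of $\eta_t$, and iterate the resulting one-step recursion $\mathbb{E}[\exp(\lambda Z_{t+1})\mid\mathcal{F}]\le\exp(\lambda\rho Z_t)$ down to the trivial base case. Naming the increment $W_{t+1}$ and quantifying the inductive hypothesis over all $\lambda'\le\tfrac{1-\rho}{2\rho C_\eta\sigma^2}$ is a cleaner bookkeeping than the paper's "iterate and use $Z_0=0$" shorthand, but the mathematical content is identical.
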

\begin{proof}
First by mean value theorem, we must have:
 $\phi(\A X_t) = \phi(\A X_t) - \phi(0) = D \A X_t $ for some diagonal matrix $D$ with entries lying in $[0,1]$. Therefore, $\|\phi(\A X_t) \| \leq \|D\|\|\A\|\|X_t\| \leq \rho \|X_t\|$. Using this in Equation~\eqref{eq:master_equation}, we conclude:
\begin{align}
\|X_{t+1}\|^2 -\|\eta_t\|^2 &= \|\phi(\A X_t)\|^2 + 2\langle \eta_t, \phi(\A X_t) \rangle \nonumber \\
&\leq \rho^{2}\|X_t\|^2 +  2\langle \eta_t, \phi(\A X_t)\rangle \label{eq:norm_contraction_pointwise}
\end{align}

Let $\mathcal{F}_s = \sigma(X_0,\eta_0,\dots,\eta_s)$. It is clear that $X_{s} \in \mathcal{F}_{s-1}$. Using Equation~\eqref{eq:norm_contraction_pointwise}, we conclude:

\begin{align}
\mathbb{E}\left[\exp(\lambda Z_{t+1})\bigr|\mathcal{F}_{t-1}\right] &= \mathbb{E}\left[\exp\left(\lambda \|X_{t+1}\|^2 - \lambda \|\eta_t\|^2\right)\bigr|\mathcal{F}_{t-1}\right]\exp\left(-\lambda \sum_{s=0}^{t-1}\rho^{t-s}\|\eta_s\|^2\right) \nonumber \\
&\leq \mathbb{E}\left[\exp\left(\lambda \rho^2\|X_{t}\|^2 + 2\lambda\langle \eta_t, \phi(\A X_t)\rangle\right)\bigr|\mathcal{F}_{t-1}\right]\exp\left(-\lambda \sum_{s=0}^{t-1}\rho^{t-s}\|\eta_s\|^2\right) \nonumber \\
&\leq \exp\left(\lambda \rho^2\|X_{t}\|^2 + 2\lambda^2C_{\eta}\sigma^2\|\phi(\A X_t)\|^2\right)\exp\left(-\lambda \sum_{s=0}^{t-1}\rho^{t-s}\|\eta_s\|^2\right) \nonumber \\
&\leq \exp\left(\lambda \rho^2\|X_{t}\|^2 + 2\lambda^2\rho^2C_{\eta}\sigma^2\|X_t\|^2\right)\exp\left(-\lambda \sum_{s=0}^{t-1}\rho^{t-s}\|\eta_s\|^2\right) \nonumber \\
&\leq  \exp\left(\lambda \rho\|X_{t}\|^2 \right)\exp\left(-\lambda \sum_{s=0}^{t-1}\rho^{t-s}\|\eta_s\|^2\right) \nonumber \\
&= \exp\left(\lambda\rho Z_{t}\right) \label{eq:super_martingale_type}
\end{align}
In the fourth step, we have used the fact that $\|\phi(\A X_t)\| \leq \rho
\|X_t\|$. In the fifth step we have used the assumption that $\lambda \leq \frac{1-\rho}{2\rho C_{\eta}\sigma^2}$ to show $\lambda \rho^2 + 2\lambda^2\rho^2C_{\eta}\sigma^2 \leq \rho \lambda$. In the last step, we have used the definition of $Z_t$.  We iterate over Equation~\eqref{eq:super_martingale_type} and use the fact that $Z_0 = 0$ almost surely to conclude that whenever $\lambda  \leq \frac{1-\rho}{2\rho C_{\eta}\sigma^2}$, we must have:

$$\mathbb{E}\exp(\lambda Z_{t+1}) \leq \mathbb{E}\exp(\lambda Z_0) = 1\,.$$

\end{proof}

Now, let $Y_{t+1} = \sum_{s=0}^{t}\rho^{t-s}\|\eta_t\|^2$. We will now use Lemma~\ref{lem:tail_integration} to bound $\mathbb{E}\exp(\lambda Y_{t+1})$ for $\lambda > 0$ small enough.

\begin{lemma}\label{lem:geometric_noise_concentration}
Suppose that Assumptions~\ref{assump:7} and ~\ref{assump:6} hold and $\rho$ be as given in Assumption~\ref{assump:6}.
For any $\lambda$ such that $ 0 \leq \lambda \leq \frac{1}{4dC_{\eta}\sigma^2}$, we have:
$$\mathbb{E}\exp(\lambda Y_{t+1}) \leq \exp\left(8\tfrac{\lambda d C_{\eta}\sigma^2}{1-\rho}\right) $$
\end{lemma}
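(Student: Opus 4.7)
The plan is to decompose $Y_{t+1}$ using the independence of $\{\eta_s\}_s$, then control the MGF of each individual term $\|\eta_s\|^2$ by Assumption~\ref{assump:7} and Lemma~\ref{lem:tail_integration}. Concretely, since the $\eta_s$ are i.i.d. and $Y_{t+1} = \sum_{s=0}^{t}\rho^{t-s}\|\eta_s\|^2$, we have
\[
\mathbb{E}\exp(\lambda Y_{t+1}) = \prod_{s=0}^{t}\mathbb{E}\exp\bigl(\lambda \rho^{t-s}\|\eta_s\|^2\bigr).
\]

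Next, I would bound $\mathbb{E}\exp(\mu\|\eta\|^2)$ for a single noise vector $\eta$ and $\mu \geq 0$ small. Writing $\|\eta\|^2 = \sum_{i=1}^d \langle \eta, e_i\rangle^2$, the coordinates of $\eta$ are not assumed independent, so I would instead use Jensen's inequality applied to the convex function $\exp(\cdot)$:
\[
\exp\bigl(\mu\|\eta\|^2\bigr) = \exp\Bigl(\tfrac{1}{d}\sum_{i=1}^{d} d\mu\langle \eta, e_i\rangle^2\Bigr) \leq \tfrac{1}{d}\sum_{i=1}^{d}\exp\bigl(d\mu\langle \eta, e_i\rangle^2\bigr).
\]
Each $\langle \eta, e_i\rangle$ is mean-zero and $C_\eta\sigma^2$-sub-Gaussian by Assumption~\ref{assump:7}. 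Applying Lemma~\ref{lem:tail_integration} with $\nu^2 = C_\eta\sigma^2$ (valid as long as $d\mu \leq \tfrac{1}{4 C_\eta \sigma^2}$) gives $\mathbb{E}\exp(d\mu\langle\eta,e_i\rangle^2) \leq 1 + 8 d\mu C_\eta\sigma^2$, and averaging over $i$ yields $\mathbb{E}\exp(\mu\|\eta\|^2) \leq 1 + 8 d\mu C_\eta\sigma^2$.

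Applying this with $\mu = \lambda \rho^{t-s}$ (which satisfies $\mu \leq \lambda \leq \tfrac{1}{4d C_\eta\sigma^2}$ since $\rho \leq 1$) and using $1 + x \leq e^x$:
\[
\mathbb{E}\exp(\lambda Y_{t+1}) \leq \prod_{s=0}^{t}\bigl(1 + 8 d \lambda \rho^{t-s} C_\eta\sigma^2\bigr) \leq \exp\Bigl(8 d \lambda C_\eta\sigma^2 \sum_{s=0}^{t}\rho^{t-s}\Bigr) \leq \exp\Bigl(\tfrac{8 d \lambda C_\eta\sigma^2}{1-\rho}\Bigr),
\]
bounding the geometric sum by $\tfrac{1}{1-\rho}$. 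No step is a real obstacle; the only subtlety is that one must not assume coordinate independence of $\eta$, which is why Jensen (with the factor $d$ absorbed into the constraint on $\lambda$) rather than factorization is used.
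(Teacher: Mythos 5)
Your proof is correct and takes essentially the same route as the paper: factor the MGF of $Y_{t+1}$ over $s$ using independence of the noise, reduce the $d$-dimensional MGF bound to a one-dimensional one via Jensen's inequality with the factor $d$ absorbed into the constraint on $\lambda$, invoke Lemma~\ref{lem:tail_integration}, and finish with $1+x\leq e^x$ and the geometric series. No material difference.
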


\begin{proof}
Let $N(\beta) := \mathbb{E}\exp(\beta \|\eta_s\|^2)$
By independence of the noise sequence, we have: 
\begin{equation}\label{eq:noise_independence_exp}
\mathbb{E}\exp(\lambda Y_{t+1}) = \prod_{s = 0}^{t}N(\rho^{t-s}\lambda)
\end{equation}
For $\beta \leq \frac{1}{4dC_{\eta}\sigma^2}$
\begin{align}
N(\beta) &= \mathbb{E}\exp(\beta \|\eta_s\|^2) = \mathbb{E}\exp(\beta\sum_{i=1}^{d} \langle e_i,\eta_s\rangle^2) \nonumber\\
&\leq \frac{1}{d} \sum_{i=1}^{d} \mathbb{E}\exp(\beta d \langle e_i,\eta_s\rangle^2) \leq 1 + 8\beta d C_{\eta} \sigma^2
\end{align}
In the last step, we have used Jensen's inequality for the function $x \to \exp(x)$ and then invoked the Lemma~\ref{lem:tail_integration}. Plugging this into Equation~\eqref{eq:noise_independence_exp}, we conclude:

\begin{align}
\mathbb{E}\exp(\lambda Y_{t+1}) &\leq \prod_{s = 0}^{t}\left(  1+ 8\lambda d \rho^{t-s} d C_{\eta} \sigma^2\right) \leq \exp(\sum_{s=0}^{t} 8\lambda d\rho^{t-s} C_{\eta}\sigma^2) \nonumber \\
&\leq \exp\left(8\tfrac{\lambda d C_{\eta}\sigma^2}{1-\rho}\right) 
\end{align}

\end{proof}

Based on Lemmas~\ref{lem:geometric_noise_concentration} and~\ref{lem:super_martingale_type_bound}, we will now state the following concentration inequality:

\begin{theorem}\label{thm:process_concentration}
Suppose Assumptions~\ref{assump:7} and ~\ref{assump:6} hold and $\rho$ be as given in Assumption~\ref{assump:6}. Let $X$ be distributed according $\pi$, the stationary distribution of $\glvar(\A,\mu,\phi)$. Then, for any $0<\lambda  \leq \lambda^{*}:= \min(\tfrac{1}{8dC_{\eta} \sigma^2}, \tfrac{1-\rho}{4\rho C_{\eta}\sigma^2})$, we have:
$$\mathbb{E}\exp\left(\lambda \|X\|^2\right) \leq \exp(\tfrac{8\lambda d C_{\eta} \sigma^2}{1-\rho}) \,.$$
We conclude:
\begin{enumerate}
\item Applying Chernoff bound with $\lambda = \lambda^{*}$, we conclude:
$$\mathbb{P}\left(\|X\|^2 > \frac{8dC_{\eta}\sigma^2}{1-\rho} + \beta\right) \leq \exp(-\lambda^{*}\beta) \,.$$
\item $$\mathbb{E}\|X\|^2 \leq \frac{8 d C_{\eta}\sigma^2}{1-\rho}$$
\end{enumerate}
The conclusions still hold when $X$ is replaced by $X_t$ for any $t \in \mathbb{N}$ for the process started at $0$. 
\end{theorem}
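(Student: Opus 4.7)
The plan is to combine the two preceding lemmas via the decomposition
\[
\|X_{t+1}\|^2 \;=\; Z_{t+1} + Y_{t+1}, \qquad Y_{t+1} = \sum_{s=0}^{t}\rho^{t-s}\|\eta_s\|^2,
\]
where $Z_{t+1}$ is controlled by Lemma~\ref{lem:super_martingale_type_bound} and $Y_{t+1}$ by Lemma~\ref{lem:geometric_noise_concentration}. First I would apply Cauchy--Schwarz to write
\[
\mathbb{E}\exp(\lambda \|X_{t+1}\|^2) \;\leq\; \bigl(\mathbb{E}\exp(2\lambda Z_{t+1})\bigr)^{1/2}\bigl(\mathbb{E}\exp(2\lambda Y_{t+1})\bigr)^{1/2}.
\]
The threshold $\lambda^{*}=\min\!\bigl(\tfrac{1}{8dC_\eta\sigma^2},\,\tfrac{1-\rho}{4\rho C_\eta\sigma^2}\bigr)$ is chosen precisely so that $2\lambda \leq \min\!\bigl(\tfrac{1}{4dC_\eta\sigma^2},\,\tfrac{1-\rho}{2\rho C_\eta\sigma^2}\bigr)$, i.e.\ both lemmas apply simultaneously. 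Plugging in the bounds $\mathbb{E}\exp(2\lambda Z_{t+1})\leq 1$ and $\mathbb{E}\exp(2\lambda Y_{t+1})\leq \exp\!\bigl(16\lambda dC_\eta\sigma^2/(1-\rho)\bigr)$ immediately yields
\[
\mathbb{E}\exp(\lambda\|X_{t+1}\|^2) \;\leq\; \exp\!\left(\tfrac{8\lambda dC_\eta\sigma^2}{1-\rho}\right) \qquad\text{for all } t\geq 0.
\]

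To transfer this to $X\sim\pi$, I would invoke Proposition~\ref{prop:stationarity_existence}, which gives exponential ergodicity, so in particular $X_t$ converges weakly to $X$ as $t\to\infty$ (under the initialization $X_0=0$). Since the bound above is uniform in $t$, by the Portmanteau theorem applied to the bounded continuous functional $x\mapsto \min(\exp(\lambda\|x\|^2),M)$ followed by monotone convergence in $M$,
\[
\mathbb{E}\exp(\lambda\|X\|^2) \;\leq\; \liminf_{t\to\infty}\mathbb{E}\exp(\lambda\|X_t\|^2) \;\leq\; \exp\!\left(\tfrac{8\lambda dC_\eta\sigma^2}{1-\rho}\right),
\]
which is the claimed MGF bound.

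The two corollaries are then routine: (1) Markov's inequality applied to $\exp(\lambda^{*}\|X\|^2)$ gives $\mathbb{P}(\|X\|^2 > 8dC_\eta\sigma^2/(1-\rho) + \beta) \leq \exp(-\lambda^{*}\beta)$; (2) using $e^{\lambda u}\geq 1+\lambda u$ on the left and expanding the right as $\lambda\to 0^+$ yields $\mathbb{E}\|X\|^2 \leq 8dC_\eta\sigma^2/(1-\rho)$. The same derivation applies verbatim to $X_t$ for any finite $t$ (the $X_0=0$ case), since the MGF bound was established for those first.

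The main obstacle is the measure-theoretic step of pushing the MGF bound through the weak limit, since $\exp(\lambda\|\cdot\|^2)$ is unbounded; but the uniform-in-$t$ control provided by the two lemmas together with Portmanteau/monotone convergence handles this cleanly. Everything else is bookkeeping of the constants to ensure that $\lambda^{*}$ aligns with the admissible ranges in Lemmas~\ref{lem:super_martingale_type_bound} and~\ref{lem:geometric_noise_concentration} after the factor of $2$ from Cauchy--Schwarz.
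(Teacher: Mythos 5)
Your proposal is correct and follows essentially the same route as the paper: decompose $\|X_{t+1}\|^2 = Z_{t+1}+Y_{t+1}$, apply Cauchy--Schwarz to split the MGF, invoke the two preparatory lemmas, and then push the uniform-in-$t$ bound through the weak limit to reach the stationary law. The only cosmetic differences are that the paper uses the Skorokhod representation theorem plus Fatou's lemma where you use Portmanteau plus monotone convergence (both standard and equivalent here), and the paper gets item (2) via Jensen's inequality $\exp(\lambda\mathbb{E}\|X\|^2)\le\mathbb{E}\exp(\lambda\|X\|^2)$ rather than your linearization $e^{\lambda u}\ge 1+\lambda u$.
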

\begin{proof}
We first note that $\|X_{t+1}\|^2 = Z_{t+1} + Y_{t+1}$. Therefore, by Cauchy-Schwarz inequality, we must have:
\begin{align}
\mathbb{E}\exp\left(\lambda \|X_{t+1}\|^2\right) &= \mathbb{E}\exp\left(\lambda (Z_{t+1}+Y_{t+1})\right) \nonumber \\
&\leq \sqrt{\mathbb{E}\exp(2\lambda Z_{t+1})}\sqrt{\mathbb{E}\exp(2\lambda Y_{t+1})} \nonumber \\
&\leq \exp(\tfrac{8\lambda d C_{\eta} \sigma^2}{1-\rho})
\end{align}
Here we have used Lemmas~\ref{lem:geometric_noise_concentration} and~\ref{lem:super_martingale_type_bound} and the appropriate bounds on $\lambda$. Recall that we started the chain $(X_t)$ with $X_0 = 0$. Denote the law of $X_t$ by $\pi_t$. By proposition~\ref{prop:stationarity_existence}, we show that $\pi_t$ converges weakly to the stationary distribution $\pi$. We invoke Skhorokhod representation theorem to show that there exist random variables $\bar{X}_t \sim \pi_t$ and $X \sim \pi$ for $t\in \mathbb{N}$, defined on a common probability space such that $\bar{X}_t \to X$ almost surely. Now, we have shown that:

$$\mathbb{E}\exp\left(\lambda \|\bar{X}_{t+1}\|^2\right) \leq \exp(\tfrac{8\lambda d C_{\eta} \sigma^2}{1-\rho}) \,.$$

Now, applying Fatou's Lemma to the equation above as $t \to \infty$, we conclude:
\begin{equation}\label{eq:exp_moment_bound}
\mathbb{E}\exp\left(\lambda \|X\|^2\right) \leq \exp(\tfrac{8\lambda d C_{\eta} \sigma^2}{1-\rho}) \,.
\end{equation}

The concentration inequality follows from an application of Chernoff bound and the second moment bound follows from Jensen's inequality to Equation~\eqref{eq:exp_moment_bound} (i.e, $\mathbb{E}\exp(Y) \geq \exp(\mathbb{E}Y)$).
\end{proof}
%\subsection{Gaussian Case:}
%
%We now assume that the sequence $\eta_0,\dots,\eta_t$ are i.i.d. $\mathcal{N}(0,\sigma^2 I)$. We also assume that the chain is started with $X_0 = 0$. Now, suppose the system is $(C_{\rho},\rho)$ stable. We take $X_{t+1} = f(\eta_0,\dots,\eta_{t})$, for some $f:\mathbb{R}^{(t+1)\times d} \to \mathbb{R}^d$.  Adapting the proof of Lemma E6 in \cite{sattar2020non}, we see that:
%{\color{red} Check the proofs again}
%\begin{align}
%\|f(y_0,\dots,y_t) - f(z_0,\dots,z_t)\|_2 &\leq C_{\rho} \sum_{s=0}^{t}\rho^{t-s}\|y_s - z_s\|_2\nonumber\\
% &\leq C_{\rho}\sqrt{\sum_{s=0}^{t} \rho^{2(t-s)} }\sqrt{\sum_{s=0}^{t}\|y_s-z_s\|^2} \nonumber \\
%&\leq \frac{C_{\rho}}{\sqrt{1-\rho^2}} \|\mathbf{y}-\mathbf{z}\|_2
%\end{align}
%Here, we denote by $\mathbf{y} := (y_0,\dots,y_t) \in \mathbb{R}^{(t+1)\times d}$. Therefore, we conclude that $\langle x, X_{t+1}\rangle$ is a $\frac{C_{\rho}}{\sqrt{1-\rho^2}}$ Lipschitz function with respect to $\eta_0,\dots,\eta_t$. 
%
%
%\begin{theorem}\label{thm:gaussian_process_concentration}
%\begin{enumerate}
%\item There exists a universal constant $C$ such that:
%$$\mathbb{P}(|\langle x,X_{t+1}\rangle - \mathbb{E}\langle x,X_{t+1}\rangle| > \gamma ) \leq   2 \exp\left(-\tfrac{t^2(1-\rho^2)}{2CC_{\rho}}\right)
%$$
%\item 
%
%\end{enumerate}
%
%
%\end{theorem}
%
%\begin{proof}
%\begin{enumerate}
%\item 
%We apply dimension free concentration inequality (Theorem 5.2.2 in \cite{vershynin2019high}) for isotropic Gaussian functions to conclude:
%\item 
%\end{enumerate}
%\end{proof}

\subsection{Sub-Gaussian Case: Possibly Unstable Systems}
We consider the case with $(C_{\rho},\rho)$ regularity, but we allow $\rho > 1$. 
\begin{lemma}\label{lem:unstable_concentration}
Under Assumption~\ref{assump:4}, we have:

\begin{equation}\label{eq:almost_sure_bound_process}
\|X_{t}\| \leq C_{\rho}\sum_{s=0}^{t-1}\rho^{t-s-1}\|\eta_s\|\,.
\end{equation}
No suppose Assumption~\ref{assump:7} also holds. Let $\delta \in (0,1/2)$. Then with probability atleast $1-\delta$, we must have:
$$\sup_{0\leq t \leq T}\|X_{t}\| \leq C C_{\rho}\sqrt{C_{\eta}}S(\rho,T)\sigma \sqrt{d\log(\tfrac{T}{\delta})}\,.$$
Where $S(\rho,T) := \sum_{t=0}^{T-1}\rho^{T-t-1}$ and $C$ is some universal constant. 
\end{lemma}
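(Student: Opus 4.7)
The plan is to establish the almost-sure bound~\eqref{eq:almost_sure_bound_process} by a telescoping argument that repeatedly invokes Assumption~\ref{assump:4}, and then upgrade it to a uniform high-probability bound using sub-Gaussian tail estimates for $\|\eta_s\|$ together with a union bound.

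First, for the deterministic bound, I would exploit the fact that $\phi(0)=0$ and $X_0 = 0$: the process driven by the all-zero noise sequence stays at $0$ forever. For each $k \in \{0,1,\dots,t\}$, introduce the auxiliary process $Y^{(k)}$ governed by the same dynamics~\eqref{eq:evolution_equation} but with noise sequence $(\eta_0,\dots,\eta_{k-1},0,0,\dots,0)$. Then $Y^{(0)}_t = 0$ and $Y^{(t)}_t = X_t$, so the telescoping identity
\begin{equation*}
X_t \;=\; \sum_{s=0}^{t-1}\bigl(Y^{(s+1)}_t - Y^{(s)}_t\bigr)
\end{equation*}
combined with the triangle inequality reduces the task to bounding each increment. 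By construction $Y^{(s+1)}$ and $Y^{(s)}$ coincide for times $0,\dots,s$, while at time $s+1$ they differ by exactly $\eta_s$. From time $s+1$ onwards both trajectories are driven by zero noise, so Assumption~\ref{assump:4} applied with starting time $s+1$ and ending time $t$ yields $\|Y^{(s+1)}_t - Y^{(s)}_t\| \leq C_\rho \rho^{t-s-1}\|\eta_s\|$. Summing over $s$ gives~\eqref{eq:almost_sure_bound_process}.

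For the high-probability bound, I would use that $\|\eta_s\|^2$ is sub-exponential: under Assumption~\ref{assump:7}, a standard concentration argument (e.g., Hanson--Wright or a direct $\varepsilon$-net on the sphere) gives, for each $s$, $\|\eta_s\| \leq C\sqrt{C_\eta}\,\sigma\bigl(\sqrt{d}+\sqrt{\log(1/\delta')}\bigr)$ with probability at least $1-\delta'$. Setting $\delta' = \delta/T$ and taking a union bound over $s = 0,1,\dots,T-1$ yields
\begin{equation*}
\max_{0\le s\le T-1}\|\eta_s\| \;\le\; C\sqrt{C_\eta}\,\sigma\sqrt{d\log(T/\delta)}
\end{equation*}
with probability at least $1-\delta$, where I have absorbed $\sqrt{d}+\sqrt{\log(T/\delta)}$ into $\sqrt{d\log(T/\delta)}$ using $\delta<1/2$. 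Substituting this uniform noise bound into~\eqref{eq:almost_sure_bound_process} and using $\sum_{s=0}^{t-1}\rho^{t-s-1}\le S(\rho,T)$ for every $t\le T$ produces the claimed estimate on $\sup_{0\le t\le T}\|X_t\|$.

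The only subtle point is reconciling the indexing in Assumption~\ref{assump:4}: one must verify that the contraction factor after running $t-s-1$ steps from time $s+1$ is $C_\rho\rho^{t-s-1}$ rather than $C_\rho\rho^{t-s-2}$; this is a matter of how the function representation $h$ is indexed, and time-homogeneity of the dynamics lets one apply the assumption with initial time $s+1$. Beyond this bookkeeping, the rest is routine—no additional structural argument is needed, since the expansivity and Lipschitz assumptions are already encoded in $(C_\rho,\rho)$ through Assumption~\ref{assump:4}.
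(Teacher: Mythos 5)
Your proof is correct and follows essentially the same approach as the paper: a telescoping decomposition of $X_t$ into increments bounded via Assumption~\ref{assump:4}, followed by a sub-Gaussian tail bound on $\max_s\|\eta_s\|$ and a union bound over $s$. The only cosmetic difference is that you zero out the noise \emph{after} time $k$ while the paper zeros it out \emph{before} time $s$; both yield the same increments $C_\rho\rho^{t-s-1}\|\eta_s\|$ and the same conclusion.
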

\begin{proof}
We consider the notations established in Assumption~\ref{assump:4}. We will define the process $X_{t}^{(s)}$ by $X_0^{(s)} = \dots = X_s^{(s)} = 0$ and $X_{t+1}^{(s)} = \phi(\A X_{t}^{(s)}) +\eta_t$ for $t \geq s$, where $\eta_t$ is the same noise sequence driving the process $X_0,X_1,\dots,X_T$. Note that $X_t^{(s)} = h_{t-s}(0,\eta_s,\eta_{s+1},\dots,\eta_{t-1})$.
\begin{align}
X_t - 0 &= X_t - X_t^{(1)} + X_{t}^{(1)} - X_t^{(2)} + \dots + X_t^{(t)} - 0 \nonumber \\
\implies
\|X_t\| &\leq  \sum_{s=0}^{t-1} \|X_t^{(s)} -X_t^{(s+1)}\| \nonumber \\
&= \sum_{s=0}^{t-1}\|h_{t-s-1}(\eta_s,\dots,\eta_{t-1})-h_{t-s-1}(0,\eta_{s+1},\dots,\eta_{t-1})\| \nonumber \\
&\leq \sum_{s=0}^{t-1}C_{\rho}\rho^{t-s-1}\|\eta_s\|
\end{align}
In the last step, we have used Assumption~\ref{assump:4}. To prove the high probability bound, we note that $\mathbb{P}(\sup_{0\leq s \leq T-1}\leq \|\eta_s\| > C \sqrt{C_{\eta}}\sigma \log(\tfrac{T}{\delta})  ) \leq \delta$ for some universal constant $C$.
\end{proof}
\subsection{Heavy Tailed Case: Stable Systems}

\begin{theorem}\label{thm:heavy_tail_concentration}
Suppose Assumption~\ref{assump:4} holds with $\rho < 1$. Suppose that $X$ is distributed as the stationary distribution $\pi$ of the system $\glvar(\A,\mu, \phi)$. Then, we have:
\begin{enumerate}
\item $$\mathbb{E}\|X\|^4 \leq \frac{C_{\rho}^4M_4}{(1-\rho)^4}\,.$$
Where we recall $M_4 = \mathbb{E}\|\eta_t\|^4$.
\item $$\mathbb{E}\|X\|^2 \leq \frac{C_{\rho}^2d\sigma^2}{(1-\rho)^2}\,.$$
\end{enumerate}

\end{theorem}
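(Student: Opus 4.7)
\medskip

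\noindent\textbf{Proof plan for Theorem~\ref{thm:heavy_tail_concentration}.}
The plan is to leverage the almost-sure bound from Lemma~\ref{lem:unstable_concentration} (Equation~\eqref{eq:almost_sure_bound_process}), obtain the corresponding moment bounds for the process $(X_t)$ started at $X_0 = 0$ uniformly in $t$, and then transfer these bounds to the stationary law $\pi$ via weak convergence (Proposition~\ref{prop:stationarity_existence}), Skorokhod representation, and Fatou's lemma, exactly as in the proof of Theorem~\ref{thm:process_concentration}.

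\medskip

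\noindent First, from Lemma~\ref{lem:unstable_concentration} with $\rho < 1$, we have for every $t \geq 0$,
\begin{equation*}
\|X_t\| \;\leq\; C_\rho \sum_{s=0}^{t-1} \rho^{t-s-1}\|\eta_s\|.
\end{equation*}
For the fourth moment, I would apply the weighted Jensen / power-mean inequality with weights $w_s = \rho^{t-s-1}/W$ where $W = \sum_{s=0}^{t-1}\rho^{t-s-1} \leq 1/(1-\rho)$, to write
\begin{equation*}
\Bigl(\sum_{s=0}^{t-1}\rho^{t-s-1}\|\eta_s\|\Bigr)^4 \;=\; W^4 \Bigl(\sum_{s} w_s\|\eta_s\|\Bigr)^4 \;\leq\; W^3 \sum_{s=0}^{t-1}\rho^{t-s-1}\|\eta_s\|^4.
\end{equation*}
Taking expectations and using $\mathbb{E}\|\eta_s\|^4 = M_4$, together with $W \leq 1/(1-\rho)$, yields
\begin{equation*}
\mathbb{E}\|X_t\|^4 \;\leq\; C_\rho^4 \cdot \frac{1}{(1-\rho)^3} \cdot \frac{M_4}{1-\rho} \;=\; \frac{C_\rho^4 M_4}{(1-\rho)^4}.
\end{equation*}

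\medskip

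\noindent For the second moment, the same weighted-Jensen trick gives
\begin{equation*}
\Bigl(\sum_{s=0}^{t-1}\rho^{t-s-1}\|\eta_s\|\Bigr)^2 \;\leq\; W \sum_{s=0}^{t-1}\rho^{t-s-1}\|\eta_s\|^2,
\end{equation*}
so $\mathbb{E}\|X_t\|^2 \leq C_\rho^2 \cdot W \cdot \sum_s \rho^{t-s-1} \mathbb{E}\|\eta_s\|^2 \leq C_\rho^2 d\sigma^2/(1-\rho)^2$, using $\mathbb{E}\|\eta_s\|^2 = d\sigma^2$.

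\medskip

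\noindent Finally, to pass from these bounds for $X_t$ (with $X_0 = 0$) to bounds under the stationary measure $\pi$, I follow the same device used in the proof of Theorem~\ref{thm:process_concentration}: by Proposition~\ref{prop:stationarity_existence} the law $\pi_t$ of $X_t$ converges weakly to $\pi$, so by the Skorokhod representation theorem there exist $\bar X_t \sim \pi_t$ and $X \sim \pi$ on a common probability space with $\bar X_t \to X$ almost surely. Since $\|\bar X_t\|^k \to \|X\|^k$ almost surely for $k \in \{2,4\}$, Fatou's lemma gives $\mathbb{E}\|X\|^k \leq \liminf_t \mathbb{E}\|\bar X_t\|^k$, which is bounded above by the uniform estimates derived above. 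No step is especially hard here; the only mild subtlety is being careful with the weighted-Jensen inequality (rather than naïve Cauchy--Schwarz) so that the exponent on $1/(1-\rho)$ comes out to exactly $4$ and $2$ respectively, matching the stated bounds.
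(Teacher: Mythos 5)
Your proof is correct and follows essentially the same approach as the paper: bound the moments for the zero-initialized process via the almost-sure inequality from Lemma~\ref{lem:unstable_concentration}, then transfer to the stationary law via Skorokhod representation and Fatou's lemma exactly as in Theorem~\ref{thm:process_concentration}. The paper leaves the moment computation implicit; your weighted-Jensen argument fills that in correctly and yields the stated constants $C_\rho^4 M_4/(1-\rho)^4$ and $C_\rho^2 d\sigma^2/(1-\rho)^2$.
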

\begin{proof}
We use Equation~\eqref{eq:almost_sure_bound_process} to conclude the desired bound for $X_T$ when the process is started with $X_0 =0$. We then use Fatou's lemma along with Skhorokhod Representation theorem like in Theorem~\ref{thm:process_concentration} to conclude the bound at stationarity. 
\end{proof}

%%%%%%%%%%%%%%%%%%%%%%%%%%%%%%%%%%%%%%%%%%%%%%%%%%%%%%
\section{Well Conditioned Second Moment Matrices}
\label{sec:grammian_well_conditioned}

In this section we will consider a stationary sequence $X_0,\dots, X_T$ derived from the process $\glvar(\A,\mu,\phi)$, with the corresponding noise sequence $\eta_0,\dots,\eta_T$. We want to show that the matrix $\frac{1}{B}\sum_{t=0}^{B-1}X_tX_t^{\intercal}$ behaves similar to $G := \mathbb{E}X_tX_t^{\intercal}$. To do this, we will first to control the quantity:
$\mathbb{E}\langle X_t,x \rangle^2\langle X_s,x\rangle^2 $ for arbitrary fixed vector $x \in \mathbb{R}^d$. Clearly, $\mathbb{E}\langle X_t,x\rangle^2 = x^{\intercal}Gx$. 

\begin{lemma}\label{lem:square_correlation}
Without loss of generality, we suppose that $t > s$. Suppose $X_0,\dots,X_T$ be a stationary sequence from $\glvar(\A,\mu,\phi)$.
\begin{enumerate}
\item 
Suppose Assumptions~\ref{assump:7} and~\ref{assump:6} hold. Then we have:
$$\mathbb{E}\langle X_t,x \rangle^2\langle X_s,x\rangle^2  \leq 2 (x^{\intercal}Gx)^2 + \bar{C}_1 \rho^{2(t-s)}  \frac{d\sigma^2}{1-\rho}x^{\intercal}Gx \log\left(\frac{d}{1-\rho}\right)$$
Where $\bar{C}_1$ depends only on $C_{\eta}$.

\item Suppose Assumption~\ref{assump:4} holds with $\rho < 1$. Recall that $M_4 = \mathbb{E}\|\eta_t\|^2$. We have:
$$\mathbb{E}\langle X_t,x \rangle^2\langle X_s,x\rangle^2  \leq 2 (x^{\intercal}Gx)^2 + 8\|x\|^2 C_{\rho}^6 \rho^{2(t-s)}\frac{M_4}{(1-\rho)^4}$$
\end{enumerate}
\end{lemma}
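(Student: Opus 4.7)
The plan is to build a "restart" coupling for the trajectory at time $s$ in the spirit of Lemma~\ref{lem:unstable_concentration}, and then reduce the quartic expectation to a second moment times a controllable correction. Concretely, define the auxiliary process $X_t^{(s)}$ by $X_t^{(s)}=0$ for $t\le s$ and $X_{r+1}^{(s)}=\phi(\A X_r^{(s)})+\eta_r$ for $r\ge s$, so that $X_t^{(s)}$ is a measurable function of $\eta_s,\dots,\eta_{t-1}$ and is therefore independent of $\mathcal F_s=\sigma(X_0,\eta_0,\dots,\eta_{s-1})$, in particular of $X_s$. Assumption~\ref{assump:4} (with $C_\rho=1$ in the stable regime coming from Assumption~\ref{assump:6}, using $1$-Lipschitzness of $\phi$ and $\phi(0)=0$) gives the almost-sure bound $\|X_t-X_t^{(s)}\|\le C_\rho\rho^{t-s-1}\|X_s\|$. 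Applying $(a+b)^2\le 2a^2+2b^2$ and Cauchy--Schwarz to $\langle X_t-X_t^{(s)},x\rangle^2\le\|x\|^2\|X_t-X_t^{(s)}\|^2$, multiplying by $\langle X_s,x\rangle^2$, taking expectations, and using the independence of $X_t^{(s)}$ and $X_s$, I obtain
$$
\mathbb{E}\langle X_t,x\rangle^2\langle X_s,x\rangle^2
\le 2\,\mathbb{E}\langle X_t^{(s)},x\rangle^2\cdot(x^\top Gx)
+2C_\rho^2\rho^{2(t-s-1)}\|x\|^2\,\mathbb{E}\bigl[\|X_s\|^2\langle X_s,x\rangle^2\bigr].
$$

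For the first summand I would argue $\mathbb{E}\langle X_t^{(s)},x\rangle^2\le x^\top Gx$ by a Lyapunov-style monotonicity: the covariance of $X_t^{(s)}$ starts at $0$ and at each step $r\ge s$ increases via $\mathbb{E}[X_{r+1}^{(s)}(X_{r+1}^{(s)})^\top]=\mathbb{E}[\phi(\A X_r^{(s)})\phi(\A X_r^{(s)})^\top]+\sigma^2I$, which by the $1$-Lipschitzness of $\phi$ and a PSD induction is dominated by the stationary fixed point $G$ of the same recursion. This already accounts for the leading $2(x^\top Gx)^2$ term in both bounds.

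The remaining work is bounding $\mathbb{E}[\|X_s\|^2\langle X_s,x\rangle^2]$ under the two noise regimes. In the heavy-tailed case I would simply apply Cauchy--Schwarz $\langle X_s,x\rangle^2\le\|X_s\|^2\|x\|^2$ and invoke Theorem~\ref{thm:heavy_tail_concentration} to obtain $\mathbb{E}\|X_s\|^4\le C_\rho^4M_4/(1-\rho)^4$, yielding the claimed second summand. In the sub-Gaussian case the natural bound $\|x\|^2\mathbb{E}\|X_s\|^4$ is too loose; instead I would truncate at a level $R\asymp\frac{d\sigma^2}{1-\rho}\log\frac{d}{1-\rho}$ and split
$$
\mathbb{E}\bigl[\|X_s\|^2\langle X_s,x\rangle^2\bigr]
\le R\,\mathbb{E}\langle X_s,x\rangle^2
+\mathbb{E}\bigl[\|X_s\|^2\langle X_s,x\rangle^2\mathbf 1\{\|X_s\|^2>R\}\bigr].
$$
The first piece contributes $R\cdot x^\top Gx$, matching the target up to constants. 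The tail is handled via Cauchy--Schwarz at high moments together with the sub-exponential estimate $\mathbb{E}\exp(\lambda\|X_s\|^2)\le\exp(8\lambda dC_\eta\sigma^2/(1-\rho))$ from Theorem~\ref{thm:process_concentration}: choosing the logarithmic factor inside $R$ large enough makes $\mathbb{P}(\|X_s\|^2>R)$ decay polynomially, and combined with the polynomial moment bounds $\mathbb{E}\|X_s\|^{2k}\lesssim(d\sigma^2/(1-\rho))^k$ the tail becomes an additive lower-order term. The main obstacle is precisely this trade-off between the truncation level $R$ and the tail decay: choosing $R$ too small loses in the first piece, too large in the second, and it is exactly the logarithm $\log(d/(1-\rho))$ in the statement that arises from balancing the exponential Chernoff tail against the polynomial moment growth.
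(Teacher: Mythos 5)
Your overall coupling strategy is the same as the paper's (compare $X_t$ to a process that is independent of $X_s$ and charge the difference to the contraction rate), and your handling of the error terms — direct Cauchy--Schwarz plus Theorem~\ref{thm:heavy_tail_concentration} in the heavy-tailed case, a truncation at level $R\asymp\frac{d\sigma^2}{1-\rho}\log\frac{d}{1-\rho}$ with tail control from Theorem~\ref{thm:process_concentration} in the sub-Gaussian case — matches the paper closely. The gap is in the choice of auxiliary process and the step where you bound the decoupled term.

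You restart the auxiliary chain at $0$ at time $s$ and then claim $\mathbb{E}\langle X_t^{(s)},x\rangle^2\le x^{\top}Gx$ ``by a PSD induction''. This does not go through. The second-moment recursion for the restarted chain is $G_{r+1}^{(s)}=\mathbb{E}\bigl[\phi(\A X_r^{(s)})\phi(\A X_r^{(s)})^{\top}\bigr]+\sigma^2 I$, and the analogous identity at stationarity is $G=\mathbb{E}\bigl[\phi(\A X)\phi(\A X)^{\top}\bigr]+\sigma^2 I$. For the induction $G_r^{(s)}\preceq G\Rightarrow G_{r+1}^{(s)}\preceq G$ one would need $\mathbb{E}\bigl[\phi(\A X_r^{(s)})\phi(\A X_r^{(s)})^{\top}\bigr]\preceq\mathbb{E}\bigl[\phi(\A X)\phi(\A X)^{\top}\bigr]$ knowing only that the second moments of $X_r^{(s)}$ and $X$ are PSD-ordered. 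Because $\phi$ is nonlinear, the matrix $\mathbb{E}\bigl[\phi(\A X_r)\phi(\A X_r)^{\top}\bigr]$ is a functional of the whole law of $X_r$, not just its covariance, so PSD ordering of second moments does not propagate through the map. The Lipschitz/expansivity assumptions do not rescue this: they give pointwise slope bounds, not any kind of operator monotonicity of the second-moment map. Without this step you only get (via a further coupling of $X_t^{(s)}$ to the stationary chain) something like $\mathbb{E}\langle X_t^{(s)},x\rangle^2\le 2\,x^{\top}Gx+\cdots$, which ruins the constant $2$ in front of $(x^{\top}Gx)^2$.

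The paper avoids the issue entirely: instead of restarting at $0$, it draws a fresh $\tilde X_s\sim\pi$, independent of $X_s$, and runs the same noise forward. The resulting $\tilde X_t$ is then both independent of $X_s$ and exactly stationary, so $\mathbb{E}\langle\tilde X_t,x\rangle^2=x^{\top}Gx$ with no additional lemma, and the contraction charges $\|X_t-\tilde X_t\|$ to $C_\rho\rho^{t-s}\|X_s-\tilde X_s\|\le C_\rho\rho^{t-s}(\|X_s\|+\|\tilde X_s\|)$, which is the source of the extra $x^{\top}Gx\,\mathbb{E}\|\tilde X_s\|^2$ term in the paper's equation~\eqref{eq:master_fourth_moment}. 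If you replace your zero-restart by that stationary restart, the rest of your argument (including the truncation/balancing giving the $\log(d/(1-\rho))$ factor) is essentially correct, up to minor bookkeeping ($\rho^{2(t-s-1)}$ vs.\ $\rho^{2(t-s)}$ and the $\|x\|^2$ vs.\ $\|x\|^4$ normalization).
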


We will give the proof of this lemma in Section~\ref{sec:technical_proofs}. Now, consider the random matrix $\hat{G}_{B} := \frac{1}{B}\sum_{t=0}^{B-1}X_tX_t^{\intercal}$. Clearly, $\hat{G}_{B} \succeq 0$ and because of stationarity, $\mathbb{E}\hat{G}_B = G$. We write down the following lemma:
\begin{lemma}\label{lem:probable_contraction}
Suppose $X_0,\dots,X_T$ be a stationary sequence from $\glvar(\A,\mu,\phi)$.
\begin{enumerate}
\item Suppose Assumptions~\ref{assump:7} and~\ref{assump:6} hold. 
 Let $\bar{C}_1$ be as in Lemma~\ref{lem:square_correlation}. Suppose $B \geq \bar{C}_1 \frac{d}{(1-\rho)(1-\rho^2)}\log\left(\frac{d}{1-\rho}\right) $. Then, for any fixed vector $x \in \mathbb{R}^{d}$,
$$\mathbb{P}\left( x^{\intercal}\hat{G}_B x \geq \frac{1}{2}x^{\intercal}Gx\right) \geq p_0 >0\,.$$
Where $p_0$ is a universal constant which can be taken to be $\tfrac{1}{16}$.
Furthermore, for any event $\mathcal{A}$ such that $\mathbb{P}(\mathcal{A}) > 1-p_0 $, we must have:

$$\mathbb{P}\left( x^{\intercal}\hat{G}_B x \geq \frac{1}{2}x^{\intercal}Gx\biggr| \mathcal{A}\right) \geq q_0 := \frac{p_0 -\mathbb{P}(\mathcal{A}^c)}{\mathbb{P}(\mathcal{A}) } >0\,.$$

\item Suppose Assumption~\ref{assump:4} holds with $\rho < 1$. Whenever $B \geq \frac{8C_{\rho}^6M_4}{(1-\rho)^4(1-\rho^2)\sigma^4}$
$$\mathbb{P}\left( x^{\intercal}\hat{G}_B x \geq \frac{1}{2}x^{\intercal}Gx\right) \geq p_0 >0\,.$$
\end{enumerate}

\end{lemma}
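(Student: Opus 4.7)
The plan is to apply the Paley--Zygmund second-moment inequality to the nonnegative random variable $Y := x^{\intercal}\hat{G}_B x = \frac{1}{B}\sum_{t=0}^{B-1}\langle X_t, x\rangle^2$. Since both sides of the target inequality scale as $\|x\|^2$, I would first reduce to $\|x\| = 1$ without loss of generality, so that $x^{\intercal} G x \geq \sigma^2$ (using $G \succeq \mathbb{E}\eta_t\eta_t^{\intercal} = \sigma^2 I$). Stationarity gives $\mathbb{E} Y = x^{\intercal} G x$ directly.

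Next I would expand
\[
\mathbb{E} Y^2 = \frac{1}{B^2}\sum_{t,s=0}^{B-1}\mathbb{E}\langle X_t,x\rangle^2\langle X_s,x\rangle^2,
\]
and plug in the two-point bound of Lemma~\ref{lem:square_correlation}. For part 1 (sub-Gaussian, stable) this produces a dominant term $2(x^{\intercal}Gx)^2$ plus a cross term that, after summing the geometric series $\sum_{t,s}\rho^{2|t-s|}\leq \frac{2B}{1-\rho^2}$, reads
\[
\frac{2\bar{C}_1}{B(1-\rho)(1-\rho^2)}\,d\sigma^2\log\!\left(\tfrac{d}{1-\rho}\right) x^{\intercal} G x.
\]
The hypothesis on $B$ is calibrated precisely so that the prefactor is at most $2\sigma^2 \leq 2 x^{\intercal} G x$, yielding $\mathbb{E} Y^2 \leq 4(\mathbb{E} Y)^2$. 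Part 2 is handled identically using the second bound of Lemma~\ref{lem:square_correlation}: summing $\rho^{2|t-s|}$ produces the cross term $\frac{16 C_\rho^6 M_4}{B(1-\rho)^4(1-\rho^2)}$, and the hypothesis $B \geq \frac{8 C_\rho^6 M_4}{(1-\rho)^4(1-\rho^2)\sigma^4}$ again yields $\mathbb{E} Y^2 \leq 4(\mathbb{E} Y)^2$.

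The Paley--Zygmund inequality then gives
\[
\mathbb{P}\!\left(Y \geq \tfrac{1}{2}\mathbb{E} Y\right) \geq \left(1-\tfrac{1}{2}\right)^2 \frac{(\mathbb{E} Y)^2}{\mathbb{E} Y^2} \geq \frac{1}{16},
\]
so we may take $p_0 = \tfrac{1}{16}$. Finally, for the conditional statement, a union-bound style manipulation suffices: for any event $\mathcal{A}$ with $\mathbb{P}(\mathcal{A}) > 1-p_0$,
\[
\mathbb{P}\!\left(Y \geq \tfrac{1}{2} x^{\intercal} G x \,\bigm|\, \mathcal{A}\right) \geq \frac{\mathbb{P}(Y \geq \tfrac{1}{2}x^{\intercal}Gx) - \mathbb{P}(\mathcal{A}^c)}{\mathbb{P}(\mathcal{A})} \geq \frac{p_0 - \mathbb{P}(\mathcal{A}^c)}{\mathbb{P}(\mathcal{A})} = q_0 > 0,
\]
which is the advertised bound. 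The only nontrivial step is the second-moment calculation, but that is already absorbed into Lemma~\ref{lem:square_correlation}; the obstacle is essentially just bookkeeping to match the constants in the hypothesis on $B$ to the $\mathbb{E} Y^2 \leq 4(\mathbb{E} Y)^2$ target.
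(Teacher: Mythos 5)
Your argument is correct and follows the paper's proof essentially line by line: same normalization $\|x\|=1$, same application of Paley--Zygmund with $\theta=\tfrac12$, same use of Lemma~\ref{lem:square_correlation} to control $\mathbb{E}Y^2$, same geometric-series summation $\sum_{s,t}\rho^{2|t-s|}\leq \tfrac{2B}{1-\rho^2}$, and the same calibration of $B$ so that the cross term is dominated by $2(x^{\intercal}Gx)^2$ (the paper phrases this via $\tau_B\leq 1$ rather than $\mathbb{E}Y^2\leq 4(\mathbb{E}Y)^2$, but the content is identical). The conditional bound you derive from $\mathbb{P}(E\cap\mathcal{A})\geq \mathbb{P}(E)-\mathbb{P}(\mathcal{A}^c)$ is also the intended argument, which the paper states without spelling out.
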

\begin{proof}
Without loss of generality, take $\|x\|  = 1$. We start with the Paley-Zygmund inequality. Let $Z$ be any random variable such that $Z \geq 0$ almost surely and $\mathbb{E}Z^2 <\infty$. For any $\theta \in [0,1]$ we must have:
$$\mathbb{P}(Z \geq \theta \mathbb{E}Z ) \geq (1-\theta)^2 \frac{(\mathbb{E}Z)^2}{\mathbb{E}Z^2} \,.$$
Now consider $Z =  x^{\intercal}\hat{G}_B x $ and $\theta = \frac{1}{2}$. 
\begin{enumerate}
\item 

The simple calculation shows that:

\begin{align}
\mathbb{P}\left( x^{\intercal}\hat{G}_B x \geq \frac{1}{2}x^{\intercal}Gx\right)  &\geq \frac{1}{4} \frac{B^2(x^{\intercal} G x)^2}{\sum_{s,t=0}^{B-1}\mathbb{E}\langle X_{t},x\rangle^2\langle X_s,x\rangle^2} \nonumber \\
&\geq \frac{1}{4} \frac{B^2(x^{\intercal} G x)^2}{2B^2(x^{\intercal} G x)^2 + \sum_{s,t=0}^{B-1}\bar{C}_1 \rho^{2|t-s|}  \frac{d\sigma^2}{1-\rho}x^{\intercal}Gx \log\left(\frac{d}{1-\rho}\right)} \nonumber \\
&\geq \frac{1}{4} \frac{B^2(x^{\intercal} G x)^2}{2B^2(x^{\intercal} G x)^2 + 2\sum_{t=0}^{B-1}\bar{C}_1  \frac{d\sigma^2}{(1-\rho)(1-\rho^2)}x^{\intercal}Gx \log\left(\frac{d}{1-\rho}\right)} \nonumber \\
&=\frac{1}{4} \frac{B^2(x^{\intercal} G x)^2}{2B^2(x^{\intercal} G x)^2 + 2B\bar{C}_1  \frac{d\sigma^2}{(1-\rho)(1-\rho^2)}x^{\intercal}Gx \log\left(\frac{d}{1-\rho}\right)} \nonumber\\
&= \frac{1}{8} \frac{1}{1+ \tau_B}
\end{align}

Here, $\tau_B := \frac{\bar{C}_1}{x^{\intercal}Gx} \frac{d\sigma^2}{B(1-\rho)(1-\rho^2)}\log\left(\frac{d}{1-\rho}\right) $. In the second step we have used item 1 of Lemma~\ref{lem:square_correlation}. In the third step, we have summed the infinite series $\sum_{s\geq t}\rho^{2(t-s)}$. Using the hypothesis that $B \geq \bar{C}_1 \frac{d}{(1-\rho)(1-\rho^2)}\log\left(\frac{d}{1-\rho}\right) $ and $G \succeq \sigma^2 I$, we conclude the result.
\item We proceed similarly as above, but use item 2 in Lemma~\ref{lem:square_correlation} instead. 
\end{enumerate}
\end{proof}

%%%%%%%%%%%%%%%%%%%%%%%%%%%%%%%%%%%%%%%%%%
We will now follow the method used to prove~\cite[Lemma 31]{jain2021streaming}. We now consider the matrix $\Htt{s}{0}{B-1}$  under the event $\cdt^s_{-0}$ in order to prove Theorem~\ref{thm:buffer_norm_upper_bound}, where the terms are as defined in Section~\ref{subsec:notations_events}. For the sake of clarity, we will drop the superscript $s$.
\begin{remark}
We prove the results below for $\Htt{s}{0}{B-1}$ but they hold unchanged when the matrices are all replaced with $\Htt{s,\top}{0}{B-1}$ given that we reverse the order of taking products whenever they are encountered.
\end{remark}

\begin{lemma}\label{lem:contraction}
Suppose Assumption~\ref{assump:3} holds. Suppose that $\gamma RB < \frac{1}{4}$. Then, for any buffer $s$,  under the event $\cdt^{s}_{-0}$, we have: 

$$I - 4\gamma\left(1+\tfrac{2\gamma BR}{1-4\gamma B R}\right)\sum_{i=0}^{B-1}\Xtt{s}{-i}\Xtttr{s}{-i} \preceq \Htt{s}{0}{B-1}\Htttr{s}{0}{B-1}\preceq I - 4\gamma\left(\zeta-\tfrac{2\gamma BR}{1-4\gamma B R}\right)\sum_{i=0}^{B-1}\Xtt{s}{-i}\Xtttr{s}{-i}$$

In particular, whenever we have $\gamma B R \leq \frac{\zeta}{4(1+\zeta)}$, we must have:
$$I - 4\gamma\left(1+\frac{\zeta}{2}\right)\sum_{i=0}^{B-1}\Xtt{s}{-i}\Xtttr{s}{-i} \preceq \Htt{s}{0}{B-1}\Htttr{s}{0}{B-1}\preceq I - 2\gamma\zeta\sum_{i=0}^{B-1}\Xtt{s}{-i}\Xtttr{s}{-i}$$
\end{lemma}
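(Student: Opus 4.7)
The plan is to exploit the telescoping product structure $\Htt{s}{0}{B-1} = \prod_{k=0}^{B-1}\Ppt{s}{-k}$ and relate the PSD order of $\Htt{s}{0}{B-1}\Htttr{s}{0}{B-1}$ to $\sum_k\Xtt{s}{-k}\Xtttr{s}{-k}$ via a Young-type decomposition. Set $Q_k := \phi'(\tilde\xi^s_{-k})\Xtt{s}{-k}\Xtttr{s}{-k}$, so $\Ppt{s}{-k} = I - 2\gamma Q_k$. By Assumption~\ref{assump:3}, $\zeta\Xtt{s}{-k}\Xtttr{s}{-k}\preceq Q_k\preceq \Xtt{s}{-k}\Xtttr{s}{-k}$; under $\cdt^s_{-0}$, $\|Q_k\|\leq R$; and under $\gamma R\leq 1/2$, $\|\Ppt{s}{-k}\|\leq 1$, so $\|M_k\|\leq 1$ for $M_k := \prod_{i=0}^{k-1}\Ppt{s}{-i}$. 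Computing $(\Ppt{s}{-k})^2 = I-4\gamma Q_k + 4\gamma^2 Q_k^2$ and using $Q_k^2 \preceq R Q_k$ yields $I-4\gamma Q_k\preceq (\Ppt{s}{-k})^2\preceq I-4\gamma(1-\gamma R)Q_k$, and telescoping $V_{k+1}:=M_{k+1}M_{k+1}^{\top} = M_k(\Ppt{s}{-k})^2 M_k^{\top}$ produces the two-sided identity
\begin{equation*}
4\gamma(1-\gamma R)\sum_{k=0}^{B-1} M_k Q_k M_k^{\top} \;\preceq\; I - V_B \;\preceq\; 4\gamma\sum_{k=0}^{B-1} M_k Q_k M_k^{\top}.
\end{equation*}

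Sandwiching $Q_k$ between $\zeta\Xtt{s}{-k}\Xtttr{s}{-k}$ and $\Xtt{s}{-k}\Xtttr{s}{-k}$ inside $M_k Q_k M_k^{\top}$ and setting $\xi_k := M_k\Xtt{s}{-k}$ reduces the goal to sandwiching $T := \sum_k\xi_k\xi_k^{\top}$ between $(1\pm c)$ multiples of $\Sigma_X := \sum_k\Xtt{s}{-k}\Xtttr{s}{-k}$, where $c = \tfrac{2\gamma BR}{1-4\gamma BR}$. Write $\xi_k = \Xtt{s}{-k} + v_k$ with $v_k := (M_k - I)\Xtt{s}{-k}$. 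The identity $M_k - I = -2\gamma\sum_{j<k} M_j Q_j$ combined with $M_j\Xtt{s}{-j} = \xi_j$ yields the key representation
\begin{equation*}
v_k \;=\; -2\gamma\sum_{j<k}\phi'(\tilde\xi^s_{-j})\bigl\langle\Xtt{s}{-j},\Xtt{s}{-k}\bigr\rangle\,\xi_j,
\end{equation*}
so packing $\xi_0,\ldots,\xi_{B-1}$ as columns of a matrix $\Xi$ gives $\sum_k v_k v_k^{\top} = 4\gamma^2\,\Xi A\Xi^{\top}$ for an explicit $B\times B$ matrix $A$ whose entries are $A_{j_1 j_2} = \phi'_{j_1}\phi'_{j_2}\sum_{k>\max(j_1,j_2)}\langle\Xtt{s}{-j_1},\Xtt{s}{-k}\rangle\langle\Xtt{s}{-j_2},\Xtt{s}{-k}\rangle$. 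A row-sum/Schur bound on $\|A\|_{\mathrm{op}}$ under $\cdt^s_{-0}$, exploiting the triangular truncation $k>\max(j_1,j_2)$ and the uniform estimate $|\langle\Xtt{s}{-i},\Xtt{s}{-j}\rangle|\leq R$, yields $\sum_k v_k v_k^{\top}\preceq \kappa(\gamma,B,R)\,T$ with $\kappa$ of order $\gamma BR$. Combining this with the expansion $T = \Sigma_X + \sum_k(v_k\Xtttr{s}{-k}+\Xtt{s}{-k}v_k^{\top}) + \sum_k v_k v_k^{\top}$ and applying the matrix Young inequality $AB^{\top}+BA^{\top}\preceq \epsilon\,AA^{\top}+\epsilon^{-1}BB^{\top}$ with $\epsilon$ tuned as a function of $\gamma BR$ produces a self-bounding PSD inequality $T\preceq(1+\epsilon^{-1})\Sigma_X + (1+\epsilon)\kappa\,T$ together with a symmetric lower bound; solving both gives precisely the geometric-series factor $\tfrac{2\gamma BR}{1-4\gamma BR}$.

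The principal obstacle will be obtaining the sharp \emph{linear}-in-$\gamma BR$ PSD bound $\sum_k v_k v_k^{\top}\preceq O(\gamma BR)\,T$, rather than the weaker quadratic-in-$\gamma BR$ bound that falls out of crude Cauchy-Schwarz applied to $\|M_k - I\|\leq 2\gamma BR$; this requires exploiting the triangular truncation in $A$ together with the fact that $|A_{j_1 j_2}|$ involves a sum of only $B - \max(j_1,j_2)$ terms, so that row/column sums of $|A|$ scale linearly rather than quadratically in $\gamma BR$. Once the sandwich $T\in[(1-c)\Sigma_X,(1+c)\Sigma_X]$ with $c = \tfrac{2\gamma BR}{1-4\gamma BR}$ is established, the first inequality follows immediately by substituting back into the two-sided bound on $I-V_B$. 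The second claim is then elementary: under $\gamma BR\leq \zeta/[4(1+\zeta)]$ one has $4\gamma BR\leq \zeta/(1+\zeta)$, hence $c = \tfrac{2\gamma BR}{1-4\gamma BR}\leq \zeta/2$, so $\zeta - c\geq \zeta/2$ and $1+c\leq 1+\zeta/2$, matching the stated simplified bounds.
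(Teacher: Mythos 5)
Your telescoping step is correct: writing $V_k = M_k M_k^\top$ with $M_k = \prod_{i<k}\Ppt{s}{-i}$ and using $Q_k^2\preceq RQ_k$ under $\cdt^s_{-0}$ does yield
\[
4\gamma(1-\gamma R)\sum_k M_k Q_k M_k^{\top}\ \preceq\ I-V_B\ \preceq\ 4\gamma\sum_k M_k Q_k M_k^{\top},
\]
and $\zeta T\preceq\sum_k M_kQ_kM_k^{\top}\preceq T$ with $T=\sum_k\xi_k\xi_k^{\top}$ is also fine. However, there are two concrete problems in what follows.

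First, you have the direction of ``sharp vs.\ weak'' backwards. The crude row-sum/trace bound on $A$ already gives $\|A\|\leq \mathrm{tr}(A)\leq 2\gamma^2 R^2 B(B-1)$, hence $\sum_k v_kv_k^{\top}\preceq 4\gamma^2\|A\|\,T\preceq 2(\gamma BR)^2\,T$; since $\gamma BR<1/4$, the quadratic coefficient $2(\gamma BR)^2$ is \emph{smaller} than any $O(\gamma BR)$ coefficient, so a ``linear'' bound $\sum_k v_kv_k^{\top}\preceq O(\gamma BR)T$ would be strictly weaker. The obstacle you name is not the obstacle.

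Second, and more seriously, the multiplicative chaining does not reproduce the lemma's lower bound. Suppose you establish the sandwich $T\succeq(1-c')\Sigma_X$ for some $c'$; then your chain yields
\[
I-V_B\ \succeq\ 4\gamma(1-\gamma R)\,\zeta\,(1-c')\,\Sigma_X,
\]
whereas the lemma asserts $I-V_B\succeq 4\gamma(\zeta-c)\Sigma_X$ with $c=\tfrac{2\gamma BR}{1-4\gamma BR}$. Comparing coefficients, one needs $c\geq\zeta\bigl[c'+\gamma R(1-c')\bigr]$. Take $\zeta=1$ (a permissible case under Assumption~\ref{assump:3}); the condition becomes $c(1-\zeta)=0\geq\gamma R(1-c)$, i.e.\ $c\geq 1$, which fails whenever $\gamma BR<1/6$. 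More generally, with $B$ large (so $\gamma R\ll c$) the condition forces $c'\lesssim c\approx 2\gamma BR$, but the Young-plus-operator-norm route gives, at best, $c'\approx 2\sqrt{\kappa}\approx 2\sqrt{2}\,\gamma BR>2\gamma BR$. So even with the optimal $\epsilon$ in the Young inequality and the sharp trace bound on $\|A\|$, the numbers do not close. The issue is structural: you pay the loss $(1-\gamma R)$ and the loss $\zeta$ and the loss $(1-c')$ multiplicatively, while the lemma's form $\zeta-c$ is an \emph{additive} perturbation of $\zeta$ by $c$; a proof matching the stated constants needs to isolate a single additive error $E$ with $I-V_B=4\gamma\sum_kQ_k+E$ and $\pm E\preceq 4\gamma c\,\Sigma_X$, so that $\Sigma_Q\succeq\zeta\Sigma_X$ and $\Sigma_Q\preceq\Sigma_X$ enter only once. (The paper itself delegates to Lemma~28 of \cite{jain2021streaming}, which takes exactly such a route.) Your ``in particular'' step is fine given the first display, but the first display is not established by the argument you outline.
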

\begin{proof}
The proof follows from the proof of~\cite[Lemma 28]{jain2021streaming} with minor modifications to account for the fact that $\phi^{\prime}(\beta) \in [\zeta,1]$. 
\end{proof}

Combining Lemma~\ref{lem:contraction} with Lemma~\ref{lem:square_correlation} we will show that $\Htt{s}{0}{B-1}$ contracts any given vector with probability at-least $p_0 > 0$. 

\begin{theorem}\label{thm:buffer_norm_upper_bound}
Suppose Assumptions~\ref{assump:3},~\ref{assump:7} and~\ref{assump:6} hold.  Assume that $B$ and $\gamma$ are such that:
$B \geq \bar{C}_1 \frac{d}{(1-\rho)(1-\rho^2)}\log\left(\frac{d}{1-\rho}\right) $ and $\gamma B R \leq \frac{\zeta}{4(1+\zeta)}$ where $\bar{C}_1$ is as given in Lemma~\ref{lem:probable_contraction}. We also assume that $\mathbb{P}(\cdh^{b,a}) > \max(\tfrac{1}{2},1-\frac{p_0}{2})$, where $p_0$ is as given in Lemma~\ref{lem:probable_contraction}. Let $a \geq b$. 
 Let $\lmin{G}$ denote the smallest eigenvalue of $G$. Conditioned on the event $\cdt^{b,a}$,
\begin{enumerate}[label=(\arabic*)]
\item $\|\prod_{s=a}^{b}\Htttr{s}{0}{B-1}\| \leq 1$ almost surely
\item  Whenever $b-a+1$ is larger than some universal constant $C_0$,
$$\mathbb{P}\left(\|\prod_{s=a}^{b}\Htttr{s}{0}{B-1}\| \geq 2(1-\zeta\gamma B\lmin{G})^{c_4(a-b+1)}\biggr|\cdt^{b,a}\right)
\leq \exp(-c_3 (a-b+1)+c_5d)$$
Where $c_3,c_4$ and $c_5$ are universal constants.
\end{enumerate}
\end{theorem}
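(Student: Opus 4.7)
The plan is to prove (1) by a direct operator-norm bound and to establish (2) via an $\varepsilon$-net argument combined with a Chernoff-type count of ``good'' buffers. For (1), Lemma~\ref{lem:contraction} together with the hypothesis $\gamma B R \le \zeta/(4(1+\zeta))$ gives $\Htt{s}{0}{B-1}\Htttr{s}{0}{B-1} \preceq I - 2\gamma\zeta\sum_{i=0}^{B-1}\Xtt{s}{-i}\Xtttr{s}{-i} \preceq I$ on $\cdt^s_{-0}$. Since each factor $\Ptt{s}{-i}$ is symmetric, the same PSD bound holds with $\Htt$ and $\Htttr$ swapped, so $\|\Htttr{s}{0}{B-1}\| \le 1$ almost surely on $\cdt^s_{-0}$ and hence $\|\prod_{s=a}^{b}\Htttr{s}{0}{B-1}\| \le 1$ almost surely on $\cdt^{b,a}$.

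For (2), fix an arbitrary unit vector $x\in \mathbb{R}^d$ and define $y_{b-1}=x$ and $y_s=\Htttr{s}{0}{B-1}y_{s-1}$ for $s=b,\dots,a$, so that $y_a = Mx$ for $M:=\prod_{s=a}^{b}\Htttr{s}{0}{B-1}$. Then $\|y_s\|^2 = y_{s-1}^{\intercal}\Htt{s}{0}{B-1}\Htttr{s}{0}{B-1}y_{s-1}$, and Lemma~\ref{lem:contraction} yields the pointwise contraction
\begin{equation*}
\|y_s\|^2 \le \|y_{s-1}\|^2\bigl(1 - 2\gamma\zeta B\,\hat v_{s-1}^{\intercal}\hat G^s_B \hat v_{s-1}\bigr),
\end{equation*}
where $\hat G^s_B := \tfrac{1}{B}\sum_{i=0}^{B-1}\Xtt{s}{-i}\Xtttr{s}{-i}$ and $\hat v_{s-1}:=y_{s-1}/\|y_{s-1}\|$. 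On the ``good event'' $\mathcal{G}_s:=\{\hat v_{s-1}^{\intercal}\hat G^s_B \hat v_{s-1} \ge \tfrac{1}{2}\hat v_{s-1}^{\intercal}G\hat v_{s-1}\}$ this upgrades to $\|y_s\|^2 \le (1-\gamma\zeta B\lmin{G})\|y_{s-1}\|^2$.

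The key probabilistic step is to show that, under $\mathbb{P}(\cdot\mid\cdt^{b,a})$, the indicators $(\mathbbm{1}_{\mathcal{G}_s})_{s=b}^a$ stochastically dominate i.i.d.\ Bernoulli$(q_0)$ variables. By Definition~\ref{def:coupled_proc} the buffers are independent in the coupled process, so $\cdt^{b,a}=\bigcap_{s=b}^a \cdt^s_{-0}$ is a product event and conditioning on it preserves buffer independence, each buffer $s$ having the conditional law given $\cdt^s_{-0}$. Moreover $\mathbb{P}(\cdt^s_{-0})\ge \mathbb{P}(\cdt^{b,a})\ge \mathbb{P}(\cdh^{b,a}) > 1-p_0/2 > 1-p_0$, so the second statement of Lemma~\ref{lem:probable_contraction} applies with $\mathcal{A}=\cdt^s_{-0}$: for every unit vector $v$ independent of buffer $s$, $\mathbb{P}(v^{\intercal}\hat G^s_B v\ge \tfrac{1}{2}v^{\intercal}G v\mid \cdt^s_{-0})\ge q_0$. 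Taking $v = \hat v_{s-1}$, which is measurable with respect to buffers $b,\dots,s-1$, gives $\mathbb{P}(\mathcal{G}_s\mid \hat v_{s-1},\cdt^{b,a})\ge q_0$. A standard Chernoff bound then yields
\begin{equation*}
\mathbb{P}\Bigl(\sum_{s=b}^a \mathbbm{1}_{\mathcal{G}_s} < \tfrac{q_0}{2}(a-b+1)\;\Big|\;\cdt^{b,a}\Bigr)\le \exp(-c(a-b+1))
\end{equation*}
for some universal $c>0$ and all $a-b+1$ larger than a universal $C_0$. On the complementary event, $\|Mx\|^2=\|y_a\|^2\le (1-\gamma\zeta B\lmin{G})^{q_0(a-b+1)/2}$.

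Finally, I would fix a $1/4$-net $\mathcal{N}$ on the unit sphere of $\mathbb{R}^d$ with $|\mathcal{N}|\le 12^d$, apply the tail bound above at each $x\in \mathcal{N}$, and union bound to obtain $\mathbb{P}(\max_{x\in\mathcal{N}}\|Mx\|^2 > (1-\gamma\zeta B\lmin{G})^{q_0(a-b+1)/2}\mid \cdt^{b,a})\le 12^d e^{-c(a-b+1)}$. The standard net-to-operator-norm conversion $\|M\|\le \tfrac{4}{3}\max_{x\in\mathcal{N}}\|Mx\|$ (valid because $\|M\|\le 1$ on $\cdt^{b,a}$ by part (1)) then delivers $\|M\|\le 2(1-\gamma\zeta B\lmin{G})^{c_4(a-b+1)}$ with probability at least $1-\exp(c_5 d - c_3(a-b+1))$, for $c_3=c$, $c_4=q_0/4$, $c_5=\log 12$. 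The main obstacle is the conditional-independence bookkeeping --- specifically verifying that conditioning on the product event $\cdt^{b,a}$ preserves buffer independence so that Lemma~\ref{lem:probable_contraction} can be invoked with a history-measurable direction $\hat v_{s-1}$; a secondary subtlety is noting the symmetry of each $\Ptt{s}{-i}$ so that Lemma~\ref{lem:contraction}'s sandwich bound applies in the $\Htt\Htttr$ ordering that naturally arises in $\|y_s\|^2$.
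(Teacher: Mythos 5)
Your proposal is correct and follows essentially the same route as the paper's proof: both use Lemma~\ref{lem:contraction} for the almost-sure non-expansion in part (1), and both establish part (2) by fixing a net direction $x$, tracking the norm of $\prod_{s} \Htttr{s}{0}{B-1} x$, invoking Lemma~\ref{lem:probable_contraction} (the second part, which is precisely why the hypothesis $\mathbb{P}(\cdh^{b,a}) > 1 - p_0/2$ appears in the theorem) to show each buffer contracts the norm with conditional probability at least $q_0$, and finishing with an $\epsilon$-net union bound. The only substantive difference is the last combinatorial step: the paper counts the "bad" buffers directly with a union bound over subsets $D^c$, obtaining ${a-b+1 \choose a-b+1-r}(1-q_0)^{a-b+1-r}$ and then chooses $r = c_2(a-b+1)$, whereas you package the same history-conditional lower bound $\mathbb{P}(\mathcal{G}_s \mid \mathcal{F}_{s-1}, \cdt^{b,a}) \ge q_0$ into a stochastic-domination-plus-Chernoff argument; the two yield the same $\exp(-c(a-b+1))$ tail and are interchangeable. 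A small additional virtue of your write-up is that you spell out the conditional-independence bookkeeping --- that $\hat v_{s-1}$ is measurable w.r.t.\ buffers $0,\dots,s-1$, that $\hat G^s_B$ is independent of those buffers even after conditioning on the product event $\cdt^{b,a}$, and that $q_0$ can be taken as a universal constant given the hypothesis --- which the paper leaves somewhat implicit. Your choice of a $1/4$-net of size $\le 12^d$ with conversion factor $4/3$ versus the paper's $1/2$-net of size $\le 6^d$ with factor $2$ is immaterial.
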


\begin{proof}
The proof of (1) above follows from an application of Lemma~\ref{lem:contraction}. So we will just prove (2). We will prove this with an $\epsilon$ net argument over the unit $\ell^2$ sphere in $\mathbb{R}^d$. %To bring out the independence of certain random variables, we will prove the result with $\tilde{\mathcal{D}}^{b,a}$ instead of $\cdh^{b,a}$. The statement follows when we note that f$\cdh^{b,a} \subseteq \tilde{\mathcal{D}}^{b,a}$, and $\mathbb{P}(\cdh^{b,a}) \geq 1/2$ and absorb constants in to the $\exp(c_3d)$ term in the RHS. 

Suppose we have arbitrary $x\in \mathbb{R}^d$ such that $\|x\| = 1$.  %Conditioned on the event $\tilde{\mathcal{D}}^{a-1,b-1}$, the matrices $\Htttr{s}{0}{B-1}$ are all independent for $a \leq s \leq b$. We also note that $\Htttr{s}{0}{B-1}$ is independent of $\tilde{\mathcal{D}}^{t}$ for $t \neq s$.
Let $K_{v} := \prod_{s=v}^{b}\Htttr{s}{0}{B-1}$. When $v \leq b$, we take this product to be identity. Now, define $\hat{G}_B^{v} := \frac{1}{B}\sum_{j=0}^{B-1}X^{v}_{j}X_{j}^{v,\intercal}$

Consider the class of events indexed by $v$: $\mathcal{G}_v := \{\|\Htttr{v}{0}{B-1}K_{v-1}x\|^2 \leq \|K_{v-1}x\|^2 (1-\gamma\zeta B\lmin{G} \}$. From Lemma~\ref{lem:probable_contraction}, we will prove the following claim:
\begin{claim}\label{claim:contraction_imporbability}
 Whenever $v \in [b,a]\cap \mathbb{Z}$:
\begin{equation}\label{eq:contraction_improbability}
\mathbb{P}(\mathcal{G}^{c}_v|\tilde{\mathcal{D}}^{b,a},\Htttr{s}{0}{B-1}: s < v) \leq 1-q_0
\end{equation}
Where $q_0 > 0$ is as given in Lemma~\ref{lem:probable_contraction} and can be taken to be a universal constant under the present hypotheses.
\end{claim}
\begin{proof}
We will denote $K_{v-1}x$ by $x_v$ for the sake of convenience. We note that when conditioned on $\Htttr{s}{0}{B-1}$ for $s<v$, $x_v$ is fixed. Using Lemma~\ref{lem:contraction}, we note that: $$ \mathbb{P}(\mathcal{G}^{c}_v|\tilde{\mathcal{D}}^{b,a},\Htttr{s}{0}{B-1}: s < v) \leq \mathbb{P}(x_v^{\intercal}\hat{G}_B^{v}x_v < \tfrac{1}{2}x_v^{\intercal}Gx_v|\tilde{\mathcal{D}}^{b,a},\Htttr{s}{0}{B-1}: s < v) $$ 
We note that $\hat{G}^{v}_B$ is independent of $\Htttr{s}{0}{B-1}$ for $s\leq v$ (eventhough $\Htt{v}{0}{B-1}$ is not necessarily). Now we also note that $\hat{G}^{v}_B$ is independent of $\tilde{\mathcal{D}}^{s}$ for $s \neq v$. 
Therefore, we can apply Lemma~\ref{lem:probable_contraction} to conclude the claim. 
\end{proof}

Let $D \subseteq \{b,\dots,a\}$ such that $|D| = r$. It is also clear from item 1 and the definitions above that whenever the event $\cap_{v \in D}\mathcal{G}_v$ holds, we have:
\begin{equation}\label{eq:norm_contraction}
\|\prod_{s=a}^{b}\Htttr{s}{0}{B-1}x\| \leq (1-\gamma B\lmin{G})^\frac{r}{2}\,.
\end{equation}
Therefore, whenever Equation~\eqref{eq:norm_contraction} is violated, we must have a set $D^{c} \subseteq \{b,\dots,a\}$ such that $|D^{c}| \geq b-a-r$ and the event $\cap_{v \in D^{c}} \mathcal{G}^{c}_v$ holds. We will union bound all such events indexed by $D^{c}$ to obtain an upper bound on the probability that Equation~\eqref{eq:norm_contraction} is violated. Therefore, using Equation~\eqref{eq:contraction_improbability} along with the union bound, we have:

$$\mathbb{P}\left(\|\prod_{s=a}^{b}\Htttr{s}{0}{B-1}x\| \geq (1-\gamma B\lmin{G})^\frac{r}{2}\biggr|\tilde{\mathcal{D}}^{b,a}\right) \leq {{a-b+1}\choose{a-b-r}}(1-q_0)^{a-b-r}$$
Whenever $a-b+1$ is larger than some universal constant, we can pick $r = c_2(b-a+1)$ for some constant $c_2 > 0$ small enough such that:
$$\mathbb{P}\left(\|\prod_{s=a}^{b}\Htttr{s}{0}{B-1}x\| \geq (1-\gamma B\lmin{G})^\frac{r}{2}\biggr|\tilde{\mathcal{D}}^{b,a}\right) \leq \exp(-c_3(b-a+1))$$

Now, let $\mathcal{N}$ be a $1/2$-net of the sphere $\mathcal{S}^{d-1}$. Using Corollary 4.2.13 in \cite{vershynin2019high}, we can choose $|\mathcal{N}| \leq 6^d$. By Lemma 4.4.1 in \cite{vershynin2019high} we show that:

\begin{equation}\label{eq:constant_vector_contraction}
\|\prod_{s=a}^{b}\Htttr{s}{0}{B-1}\| \leq 2 \sup_{x\in \mathcal{N}}\|\prod_{s=a}^{b}\Htttr{s}{0}{B-1}x\|
\end{equation}

By union bounding Equation~\eqref{eq:constant_vector_contraction} for every $x \in \mathcal{N}$, we conclude that:

\begin{align}
&\mathbb{P}\left(\|\prod_{s=a}^{b}\Htttr{s}{0}{B-1}\| \geq 2(1-\zeta\gamma B\lmin{G})^{c_4(b-a+1)}\biggr|\tilde{\mathcal{D}}^{b,a}\right)\leq |\mathcal{N}|\exp(-c_3(a-b+1)) \nonumber \\
&= \exp(-c_3 (a-b+1)+c_5d)
\end{align}

\end{proof}

We will now state the equivalent of~\cite[Lemma 32]{jain2021streaming}. The proof proceeds similarly, but using Theorem~\ref{thm:buffer_norm_upper_bound} instead. Consider the following operator:

\begin{equation}\label{eq:geometric_stochastic_op}
F{a,N}:= \sum_{t=a}^{N-1} \prod_{s=t}^{a+1}\Htttr{s}{0}{B-1}
\end{equation}
Here we choose the convention that whenever $s > t$, then in any product involving $\Htttr{s}{0}{B-1}$ and $\Htt{t}{0}{B-1}$, $s$ appears to the right of $t$. Hence, we use the take $\prod_{s = a}^{a+1}\Htttr{s}{0}{B-1} = I$

\begin{theorem}
\label{thm:prod_buff_norm}
Suppose all the conditions in Theorem~\ref{thm:buffer_norm_upper_bound} hold. Then, for any $\delta \in (0,1)$, we have:
$$\mathbb{P}\left(\|F_{a,N}\| \geq C\left(d + \log \frac{N}{\delta} + \frac{1}{\zeta\gamma B \lmin{G}}\right)\biggr|\tilde{\mathcal{D}}^{a,N}\right) \leq \delta $$
Where $C$ is a universal constant. 
\end{theorem}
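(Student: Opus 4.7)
}
The plan is to exploit the geometric contraction of the products $\prod_{s=t}^{a+1}\Htttr{s}{0}{B-1}$ established in Theorem~\ref{thm:buffer_norm_upper_bound}, truncating the sum into a short ``burn-in'' part and a geometrically decaying ``tail,'' and then union bounding the tail events. Concretely, for each $t\in\{a,a+1,\dots,N-1\}$, the product $\prod_{s=t}^{a+1}\Htttr{s}{0}{B-1}$ contains $t-a$ factors, so item (1) of Theorem~\ref{thm:buffer_norm_upper_bound} gives the almost sure bound $\|\prod_{s=t}^{a+1}\Htttr{s}{0}{B-1}\|\leq 1$ (conditional on $\cdt^{a+1,t}\supseteq \cdt^{a,N}$), while item (2) gives
\[
\mathbb{P}\!\left(\|\prod_{s=t}^{a+1}\Htttr{s}{0}{B-1}\|\geq 2(1-\zeta\gamma B\lmin{G})^{c_4 (t-a)}\,\Bigl|\,\cdt^{a+1,t}\right)\leq \exp\!\bigl(-c_3(t-a)+c_5 d\bigr),
\]
provided $t-a\geq C_0$.

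Set a threshold $T_0:=\lceil (c_5 d+\log(N/\delta)+\log 2)/c_3\rceil+C_0$. I would split the sum as
\[
F_{a,N}=\underbrace{\sum_{t=a}^{a+T_0-1}\prod_{s=t}^{a+1}\Htttr{s}{0}{B-1}}_{=:F^{\mathrm{burn}}} \;+\; \underbrace{\sum_{t=a+T_0}^{N-1}\prod_{s=t}^{a+1}\Htttr{s}{0}{B-1}}_{=:F^{\mathrm{tail}}}.
\]
The burn-in part is handled deterministically on $\cdt^{a,N}$: by item (1) each summand has operator norm at most $1$, so $\|F^{\mathrm{burn}}\|\leq T_0 = O(d+\log(N/\delta))$. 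For the tail, let $\mathcal{E}_t$ denote the bad event in item (2) of Theorem~\ref{thm:buffer_norm_upper_bound} for the index $t$. Our choice of $T_0$ ensures $\exp(-c_3 (t-a)+c_5 d)\leq \delta/(2N)$ for every $t\geq a+T_0$. Union bounding over such $t$, and passing from conditioning on $\cdt^{a+1,t}$ to conditioning on $\cdt^{a,N}$ via $\mathbb{P}(\mathcal{E}_t\mid \cdt^{a,N})\leq \mathbb{P}(\mathcal{E}_t\mid \cdt^{a+1,t})\cdot \mathbb{P}(\cdt^{a+1,t})/\mathbb{P}(\cdt^{a,N})\leq 2\,\mathbb{P}(\mathcal{E}_t\mid\cdt^{a+1,t})$ (using the standing hypothesis $\mathbb{P}(\cdh^{b,a})\geq 1/2$ from Theorem~\ref{thm:buffer_norm_upper_bound}), gives
\[
\mathbb{P}\!\left(\bigcup_{t\geq a+T_0}\mathcal{E}_t\,\Bigl|\,\cdt^{a,N}\right)\leq \sum_{t=a+T_0}^{N-1} 2\cdot\frac{\delta}{2N}\leq \delta.
\]
On the complementary (good) event, each tail summand satisfies $\|\prod_{s=t}^{a+1}\Htttr{s}{0}{B-1}\|\leq 2(1-\zeta\gamma B\lmin{G})^{c_4 (t-a)}$, and summing the resulting geometric series yields $\|F^{\mathrm{tail}}\|\leq \frac{2}{c_4\,\zeta\gamma B\lmin{G}}$. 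Combining the two bounds via the triangle inequality gives $\|F_{a,N}\|\leq T_0+\frac{2}{c_4\,\zeta\gamma B\lmin{G}}=O\!\bigl(d+\log(N/\delta)+\frac{1}{\zeta\gamma B\lmin{G}}\bigr)$, which is the claim after absorbing constants.

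The main technical nuisance (but not a real obstacle) is the bookkeeping for conditioning: item (2) of Theorem~\ref{thm:buffer_norm_upper_bound} conditions on $\cdt^{a+1,t}$, whereas the statement we want conditions on $\cdt^{a,N}$. As sketched above, because $\cdt^{a,N}\subseteq \cdt^{a+1,t}$ and $\mathbb{P}(\cdt^{a,N})\geq 1/2$ under our hypotheses, the ratio between the two conditional probabilities is at most $2$, which merely changes constants. Everything else is a standard truncation-plus-union-bound argument over a geometrically decaying sequence, so no delicate step is required beyond choosing $T_0$ to absorb both the $d$ arising from the $\epsilon$-net in Theorem~\ref{thm:buffer_norm_upper_bound} and the $\log(N/\delta)$ from the union bound.
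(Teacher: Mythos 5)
Your proposal is correct and follows exactly the truncation-plus-union-bound strategy that the paper intends by referencing Lemma 32 of Jain et al.\ (2021): absorb the first $O(d+\log(N/\delta))$ terms via the almost-sure bound $\|\cdot\|\le 1$ from item (1) of Theorem~\ref{thm:buffer_norm_upper_bound}, union-bound the high-probability geometric-contraction events from item (2) over the tail, and sum the resulting geometric series to pick up the $1/(\zeta\gamma B\lmin{G})$ contribution. The bookkeeping you flag — passing from conditioning on $\cdt^{a+1,t}$ to $\cdt^{a,N}$ via $\mathbb{P}(\cdt^{a,N})\ge 1/2$, and ensuring $T_0\ge C_0$ so that item (2) applies — is handled correctly, so nothing is missing.
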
 

\section{Self Normalized Noise Concentration}
\label{sec:self_normalized_noise}

We recall the events defined in Section~\ref{sec:gen_newton_analysis} :
\begin{enumerate}
\item $\mathcal{D}_{T}(R) := \{\sup_{0\leq t\leq T}\|X_t\|^2 \leq R\}$ 
\item $\mathcal{E}_T(\kappa) := \{\hat{G} \succeq \frac{\sigma^2I}{\kappa}\}$
\item $\mathcal{D}_T(R,\kappa):= \mathcal{D}_T(R)\cap\mathcal{E}_T(\kappa)$
\end{enumerate}
From Lemma~\ref{lem:unstable_concentration}, we conclude that taking $R \geq C^2_{\rho}C_{\eta}(S(\rho,T))^2\sigma^2\log (\tfrac{2T}{\delta})$ ensures that $\mathbb{P}\left(\mathcal{D}_{T}(R)\right) \geq 1-\frac{\delta}{2}$. Only in this section, we define the following:
\begin{enumerate}
\item
$\bar{X}_{t} := \phi(\A X_{t})$
\item 
$\bar{K}_X := \frac{1}{T}\sum_{t=0}^{T-2} \bar{X}_t\eta^{\intercal}_t + \eta_t\bar{X}_t^{\intercal}$
\item 
$\bar{G}:=\frac{1}{T}\sum_{t=0}^{T-2} \bar{X}_t\bar{X}_t^{\intercal}$
\item 
$\bar{K}_{\eta} :=\frac{1}{T}\sum_{t=0}^{T-2}\eta_t\eta_t^{\intercal}$
\end{enumerate}
\begin{lemma}\label{lem:dominant_event}
Let $\delta \in (0,\frac{1}{2})$
Take $R = C^2_{\rho}C_{\eta}(S(\rho,T))^2d\sigma^2\log (\tfrac{2T}{\delta})$, $\kappa = 2$ and suppose $T \geq \bar{C}_3 \left(d\log\left(\frac{R}{\sigma^2}\right) + \log\tfrac{1}{\delta}\right) $ for some constant $\bar{C}_3$ depending only on $C_{\eta}$. Then, we have:
$$\mathbb{P}\left(\mathcal{D}_{T}(R,\kappa)\right) \geq 1-\delta $$
\end{lemma}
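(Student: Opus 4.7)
The plan is to show that the two events making up $\mathcal{D}_T(R,\kappa)$ each fail with probability at most $\delta/2$ and then union-bound. The norm bound $\Pb{\mathcal{D}_T(R)^c} \leq \delta/2$ follows directly from Lemma~\ref{lem:unstable_concentration}: the stated $R = C^2_{\rho}C_{\eta}(S(\rho,T))^2 d\sigma^2\log(2T/\delta)$ carries exactly the factor of $d$ needed to lift that lemma's scalar sub-Gaussian tail into a vector tail on $\sup_t \norm{X_t}$ via a coordinate-wise union bound over $\norm{\eta_t}$.

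For the lower-isometry event, I would apply the one-step recursion $X_{t+1} = \bar{X}_t + \eta_t$ for $t=0,\ldots,T-2$ to obtain the algebraic decomposition
\begin{equation*}
\hat{G} = \tfrac{1}{T} X_0 X_0^{\intercal} + \bar{G} + \bar{K}_X + \bar{K}_\eta \succeq \bar{K}_\eta + \bar{K}_X,
\end{equation*}
since $\bar{G}$ and $X_0 X_0^{\intercal}/T$ are positive semidefinite. It then suffices to show $\bar{K}_\eta \succeq \tfrac{3}{4}\sigma^2 I$ and $\norm{\bar{K}_X}_{\mathsf{op}} \leq \sigma^2/4$, each with probability at least $1-\delta/4$. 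The former is a standard sub-Gaussian sample-covariance concentration bound (e.g.\ Vershynin's Theorem 4.7.1) applied to the $T-1$ i.i.d.\ noise vectors with population covariance $\sigma^2 I$, and holds whenever $T \gtrsim C_\eta(d + \log(1/\delta))$.

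The more delicate bound is on $\bar{K}_X$. The summands $\xi_t := \bar{X}_t\eta_t^{\intercal} + \eta_t \bar{X}_t^{\intercal}$ form a matrix-valued martingale difference sequence with respect to $\mathcal{F}_t := \sigma(X_0,\eta_0,\dots,\eta_t)$, since $\bar{X}_t = \phi(\A X_t)$ is $\mathcal{F}_{t-1}$-measurable while $\eta_t$ is independent of $\mathcal{F}_{t-1}$ with mean zero. A direct calculation gives $\Ex{\xi_t^2 \mid \mathcal{F}_{t-1}} = (d+2)\sigma^2 \bar{X}_t\bar{X}_t^{\intercal} + \sigma^2\norm{\bar{X}_t}^2 I$, so on $\mathcal{D}_T(R)$ the conditional matrix quadratic variation has operator norm at most $O(d\sigma^2 T R)$. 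Applying the matrix Freedman inequality (Tropp, 2011) then yields $\norm{\bar{K}_X}_{\mathsf{op}} \leq \sigma^2/4$ with probability at least $1-\delta/4$ under the lemma's sample-size hypothesis, with $\bar{C}_3$ chosen large enough to absorb $C_\eta$ and the matrix-Freedman logarithmic factor. The main technical obstacle is that Freedman nominally requires an almost-sure bound on $\norm{\xi_t}_{\mathsf{op}}$, whereas $\eta_t$ is only sub-Gaussian; I would resolve this by additionally restricting to the event $\{\sup_{t<T}\norm{\eta_t} \leq C\sigma\sqrt{d\log(T/\delta)}\}$ (which has probability $\geq 1-\delta/4$ by a sub-Gaussian tail union bound), yielding the uniform bound $\norm{\xi_t}_{\mathsf{op}} \leq 2C\sigma\sqrt{dR\log(T/\delta)}$ that produces only a benign additive term in the Freedman tail. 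A final union bound over the four high-probability events then gives $\hat{G} \succeq (\sigma^2/2) I$ with probability at least $1-\delta$, completing the proof.
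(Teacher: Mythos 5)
Your decomposition $\hat{G} = \tfrac{1}{T}X_0X_0^\top + \bar{G} + \bar{K}_X + \bar{K}_\eta$ and the $\bar{K}_\eta$ step are fine, but the proof has a fatal gap at the point where you drop $\bar{G}$ as ``merely'' PSD and then try to show $\norm{\bar{K}_X}_{\mathsf{op}}\leq \sigma^2/4$ outright. That bound is not achievable under the lemma's sample-size hypothesis. Matrix Freedman, with the predictable quadratic variation you computed (operator norm of order $d\sigma^2 R/T$), gives a tail of the form $\exp\bigl(-c\,u^2 T/(d\sigma^2 R)\bigr)$ for $\norm{\bar K_X}_{\mathsf{op}} \geq u$. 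To achieve $u = \sigma^2/4$ with failure probability $\delta/4$ you therefore need $T \gtrsim dR\log(d/\delta)/\sigma^2$ --- linear in $R/\sigma^2$ --- whereas the lemma only assumes $T\gtrsim d\log(R/\sigma^2) + \log(1/\delta)$, logarithmic in $R/\sigma^2$. These regimes are wildly different precisely in the near-marginally-stable setting that Theorem~\ref{thm:newton_ub_without_mixing} is designed to cover, where $R$ is polynomial in $T$ and your requirement becomes $T\gtrsim dT^2/\sigma^2\cdot(\ldots)$, i.e., vacuous. The cross term $\bar K_X$ is genuinely \emph{not} small in operator norm here, so no amount of care with the Freedman truncation fixes this.

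The idea you are missing is that $\bar G$ must not be discarded: it is the quantity that absorbs the dangerous part of the cross term. The paper's proof uses, for each fixed direction $x$, the exponential self-normalized supermartingale
$$M_t = \exp\Bigl(\lambda\sum_{s\leq t}\langle x,\eta_s\rangle\langle x,\bar X_s\rangle - \tfrac{C_\eta\sigma^2\lambda^2}{2}\sum_{s\leq t}\langle x,\bar X_s\rangle^2\Bigr),$$
whose Chernoff bound yields $|x^\top\bar K_X x| \leq 2C_\eta\sigma^2\lambda\, x^\top\bar G x + \beta/T$ with high probability. Choosing $\lambda = 1/(2\sigma^2 C_\eta)$ makes the coefficient of $x^\top\bar G x$ exactly $1$, so that $x^\top\bar G x - |x^\top\bar K_X x| \geq -\beta/T$ holds \emph{regardless of how large $x^\top\bar G x$ (and hence $|x^\top\bar K_X x|$) gets}. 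After that, only $\bar K_\eta$ and the small $\beta/T$ slack remain, and an $\epsilon$-net at resolution $\epsilon\sim\sigma^2/R$ turns the pointwise bound into an operator-norm statement, contributing the $d\log(R/\sigma^2)$ factor that matches the lemma's hypothesis. Replacing your operator-norm bound on $\bar K_X$ with this self-normalized argument (or an equivalent self-normalized matrix-martingale inequality that keeps $\bar G$ on the favourable side) is the key repair.
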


\begin{proof}
  Consider $\hat{G} = \frac{1}{T}\sum_{t=0}^{T-1} X_tX_t^{\intercal} = \frac{1}{T}\sum_{t=0}^{T-2}\bar{X}_t\bar{X}_t^{\intercal} + \bar{X}_t\eta^{\intercal}_t + \eta_t\bar{X}_t^{\intercal}+\eta_t\eta_t^{\intercal}$. For this proof only, we will define, 
To show the result, we will prove that $\bar{K}_X$ is not too negative with high probability and that $\bar{K}_{\eta}$ dominates identity with high probability. Let $x \in \mathcal{S}^{d-1}$ and $\lambda \in\mathbb{R}$
Note that due to the sub-Gaussianity of $\eta_t$ and the definition of the process, 
 $$M_s := \exp\left(\sum_{s=0}^{t}\lambda \langle x,\eta_s\rangle\langle  x, \bar{X}_s\rangle - \frac{C_{\eta}\sigma^2\lambda^{2}}{2}\langle \bar{X}_s,x\rangle^2\right) \,.$$
is a super martingale with respect to the filtration $\mathcal{F}_t := \sigma(X_0,\eta_0,\dots,\eta_t)$, we conclude that $\mathbb{E}M_{T-1} \leq 1$. An application of Chernoff bound shows that for every $\lambda , \beta > 0$, we must have:

\begin{equation}\label{eq:self_norm_conc_1}
\mathbb{P}\left(|\langle x,\bar{K}_X x\rangle| \geq 2C_{\eta}\sigma^2\lambda x^{\intercal}\bar{G}x + \frac{\beta}{T}\biggr|\mathcal{D}_T(R)\right) \leq \frac{2}{1-\delta}\exp(-\lambda\beta)
\end{equation}

We will now invoke Theorem 5.39 in \cite{vershynin2010introduction} to conclude that for some constant $\bar{C}_2$ which depends only on $C_{\eta}$:
\begin{equation}\label{eq:noise_lower_isometry}
\mathbb{P}\left( \bar{K}_{\eta} \preceq \left(1-\bar{C}_2\left(\sqrt{\tfrac{d}{T}}+ \sqrt{\tfrac{\log\tfrac{1}{\delta}}{T}}\right)\right)\sigma^2I  \biggr|\mathcal{D}_T(R)\right) \leq \frac{\delta}{4}
\end{equation}

%Now, for any $x \in \mathbb{S}^{d-1}$,  suppose that $|x^{\intercal}\bar{K}_{X}x| \leq 2\lambda \sigma^2 C_{\eta} x^{\intercal}\bar{G}x + \frac{\beta}{T}$ (consider event in Equation~\eqref{eq:self_norm_conc_1}) and $\bar{K}_{\eta} \succeq \frac{\sigma^2}{2}I$ (consider the event in Equation~\eqref{eq:noise_lower_isometry}). Then we have:

%\begin{align}
%x^{\intercal}\hat{G}x &\geq x^{\intercal}\bar{G}x - |x^{\intercal}\bar{K}_{X}x| + x^{\intercal}K_{\eta}x \nonumber \\
%&\geq (1-2\lambda \sigma^2 C_{\eta})x^{\intercal}\bar{G}x - \frac{\beta}{T}+ \frac{\sigma^2}{2}
%\end{align}

Consider any $\epsilon$ net $\mathcal{N}_{\epsilon}$ over $\mathcal{S}^{d-1}$. By Corollary 4.2.13 in \cite{vershynin2010introduction}, we can take $|\mathcal{N}_{\epsilon}| \leq (1+\tfrac{2}{\epsilon})^d$.  From Equations~\eqref{eq:self_norm_conc_1} and~\eqref{eq:noise_lower_isometry}, we conclude that conditioned on $\mathcal{D}_T(R)$, with probability at-least $1-\frac{\delta}{4} - |\mathcal{N}_{\epsilon}|\frac{\exp(-\lambda\beta)}{1-\delta}$ we have:

\begin{align}
\inf_{x\in \mathcal{S}^{d-1}}x^{\intercal}\hat{G}x  &\geq \inf_{y \in \mathcal{N}_{\epsilon}} y^{\intercal}\hat{G}y - 2\|\hat{G}\|\epsilon \nonumber\\
&\geq \inf_{y \in \mathcal{N}_{\epsilon}} y^{\intercal}\bar{G}y - |y^{\intercal}\bar{K}_{X}y|+y^{\intercal}\bar{K}_{\eta}y - 2\|\hat{G}\|\epsilon \nonumber \\
&\geq  \inf_{y \in \mathcal{N}_{\epsilon}} y^{\intercal}\bar{G}y - |y^{\intercal}\bar{K}_{X}y|+y^{\intercal}\bar{K}_{\eta}y - 2R\epsilon \nonumber\\
&\geq \inf_{y \in \mathcal{N}_{\epsilon}} y^{\intercal}\bar{G}y(1-2\lambda \sigma^2C_{\eta}) -\frac{\beta}{T} + \sigma^2\left(1-\bar{C}_2\left(\sqrt{\tfrac{d}{T}}+ \sqrt{\tfrac{\log\tfrac{1}{\delta}}{T}}\right) \right) - 2R\epsilon
\end{align}
In the third step, we have used the fact that under the event  $\mathcal{D}_T(R)$, $\|\hat{G}\| \leq R$.  Take $\lambda = \frac{1}{2\sigma^2C_{\eta}}$ and $\epsilon = \frac{1}{8R\sigma^2}$ and $\beta = 2\sigma^2 dC_{\eta}\log(16\tfrac{R}{\sigma^2}+1) + 2\sigma^2 C_{\eta}\log{\tfrac{8}{\delta}}$. We conclude that whenever $T \geq \bar{C}_3 \left(d\log\left(\frac{R}{\sigma^2}\right) + \log\tfrac{1}{\delta}\right)$ for some constant $\bar{C}_3$ depending only on $C_{\eta}$, with probability at-least $1-\frac{\delta}{2}$ conditioned on $\mathcal{D}_T(R)$, we have:
$\hat{G} \succeq \frac{\sigma^2}{2} I$. In the definition of $\mathcal{E}_{T}(\kappa)$, we take $\kappa  = 2$. Therefore, we must have:
$$\mathbb{P}(\mathcal{E}_{T}(\kappa)\cap \mathcal{D}_T(R)) = \mathbb{P}(\mathcal{E}_{T}(\kappa)| \mathcal{D}_T(R))\mathbb{P}(\mathcal{D}_T(R)) \geq (1-\tfrac{\delta}{2})^2 \geq 1-\delta \,.$$
 We conclude the result from the equation above. 
\end{proof}

We now give the proof of Lemma~\ref{lem:normalized_noise}.

\subsection{Proof of Lemma~\ref{lem:normalized_noise}}

\begin{proof}
We invoke Theorem 1 in \cite{abbasi2011online} with $S_t = T\hat{N}_i$, $V = T\sigma^2 I $, $\bar{V}_t = V + T \hat{G}$. We know that $\langle\eta,e_i\rangle$ is $C_{\eta}\sigma^2$ sub-Gaussian. So, we take `$R$' in the reference to be $C_{\eta}\sigma^2$. Therefore, we conclude that with probability at least $1-\delta$:

\begin{equation}\label{eq:self_normalized_ref}
 \hat{N}_i^{\intercal}\bar{V}_t^{-1}\hat{N}_i \leq \frac{2C_{\eta}\sigma^2}{T^2} \log \left( \frac{\det(\bar{V}_t)^{1/2}\det(V)^{-1/2}}{\delta}\right) \,.
 \end{equation}

Under the event $\mathcal{D}_{T}(R,\kappa)$, we must have: $\bar{V}_t \preceq \sigma^2 T I + T R I $. This implies: 
\begin{equation}\label{eq:det_bound}
\det(\bar{V}_t)^{1/2}\det(V)^{-1/2} \leq (1+ \tfrac{R}{\sigma^2})^{\frac{d}{2}}
\end{equation}
Now, observe that under the event $\mathcal{D}_{T}(R,\kappa)$, $\hat{G}\succeq \frac{\sigma^2 I}{2}$. Therefore, $\bar{V}_{t} \preceq 3T\hat{G}$. This implies:
\begin{equation}\label{eq:PSD_bound}
\frac{1}{3T}\hat{N}_i^{\intercal}\hat{G}^{-1}\hat{N}_i \leq \hat{N}_i^{\intercal}\bar{V}_t^{-1}\hat{N}_i
\end{equation}

Combining Equations~\eqref{eq:self_normalized_ref},~\eqref{eq:det_bound} and~\eqref{eq:PSD_bound} and using Lemma~\ref{lem:dominant_event}, we conclude that with probability at-least $1-2\delta$, we have:

$$\hat{N}_i^{\intercal}\hat{G}^{-1}\hat{N}_i \leq \frac{6C_{\eta}\sigma^2}{T}\left[d \log(1+\tfrac{R}{\sigma^2}) + \log \tfrac{1}{\delta}\right]$$
Using union bound, we conclude the result. 
\end{proof}

\section{Parameter Recovery Lower Bounds for ReLU-AR Model}
\label{sec:relu_lb}
We will show that in the case of non-expansive activation functions, parameter recovery can be hard information theoretically. More specifically, even when $\phi = \relu$ and $\eta_t \sim \mathcal{N}(0,I)$, and $\|\A\|_2 = \rho < 1$, we will need exponentially many samples with respect to $d$ in order restimate $\A$ upto any vanishing accuracy. We do this via the two point method. Henceforth, we will assume that $d\geq 2$. Given $\epsilon > 0$, define with matrix $A(\epsilon) \in \mathbb{R}^{d\times d}$ as: 
\begin{equation}
A_{ij}(\epsilon) = \begin{cases}
\tfrac{1}{4} \quad \text{if} \quad i = j; i\leq d-1 \\
0 \quad \text{if} \quad i \neq j; i\leq d-1 \\
-\tfrac{\epsilon}{\sqrt{d-1}} \quad \text{if} \quad i \neq j; i = d \\
0 \quad \text{if} \quad i = j = d \\
\end{cases}
\end{equation}

We will consider $\glvar(A(\epsilon),\mathcal{N}(0,I),\relu)$ and $\glvar(A(0),\mathcal{N}(0,I),\relu)$ as the two points in the two point method. As usual, we will consider $X_0 = 0$ almost surely for the sake of convenience. But it will be shown that this process is rapidly mixing since we intend to pick $\epsilon$ such that $\|A(\epsilon)\| < \tfrac{1}{2}$, so all the results should easily extend to stationary sequences. We collect some useful results in the following lemma:

\begin{lemma}\label{lem:basics_lb}
\begin{enumerate}
\item $\|A(\epsilon)\| = \sqrt{\tfrac{1}{16} + \epsilon^2}$
\item Suppose $\epsilon$ is small enough such that $\|A(\epsilon)\| < 1$. Let $X_0,X_1,\dots,X_T \sim \glvar(A(\epsilon),\mathcal{N}(0,I),\relu)$ with $X_0 = 0$. For some universal constant $p_1 >0$, we have for every $i \leq d$, $t \geq 0$:
$$\mathbb{P}\left(\langle X_{t+1},e_i\rangle \geq \frac{1}{2}\biggr|X_{t}\right) \geq p_1$$ 
\end{enumerate}

\end{lemma}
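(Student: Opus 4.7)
The plan is to handle the two parts separately; both are short computations of a bookkeeping nature.

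For part~(1), I would compute $A(\epsilon)^{\top}A(\epsilon)$ directly using the block structure of $A(\epsilon)$: its upper-left $(d-1)\times(d-1)$ block is $\tfrac14 I_{d-1}$, its last row is $(-\tfrac{\epsilon}{\sqrt{d-1}},\dots,-\tfrac{\epsilon}{\sqrt{d-1}},0)$, and all remaining entries are zero. In particular the last column of $A(\epsilon)$ is identically zero, so $A^{\top}A$ has zero last row and column, and its only nontrivial part is the $(d-1)\times(d-1)$ principal block, which evaluates to $\tfrac{1}{16}I_{d-1}+\tfrac{\epsilon^2}{d-1}\mathbf{1}\mathbf{1}^{\top}$. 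The top eigenvalue of this rank-one perturbation of a scaled identity is $\tfrac{1}{16}+\epsilon^2$, attained at the eigenvector $\mathbf{1}/\sqrt{d-1}$, so $\|A(\epsilon)\|=\sqrt{\tfrac{1}{16}+\epsilon^2}$ after taking the square root.

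For part~(2), I would exploit the non-negativity of $\relu$ together with the independence of $\eta_t$ from $X_t$. Reading off the $i$-th coordinate of $X_{t+1}=\relu(A(\epsilon)X_t)+\eta_t$ yields
\[ \langle X_{t+1},e_i\rangle \;=\; \relu(\langle A(\epsilon)X_t,e_i\rangle) + \langle \eta_t,e_i\rangle \;\geq\; \langle \eta_t,e_i\rangle \]
almost surely. Since $\eta_t\sim \mathcal{N}(0,I)$ is independent of $X_t$, the conditional probability reduces to an unconditional one-dimensional Gaussian tail bound $\mathbb{P}(\mathcal{N}(0,1)\geq 1/2)$, and one can take $p_1 := 1-\Phi(1/2)>0$, which is a universal constant independent of $d$, $\epsilon$, $t$, and $i$.

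There is no real obstacle in either part; this lemma is a bookkeeping step whose role is to set up the two-point-method argument for Theorem~\ref{thm:relu_exp_lb}. The only thing to be mindful of in part~(1) is to check that the last column of $A(\epsilon)$ is indeed zero: the diagonal entry $A_{dd}=0$ by definition, and the off-diagonal entries $A_{id}$ for $i<d$ fall under the case ``$i\neq j$ and $i\neq d$,'' which is zero. This zero column is what makes $A^{\top}A$ block-decouple cleanly. Part~(2) is entirely immediate once one notices that $\relu$ contributes a non-negative shift that only helps the event $\{\langle X_{t+1},e_i\rangle \geq 1/2\}$.
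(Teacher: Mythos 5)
Your proof is correct and takes essentially the same route as the paper: part (1) is what the paper calls "elementary calculations" (the rank-one perturbation of $\tfrac{1}{16}I_{d-1}$ with top eigenvalue $\tfrac{1}{16}+\epsilon^2$), and part (2) is exactly the paper's argument of dropping the nonnegative $\relu$ term and reducing to a standard Gaussian tail via independence of $\eta_t$ from $X_t$. Your version merely fills in the spectral computation and makes $p_1=1-\Phi(1/2)$ explicit; no discrepancy.
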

\begin{proof}
\begin{enumerate}
\item Proof follows from elementary calculations.
\item  $X_{t+1} = \relu(A(\epsilon)X_t) + \eta_t$. Therefore, for every $i \leq d-1$, must have: $\langle X_{t+1},e_i\rangle \geq \langle\eta_t,e_i\rangle$. Therefore, we conclude that 

$$\mathbb{P}\left(\langle X_{t+1},e_i\rangle \geq \frac{1}{2}\biggr|X_t\right) \geq \mathbb{P}\left(\langle \eta_t,e_i\rangle \geq \tfrac{1}{2}\biggr|X_t\right) \geq p_1 >0 \,.$$
\end{enumerate}
\end{proof}

We will now show that the last co-ordinate $\langle X_{t+1},e_d\rangle$ is just noise with a large probability and hence, we cannot estimate the last row of $A(\epsilon)$ even with a large number of samples. Note that the event $\langle X_{t+1},e_d\rangle \neq \langle\eta_t,e_d\rangle$ is that same as the event $\langle a_d(\epsilon),X_t\rangle > 0$. Let $a_d(\epsilon)$ denote the last row of $A(\epsilon)$, in the form of a column vector.

\begin{lemma}\label{lem:high_probab_negativity}
Suppose $t \geq 2$. Then, for some universal constants $C_0,C_1 > 0$,
$$\mathbb{P}\left(\langle a_d(\epsilon),X_t\rangle > 0\right) \leq C_0 \exp(-C_1 d)\,.$$

\end{lemma}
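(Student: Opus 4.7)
The plan is to exploit the explicit form of $a_d(\epsilon)$. Since $a_d(\epsilon)$ has zero in its $d$-th coordinate and $-\epsilon/\sqrt{d-1}$ in each of the first $d-1$ coordinates, the event $\{\langle a_d(\epsilon), X_t\rangle > 0\}$ is exactly the event $\{\sum_{i=1}^{d-1} \langle X_t, e_i\rangle < 0\}$. For $i \leq d-1$, the $i$-th row of $A(\epsilon)$ is $\tfrac{1}{4} e_i$, so
\[
\langle X_t, e_i\rangle \;=\; \relu\!\bigl(\tfrac{1}{4}\langle X_{t-1}, e_i\rangle\bigr) + \langle \eta_{t-1}, e_i\rangle,
\]
and the first term is nonnegative.

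First, I would use Lemma~\ref{lem:basics_lb}(2) together with the fact that, conditional on $X_{t-2}$, the coordinates $\langle X_{t-1}, e_i\rangle$ for $i \leq d-1$ are independent (they depend on $X_{t-2}$ only componentwise and on the independent Gaussian components of $\eta_{t-2}$). Consequently, the number $N := \#\{i \leq d-1 : \langle X_{t-1}, e_i\rangle \geq \tfrac12\}$ stochastically dominates a $\mathrm{Bin}(d-1, p_1)$ random variable (conditionally, hence unconditionally). A standard Chernoff bound then gives
\[
\mathbb{P}\bigl(N < \tfrac{p_1(d-1)}{2}\bigr) \leq \exp(-c_1 d)
\]
for a universal $c_1 > 0$. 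On the event $\{N \geq p_1(d-1)/2\}$, at least this many indices contribute $\relu(\tfrac14 \langle X_{t-1}, e_i\rangle) \geq \tfrac18$, and the remaining indices contribute a nonnegative amount, so
\[
\sum_{i=1}^{d-1}\langle X_t, e_i\rangle \;\geq\; \tfrac{N}{8} + S, \qquad S := \sum_{i=1}^{d-1}\langle \eta_{t-1}, e_i\rangle.
\]

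Second, I would observe the key independence: $N$ is a function of $X_{t-1}$ and hence of $(X_0,\eta_0,\dots,\eta_{t-2})$, while $S$ depends only on $\eta_{t-1}$, so $N \perp S$. Since $S \sim \mathcal{N}(0, d-1)$, a Gaussian tail bound yields
\[
\mathbb{P}\bigl(S < -\tfrac{p_1(d-1)}{16}\bigr) \leq \exp(-c_2 d)
\]
for some universal $c_2 > 0$. Combining via a union bound,
\[
\mathbb{P}\Bigl(\sum_{i=1}^{d-1}\langle X_t, e_i\rangle < 0\Bigr)
\leq \mathbb{P}\bigl(N < \tfrac{p_1(d-1)}{2}\bigr) + \mathbb{P}\bigl(S < -\tfrac{p_1(d-1)}{16}\bigr)
\leq C_0 \exp(-C_1 d),
\]
which is the desired bound. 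The assumption $t \geq 2$ is used so that Lemma~\ref{lem:basics_lb}(2) applies to $X_{t-1}$ (which requires $t - 1 \geq 1$ because $X_0 = 0$ deterministically).

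The step that demands the most care is isolating the independence between $N$ and $S$: one must track which noise variables enter each quantity to justify multiplying the two small probabilities (rather than using a weaker union bound conditioned on $X_{t-1}$). Everything else is a textbook Chernoff/Gaussian tail combination.
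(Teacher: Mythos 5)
Your proof is correct and takes essentially the same route as the paper: observe that the event is $\{\sum_{i=1}^{d-1}\langle X_t,e_i\rangle<0\}$, use Lemma~\ref{lem:basics_lb}(2) and the (conditional) independence of the first $d-1$ coordinates of $X_{t-1}$ to get a Chernoff-type bound showing the $\relu$ contribution is $\Omega(d)$ with probability $1-e^{-\Omega(d)}$, then use a Gaussian tail for the independent noise term $S\sim\mathcal{N}(0,d-1)$ and a union bound. The paper states the coordinates $\langle X_{t-1},e_i\rangle$, $i\leq d-1$, are unconditionally i.i.d.\ (which holds by induction from $X_0=0$ and the diagonal block structure of $A(\epsilon)$), whereas you argue conditionally on $X_{t-2}$ and then marginalize; both are valid and lead to the same stochastic-domination step. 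Your reason for $t\geq 2$ is also the right one: for $t=1$ the count $N$ is deterministically zero and the argument degenerates, consistent with $\langle a_d(\epsilon),X_1\rangle=\langle a_d(\epsilon),\eta_0\rangle$ being positive with probability exactly $1/2$.
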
 

\begin{proof}

$\langle a_d,X_t\rangle > 0$ if and only if $\langle a_d, \relu(A(\epsilon)X_{t-1}) \rangle + \langle a_d, \eta_{t-1}\rangle > 0$. We first note that the first $d-1$ rows of $X_{t-1}$ are i.i.d. by the definition of $A(\epsilon)$. Therefore, we conclude using Lemma~\ref{lem:basics_lb} that:
$$\mathbb{P}\left(\langle a_d, \relu(A(\epsilon)X_{t-1}) \rangle \geq -c_0\epsilon \sqrt{d}\right) \leq \exp(-c_1 d)\,.$$ for some universal constants $c_0,c_1$. Now, $\langle a_d,\eta_{t-1} \rangle$ is distributed as $\mathcal{N}(0,\epsilon^2)$ and is independent of $X_{t-1}$. Therefore, with a large probability, $\langle a_d, \relu(A(\epsilon)X_{t-1}) \rangle$ takes a large negative value and by Gaussian concentration, $\langle a_d, \eta_{t-1}\rangle$ concentrates near $0$ . Therefore, using elementary calculations we conclude the result.
\end{proof}

In what follows, by  $\mathbf{X}^{\epsilon}_T := (X^{\epsilon}_0,\dots,X_T^{\epsilon})$ we denote the process such that $X_0^{\epsilon}  = 0$ and $(X^{\epsilon}_0,\dots,X_T^{\epsilon}) \sim \glvar(A(\epsilon),\mathcal{N}(0,I),\relu)$.
\begin{lemma}\label{lem:kl_calculation}
When $\epsilon < \tfrac{1}{4}$, for some universal constants $c_0,c_1 > 0$, we have:
$$\tv(\mathbf{X}^{\epsilon}_T, \mathbf{X}^{0}_T) \leq c_0\left( \epsilon + \epsilon \sqrt{T}\exp(-c_1 d)\right) \,.$$
\end{lemma}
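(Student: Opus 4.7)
\textbf{Proof Plan for Lemma~\ref{lem:kl_calculation}.}

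The natural route is to bound TV via KL using Pinsker's inequality, and then to use the chain rule for KL on the Markov process. Concretely, writing $P^\epsilon$ and $P^0$ for the laws of the two trajectories and using $X_0=0$, the chain rule gives
\begin{equation*}
\kl(\mathbf{X}^{\epsilon}_T \,\|\, \mathbf{X}^{0}_T) \;=\; \sum_{t=0}^{T-1}\mathbb{E}_{P^\epsilon}\!\left[\kl\!\left(P^\epsilon(\cdot\mid X_t)\,\|\,P^0(\cdot\mid X_t)\right)\right].
\end{equation*}
Since the transition kernel is $\mathcal{N}(\relu(A(\cdot)X_t),I)$ in both cases, the per-step KL is simply $\tfrac{1}{2}\|\relu(A(\epsilon)X_t)-\relu(A(0)X_t)\|^2$. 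The first $d-1$ rows of $A(\epsilon)$ and $A(0)$ agree by construction, so only the last coordinate contributes, and $\relu(\langle a_d(0),X_t\rangle)=\relu(0)=0$. Thus the per-step KL reduces to $\tfrac{1}{2}|\langle a_d(\epsilon),X_t^\epsilon\rangle|^2\mathbbm{1}\{\langle a_d(\epsilon),X_t^\epsilon\rangle>0\}$, which is precisely the object controlled by Lemma~\ref{lem:high_probab_negativity}.

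I would then treat three regimes separately. For $t=0$ we have $X_0=0$ and the contribution is $0$. For $t=1$, $X_1=\eta_0\sim\mathcal{N}(0,I)$, so $\langle a_d(\epsilon),X_1\rangle\sim\mathcal{N}(0,\epsilon^2)$ and a direct Gaussian computation gives a contribution of exactly $\tfrac{\epsilon^2}{4}$. For $t\ge 2$, Cauchy--Schwarz yields
\begin{equation*}
\mathbb{E}\!\left[|\langle a_d(\epsilon),X_t^\epsilon\rangle|^2\mathbbm{1}\{\langle a_d(\epsilon),X_t^\epsilon\rangle>0\}\right] \;\le\; \sqrt{\mathbb{E}|\langle a_d(\epsilon),X_t^\epsilon\rangle|^4}\cdot\sqrt{\mathbb{P}(\langle a_d(\epsilon),X_t^\epsilon\rangle>0)}.
\end{equation*}
The second factor is $\le \sqrt{C_0\exp(-C_1 d)}$ by Lemma~\ref{lem:high_probab_negativity}, which yields the desired exponential in $d$.

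Combining the three regimes and summing gives $\kl \le \tfrac{\epsilon^2}{4} + C\epsilon^2 T\,\mathrm{poly}(d)\,e^{-c_1 d/2}$. Absorbing the polynomial factor in $d$ into a slightly smaller exponential constant and applying Pinsker's inequality $\tv\le\sqrt{\tfrac{1}{2}\kl}$, together with the elementary inequality $\sqrt{a+b}\le\sqrt{a}+\sqrt{b}$, produces the stated bound $c_0(\epsilon+\epsilon\sqrt{T}e^{-c_1 d})$.

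The main obstacle is controlling the fourth-moment factor $\mathbb{E}|\langle a_d(\epsilon),X_t^\epsilon\rangle|^4\le\epsilon^4\,\mathbb{E}\|X_t^\epsilon\|^4$ uniformly in $t$. Since $\|A(\epsilon)\|<\tfrac{1}{2}$ and $\relu$ is $1$-Lipschitz, Assumption~\ref{assump:4} holds with $\rho<1$, and the almost-sure bound in Lemma~\ref{lem:unstable_concentration} (together with the fact that $\eta_s$ is Gaussian with finite moments of all orders) immediately yields $\mathbb{E}\|X_t^\epsilon\|^4\le\mathrm{poly}(d)$ uniformly in $t$. This polynomial blowup in $d$ is harmless because it is absorbed by the $e^{-C_1 d}$ factor, so everything fits together cleanly.
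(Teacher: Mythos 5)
Your proposal is correct and essentially reproduces the paper's argument: chain rule for KL giving per-step contribution $\tfrac{1}{2}\|\relu(A(\epsilon)X_t^\epsilon)-\relu(A(0)X_t^\epsilon)\|^2$, reduction to the last coordinate, an exact Gaussian computation for $t=1$, and for $t\ge 2$ a Cauchy--Schwarz bound combining Lemma~\ref{lem:high_probab_negativity} with a fourth-moment bound on $\|X_t^\epsilon\|$, followed by Pinsker. The only cosmetic difference is that you invoke Lemma~\ref{lem:unstable_concentration} for the fourth-moment bound where the paper cites Theorem~\ref{thm:process_concentration}; both give $\mathbb{E}\|X_t^\epsilon\|^4 = \mathrm{poly}(d)$, which is absorbed into the exponential as you note.
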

\begin{proof}

 Following the proof of Lemma 36 in \cite{jain2021streaming}, we conclude that:

\begin{equation}\label{eq:KL_identity}
\kl(\mathbf{X}^{\epsilon}_T|| \mathbf{X}^{0}_T) = \frac{1}{2}\sum_{t=0}^{T-1}\mathbb{E}\|\relu(A(\epsilon)X_{t}^{\epsilon}) - \relu(A(0)X_{t}^{\epsilon})\|^2
\end{equation}
Note here that the expectation is with respect to the randomness in the trajectory $\mathbf{X}^{\epsilon}$. Applying the definition of $A(\epsilon)$ and $\relu$ to Equation~\eqref{eq:KL_identity}, we further simplify to show that:

\begin{equation}\label{eq:KL_identity_1}
\kl(\mathbf{X}^{\epsilon}_T|| \mathbf{X}^{0}_T) = \frac{1}{2}\sum_{t=0}^{T-1}\mathbb{E}|\relu(\langle a_d(\epsilon),X_{t}^{\epsilon}\rangle )|^2 = \frac{1}{2} \sum_{t=1}^{T-1}\mathbb{E}|\relu(\langle a_d(\epsilon),X_{t}^{\epsilon}\rangle )|^2  
\end{equation}

In the equation above, we have used the fact that $X^{\epsilon}_0 = 0$ almost surely. Now, when $t = 1$, we have $X^{\epsilon}_1 = \eta_0 \sim \mathcal{N}(0,I)$ almost surely. Therefore, $\mathbb{E}|\relu(\langle a_d(\epsilon),X_{1}^{\epsilon}\rangle )|^2 = \frac{\epsilon^2}{2}$. For $t \geq 2$, we will use Lemma~\ref{lem:high_probab_negativity} and Theorem~\ref{thm:process_concentration}. In this proof only, define $\mathcal{P}_t$ to be the event $\{\langle a_d,X_t \rangle > 0\}$. Therefore, we conclude that:

\begin{align}
\mathbb{E}|\relu(\langle a_d(\epsilon),X_{t}^{\epsilon}\rangle )|^2  &= \mathbb{E}\mathbbm{1}(\mathcal{P}_t)|\langle a_d(\epsilon),X_{t}^{\epsilon}\rangle|^2 \nonumber \\
&\leq \mathbb{E}\mathbbm{1}(\mathcal{P}_t)\| a_d(\epsilon)\|^2\|X_{t}^{\epsilon}\|^2 \nonumber \\
&=\mathbb{E}\mathbbm{1}(\mathcal{P}_t)\epsilon^2\|X_{t}^{\epsilon}\|^2 \nonumber \\
&\leq \epsilon^2 \sqrt{\mathbb{P}(\mathcal{P}_t)}\sqrt{\mathbb{E}\|X_t\|^4 }
\end{align}
From Lemma~\ref{lem:high_probab_negativity}, we conclude that $\mathbb{P}(\mathcal{P}_t) \leq C_0\exp(-C_1 d)$ and from Theorem~\ref{thm:process_concentration}, it is clear that the concentration inequalities hold we show that $\sqrt{\mathbb{E}\|X_t\|^4 } \leq C_2 d$ for some universal constant $C_2$. Combining the results above with Equation~\eqref{eq:KL_identity_1}, we obtain that for universal constants $C_0,C_1$:

\begin{equation}\label{eq:KL_bouond}
\kl(\mathbf{X}^{\epsilon}_T|| \mathbf{X}^{0}_T) \leq C_0 \left(\epsilon^2 d T \exp(-C_1d) +\epsilon^2\right)
\end{equation}
We then use Pinsker's inequality to conclude the result. 
\end{proof}

\subsection{Proof of Theorem~\ref{thm:relu_exp_lb}}

\begin{proof}
Let the notations below be as defined in the statement of the theorem. Since the minimax loss upper bounds any Bayesian loss, we will consider the uniform prior over $\{\glvar(A(\epsilon),\mathcal{N}(0,I),\relu),\glvar(A(0),\mathcal{N}(0,I),\relu)\}$. For any estimator $\mathcal{A}$ with input $X_{0},\dots,X_T$, we denote $\mathcal{A}(\mathbf{X})$ as its output. We use the notation $\mathbf{X}^{\epsilon}$ and $\mathbf{X}^{0}$ as defined in Lemma~\ref{lem:kl_calculation}. Therefore, 

\begin{align}
\loss(\Theta(\tfrac{1}{2}),T) &\geq \inf_{\mathcal{A}} \frac{1}{2}\loss(\mathcal{A},T,A(0)) + \frac{1}{2}\loss(\mathcal{A},T,A(\epsilon)) \nonumber \\
&= \frac{1}{2}\inf_{\mathcal{A}} \mathbb{E}\|\mathcal{A}(\mathbf{X}^{0})-A(0)\|_F^{2} + \mathbb{E}\|\mathcal{A}(\mathbf{X}^{\epsilon})-A(\epsilon)\|_F^{2} \label{eq:minimax_bayesian}
\end{align}
Where the expectation is over the respective trajectories $\mathbf{X}^{\epsilon}$ and $\mathbf{X}^{0}$. Now, from Lemma~\ref{lem:kl_calculation} and the coupling calculation of $\tv$ distance, we can define the trajectories $\mathbf{X}^{\epsilon}$ and $\mathbf{X}^{0}$ on a common probability space such that $\mathbb{P}(\mathbf{X}^{0} \neq \mathbf{X}^{\epsilon}) \leq \tv(\mathbf{X}^{0},\mathbf{X}^{\epsilon})$. Picking $\epsilon$ to be small enough such that $\tv(\mathbf{X}^{0},\mathbf{X}^{\epsilon}) \leq \frac{1}{2}$ (which is true for a choice of $\epsilon^2 = c_0\min(\frac{\exp(c_1 d)}{T},1)$ for universal constants $c_0,c_1$). Define the event $\mathcal{S}_T := \{\mathbf{X}^{0} = \mathbf{X}^{\epsilon}\} $, we conclude from Equation~\eqref{eq:minimax_bayesian} that:

\begin{align}
\loss(\Theta(\tfrac{1}{2}),T) &\geq \frac{1}{2}\inf_{\mathcal{A}} \mathbb{E}\mathbbm{1}(\mathcal{S}_T)\|\mathcal{A}(\mathbf{X}^{0})-A(0)\|_F^{2} + \mathbb{E}\mathbbm{1}(\mathcal{S}_T)\|\mathcal{A}(\mathbf{X}^{\epsilon})-A(\epsilon)\|_F^{2} 
\end{align}

Note that over the event $\mathcal{S}_T$, we must have $\mathcal{A}(\mathbf{X}^{\epsilon}) = \mathcal{A}(\mathbf{X}^{0})$ in the case of a deterministic estimator. In case of a random estimator, the proof is same once we observe that their distribution is same and hence can be coupled almost surely. By convexity, we must have: $\|a-b\|^2 + \|b-c\|^2 \geq \frac{1}{2}\|a-c\|^2$. Combining these considerations into the equation above, we have:

\begin{align}
\loss(\Theta(\tfrac{1}{2}),T) &\geq \frac{\mathbb{P}(\mathcal{S}_t)}{4} \|A(\epsilon) - A(0)\|_F^{2} \nonumber \\
&\geq \frac{\epsilon^2}{8}.
\end{align}

From the choice of $\epsilon^2$ above, we conclude the result. 

\end{proof}

\section{Proofs of Technical Lemmas}
\label{sec:technical_proofs}

\subsection{Proof of Lemma~\ref{lem:tail_integration}}
\begin{proof}
The proof follows from integrating the tails. Let $Z:= \exp(\lambda Y^2)$. For any $\gamma \in \mathbb{R}^{+}$, we have from the definition of sub-Gaussianity.

\begin{equation}
\mathbb{P}(Z \geq \gamma ) = \begin{cases} 1 \text{ if } \gamma \leq 1 \\
\mathbb{P}\left(|Y| \geq \sqrt{\frac{\log(\gamma)}{\lambda}} \right) \text{ if } \gamma > 1
\end{cases}
\end{equation}
Now,
\begin{align}
\mathbb{E}Z &= \int_{0}^{\infty} \mathbb{P}(Z \geq \gamma)d\gamma \nonumber \\
&= \int_{0}^{1} d\gamma + \int_{1}^{\infty} \mathbb{P}\left(|Y| \geq \sqrt{\tfrac{\log(\gamma)}{\lambda}} \right)d\gamma \nonumber \\
&\leq 1 + \int_{1}^{\infty} 2\exp\left(- \tfrac{\log(\gamma)}{2\nu^2\lambda}\right) d\gamma \nonumber \\
&= 1 + 2\int_{1}^{\infty}\gamma^{-\tfrac{1}{2\nu^2\lambda}}d\gamma \nonumber \\
&=1 + \frac{4\nu^2\lambda}{1-2\nu^2\lambda } \nonumber \\
&\leq 1 + 8\lambda\nu^2
\end{align}

\end{proof}

\subsection{Proof of Lemma~\ref{lem:square_correlation}}

\begin{proof}

We draw $\tilde{X}_{s} \sim \pi$, independent of $X_s$. We obtain $\tilde{X}_{s+k}$ by running the markov chain with the same noise sequence. i.e, $\tilde{X}_{s+k+1} = \phi(\A \tilde{X}_{s+k}) + \eta_{s+k}$. We then obtain $\tilde{X}_t$. Then, it is clear that:

\begin{align}
\langle X_t,x \rangle^2\langle X_s,x\rangle^2  &= \langle X_t-\tilde{X}_t + \tilde{X}_t,x \rangle^2\langle X_s,x\rangle^2 \nonumber \\
&\leq 2 \langle \tilde{X}_t,x \rangle^2\langle X_s,x\rangle^2 + 2 \langle X_t-\tilde{X}_t,x \rangle^2\langle X_s,x\rangle^2 \nonumber 
\end{align}   

Taking expectation on both sides and noting that $\tilde{X}_t$ is independent of $X_s$, we conclude:
\begin{equation}\label{eq:first_decoupling}
\mathbb{E}\langle X_t,x \rangle^2\langle X_s,x\rangle^2 \leq 2(x^{\intercal}Gx)^2 + 2\mathbb{E} \langle X_t-\tilde{X}_t,x \rangle^2\langle X_s,x\rangle^2
\end{equation}
By Assumption~\ref{assump:4}, we have: $\|X_t - \tilde{X}_t\|^2 \leq C^{2}_{\rho}\rho^{2(t-s)}\|X_s-\tilde{X}_s\|^2$. Plugging this into Equation~\eqref{eq:first_decoupling}, we conclude:

\begin{align}
\mathbb{E}\langle X_t,x \rangle^2\langle X_s,x\rangle^2 &\leq 2(x^{\intercal}Gx)^2 + 2\mathbb{E} \|x\|^2 C^{2}_{\rho}\rho^{2(t-s)}\|X_s-\tilde{X}_s\|^2\langle X_s,x\rangle^2 \nonumber \\
&\leq 2(x^{\intercal}Gx)^2 + 4\|x\|^2 C^{2}_{\rho}\rho^{2(t-s)}\mathbb{E} \left(\|X_s\|^2+\|\tilde{X}_s\|^2\right)\langle X_s,x\rangle^2 \nonumber \\
&= 2(x^{\intercal}Gx)^2 + 4\|x\|^2 C^{2}_{\rho}\rho^{2(t-s)}\left[\mathbb{E}\|X_s\|^2\langle X_s,x\rangle^2+ x^{\intercal}Gx\mathbb{E}\|\tilde{X}_s\|^2\right]\label{eq:master_fourth_moment}
\end{align}
We can evaluate $\mathbb{E}\|\tilde{X}_s\|^2$ from Theorems~\ref{thm:process_concentration} and~\ref{thm:heavy_tail_concentration}.

\begin{enumerate}
\item First we consider the Sub-Gaussian setting with $\|\A\| = \rho <1$ and $C_{\rho} = 1$. Fix $R > 0$. We will use the notation from Theorem~\ref{thm:process_concentration} below. We can then write, 

\begin{align}
\mathbb{E}\|X_s\|^2\langle X_s,x\rangle^2 &= \mathbb{E}\|X_s\|^2\langle X_s,x\rangle^2 \mathbbm{1}(\|X_s\|^2 \leq R) +  \mathbb{E}\|X_s\|^2\langle X_s,x\rangle^2 \mathbbm{1}(\|X_s\|^2 > R) \nonumber \\
&\leq \mathbb{E}R\langle X_s,x\rangle^2 \mathbbm{1}(\|X_s\|^2 \leq R) +  \mathbb{E}\|X_s\|^4 \mathbbm{1}(\|X_s\|^2 > R) \nonumber \\
&\leq Rx^{\intercal}Gx + \mathbb{E}\|X_s\|^4 \mathbbm{1}(\|X_s\|^2 > R) \nonumber \\
&\leq R x^{\intercal}Gx + \sqrt{\mathbb{E}\|X_s\|^8}\sqrt{ \mathbbm{P}(\|X_s\|^2 > R)} \label{eq:fourth_moment_bound_1}
\end{align}
From Theorem~\ref{thm:process_concentration} and Proposition 2.7.1 in \cite{vershynin2019high}, we show that $\mathbb{E}\|X_s\|^8 \leq C\left(\frac{dC_{\eta}\sigma^2}{1-\rho}\right)^4$ for some universal constant $C$. Again, taking $R = \frac{8 d C_{\eta}\sigma^2}{1-\rho} + \frac{2\log\frac{1}{\delta}}{\lambda^{*}} \leq \frac{24 d C_{\eta}\sigma^2\log(\frac{1}{\delta})}{1-\rho} $, we have: $\sqrt{ \mathbbm{P}(\|X_s\|^2 > R)}  \leq \delta$. We plug this into Equation~\eqref{eq:fourth_moment_bound_1}, take $\delta = \frac{(1-\rho)x^{\intercal}Gx}{d\sigma^2}$ after noting that $x^{\intercal}Gx \geq \sigma^2 \|x\|^2$ to show that:

\begin{equation}\label{eq:fourth_moment_bound_2}
\mathbb{E}\|X_s\|^2\langle X_s,x\rangle^2  \leq \bar{C} \frac{d\sigma^2}{1-\rho}x^{\intercal}Gx \log\left(\frac{d}{1-\rho}\right)
\end{equation}
Where $\bar{C}$ is a constant which depends only on $C_{\eta}$. Using Equation~\eqref{eq:fourth_moment_bound_2} in Equation~\eqref{eq:master_fourth_moment}, we conclude that:
\begin{equation}\label{eq:fourth_moment_final}
\mathbb{E}\langle X_t,x \rangle^2\langle X_s,x\rangle^2 \leq 2 (x^{\intercal}Gx)^2 + \bar{C}_1 C_{\rho}^2 \rho^{2(t-s)}  \frac{d\sigma^2}{1-\rho}x^{\intercal}Gx \log\left(\frac{d}{1-\rho}\right)
\end{equation}

\item 
From Equation~\eqref{eq:master_fourth_moment}, we directly conclude via Cauchy Schwarz inequality that:
$$\mathbb{E}\langle X_t,x \rangle^2\langle X_s,x\rangle^2 \leq 2 (x^{\intercal}Gx)^2 + 8\|x\|^2 C_{\rho}^2 \rho^{2(t-s)}\mathbb{E}\|X_s\|^4$$
We then use the bound on $\mathbb{E}\|X_s\|^4$ given in Theorem~\ref{thm:heavy_tail_concentration} to conclude the result. 
\end{enumerate}

\end{proof}

\subsection{Proof of Lemma~\ref{lem:well_conditioned_grammian}}

\begin{proof}
Consider $\hat{G} = \frac{1}{T}\sum_{\tau=0}^{T-1}X_{\tau}X_{\tau}^{\intercal}$ and consider the coupled process as in Defintion~\ref{def:coupled_proc}. We will divide the times into buffers of size $B$, with gaps of size $u$ and also consider related notation as given in Section~\ref{sec:sgd_rer_analysis}. Now, $\hat{G} \succeq \frac{B}{NS} \sum_{t = 1}^{N}G^{(t)}$, where $G^{(t)}$ is the empirical second moment matrix of the buffer $t$ given by $\hat{G}^{(t)} := \frac{1}{B}\sum_{i=0}^{B-1}X_i^{t,\intercal}X_i^{t}$. Now, consider the coupled second moment matrix defined on buffer $t$ given by $\tilde{G}^{(t)} := \frac{1}{B}\sum_{t=0}^{B-1}\tilde{X}_i^{t,\intercal}\tilde{X}_i^{t}$. By Lemma~\ref{lem:coupling_lemma}, we know that $\|\hat{G}^{(t)} - \tilde{G}^{(t)}\| \leq 4 N^{*} C_{\rho}\rho^{u}$. Where $M^{*} = \sup_{\tau \leq T}\max\left(\|X_{\tau}\|^2,\|\tilde{X}_{\tau}\|^2\right)$. Now, observe that by definition of the coupling, we have that $\tilde{G}_t$ are i.i.d. Combining the considerations above, and letting $B \geq u$ to conclude $\frac{B}{S} \geq \frac{1}{2}$
\begin{equation}\label{eq:basic_gramian_lower_bound}
\hat{G} \succeq \frac{B}{SN} \sum_{t = 1}^{N}\hat{G}^{(t)} \succeq \frac{1}{2N} \sum_{t = 1}^{N}\tilde{G}^{(t)} - 4M^{*}C_{\rho}\rho^{u}I
\end{equation}
Before proceeding further, we will give a high probability bound on $M^{*}$. By Markov's inequality, and Theorem~\ref{thm:heavy_tail_concentration}
$$\mathbb{P}(M^{*} > R) \leq \frac{2T\mathbb{E}\|X_{\tau}\|^2}{R} \leq \frac{2TC_{\rho}^2d\sigma^2}{(1-\rho)^2\alpha}\,.$$
Consider the event $\cdh_T(R) = \{M^{*} \leq R\}$. Letting $R = 
\frac{4TdC_{\rho}^2\sigma^2}{(1-\rho)^2 \delta}$ only in this proof, we conclude that:
$$\mathbb{P}(\mathcal{D}_T(R)) \geq 1- \frac{\delta}{2}$$
Recall that $G = \mathbb{E}X_{\tau}X_{\tau}^{\intercal}$. Now, as shown by item 2 in Lemma~\ref{lem:probable_contraction}, we have whenever $B \geq \frac{4C_{\rho}^6M_4}{(1-\rho)^4(1-\rho^2)\sigma^4}$

\begin{equation}\label{eq:buffer_wise_lb}
\mathbb{P}\left( x^{\intercal}\tilde{G}^{(t)} x \geq \frac{1}{2}x^{\intercal}Gx\right) \geq p_0 >0\,.
\end{equation}

Now, consider any arbitrary, fixed vector $x \in \mathbb{R}^d$. Using independence of $\hat{G}^{(t)}$ and Equation~\eqref{eq:buffer_wise_lb}, we conclude that for some universal constants $c_0,c_1$, we must have:
\begin{equation}
\mathbb{P}(\frac{1}{N} \sum_{t = 1}^{N}x^{\intercal}\tilde{G}^{(t)}x \leq c_0 x^{\intercal}Gx) \leq \exp(-c_1 N). 
\end{equation}
Rewriting the equation above by taking $x = G^{-1/2}y$, we have:
\begin{equation}\label{eq:fixed_vector_well_condition}
\mathbb{P}(\frac{1}{N} \sum_{t = 1}^{N}y^{\intercal}G^{-1/2}\tilde{G}^{(t)}G^{-1/2}y \leq c_0 \|y\|^2) \leq \exp(-c_1 N). 
\end{equation}
For simplicity of exposition, we will take $J := \frac{1}{N} \sum_{t = 1}^{N}G^{-1/2}\tilde{G}^{(t)}G^{-1/2} $ in the calculations below. Before proceeding with a bound on the operator norm, we will give a bound on $\|J\|$. Since $G \succeq \sigma^2 I$, we must have:
$\|J\| \leq \frac{M^{*}}{\sigma^2}$.

Now, we will apply an epsilon net argument. Let $\mathcal{N}_{\epsilon}$ be an $\epsilon$-net over $\mathcal{S}^{d-1}$. We can take $|\mathcal{N}_{\epsilon}| \leq (1+\frac{2}{\epsilon})^d$. 

$$\inf_{y \in \mathcal{S}^{d-1}}y^{\intercal}Jy \geq \inf_{y \in \mathcal{N}_{\epsilon}}y^{\intercal}Jy   - 2\|J\|\epsilon \,.$$

We let $\epsilon = \frac{c\sigma^2}{R}$ for some constant $c > 0$ small enough and $R$ as defined earlier in this proof. By union bound over $\mathcal{D}^c_{T}(R)$ and the event given in Equation~\eqref{eq:fixed_vector_well_condition}, we conclude that for some universal constant $c_2 > 0$ small enough:

\begin{equation}\label{eq:high_probab_lower_bound}
\mathbb{P}(\inf_{y\in\mathcal{S}^{d-1}} y^{\intercal}J y > c_2, M^{*} \leq R) \geq 1-\exp( C d\log(\tfrac{R}{\sigma^2}) - c_1 N) - \tfrac{\delta}{2}
\end{equation}

Now, using Equation~\eqref{eq:basic_gramian_lower_bound}, we conclude that:

$$G^{-1/2}\hat{G} G^{-1/2} \succeq  \frac{J}{2}  - 4M^{*}C_{\rho}\rho^{u}G^{-1}$$
Using the fact that $G^{-1} \preceq \frac{I}{\sigma^2}$ and using Equation~\eqref{eq:high_probab_lower_bound}, we conclude that whenever $T \geq C d\log(\tfrac{1}{\delta}) B \log(R/\sigma^2)$, $B \geq u$, $B \geq \frac{4C_{\rho}^6M_4}{(1-\rho)^4(1-\rho^2)\sigma^4}$ and $u \geq \frac{\log\left( \tfrac{R C_1 C_{\rho}}{\sigma^2}\right)}{\log\left(\tfrac{1}{\rho}\right)}$, we conclude that for some constant $c_0 > 0$ small enough, with probability atleast $1-\delta$, we have:

$$\hat{G} \succeq c_0G $$

\end{proof}

\subsection{Proof of Lemma~\ref{lem:algo-stability}}

First, we will obtain a crude upper bound on $\norm{\tilde{a}^{t-1}_{j}-\a}$ using Theorem~\ref{thm:buffer_norm_upper_bound}. That is, we want to show that $\norm{\tilde{a}^{t-1}_{j}-\a}$ does not grow too large with high probability. 
\begin{proposition}
\label{prop:crude_bnd}
 Let $\lambda_{\min}\equiv \lambda_{\min}(G)$. Conditional on $\cdt^{0,t-1}\cap \cap_{r=0}^{t-1}\ce^r_{0,B-1}$, with probability at least $1-N\delta$, for all $1\leq t\leq N$, all $1\leq j\leq B$ we have
\begin{\Ieee}{LLL}
\label{eq:crude_bnd}
\norm{\tilde{a}^{t-1}_j-\a} &\leq & \norm{a_0-a}+2\gamma B\sqrt{R\beta} C\left(d+\log\frac{N}{\delta}+\frac{1}{\zeta\gamma B\lambda_{\min}}\right)\Ieeen
\end{\Ieee}
where $C$ is constant depending only on $C_{\eta}$.

\end{proposition}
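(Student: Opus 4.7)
The plan is to apply the bias-variance decomposition from Section~\ref{subsec:bias_variance_decomp} row-by-row to the coupled iterate, then bound each of the two pieces using the operator-norm control on products of $\tilde H$ matrices provided by Theorem~\ref{thm:buffer_norm_upper_bound}.

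Unrolling the recursion~\eqref{eq:SGD-RER_iterate_row_1} for the coupled process from the initial condition up to step $j$ of buffer $t-1$, I would write the deviation $(\tilde a^{t-1}_j - \a)^{\top}$ as a bias term $(a_0 - \a)^{\top}\Pi$ plus a variance sum of the form $2\gamma \sum_{(k,i)} \varepsilon^{k}_{-i}\, \tilde X^{k,\top}_{-i}\, Q_{k,i}$, where $\Pi$ is the full product of all $\Ppt{s}{-\cdot}$ matrices encountered along the trajectory, and $Q_{k,i}$ is the sub-product corresponding to all iterations occurring strictly after the noise $\varepsilon^{k}_{-i}$ was injected. Lemma~\ref{lem:contraction} shows that, on the conditioning event, each individual $\Ppt{s}{-\cdot}$ factor has operator norm at most $1$, so $\norm{\Pi} \leq 1$ and the bias contributes at most $\norm{a_0 - \a}$. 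Each noise factor satisfies $|\varepsilon^{k}_{-i}|\cdot\norm{\tilde X^{k}_{-i}} \leq \sqrt{R\beta}$, and the intra-buffer portions of $Q_{k,i}$ (inside source buffer $k$ and inside the current buffer $t-1$) also have norm at most $1$, yielding
\[
 \norm{\tilde a^{t-1}_j - \a} \leq \norm{a_0 - \a} + 2\gamma B\sqrt{R\beta}\, \sum_{k=0}^{t-2}\norm{\prod_{s=k+1}^{t-1}\Htt{s}{0}{B-1}}.
\]

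It then remains to obtain a high-probability bound on the sum $S_t := \sum_{k=0}^{t-2}\norm{\prod_{s=k+1}^{t-1}\Htt{s}{0}{B-1}}$. My plan is to pick a cutoff $K_0 = C_0(d + \log(N/\delta))$ and split the sum into short and long products. The at most $K_0$ products of length less than $K_0$ are each bounded trivially by $1$ via Theorem~\ref{thm:buffer_norm_upper_bound}(1), contributing at most $K_0$. For every product of length $\ell \geq K_0$, Theorem~\ref{thm:buffer_norm_upper_bound}(2) gives the geometric-decay bound $2(1-\zeta\gamma B\lmin{G})^{c_4 \ell}$ with conditional probability at least $1 - \exp(-c_3 \ell + c_5 d)$; choosing $C_0$ large enough that $\exp(-c_3 K_0 + c_5 d) \leq \delta/N$ and union-bounding over the at most $N$ long products yields, with conditional probability at least $1-\delta$, a geometric sum bounded by $C/(\zeta\gamma B\lmin{G})$. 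Combining gives $S_t \leq C\bigl(d + \log(N/\delta) + 1/(\zeta\gamma B \lambda_{\min})\bigr)$, and a final union bound over $1\leq t \leq N$ yields the claimed probability $1 - N\delta$.

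The main obstacle I anticipate is keeping the conditioning consistent across steps: Theorem~\ref{thm:buffer_norm_upper_bound} is phrased conditionally on $\cdt^{b,a}$ for a specific sub-interval, whereas the proposition conditions on the larger intersection $\cdt^{0,t-1}\cap\cap_{r=0}^{t-1}\ce^{r}_{0,B-1}$. The inclusion $\cdt^{0,t-1} \subseteq \cdt^{b,a}$ for every $0\leq b \leq a\leq t-1$, combined with the lower bound $\Pb{\cdh^{0,N-1}}\geq 1/2$ guaranteed by the parameter choices in Section~\ref{subsubsec:parameter_setting}, ensures that passing between the two conditional measures inflates probabilities by at most a universal constant factor, so the geometric-decay bounds transfer with only a harmless rescaling of $c_3$ and $c_5$. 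The remaining bookkeeping is to account for the partial buffers at both endpoints $(t-1,j)$ and $(k,i)$, which only affect absolute constants since the intra-buffer contraction matrices satisfy the same norm bound as full-buffer ones on the conditioning event.
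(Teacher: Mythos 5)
Your proposal is correct and follows essentially the same route as the paper: unroll the coupled recursion into a bias term $(a_0-\a)^\top\Pi$ plus a variance sum, bound the bias by $\norm{a_0-\a}$ and each per-noise factor by $\sqrt{R\beta}$ via the conditioning events, reduce everything to controlling $\sum_{k}\norm{\prod_{s>k}\Htt{s}{0}{B-1}}$, and then union bound over $t$. The only difference is cosmetic: the paper simply invokes the pre-packaged Theorem~\ref{thm:prod_buff_norm} at this point (with an offhand comment about indexing), whereas you re-derive that bound from Theorem~\ref{thm:buffer_norm_upper_bound} via the short-vs-long product split, and you are also more explicit than the paper about the conditional-probability inflation when passing from the sub-interval event $\cdt^{b,a}$ to the full event $\cdt^{0,t-1}\cap\cap_r\ce^r_{0,B-1}$ (the constant-factor argument you give is correct since $\Pb{\cdh^{0,N-1}}\geq 1/2$). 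Both of those elaborations are harmless and, if anything, fill in bookkeeping the paper leaves implicit.
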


\begin{proof}
Let us start with the expression for $\tilde a^{t-1}_{j}-\a$
\begin{\Ieee}{LLL}
\label{eq:prop_crude_bnd_1}
(\tilde{a}^{t-1}_{j}-\a)^{\top}&=&(a_0-\a)^{\top}\left(\prod_{s=0}^{t-2}\Htt{s}{0}{B-1}\right)\Htt{t-1}{0}{j-1}+
2\gamma \sum_{i=0}^{j-1}\phi'(\tilde \xi^{t-1}_{-i})\Nt{t-1}{-i}\Xtttr{t-1}{-i}\Htt{t-1}{i+1}{j-1}\\
&& + 2\gamma\sum_{r=2}^t\sum_{i=0}^{B-1}\phi'(\tilde{\xi}^{t-r}_{-i})\Nt{t-r}{-i}\Xtttr{t-r}{-i}\Htt{t-r}{i+1}{B-1}\prodHtt{t}{r-1}\Ieeen
\end{\Ieee}
 We will work on the event $\cdt^{0,t-1}\cap \cap_{r=0}^{t-1}\ce^r_{0,B-1}$. It is clear from Equation~\eqref{eq:prop_crude_bnd_1} that: 
\begin{\Ieee}{LLL}
\label{eq:prop_crude_bnd_2}
\norm{\tilde{a}^{t-1}_{j}-\a}&\leq &\norm{a_0-\a}+2\gamma B\sqrt{R\beta}+2\gamma\sqrt{R\beta}B\sum_{r=2}^t\norm{\prodHtt{t}{r-1}}
\end{\Ieee}

 We use Theorem~\ref{thm:prod_buff_norm} (with appropriate constant $C>1$ to account for minor differences in indexing) to show that conditional on $\cdt^{0,t-1}\cap\cap_{r=0}^{t-1}\ce^r_{0,B-1}$, for fixed $t$, with probability at least $1-\delta$, for all $1\leq j\leq B$ 
\begin{\Ieee}{LLL}
\label{eq:prop_crude_bnd_4}
\norm{\tilde{a}^{t-1}_j-\a}\leq \norm{a_0-\a}+2\gamma B\sqrt{R\beta} C\left(d+\log\frac{N}{\delta}+\frac{1}{\zeta\gamma B\lambda_{\min}}\right)
\end{\Ieee}

Thus taking union bound we get that conditional on $\cdt^{0,t-1}\cap\cap_{r=0}^{t-1}\ce^r_{0,B-1}$  with probability at least $1-N\delta$, for all $1\leq t\leq N-1$ and all $1\leq j\leq B$
\begin{\Ieee}{LLL}
\label{eq:prop_crude_bnd_5}
\norm{\tilde{a}^{t-1}_j-\a}\leq \norm{a_0-\a}+2\gamma B\sqrt{R\beta} C\left(d+\log\frac{N}{\delta}+\frac{1}{\zeta\gamma B\lambda_{\min}}\right)
\end{\Ieee}

\end{proof}
%This immediately leads to the following corollary.
%\begin{corollary}
%\label{cor:crude_bnd}
%Consider the same setting as proposition~\ref{prop:crude_bnd}. 
%Let $\cA^{t-1}$ be the following event
%\begin{\Ieee}{LLL}
%\label{eq:event_A}
%\cA^{t-1}=\bigcap_{r=0}^{t-1}\bigcap_{j=0}^{B-1}\left\{\norm{\tilde{a}^{r}_{j}-\a}\leq \norm{a_0-\a}+8C\frac{\sqrt{R\beta}}{\zeta\lambda_{\min}}\right\}\Ieeen
%\end{\Ieee}
%Then, for any $1\leq t\leq N$
%\begin{\Ieee}{LLL}
%\label{eq:event_A_probab}
%\Pb{\cA^{t-1}}\geq 1-\prbnd\Ieeen
%\end{\Ieee}
%\end{corollary}

%Hence on the event $\ce^{r}_{0,j}\cap \cd^{r,N-1}\cap \cA^{r} $ in lemma~\ref{lem:algo-stability} equation~\eqref{eq:algo_stab_3} we have
%\begin{equation}
%\label{eq:crude_bnd_4}
%\norm{\bar a^s_i-\tilde{a}^s_i}\leq 64C\gamma RB\frac{\sqrt{R\beta}}{\zeta\lambda_{\min}}+8\gamma RB\norm{a_0-\a}\leq 128 C\gamma RB\frac{\sqrt{R\beta}}{\zeta\lambda_{\min}}
%\end{equation}

%\begin{remark}
%\label{rem:crude_bnd_3}
%In expression~\eqref{eq:crude_bnd_4}, we have suppressed the dependence of $\norm{a_0-\a}$ for the ease of exposition with the rationale being that it is a constant and that $R\beta $ would be of order $\log T$. 
%\end{remark}

\begin{proof}[Proof of Lemma~\ref{lem:algo-stability}]
On the event $\ce^{r}_{0,j}\cap \cdt^{r,N-1}$, we note the following inequalities
\begin{\Ieee}{LLL}
& \bar{\tilde{a}}^s_i=\tilde{a}^s_i \, 0\leq s<r,\, 0\leq i\leq B-1 \Ieeen \\
& \bar{\tilde{a}}^r_0=\tilde{a}^r_0 \Ieeen
\end{\Ieee}

\begin{\Ieee}{LLL}
 \norm{\bar{\tilde{a}}^s_i-\tilde{a}^s_i}\leq \begin{cases}
 4i\gamma\sqrt{R\beta}+\sum_{k=0}^{i-1} 4\gamma R\norm{\tilde{a}^r_{k}-\a}, & s=r,\,1\leq i\leq j\\
 4(j+1)\gamma\sqrt{R\beta}+\sum_{k=0}^{j-1} 4\gamma R\norm{\tilde{a}^r_{k}-\a}, s=r,\,& j+1\leq i\leq B-1\\
 4(j+1)\gamma\sqrt{R\beta}+\sum_{k=0}^{j-1} 4\gamma R\norm{\tilde{a}^r_{k}-\a}, r<s,\,& 0\leq i\leq B-1\\ 
 \end{cases}\Ieeen\label{eq:algo_stab_3}
\end{\Ieee}

The result then follows from an application of Proposition~\ref{prop:crude_bnd} with $\delta$ chosen as in \ref{subsubsec:parameter_setting}
\end{proof}

\subsection{Proof of Lemma~\ref{lem:coupled_iterate_replacement}}
We first state and prove the following result:
\begin{lemma}\label{lem:bounded_iterates}
Let $R_{\max} := \sup_{\tau \leq T}(\norm{X_{\tau}}^2,\norm{\tilde{X}}^2)$
and suppose $\gamma \leq \frac{1}{2R_{\max}}$.  For every $t \in [N]$ and $i \in [B]$ we have:
$$\|a^{t}_i\| \leq 2\gamma R_{\max} T  \,.$$
\end{lemma}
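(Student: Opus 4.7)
The plan is a short induction on the total number of $\sgdber$ updates executed before reaching $a^t_i$, exploiting a uniform operator-norm bound on the per-step transition matrix. The main technical device is the same linearization used in Section~\ref{subsec:bias_variance_decomp}: rewrite the row update in ``multiplicative plus noise'' form via the mean value theorem, then observe that the step size restriction $\gamma \leq 1/(2R_{\max})$ forces the multiplicative factor to be a contraction (in the weak sense of operator norm at most $1$).

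Concretely, for a generic row $a_i^{t}$, the $\sgdber$ step~\eqref{eq:SGD-RER_iterate_row} can be written, using $\phi(0)=0$ and the mean value theorem, as
$$
a^{t-1}_{i+1} \;=\; \bigl(I - 2\gamma\,\phi'(\tilde\xi)\,X^{t-1}_{-i}X^{t-1,\top}_{-i}\bigr)\,a^{t-1}_i \;+\; 2\gamma\,Y^{t-1}_{-(i-1)}\,X^{t-1}_{-i},
$$
where $Y^{t-1}_{-(i-1)}$ is the relevant coordinate of $X^{t-1}_{-(i-1)}$ and $\phi'(\tilde\xi)\in[0,1]$ by Assumption~\ref{assump:3}. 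Since $2\gamma\,\phi'(\tilde\xi)\|X^{t-1}_{-i}\|^2\leq 2\gamma R_{\max}\leq 1$, the symmetric matrix $I - 2\gamma\,\phi'(\tilde\xi)X^{t-1}_{-i}X^{t-1,\top}_{-i}$ has spectrum in $[0,1]$, hence operator norm at most one. Combining with $|Y^{t-1}_{-(i-1)}|\leq \sqrt{R_{\max}}$ and $\|X^{t-1}_{-i}\|\leq\sqrt{R_{\max}}$, the triangle inequality yields the one-step increment
$$
\|a^{t-1}_{i+1}\| \;\leq\; \|a^{t-1}_i\| \;+\; 2\gamma R_{\max}.
$$

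Starting from $a^0_0 = 0$ and iterating this bound across buffers, the norm of the iterate grows by at most $2\gamma R_{\max}$ per $\sgdber$ step. Because each buffer executes $B$ such steps and there are at most $N$ buffers, the cumulative number of updates to reach any $a^t_i$ is bounded by $NB \leq N(B+u) = T$, which gives $\|a^t_i\|\leq 2\gamma R_{\max} T$ as required. There is essentially no obstacle: the only care needed is the indexing bookkeeping to confirm the step-count bound, together with the observation that expansivity/Lipschitzness of $\phi$ together with $\gamma\leq 1/(2R_{\max})$ are exactly what is needed to make the linearized transition a non-expansion, regardless of the non-linearity in the loss.
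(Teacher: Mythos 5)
Your proof is correct and follows essentially the same route as the paper's: both rewrite the $\sgdber$ step in linearized form (contraction matrix plus noise term), observe that $\gamma \leq 1/(2R_{\max})$ guarantees the contraction matrix has operator norm at most one, bound the noise term by $2\gamma R_{\max}$, and unroll the one-step increment over at most $T$ updates starting from $a^0_0 = 0$. The only cosmetic difference is that the paper defines the multiplicative factor via the chord slope $\zeta_{t,i} = \phi(\langle a^t_i, X^t_{-i}\rangle)/\langle a^t_i, X^t_{-i}\rangle \in [\zeta,1]$ rather than the MVT point $\phi'(\tilde\xi)$, but these are the same quantity up to notation.
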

\begin{proof}
Let the row under consideration be the $k$-th row and $e_k$ be the standard basis vector. Consider the $\sgdber$ iteration: 
\begin{align}
 a^{t}_{i+1} &=a^{t}_{i}-2\gamma\left(\phi(\langle a^{t}_i, X^{t}_{-i}\rangle)-X^{t}_{-(i-1)}\right)X^{t}_{-i} \nonumber \\
&= (I-2\gamma \zeta_{t,i}\Xt{t}{-i}\Xt{t,\top}{-i} ) a^{t}_i + 2\gamma \langle \Xt{t}{-(i-1))},e_k\rangle\Xt{t}{-i}  
\end{align}
Where $\zeta_{t,i} := \frac{\phi(\langle a^{t}_i, X^{t}_{-i}\rangle) }{\langle a^{t}_i, X^{t}_{-i}\rangle} \in [\zeta,1]$ exists in a weak sense due to our assumptions on $\phi$. 
Observe that for our choice of $\gamma$, we have $\|(I-2\gamma \zeta_{t,i}\Xt{t}{-i}\Xt{t,\top}{-i} ) \| \leq 1$ and $\|\langle\Xt{t}{-(i-1)},e_k\rangle\Xt{t,\top}{-i} \| \leq R_{\max}$. Therefore, triangle inequality implies:
$$\|a^{t}_{i+1}\| \leq \|a^{t}_i \|+ 2\gamma R_{\max}$$
We conclude the bound in the Lemma.

\end{proof}

\begin{proof}[Proof of Lemma~\ref{lem:coupled_iterate_replacement}]

Let the row under consideration be the $k$-th row and $e_k$ be the standard basis vector.
\begin{align}
a^{t}_{i+1} &= a^{t}_i - 2 \gamma (\phi(\langle a^{t}_{i},\Xt{t}{-i}\rangle)-\langle e_k,\Xt{t}{-(i-1)}\rangle)\Xt{t}{-i} \nonumber \\
&= a^{t}_i - 2 \gamma (\phi(\langle a^{t}_{i},\Xtt{t}{-i}\rangle)-\langle e_k,\Xtt{t}{-(i-1)}\rangle)\Xtt{t}{-i} + \Delta_{t,i}
\end{align}
Where  $$\Delta_{t,i} := 2\gamma\left(\phi(\langle a^{t}_{i},\Xtt{t}{-i}\rangle) \Xtt{t}{-i}  - \phi(\langle a^{t}_{i},\Xt{t}{-i}\rangle) \Xt{t}{-i} \right) + 2\gamma\left(\langle\Xt{t}{-(i-1)},e_k\rangle\Xt{t}{-i} - \langle\Xtt{t}{-(i-1)},e_k\rangle\Xtt{t}{-i}\right) \,.$$
Using Lemmas~\ref{lem:bounded_iterates} and~\ref{lem:coupling_lemma}, we conclude that:
$$\|\Delta_{t,i}\| \leq (16\gamma^2R_{\max}^2T + 8\gamma R_{\max})\rho^u $$
Using the recursion for $\tilde{a}_{i}^{t}$, we conclude:
\begin{align}
a^{t}_{i+1} -\tilde{a}^{t}_{i+1}  &=(I - 2\gamma \tilde{\zeta}_{t,i}\Xtt{t}{i}\Xtt{t,\top}{i}) (a^{t}_i  - \tilde{a}^{t}_{i}) + \Delta_{t,i} \nonumber \\
\implies \norm{a^{t}_{i+1} -\tilde{a}^{t}_{i+1}} &\leq \norm{a^{t}_i  - \tilde{a}^{t}_{i}}\norm{(I - 2\gamma \tilde{\zeta}_{t,i}\Xtt{t}{i}\Xtt{t,\top}{i})} + (16\gamma^2R_{\max}^2T + 8\gamma R_{\max})\rho^u
 \nonumber \\
\implies \norm{a^{t}_{i+1} -\tilde{a}^{t}_{i+1}} &\leq \norm{a^{t}_i  - \tilde{a}^{t}_{i}} +  (16\gamma^2R_{\max}^2T + 8\gamma R_{\max})\rho^u \label{eq:coupling_distance_recursion}
\end{align}
In the first step, $\tilde{\zeta}_{t,i} :=\frac{ \phi(\langle a^{t}_{i},\Xtt{t}{-i}\rangle) - \phi(\langle \tilde{a}^{t}_{i},\Xtt{t}{-i}\rangle}{ \langle a^{t}_{i},\Xtt{t}{-i}\rangle - \langle \tilde{a}^{t}_{i},\Xtt{t}{-i}\rangle} \in [\zeta,1]$.
In the last step we have used the fact that under the conditions on $\gamma$, we must have $\norm{(I - 2\gamma \tilde{\zeta}_{t,i}\Xtt{t}{i}\Xtt{t,\top}{i})}\leq 1$. We conclude the statement of the lemma from Equation~\eqref{eq:coupling_distance_recursion}. 
\end{proof}

\subsection{Proof of Claim~\ref{claim:cr_bnd_diff_buf}}

\begin{proof}
Let $r_2>r_1$. As in proof of Claim~\ref{claim:cr_bnd_comm_buf}, let $\cro'$ denote the resampled version of $\cro$ obtained by re-sampling $\eta^{t-r_1}_{-j_1}$ i.e.,
\begin{\Ieee}{LLL}
\label{eq:cr_bnd_diff_buf_1}
\cro'(t,r_1,r_2,j_1,j_2)&:=& 4\gamma^2\Nt{t-r_1}{-j_1}\Nt{t-r_2}{-j_2} \cR^{t-r_1}_{-j_1}\left[\Xtttr{t-r_2}{-j_2}\Htt{t-r_2}{j_2+1}{B-1}\prodHtt{t}{r_2-1}\cdot\right.\\
&&\left. \prodHtttr{t}{r_1-1}\Htttr{t-r_1}{j_1+1}{B-1}\Xtt{t-r_1}{-j_1}\right]\\
%&=& 4\gamma^2\Nt{t-r_1}{-j_1}\Nt{t-r_2}{-j_2}\Xtttr{t-r_2}{-j_2}\cR^{t-r_1}_{-j_1}\left(\Htt{t-r_2}{j_2+1}{B-1}\right)\cR^{t-r_1}_{-j_1}\prodHtt{t}{r_2-1}\cdot\\
%&&  \cR^{t-r_1}_{-j_1}\prodHtttr{t}{r_1-1} \cR^{t-r_1}_{-j_1}\left(\Htttr{t-r_1}{j_1+1}{B-1}\right) \Xtt{t-r_1}{-j_1}\\
&=& 4\gamma^2\Nt{t-r_1}{-j_1}\Nt{t-r_2}{-j_2}\Xtttr{t-r_2}{-j_2}\left(\Htt{t-r_2}{j_2+1}{B-1}\right)\left(\prod_{s=r_2-1}^{r_1+1}\Htt{t-s}{0}{B-1}\right)\cdot\\
&&\cR^{t-r_1}_{-j_1}\prodHtt{t}{r_1}  \cR^{t-r_1}_{-j_1}\prodHtttr{t}{r_1-1} \cR^{t-r_1}_{-j_1}\left(\Htttr{t-r_1}{j_1+1}{B-1}\right) \Xtt{t-r_1}{-j_1}\\
\Ieeen
\end{\Ieee}
Here we have used the fact that $\cR^{t-r_1}_{-j_1}$ does not affect the buffers up to $t-r_1-1$ and only $\tilde{X}$s that are affected are in the term $\Htt{t-r_1}{0}{j_1-1}$. Like in Claim~\ref{claim:cr_bnd_comm_buf}, notice that 
$$\Ex{\sum_{r_2>r_1}\sum_{j_1,j_2}\cro'(t,r_1,r_2,j_1,j_2)}=0$$
Applying Lemma~\ref{lem:cr_base_recursion}, we conclude that:
\begin{\Ieee}{LLL}
\label{eq:cr_bnd_diff_buf_2}
&&\sum_{r_2>r_1}\sum_{j_1,j_2}\cro'(t,r_1,r_2,j_1,j_2) \\
%&=&2\gamma\sum_{r_1=1}^{t-1}\sum_{j_1=0}^{B-1}(a^{t-r_1-1,v}_B-\a)^{\top}\cR^{t-r_1}_{-j_1}\prodHtt{t}{r_1}\cdot \\
%&&\cR^{t-r_1}_{-j_1}\prodHtttr{t}{r_1-1}\cR^{t-r_1}_{-j_1}\left(\Htttr{t-r_1}{j_1+1}{B-1}\right)\Xtt{t-r_1}{-j_1}\Nt{t-r_1}{-j_1}\\
&=&2\gamma\sum_{r_1=1}^{t-1}\sum_{j_1=0}^{B-1}(\tilde{a}^{t-r_1-1,v}_B)^{\top}\cR^{t-r_1}_{-j_1}\left(\Htt{t-r_1}{0}{B-1}\right)\cR^{t-r_1}_{-j_1}\prodHtt{t}{r_1-1}\cdot \\
&&\cR^{t-r_1}_{-j_1}\prodHtttr{t}{r_1-1}\cR^{t-r_1}_{-j_1}\left(\Htttr{t-r_1}{j_1+1}{B-1}\right)\Xtt{t-r_1}{-j_1}\Nt{t-r_1}{-j_1}\Ieeen
\end{\Ieee}

We cannot continue our analysis like in Claim~\ref{claim:cr_bnd_comm_buf} because due to resampling of $\Nt{t-r_1}{-j_1}$, $\Htt{t-r_1}{0}{B-1}$ changes not just because of the iterates $\tilde a^{t-r_1}_i$ but also due to $\tilde{X} \to \bar{\tilde{X}}$.
%%%%%%%%%%%%%%%%%%%%%%%%%%%%%%%%%%%%%%%%%%%%%%%%%%%%%%%%%%%%%%%%%%%%%%%%%%%

%\dn{An attempt at a simpler proof}
%We will bound therefore bound $$\norm{\cR^{t-r_1}_{-j_1}\left(\Htt{t-r_1}{0}%{B-1}\right)-\Htt{t-r_1}{0}{B-1}}$$ By Assumption~\ref{assump:6}, we know %that $\norm{\tilde{X}^{t-r_1}_{-j} - \bar{\tilde{X}}^{t-r_1}_{-j}} \leq %{\Nt{t-r_1}{-j_1} - \bar{\epsilon}^{t-r_1}_{-j_1}}$. It is clear that  

%%%%%%%%%%%%%%%%%%%%%%%%%%%%%%%%%%%%%%%%%%%%%%%%%%%%%%

Further 
\begin{\Ieee}{LLL}
\label{eq:cr_bnd_diff_buf_3}
\Ex{\sum_{r_2>r_1}\sum_{j_1,j_2}\cro'(t,r_1,r_2,j_1,j_2)\indt{0}{t-r_1-1}\indt{t-r_1+1}{t-1}1\left[\cdt^{t-r_1}_{-j_1}\right]}=0\Ieeen
\end{\Ieee}

Next we have simple lemma
\begin{lemma}
\label{lem:cr_bnd_diff_buf_cond}
Consider for each $(r_1,j_1)$, the re-sampling operator $\cR^{t-r_1}_{-j_1}$
\begin{\Ieee}{LLL}
\label{eq:cr_bnd_diff_buf_cond}
\abs{\Ex{\sum_{r_2> r_1}\sum_{j_1,j_2}\cro(t,r_1,r_2,j_1,j_2)\indt{0}{t-1}}}\leq 4\gamma^2 R \frac{(Bt)^2}{2} C_{\eta}\sigma^2\prbndsq+\\
\abs{\Ex{\sum_{r_2> r_1}\sum_{j_1,j_2}\cro(t,r_1,r_2,j_1,j_2)\indt{0}{t-1}\cR^{t-r_1}_{-j_1}1\left[\cdt^{t-r_1}_{-0}\right]}}
\Ieeen
\end{\Ieee}
\end{lemma}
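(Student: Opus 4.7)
My plan is to introduce the resampled indicator $\cR^{t-r_1}_{-j_1}1\left[\cdt^{t-r_1}_{-0}\right]$ into the expectation at small cost by exploiting the trivial partition $1 = \cR^{t-r_1}_{-j_1}1\left[\cdt^{t-r_1}_{-0}\right] + \cR^{t-r_1}_{-j_1}1\left[\cdt^{t-r_1,C}_{-0}\right]$. Inserting this identity beside $\indt{0}{t-1}$ in every summand and invoking the triangle inequality will peel off the target term on the right-hand side of~\eqref{eq:cr_bnd_diff_buf_cond} and reduce the lemma to the error estimate
\begin{equation*}
\sum_{r_2 > r_1}\sum_{j_1,j_2}\abs{\Ex{\cro(t,r_1,r_2,j_1,j_2)\,\indt{0}{t-1}\,\cR^{t-r_1}_{-j_1}1\left[\cdt^{t-r_1,C}_{-0}\right]}} \;\leq\; 4\gamma^2 R\,\tfrac{(Bt)^2}{2}\,C_\eta\sigma^2\,\prbndsq.
\end{equation*}

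To establish the displayed bound I would first collapse every matrix-valued factor in $\cro$ (see~\eqref{eq:var_cro}) using the two deterministic facts available on $\cdt^{0,t-1}$: each coupled covariate satisfies $\|\tilde X^s_{-i}\|^2 \leq R$, and each product of the form $\Htt{s}{i}{j}$ or $\prodHtt{t}{r}$ has operator norm at most $1$ by Theorem~\ref{thm:buffer_norm_upper_bound}(1). These two estimates should yield the pointwise bound $\abs{\cro(t,r_1,r_2,j_1,j_2)}\cdot\indt{0}{t-1} \leq 4\gamma^2 R\,\abs{\Nt{t-r_1}{-j_1}}\,\abs{\Nt{t-r_2}{-j_2}}$. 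Cauchy--Schwarz would then separate the surviving noise factors --- whose second moment is controlled by sub-Gaussianity (Assumption~\ref{assump:7}) and the independence of distinct noise entries, giving $C_\eta\sigma^2$ --- from the indicator probability $\Pb{\cR^{t-r_1}_{-j_1}\cdt^{t-r_1,C}_{-0}}$.

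The one step that deserves care --- and the main subtlety I anticipate --- is bounding this resampled probability uniformly by $\prbnd$. I plan to argue that, because $\bar\eta^{t-r_1}_{-j_1}$ is an i.i.d.\ copy of $\eta^{t-r_1}_{-j_1}$ drawn independently of everything else, and because the renewal $\tilde X^{t-r_1}_0 \sim \pi$ is independent of the entire noise sequence, the time-homogeneity of the coupled recursion~\eqref{eq:coupling} forces the joint law of the resampled buffer $(\bar{\tilde X}^{t-r_1}_{-i})_{i=0}^{B-1}$ to coincide with that of $(\tilde X^{t-r_1}_{-i})_{i=0}^{B-1}$. Hence $\Pb{\cR^{t-r_1}_{-j_1}\cdt^{t-r_1,C}_{-0}} = \Pb{\cdt^{t-r_1,C}_{-0}} \leq \prbnd$ by the parameter choices of Section~\ref{subsubsec:parameter_setting} and the tail bound~\eqref{eq:crude_bnd_cor_1}. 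Summing the at most $\binom{t}{2}B^2 \leq (Bt)^2/2$ resulting terms then gives the required error bound and completes the proof.
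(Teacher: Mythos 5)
Your proposal is correct and follows the same route as the paper's proof: both insert the trivial partition $1 = \cR^{t-r_1}_{-j_1}\inct{t-r_1}{0} + \cR^{t-r_1}_{-j_1}\inctc{t-r_1}{0}$, apply the triangle inequality, bound the error term pointwise by $4\gamma^2 R\,\abs{\Nt{t-r_1}{-j_1}}\abs{\Nt{t-r_2}{-j_2}}$ using the norm cap under $\cdt^{0,t-1}$, and then exploit the key fact that $\cR^{t-r_1}_{-j_1}\inctc{t-r_1}{0}$ is equidistributed with $\inctc{t-r_1}{0}$ because the resampled noise is i.i.d.\ and the renewal is independent of the noise. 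The paper states this equidistribution without elaboration and implicitly applies Cauchy--Schwarz to produce the $\prbndsq$ factor; you have simply made those two steps explicit. (A minor aside: since the resampled indicator depends only on $\{\eta^{t-r_1}_{-i} : i\neq j_1\}$, $\bar\eta^{t-r_1}_{-j_1}$, and the renewal, it is in fact independent of both $\Nt{t-r_1}{-j_1}$ and $\Nt{t-r_2}{-j_2}$, so one could avoid Cauchy--Schwarz and obtain the sharper $\prbnd$ in place of $\prbndsq$; but matching the paper's looser constant is also fine.)
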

\begin{proof}
We have
\begin{\Ieee}{LLL}
\label{eq:cr_bnd_diff_buf_cond_1}
\indt{0}{t-1}=\indt{0}{t-1}\cR^{t-r_1}_{-j_1}1\left[\cdt^{t-r_1}_{-0}\right]+\indt{0}{t-1}\cR^{t-r_1}_{-j_1}1\left[\cdt^{t-r_1,C}_{-0}\right]\Ieeen
\end{\Ieee}

Hence
\begin{\Ieee}{LLL}
\label{eq:cr_bnd_diff_buf_cond_2}
&&\abs{\Ex{\sum_{r_2> r_1}\sum_{j_1,j_2}\cro(t,r_1,r_2,j_1,j_2)\indt{0}{t-1}}}\\
&\leq & \abs{\Ex{\sum_{r_2> r_1}\sum_{j_1,j_2}\cro(t,r_1,r_2,j_1,j_2)\indt{0}{t-1}\cR^{t-r_1}_{-j_1}1\left[\cdt^{t-r_1}_{-0}\right]}}+\\
&& 4\gamma^2 R \frac{(Bt)^2}{2} C_{\eta}\sigma^2\prbndsq\Ieeen
\end{\Ieee}
where we used $\cR^{t-r_1}_{-j_1}1\left[\cdt^{t-r_1,C}_{-0}\right]$ is identically distributed as $1\left[\cdt^{t-r_1,C}_{-0}\right]$ and hence $\Ex{\cR^{t-r_1}_{-j_1}1\left[\cdt^{t-r_1,C}_{-0}\right]}\leq \frac{1}{T^{\alpha}}$

\end{proof}

So, based on the above lemma, we focus on bounding 
$$\abs{\Ex{\sum_{r_2> r_1}\sum_{j_1,j_2}\cro(t,r_1,r_2,j_1,j_2)\indt{0}{t-1}\cR^{t-r_1}_{-j_1}1\left[\cdt^{t-r_1}_{-0}\right]}}$$

Now notice that 
\begin{\Ieee}{LLL}
\label{eq:cr_bnd_diff_buf_4}
&&\Ex{\sum_{r_2>r_1}\sum_{j_1,j_2}\cro'(t,r_1,r_2,j_1,j_2)\cdot \right.\\
&&\left.\indt{0}{t-r_1-1}\indt{t-r_1+1}{t-1}1\left[\cdt^{t-r_1}_{-j_1}\right]\cR^{t-r_1}_{-j_1}1\left[\cdt^{t-r_1}_{-0}\right]}\\
&=&0\Ieeen
\end{\Ieee}

Hence
\begin{\Ieee}{LLL}
\label{eq:cr_bnd_diff_buf_5}
\Ex{\sum_{r_2>r_1}\sum_{j_1,j_2}\cro'(t,r_1,r_2,j_1,j_2)\indt{0}{t-1}\cR^{t-r_1}_{-j_1}1\left[\cdt^{t-r_1}_{-0}\right]}=0-\\
\Ex{\sum_{r_2>r_1}\sum_{j_1,j_2}\cro'(t,r_1,r_2,j_1,j_2)\indt{0}{t-r_1-1}\indt{t-r_1+1}{t-1}\cdot  \right.\\
\left.1\left[\cdt^{t-r_1}_{-j_1}\right]1\left[\cup_{i=0}^{j_1-1}\cct^{t-r,C}_{-i}\right]\cR^{t-r_1}_{-j_1}1\left[\cdt^{t-r_1}_{-0}\right]}\Ieeen\\
\end{\Ieee}

Thus
\begin{equation}
\label{eq:cr_bnd_diff_buf_6}
\abs{\Ex{\sum_{r_2>r_1}\sum_{j_1,j_2}\cro'(t,r_1,r_2,j_1,j_2)\indt{0}{t-1}\cR^{t-r_1}_{-j_1}1\left[\cdt^{t-r_1}_{-0}\right]}}
%\\
%&=&\abs{\Ex{\sum_{r_2>r_1}\sum_{j_1,j_2}\cro'(t,r_1,r_2,j_1,j_2)\indt{0}{t-r_1-1}\indt{t-r_1+1}{t-1}\cdot  \right.\right.\\
%&&\left.\left.1\left[\cdt^{t-r_1}_{-j_1}\right]1\left[\cup_{i=0}^{j_1-1}\cct^{t-r,C}_{-i}\right]\cR^{t-r_1}_{-j_1}1\left[\cdt^{t-r_1}_{-0}\right]}}\\
\leq  2\gamma^2 R \frac{(Bt)^2}{2}C_{\eta}\sigma^2\prbndsq
\end{equation}

Now, similar to lemma~\ref{lem:noise_resample_bound}, on the event $\ce^{r_1}_{0,j_1}\cap\cdt^{0,t-1}\cap \cA^{t-1}$ we have:
\begin{\Ieee}{LLL}
\label{eq:cr_bnd_diff_buf_7}
\norm{\prodHtt{t}{r_1-1}-\cR^{t-r_1}_{-j_1}\prodHtt{t}{r_1-1}}
\leq C Bt \norm{\phi''}\gamma^2 R^3 B\frac{\sqrt{\beta}}{\zeta\lambda_{\min}}\Ieeen
\end{\Ieee}

Next, similar to lemma~\ref{lem:noise_resample_bound} for $\gamma R\leq \frac{1}{2}$, on the event $\cdt^{t-r_1}_{-0}\cap \cap_{i=0}^{B-1}\left\{\norm{\cR^{t-r_1}_{j_1}\Xtt{t-r_1}{-i}}^2\leq R\right\} $ we have
\begin{\Ieee}{LLL}
\label{eq:cr_bnd_diff_buf_8}
\norm{\Htt{t-r_1}{0}{B-1}-\cR^{t-r_1}_{-j_1}\left(\Htt{t-r_1}{0}{B-1}\right)}\leq 4\gamma RB \Ieeen
\end{\Ieee}

Finally we can bound the norm of the expected difference of sums of $\cro$ and $\cro'$ using lemma~\ref{lem:cr_base_recursion} and \eqref{eq:cr_bnd_diff_buf_2} as
\begin{\Ieee}{LLL}
\label{eq:cr_bnd_diff_buf_9}
&&\abs{\Ex{\sum_{r_2>r_1}\sum_{j_1,j_2}(\cro-\cro')\indt{0}{t-1}\cR^{t-r_1}_{-j_1}1\left[\cdt^{t-r_1}_{-0}\right] 1\left[\cap_{s=0}^{t-1}\cap_{i=0}^{B-1}\ce^s_i\right]1\left[\cA^{t-1}\right]}} \\
&\leq& 2\gamma\Ex{\sum_{r_1=1}^{t-1}\sum_{j_1} \sqrt{R}|\Nt{t-r_1}{-j_1}| \left[\norm{\tilde a^{t-r_1-1,v}_B}\indt{0}{t-r_1-1}\right]\cdot\right.\\
&&\left.\left(C\norm{\phi''}\gamma^2 T R^3 B\frac{\sqrt{\beta}}{\zeta\lambda_{\min}} +C\gamma RB\right) }\\
&\leq & \left(C\norm{\phi''}\gamma^3 T^2 R^3 B\frac{\sqrt{\beta}}{\zeta\lambda_{\min}} +C\gamma^2 TRB\right)\sqrt{RC_{\eta}\sigma^2}\sqrt{\sup_{s\leq N-1}\Ex{\norm{\tilde a^{s,v}_B}^2\indt{0}{s}}}\\
\Ieeen
\end{\Ieee}

Thus
\begin{\Ieee}{LLL}
\label{eq:cr_bnd_diff_buf_10}
&&\abs{\Ex{\sum_{r_2>r_1}\sum_{j_1,j_2}(\cro-\cro')\indt{0}{t-1}\cR^{t-r_1}_{-j_1}1\left[\cdt^{t-r_1}_{-0}\right]}} \\
&\leq & C\left(\norm{\phi''}\gamma^3 T^2 R^3 B\frac{\sqrt{\beta}}{\zeta\lambda_{\min}} +\gamma^2 TRB\right)\sqrt{RC_{\eta}\sigma^2}\sqrt{\sup_{s\leq N-1}\Ex{\norm{\tilde a^{s,v}_B}^2\indt{0}{s}}}\\
&&+ C(Bt)^2 \left[\gamma^2 R C_{\eta}\sigma^2 \left(\sqrt{\Pb{\cup_{s=0}^{N-1}\cup_{i=0}^{B-1} \ce^{s,C}_i}}+\sqrt{\Pb{\cA^{t-1,C}}}\right)\right]\\
&\leq & C\left(\norm{\phi''}\gamma^3 T^2 R^3 B\frac{\sqrt{\beta}}{\zeta\lambda_{\min}} +\gamma^2 TRB\right)\sqrt{RC_{\eta}\sigma^2}\sqrt{\sup_{s\leq N-1}\Ex{\norm{\tilde a^{s,v}_B}^2\indt{0}{s}}}\\
&&+ C(Bt)^2 \gamma^2 R C_{\eta}\sigma^2 \prbndsq\\
\Ieeen
\end{\Ieee}

Combining everything we conclude the claim.
%\begin{\Ieee}{LLL}
%\label{eq:cr_bnd_diff_buf_11}
%&&\abs{\Ex{\sum_{r_2> r_1}\sum_{j_1,j_2}\cro(t,r_1,r_2,j_1,j_2)\indt{0}{t-1}}}\\
%&\leq & 4\gamma^2 R \frac{(Bt)^2}{2} C_{\eta}\sigma^2\prbndsq + 4\gamma^2 R \frac{(Bt)^2}{2} C_{\eta}\sigma^2\prbndsq + 16\gamma^2 R\frac{(Bt)^2}{2} C_{\eta}\sigma^2 \prbndsq +\\
%&& \left(512eC\norm{\phi''}\gamma^3 T^2 R^3 B\frac{\sqrt{\beta}}{\zeta\lambda_{\min}} +8e\gamma^2 TRB\right)\sqrt{RC_{\eta}\sigma^2}\sqrt{\sup_{s\leq N-1}\Ex{\norm{a^{s,1}_B-\a}^2\indt{0}{s}}}\\
%\Ieeen
%\end{\Ieee}
%
%The claim follows by multiplying the above by two (since we need consider the case $r_1>r_2$ as well).

\end{proof}

\end{document}